\title{Deep Batch Active Learning by \\Diverse, Uncertain Gradient Lower Bounds}
\definecolor{darkgreen}{rgb}{0,0.5,0}
\definecolor{darkred}{rgb}{0.7,0,0}
\definecolor{teal}{rgb}{0.3,0.8,0.8}
\definecolor{orange}{rgb}{1.0,0.5,0.0}
\definecolor{purple}{rgb}{0.8,0.0,0.8}
\newcommand{\kibitz}[2]{\ifnum\Comments=1{\textcolor{#1}{\textsf{\footnotesize #2}}}\fi}
\newcommand{\hide}[1]{}
\newcommand\ouralg{\ensuremath{\operatorname{\textsc{BADGE}}}\xspace}
\newcommand\kmeansp{\ensuremath{\operatorname{\textsc{$k$-means++}}}\xspace}
\newcommand\ffkc{\ensuremath{\operatorname{\textsc{FF-$k$-center}}}\xspace}
\newcommand\rand{\ensuremath{\operatorname{\textsc{Rand}}}\xspace}
\newcommand\conf{\ensuremath{\operatorname{\textsc{Conf}}}\xspace}
\newcommand\coreset{\ensuremath{\operatorname{\textsc{Coreset}}}\xspace}
\newcommand\albl{\ensuremath{\operatorname{\textsc{ALBL}}}\xspace}
\newcommand\marg{\ensuremath{\operatorname{\textsc{Marg}}}\xspace}
\newcommand\entropy{\ensuremath{\operatorname{\textsc{Entropy}}}\xspace}
\newcommand\ner{\mathrm{ne}}
\newcommand\ce{\mathrm{CE}}
\def\given{\vert}
\author{%
  Jordan T. Ash\\
  Princeton University
  \And
  Chicheng Zhang\\
  University of Arizona
  \And
  Akshay Krishnamurthy\\
  Microsoft Research NYC
  \AND
  John Langford\\
  Microsoft Research NYC
  \And
  Alekh Agarwal\\
  Microsoft Research Redmond
}
\begin{document}
\iclrfinalcopy 

\maketitle

\begin{abstract}
We design a new algorithm for batch active learning with deep neural network models. Our algorithm, Batch Active learning by Diverse Gradient Embeddings (\ouralg), samples groups of points that are disparate and high magnitude when represented in a hallucinated gradient space, a strategy designed to incorporate both predictive uncertainty and sample diversity into every selected batch. Crucially, \ouralg trades off between uncertainty and diversity  without requiring any hand-tuned hyperparameters.
While other approaches sometimes succeed for particular batch sizes or architectures, \ouralg consistently performs as well or better, making it a useful option for real world active learning problems. \looseness=-1


\end{abstract}
\vskip -2cm
\section{Introduction}
In recent years, deep neural networks have produced state-of-the-art results on a variety of important supervised learning tasks. However, many of these successes have been limited to domains where large amounts of labeled data are available.
A promising approach for minimizing labeling effort is \emph{active learning}, a learning protocol where labels can be requested by the algorithm in a sequential, feedback-driven fashion.
Active learning algorithms aim to identify and label only maximally-informative samples, so that a high-performing classifier can be trained with minimal labeling effort.
As such, a robust active learning algorithm for deep neural networks may considerably expand the domains in which these models are applicable.\looseness=-1

How should we design a practical, general-purpose, label-efficient active learning algorithm for deep neural networks?
Theory for active learning suggests a version-space-based approach~\citep{cohn1994improving,A^2}, which explicitly or implicitly maintains a set of plausible models, and queries examples for which these models make different predictions.
But when using highly expressive models like neural networks, these algorithms degenerate to querying every example.
Further, the computational overhead of training deep neural networks precludes approaches that update the model to best fit data after each label query, as is often done (exactly or approximately) for linear methods~\citep{beygelzimer2010agnostic, cesa2009robust}.
Unfortunately, the theory provides little guidance for these models.

One option is to use the network's uncertainty to inform a query strategy, for example by labeling samples for which the model is least confident. In a batch setting, however, this creates a pathological scenario where data in the batch are nearly identical, a clear inefficiency. Remedying this issue, we could select samples to maximize batch diversity, but this might choose points that provide little new information to the model.\looseness=-1

For these reasons, methods that exploit just uncertainty or diversity do not consistently work well across model architectures, batch sizes, or datasets. An algorithm that performs well when using a ResNet, for example, might perform poorly when using a multilayer perceptron. A diversity-based approach might work well when the batch size is very large, but poorly when the batch size is small. Further, what even constitutes a ``large'' or ``small'' batch size is largely a function of the statistical properties of the data in question. These weaknesses pose a major problem for real, practical batch active learning situations, where data are unfamiliar and potentially unstructured. There is no way to know which active learning algorithm is best to use. \looseness=-1

Moreover, in a real active learning scenario, every change of hyperparameters typically causes the algorithm to label examples not chosen under other hyperparameters, provoking substantial labeling inefficiency. That is, hyperparameter sweeps in active learning can be label expensive. As a result, active learning algorithms need to ``just work”, given fixed hyperparameters, to a greater extent than is typical for supervised learning.\looseness=-1

Based on these observations, we design an approach which creates diverse batches of examples about which the current model is uncertain. We measure uncertainty as the gradient magnitude with respect to parameters in the final (output) layer, which is computed using the most likely label according to the model. To capture diversity, we collect a batch of examples where these gradients span a diverse set of directions. More specifically, we build up the batch of query points based on these hallucinated gradients using the \kmeansp initialization~\citep{arthur2007k}, which simultaneously captures both the magnitude of a candidate gradient and its distance from previously included points in the batch. We name the resulting approach Batch Active learning by Diverse Gradient Embeddings (\ouralg).

We show that \ouralg is robust to architecture choice, batch size, and dataset, generally performing as well as or better than the best baseline across our experiments, which vary all of the aforementioned environmental conditions. We begin by introducing our notation and setting, followed by a description of the \ouralg algorithm in Section~\ref{sec:alg} and experiments in Section~\ref{sec:experiments}.  We defer our discussion of related work to Section~\ref{sec:related}.
\vskip -0.5cm


\section{Notation and setting}
Define $[K] := \cbr{1,2,\ldots,K}$.
Denote by $\Xcal$ the instance space and by $\Ycal$ the label space. In this work
we consider multiclass classification, so $\Ycal = [K]$.
Denote by $D$ the distribution from which examples are drawn, by $D_\Xcal$ the unlabeled data distribution, and by $D_{\Ycal \given \Xcal}$ the conditional distribution over labels given examples.
We consider the pool-based active learning setup, where the learner receives an unlabeled dataset $U$ sampled according to $D_\Xcal$ and can request labels sampled according to $D_{\Ycal\given\Xcal}$ for any $x \in U$.
We use $\EE_D$ to denote expectation under the data distribution $D$.
Given a classifier $h~:~\Xcal\to\Ycal$, which maps examples to labels, and a labeled example $(x,y)$, we denote the $0/1$ error of $h$ on $(x,y)$ as $\ell_{01}(h(x),y) = I(h(x) \neq y)$. The performance of a classifier $h$ is measured by its expected $0/1$ error, i.e. $\EE_D[\ell_{01}(h(x),y)] = \text{Pr}_{(x,y)\sim D} (h(x) \ne y)$.
The goal of pool-based active learning is to find a classifier with a small expected $0/1$ error using as few label queries as possible.
Given a set $S$ of labeled examples $(x,y)$, where each $x \in S$ is picked from $U$, followed by a label query, we use $\EE_S$ as the sample averages over $S$.

In this paper, we consider classifiers $h$ parameterized by underlying neural networks $f$ of fixed architecture, with the weights in the network denoted by $\theta$. We abbreviate the classifier with parameters $\theta$ as $h_\theta$ since the architectures are fixed in any given context, and our classifiers take the form $h_\theta(x) = \argmax_{y \in [K]} f(x;\theta)_y$, where $f(x;\theta) \in \RR^K$ is a probability vector of scores assigned to candidate labels, given the example $x$ and parameters $\theta$. We optimize the parameters by minimizing the cross-entropy loss $\EE_S[\ell_{\ce}(f(x;\theta),y)]$ over the labeled examples, where
$\ell_{\ce}(p,y) = \sum_{i=1}^K I(y = i) \ln\nicefrac{1}{p_i} = \ln\nicefrac{1}{p_y}$.
\newpage

\section{Algorithm}
\label{sec:alg}

\begin{algorithm}[t]
\begin{algorithmic}[1]
\REQUIRE Neural network $f(x;\theta)$, unlabeled pool of examples $U$, initial number of examples $M$,
number of iterations $T$, number of examples in a batch $B$.
\STATE Labeled dataset $S \gets$ $M$ examples drawn uniformly at random from $U$ together with queried labels.
\STATE Train an initial model $\theta_1$ on $S$ by minimizing $\EE_S[\ell_{\ce}(f(x;\theta),y)]$.

\FOR{$t = 1,2,\ldots,T$:}

\STATE For all examples $x$ in $U \setminus S$:

\begin{enumerate}
\item Compute its hypothetical label $\hat{y}(x) = h_{\theta_t}(x)$.

\item Compute gradient embedding $g_x = \frac{\partial}{\partial \theta_\text{out}} \ell_{\ce}(f(x;\theta),\hat{y}(x)) \vert_{\theta = \theta_t}$, where $\theta_{\text{out}}$ refers to parameters of the final (output) layer.
\end{enumerate}

\STATE Compute $S_t$, a random subset of $U \setminus S$, using the \kmeansp seeding algorithm on $\cbr{g_x: x \in U \setminus S}$ and query for their labels.

\STATE $S \gets S \cup S_t$.

\STATE Train a model $\theta_{t+1}$ on $S$ by minimizing $\EE_S[\ell_{\ce}(f(x;\theta),y)]$.

\ENDFOR

\RETURN Final model $\theta_{T+1}$.

\end{algorithmic}
\caption{\ouralg: Batch Active learning by Diverse Gradient Embeddings}
\label{alg:main}
\end{algorithm}

\ouralg, described in Algorithm~\ref{alg:main}, starts by
drawing an initial set of $M$ examples uniformly at random from $U$ and asking for their labels.
It then proceeds iteratively, performing two main computations at each step $t$: a \emph{gradient embedding} computation and a \emph{sampling} computation.
Specifically, at each step $t$, for every $x$ in the pool $U$, we compute the label $\hat{y}(x)$ preferred by the current model, and the gradient $g_x$ of the loss on $(x,\hat{y}(x))$ with respect to the parameters of the last layer of the network.
Given these gradient embedding vectors $\cbr{g_x: x \in U}$, \ouralg selects a set of points by sampling via the \kmeansp initialization scheme~\citep{arthur2007k}.
The algorithm queries the labels of these examples, retrains the model, and repeats. 

We now describe the main computations --- the embedding and sampling steps --- in more detail.


\paragraph{The gradient embedding.}
Since deep neural networks are optimized using gradient-based methods, we capture uncertainty about an example through the lens of gradients. In particular, we consider the model uncertain about an example if knowing the label induces a large gradient of the loss with respect to the model parameters and hence a large update to the model. A difficulty with this reasoning is that we need to know the label to compute the gradient. As a proxy, we compute the gradient as if the model's current prediction on the example is the true label. We show in Proposition~\ref{prop:grad-lb} that, assuming a common structure satisfied by most natural neural networks, the gradient norm with respect to the last layer using this label provides a lower bound on the gradient norm induced by any other label. In addition, under that assumption, the length of this hypothetical gradient vector captures the uncertainty of the model on the example: if the model is highly certain about the example's label, then the example's gradient embedding will have a small norm, and vice versa for samples where the model is uncertain (see example below).
Thus, the gradient embedding conveys information both about the model's uncertainty and potential update direction upon receiving a label at an example.\looseness=-1

\begin{figure}
\centering
\begin{subfigure}[b]{0.3\linewidth}
  \includegraphics[trim={0cm 0cm 0cm 0cm}, clip, width=0.9\textwidth]{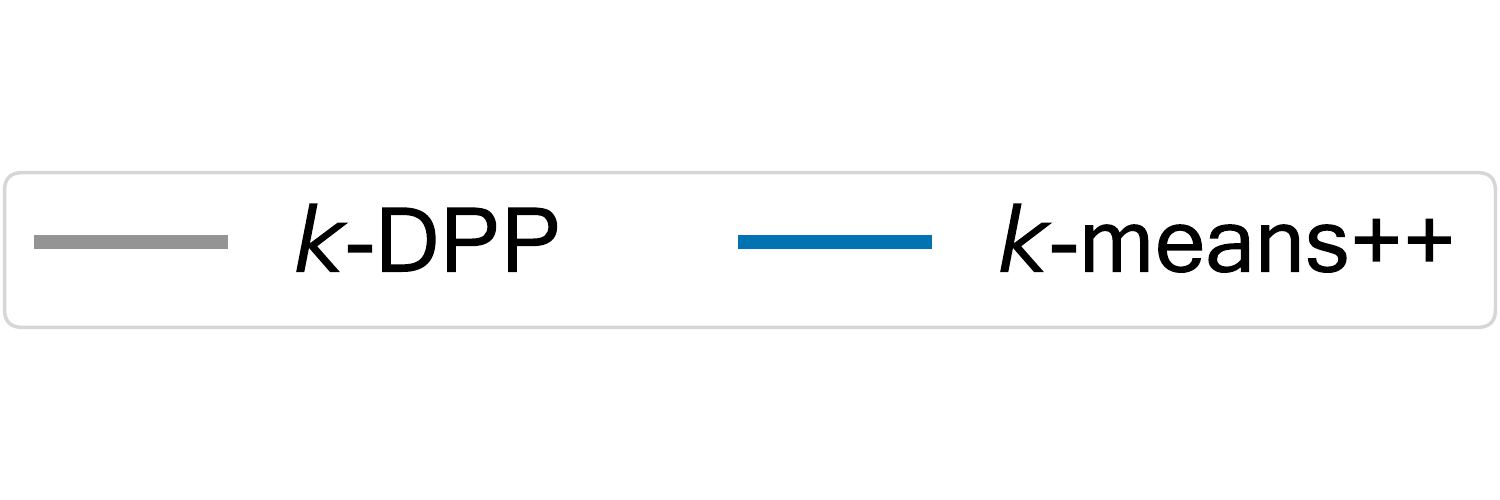}
  \vspace{-0.5cm}
\end{subfigure}
\\
\centering
\begin{subfigure}[b]{0.32\linewidth}
\includegraphics[trim={0.35cm 0cm 2.6cm 0cm}, clip, width=\textwidth]{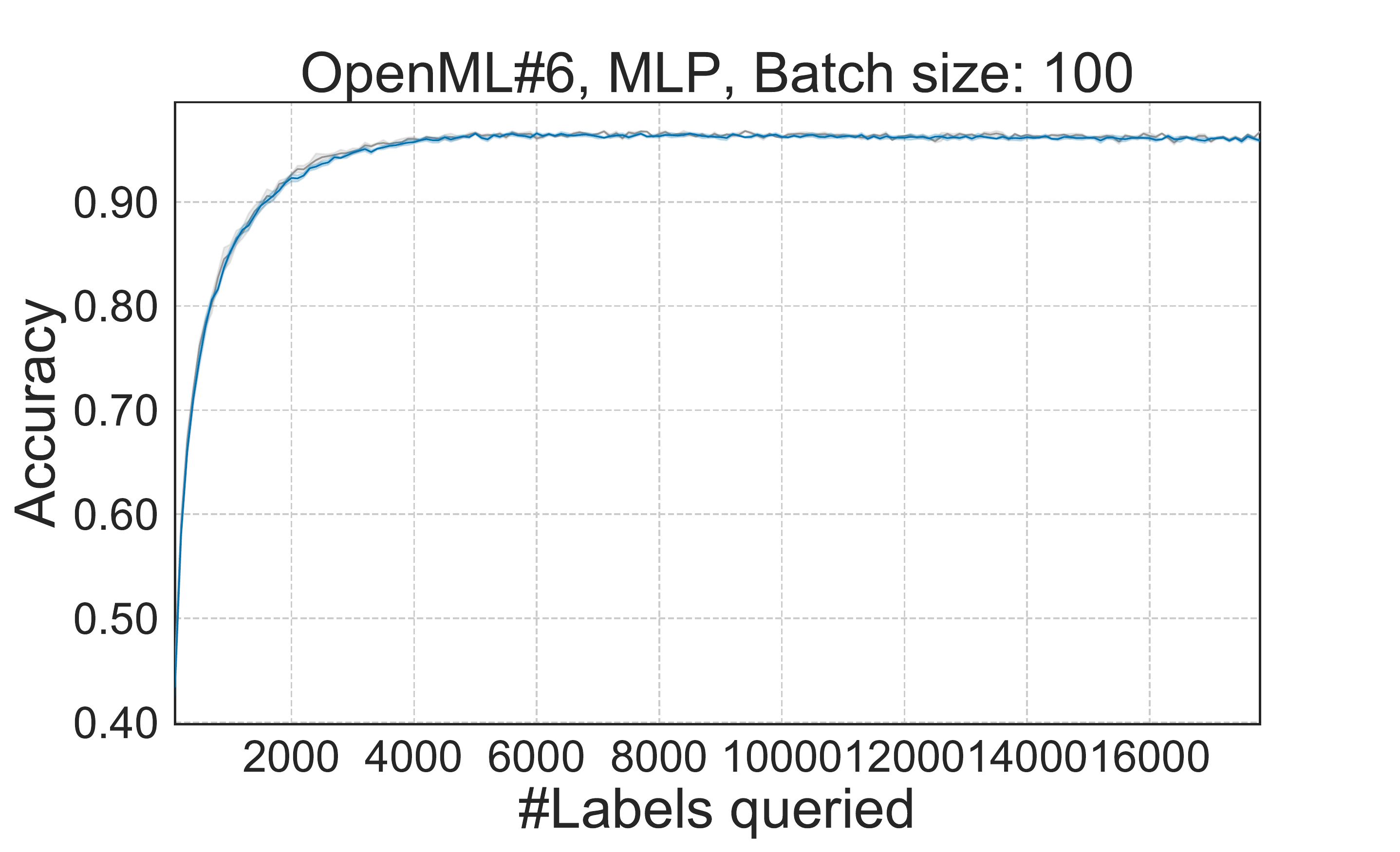}
\end{subfigure}
\hfill
\begin{subfigure}[b]{0.32\linewidth}
  \includegraphics[trim={0.35cm 0cm 2.6cm 0cm}, clip, width=\textwidth]{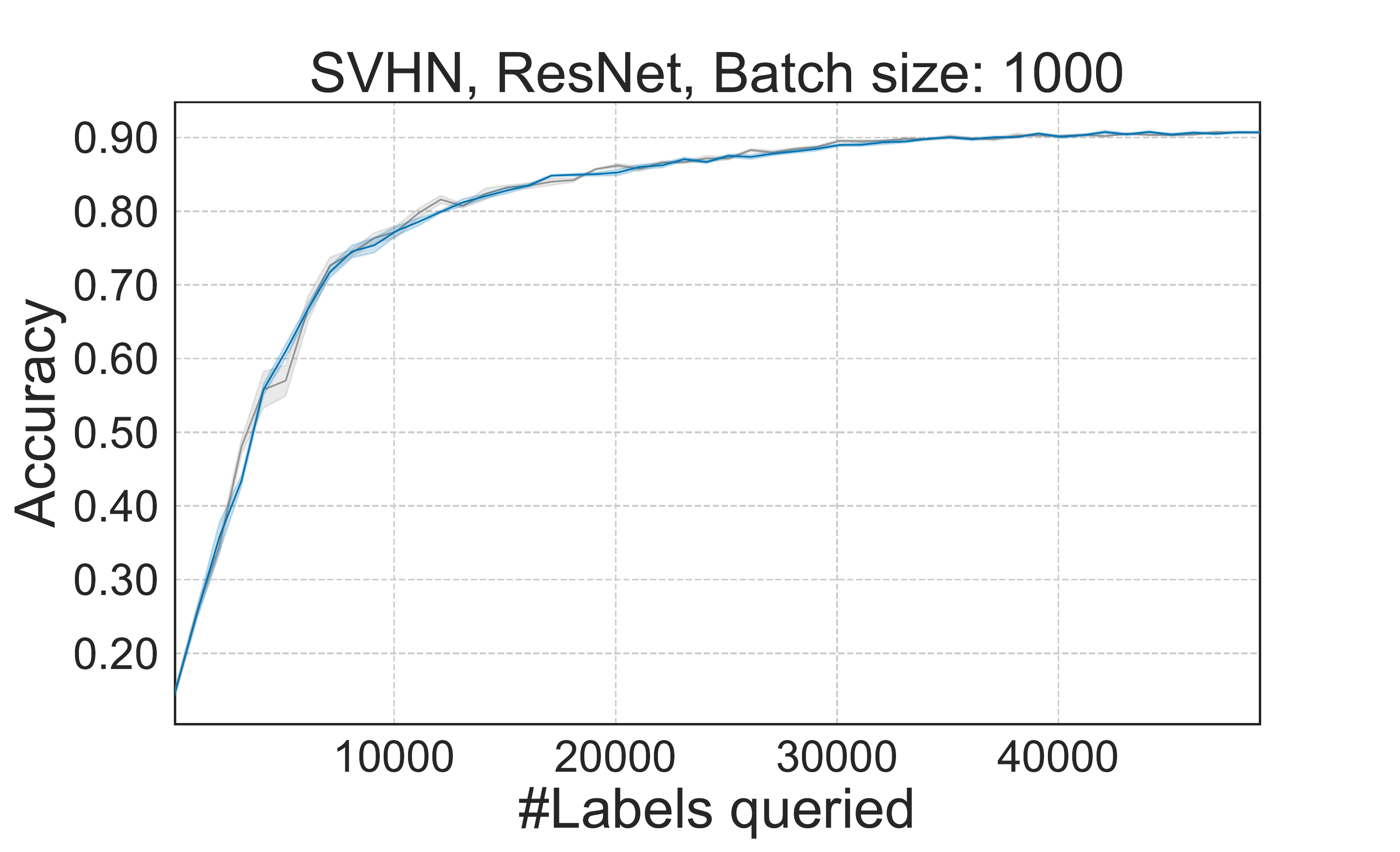}
\end{subfigure}
\hfill
\begin{subfigure}[b]{0.32\linewidth}
  \includegraphics[trim={0.35cm 0cm 2.6cm 0cm}, clip, width=\textwidth]{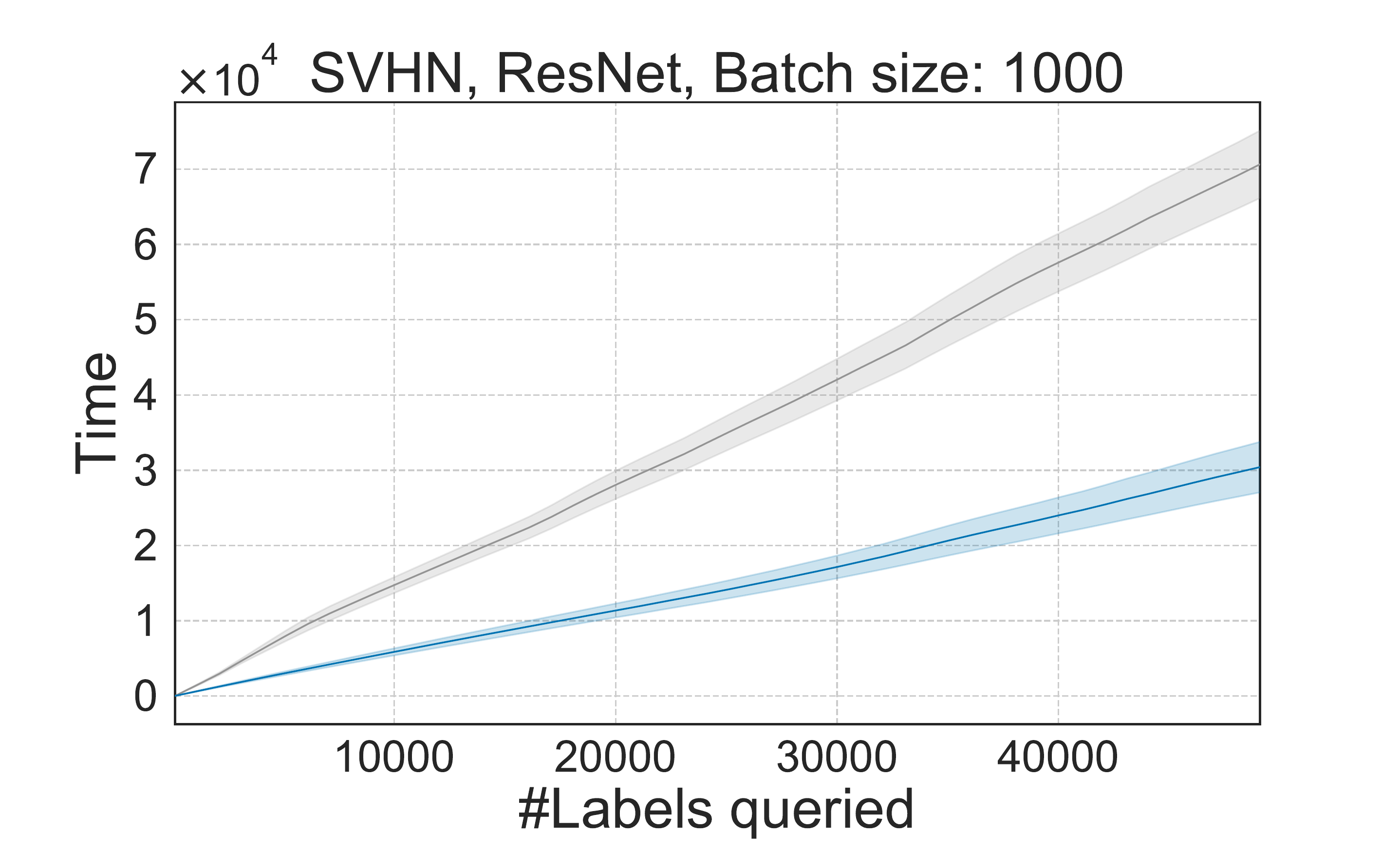}
\end{subfigure}
\caption{\textbf{Left and center}: Learning curves for \kmeansp and $k$-DPP sampling with gradient embeddings for different scenarios. The performance of the two sampling approaches nearly perfectly overlaps. \textbf{Right:} A run time comparison (seconds) corresponding to the middle scenario. Each line is the average over five independent experiments. Standard errors are shown by shaded regions.\looseness=-1}
\vskip -0.5cm
\label{fig:k-dpp-k-means}
\end{figure}
\vskip -0.9cm
\paragraph{The sampling step.}
We want the newly-acquired labeled samples to induce large and diverse changes to the model. To this end, we want the selection procedure to favor both sample magnitude and batch diversity. Specifically, we want to avoid the pathology of, for example, selecting a batch of $k$ similar samples where even just a single label could alleviate our uncertainty on all remaining $(k-1)$ samples.

A natural way of making this selection without introducing additional hyperparameters is to sample from a $k$-Determinantal Point Process ($k$-DPP; ~\citep{kulesza2011k}). That is, to select a batch of $k$ points with probability proportional to the determinant of their Gram matrix. Recently,~\cite{derezinski2018reverse} showed that in experimental design for least square linear regression settings, learning from samples drawn from a $k$-DPP can have much smaller mean square prediction error than learning from iid samples.
In this process, when the batch size is very low, the selection will naturally favor points with a large length, which corresponds to uncertainty in our space. When the batch size is large, the sampler focuses more on diversity because linear independence, which is more difficult to achieve for large $k$,  is required to make the Gram determinant non-zero\looseness=-1.

Unfortunately, sampling from a $k$-DPP is not trivial. Many sampling algorithms~\citep{kang2013fast, anari2016monte} rely on MCMC, where mixing time poses a significant computational hurdle. The state-of-the-art algorithm of~\citet{derezinski2018fast} has a high-order polynomial running time in the batch size and the embedding dimension.
To overcome this computational hurdle, we suggest instead sampling using the \kmeansp seeding algorithm~\citep{arthur2007k}, originally made to produce a good initialization for $k$-means clustering.
\kmeansp seeding selects centroids by iteratively sampling points in proportion to their squared distances from the nearest centroid that has already been chosen, which, like a $k$-DPP, tends to select a diverse batch of high-magnitude samples.
For completeness, we give a formal description of the \kmeansp seeding algorithm in Appendix~\ref{sec:kmeansp}.
\paragraph{Example: multiclass classification with softmax activations.} Consider a neural network $f$ where the last nonlinearity is a softmax, i.e. $\sigma(z)_i = \nicefrac{e^{z_i}}{\sum_{j=1}^K e^{z_j}}$.
Specifically, $f$ is parametrized by $\theta = (W, V)$, where $\theta_{\text{out}} = W = (W_1, \ldots, W_K)^\top \in \RR^{K \times d}$ are the weights of the last layer, and $V$ consists of weights of all previous layers. This means that $f(x;\theta) = \sigma(W \cdot z(x;V))$, where $z$ is the nonlinear function that maps an input $x$ to the output of the network's penultimate layer. Let us fix an unlabeled sample $x$ and define $p_i = f(x;\theta)_i$.
With this notation, we have
\[ \ell_{\ce}(f(x;\theta), y) =  \ln\rbr{\sum_{j=1}^K e^{W_j \cdot z(x;V)}} - W_y \cdot z(x;V). \]

Define $g^y_x = \frac{\partial}{\partial W} \ell_{\ce}(f(x;\theta), y)$ for a label $y$ and $g_x = g_x^{\hat y}$ as the gradient embedding in our algorithm, where $\hat y = \argmax_{i \in [K]} p_i$. Then the $i$-th block of $g_x$ (i.e. the gradients corresponding to label $i$) is
\begin{equation}
   (g_x)_i = \frac{\partial}{\partial W_i} \ell_{\ce}(f(x;\theta), \hat y) = ( p_i - I(\hat{y} = i) ) z(x;V).
\label{eqn:grad-i-blk}
\end{equation}
Based on this expression, we can make the following observations:

\begin{enumerate}
  \item Each block of $g_x$ is a scaling of $z(x;V)$, which is the output of the penultimate layer of the
  network. In this respect, $g_x$ captures $x$'s representation information similar to that of~\cite{sener2018active}.

  \item Proposition~\ref{prop:grad-lb} below shows that the norm of $g_x$ is a lower bound on the norm of the loss gradient induced by the example with true label $y$ with respect to the weights in the last layer, that is $\|g_x\| \leq \|g_x^y\|$. This suggests that the norm of $g_x$ conservatively estimates the example's influence on the current model.

  \item If the current model $\theta$ is highly confident about $x$, i.e. vector $p$ is skewed towards a
  standard basis vector $e_j$, then $\hat{y} = j$, and vector $(p_i - I(\hat{y}=i))_{i=1}^K$ has a small length. Therefore, $g_x$ has a small length as well. Such high-confidence examples tend to have gradient embeddings of small magnitude, which are unlikely to be repeatedly selected by \kmeansp at iteration $t$.
\end{enumerate}

\begin{proposition}
For all $y \in \cbr{1,\ldots,K}$, let $g_x^y = \frac{\partial}{\partial W} \ell_{\ce}(f(x;\theta), y)$. Then
\[ \| g_x^y \|^2 = \Big(\sum_{i=1}^K p_i^2 + 1 - 2p_y \Big) \| z(x;V) \|^2.  \]
Consequently, $\hat{y} = \argmin_{y \in [K]} \| g_x^y \|$.
\label{prop:grad-lb}
\end{proposition}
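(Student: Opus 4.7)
The proof is essentially a direct computation starting from equation~(\ref{eqn:grad-i-blk}), which was already derived in the preceding example. My plan is simply to take squared norms of the per-label blocks and sum.

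First, I would generalize equation~(\ref{eqn:grad-i-blk}) from the hypothetical label $\hat y$ to an arbitrary label $y \in [K]$. Retracing the same softmax/cross-entropy derivative calculation with $y$ in place of $\hat y$ yields the $i$-th block
\[
(g_x^y)_i \;=\; \bigl(p_i - I(y = i)\bigr)\, z(x;V).
\]
Since the blocks partition the coordinates of $g_x^y$, we get
\[
\|g_x^y\|^2 \;=\; \sum_{i=1}^K \|(g_x^y)_i\|^2 \;=\; \Bigl(\sum_{i=1}^K (p_i - I(y=i))^2\Bigr)\, \|z(x;V)\|^2,
\]
so the whole claim reduces to expanding the scalar factor.

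Next, I would expand the square: $\sum_i (p_i - I(y=i))^2 = \sum_i p_i^2 - 2\sum_i p_i I(y=i) + \sum_i I(y=i)^2$. Using $\sum_i p_i I(y=i) = p_y$ and $\sum_i I(y=i)^2 = 1$ (as $y$ is a single label in $[K]$) gives the stated identity $\sum_i p_i^2 + 1 - 2 p_y$.

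For the consequence, observe that in the expression for $\|g_x^y\|^2$ the quantities $\sum_i p_i^2$ and $\|z(x;V)\|^2$ are independent of $y$; only the term $-2 p_y$ depends on $y$. Hence minimizing $\|g_x^y\|$ over $y \in [K]$ is equivalent to maximizing $p_y$, whose maximizer is by definition $\hat y = \argmax_{i \in [K]} p_i$. There is no real obstacle here — the whole argument is a one-line expansion once equation~(\ref{eqn:grad-i-blk}) is generalized; the only thing worth being careful about is to state cleanly that the blocks of $g_x^y$ are orthogonal in the parameter space so that $\|g_x^y\|^2$ decomposes as a sum of block norms.
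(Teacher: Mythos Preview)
Your proposal is correct and matches the paper's own proof essentially line for line: both generalize equation~(\ref{eqn:grad-i-blk}) to an arbitrary label, sum the squared block norms to get $\sum_i (p_i - I(y=i))^2 \, \|z(x;V)\|^2$, expand the square, and deduce the $\argmin$ claim from the fact that only the $-2p_y$ term depends on $y$.
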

\begin{proof}
Observe that by Equation~\eqref{eqn:grad-i-blk},
\[ \| g_x^y \|^2 = \sum_{i=1}^K \big(p_i - I(y = i)\big)^2 \| z(x;V) \|^2 = \Big(\sum_{i=1}^K p_i^2 + 1 - 2p_y\Big) \| z(x;V) \|^2. \]
The second claim follows from the fact that $\hat{y} = \argmax_{y \in [K]} p_y$.
\end{proof}
\vspace{-0.3cm}

This simple sampler tends to produce diverse batches similar to a $k$-DPP.  As shown in Figure~\ref{fig:k-dpp-k-means}, switching between the two samplers does not affect the active learner's statistical performance but greatly improves its computational performance. Appendix~\ref{sec:comp} compares run time and test accuracy for both \kmeansp and $k$-DPP based sampling based on the gradient embeddings of the unlabeled examples.

Figure~\ref{fig:k-dpp-2} illustrates the batch diversity and average gradient magnitude per selected batch for a variety of sampling strategies. As expected, both $k$-DPPs and \kmeansp
tend to select samples that are diverse (as measured by the magnitude of their Gram determinant) and high magnitude. Other samplers, such as furthest-first traversal for $k$-Center clustering (\ffkc), do not seem to have this property.
The \ffkc algorithm is the sampling choice of the \coreset approach to active learning, which we describe in the proceeding section~\citep{sener2018active}. Appendix~\ref{sec:batchdiv} discusses diversity with respect to uncertainty-based approaches. \looseness=-1


\begin{figure}
\centering
\begin{subfigure}[b]{0.65\linewidth}
  \includegraphics[trim={0cm 0cm 0cm 0cm}, width=\textwidth]{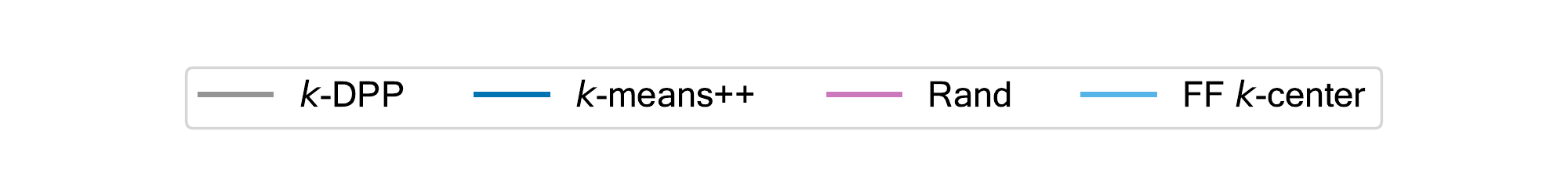}
\end{subfigure}
\\
\begin{subfigure}[b]{0.32\linewidth}
\includegraphics[trim={0cm 0cm 2cm 0cm}, clip, width=\textwidth]{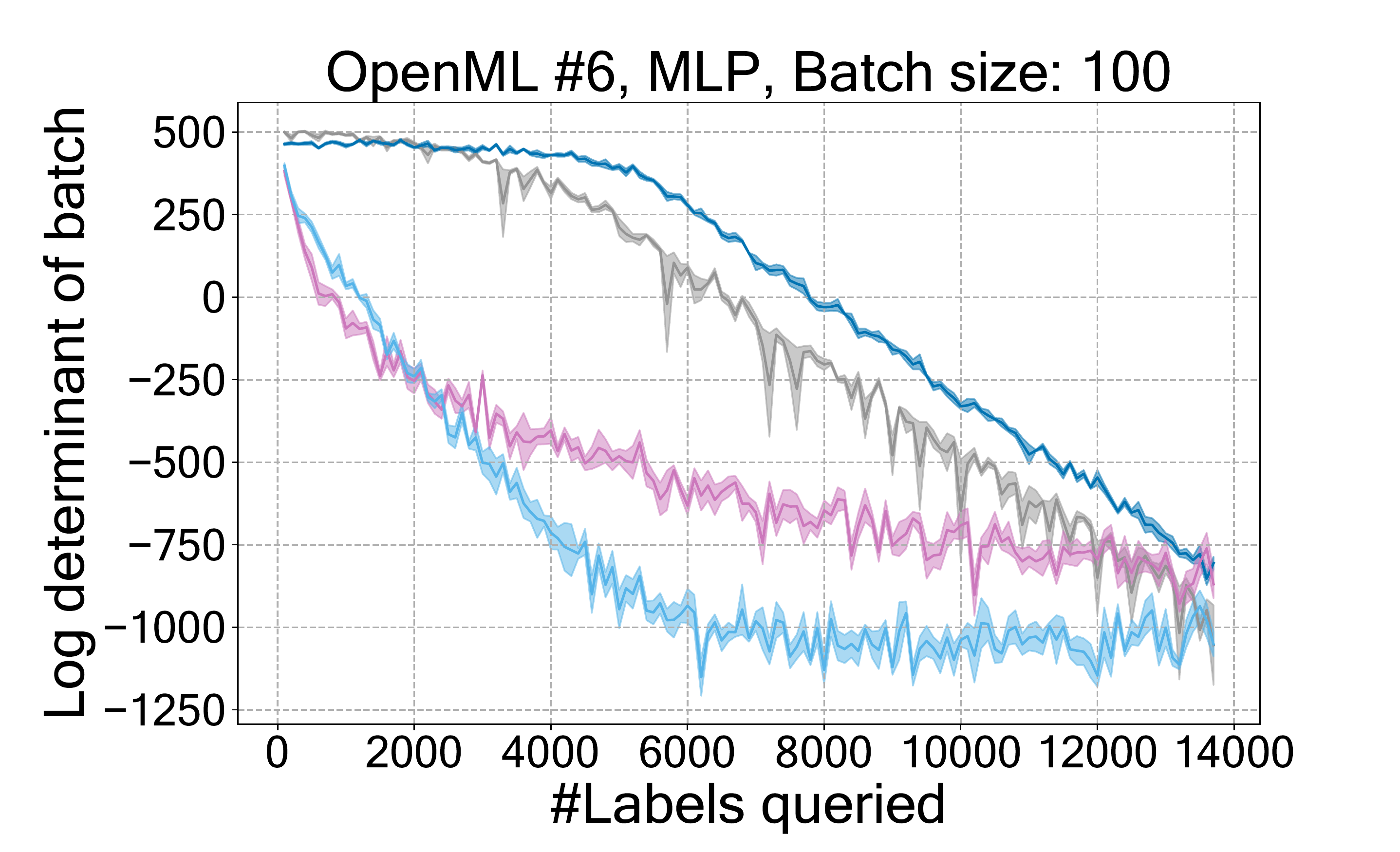}
\end{subfigure}
\hfill
\begin{subfigure}[b]{0.32\linewidth}
  \includegraphics[trim={0cm 0cm 2cm 0cm}, clip, width=\textwidth]{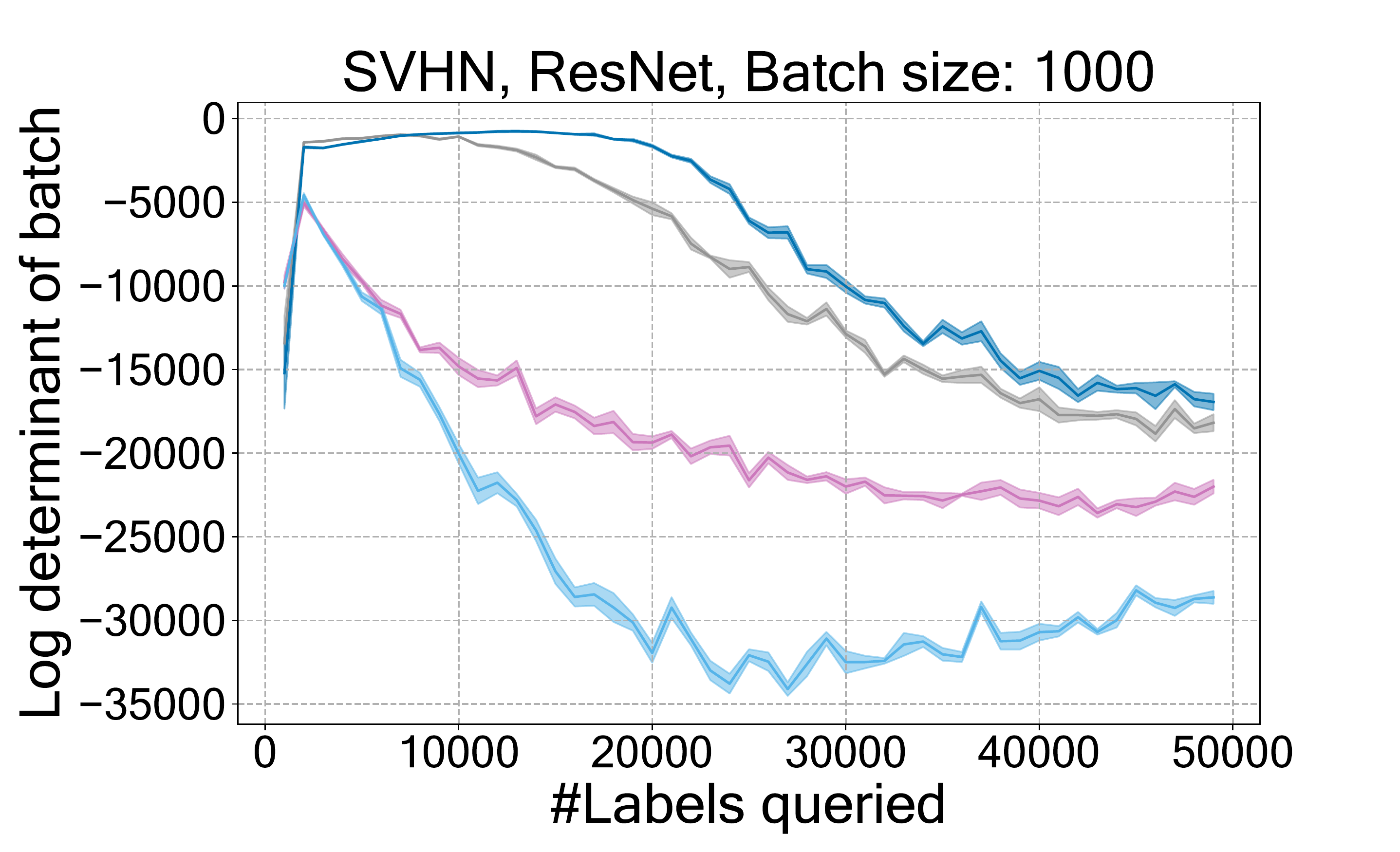}
\end{subfigure}
\hfill
\begin{subfigure}[b]{0.32\linewidth}
  \includegraphics[trim={0cm 0cm 2cm 0cm}, clip, width=\textwidth]{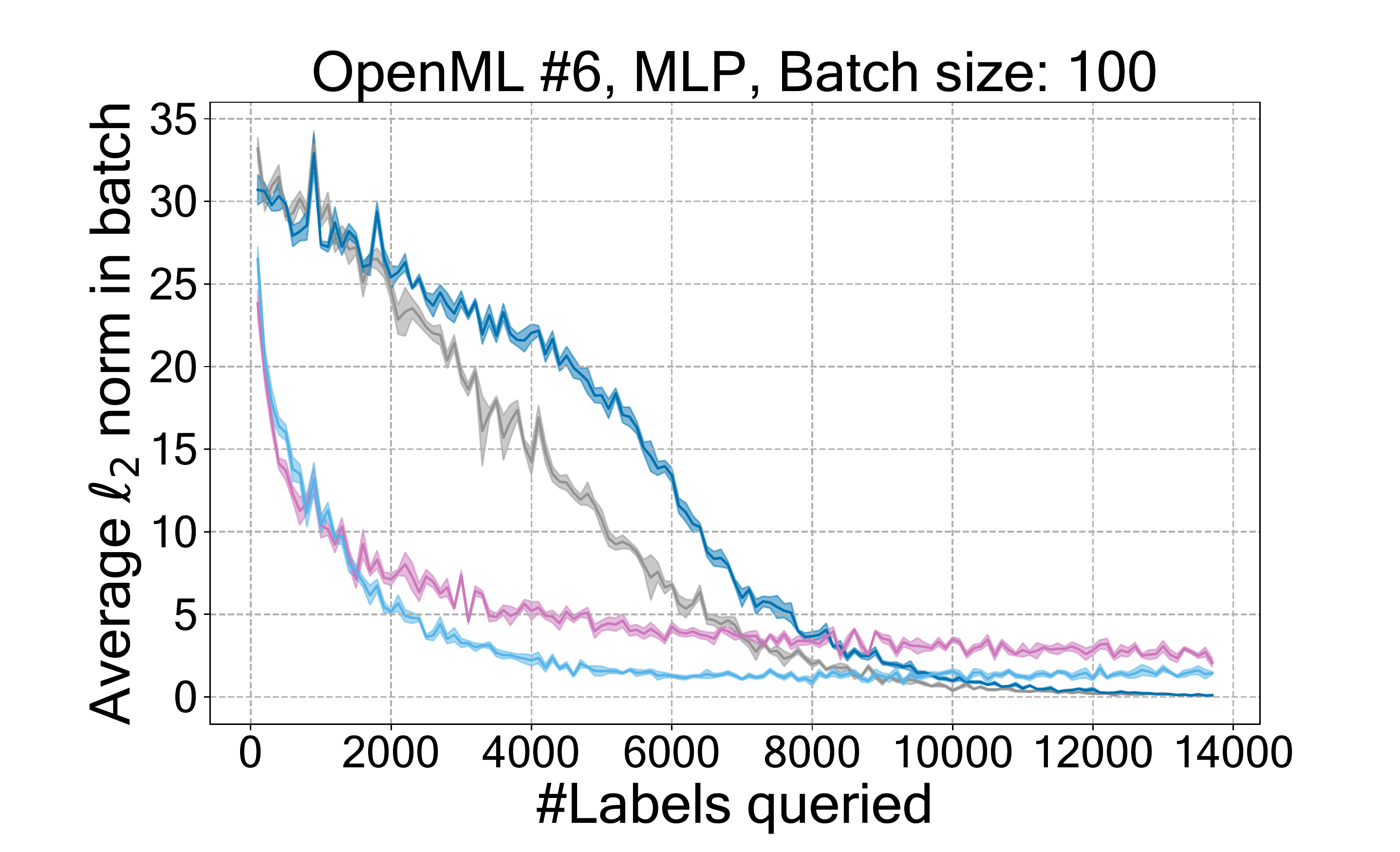}
\end{subfigure}
\caption{A comparison of batch selection algorithms using our gradient embedding. \textbf{Left and center:} Plots showing the log determinant of the Gram matrix of the selected batch of gradient embeddings as learning progresses. \textbf{Right:} The average embedding magnitude (a measurement of predictive uncertainty) in the selected batch. The \ffkc sampler finds points that are not as diverse or high-magnitude as other samplers. Notice also that \kmeansp tends to actually select samples that are both more diverse and higher-magnitude than a $k$-DPP, a potential pathology of the $k$-DPP's degree of stochastisity. Standard errors are shown by shaded regions. \looseness=-1}
\label{fig:k-dpp-2}
\end{figure}

Appendix~\ref{sec:bin-lr} provides further justification for why \ouralg yields better updates than
vanilla uncertainty sampling in the special case of binary logistic regression ($K=2$ and $z(x;V) = x$).

\vspace{-0.2cm}
\section{Experiments}
\vskip -0.3cm
\label{sec:experiments}
We evaluate the performance of \ouralg against several algorithms from the literature. In our experiments, we seek to answer the following question: How robust are the learning algorithms to choices of neural network architecture, batch size, and dataset?

To ensure a comprehensive comparison among all algorithms, we evaluate them in a batch-mode active learning setup with $M = 100$ being the number of initial random labeled examples and batch size $B$ varying from $\cbr{100, 1000, 10000}$.
The following is a list of the baseline algorithms evaluated; the first performs representative sampling, the next three are uncertainty based, the fifth is a hybrid of representative and uncertainty-based approaches, and the last is traditional supervised learning.\looseness=-1
\begin{enumerate}
\item \coreset: A diversity-based approach using coreset selection. The embedding
of each example is computed by the network's penultimate layer
and the samples at each round are selected using a greedy furthest-first traversal
conditioned on all labeled examples~\citep{sener2018active}.\looseness=-1

\item \conf (Confidence Sampling): An uncertainty-based active learning algorithm that selects $B$
examples with smallest predicted class probability, $\max_{i=1}^K f(x;\theta)_i$~\citep[e.g.][]{wang2014new}. \looseness=-1

\item \marg (Margin Sampling): An uncertainty-based active learning algorithm that selects the bottom $B$
examples sorted according to the example's multiclass margin, defined as
$f(x;\theta)_{\hat{y}} - f(x;\theta)_{y'}$, where $\hat{y}$ and $y'$ are the indices of the
largest and second largest entries of $f(x;\theta)$~\citep{roth2006margin}.\looseness=-1

\item \entropy: An uncertainty-based active learning algorithm that selects the top $B$
examples according to the entropy of the example's predictive class probability distribution,
defined as $H((f(x;\theta)_y)_{y=1}^K)$, where $H(p) = \sum_{i=1}^K p_i \ln\nicefrac{1}{p_i}$~\citep{wang2014new}.

\item \albl (Active Learning by Learning): A bandit-style meta-active learning algorithm that selects between \coreset and \conf at every round~\citep{hsu2015active}.


\item \rand: The naive baseline of randomly selecting $k$ examples to query at each round.
\end{enumerate}
We consider three neural network architectures: a two-layer Perceptron with ReLU activations (MLP), an 18-layer convolutional ResNet~\citep{resnet}, and an 11-layer VGG network~\citep{vgg}. We evaluate our algorithms using three image datasets, SVHN~\citep{svhn}, CIFAR10~\citep{cifar} and MNIST~\citep{mnist}
\footnote{Because MNIST is a dataset that is extremely easy to classify, we only use MLPs, rather than convolutional networks, to better study the differences between active learning algorithms.}, and four non-image datasets from the OpenML repository (\#6, \#155, \#156, and \#184).
\footnote{The OpenML datasets are from \url{openml.org} and
are selected on two criteria: first, they have at least 10000 samples; second,
neural networks have a significantly smaller test error rate when compared to
linear models.}
We study each situation with 7 active learning algorithms, including \ouralg, making for 231 total experiments.

For the image datasets, the embedding dimensionality in the MLP is 256.
For the OpenML datasets, the embedding dimensionality of the MLP is 1024, as more capacity helps the model fit training data.
We fit models using cross-entropy loss and the Adam variant of SGD until  training accuracy exceeds 99\%. We use a learning rate of $0.001$ for image data and of $0.0001$ for non-image data. We avoid warm starting and retrain models from scratch every time new samples are queried~\citep{ash2019difficulty}. All experiments are repeated five times. No learning rate schedules or data augmentation are used. Baselines use implementations from the libact library~\citep{yang2017libact}. All models are trained in PyTorch~\citep{pytorch}.\looseness=-1



\vspace{-0.2cm}
\paragraph{Learning curves.} Here we show examples of learning curves that highlight some of the phenomena we observe related to the fragility of active learning algorithms with respect to batch size, architecture, and dataset.\looseness=-1
\begin{figure}
\begin{subfigure}[b]{\linewidth}
\centering
  \includegraphics[trim={-1cm 1cm 1cm 1.8cm},clip,width=.95\textwidth]{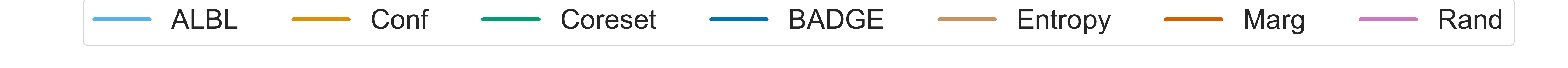}
\end{subfigure}

\begin{subfigure}[b]{0.013\linewidth}
\includegraphics[trim={0.4cm 0cm 27.9cm 0cm}, clip, width=\textwidth]{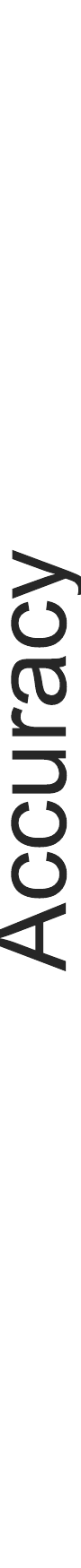}
\label{gp}
\end{subfigure}
\begin{subfigure}[b]{0.31\linewidth}
\includegraphics[trim={1.5cm 0cm 1.6cm 0cm}, clip, width=\textwidth]{figs/learning_curves/all_algs_Accuracy_Data=_SVHN__Model=_rn__nQuery=_100__TrainAug=_0___.pdf}
\caption{}
\label{fp}
\end{subfigure}
\begin{subfigure}[b]{0.31\linewidth}
  \includegraphics[trim={1.5cm 0cm 1.6cm 0cm}, clip, width=\textwidth]{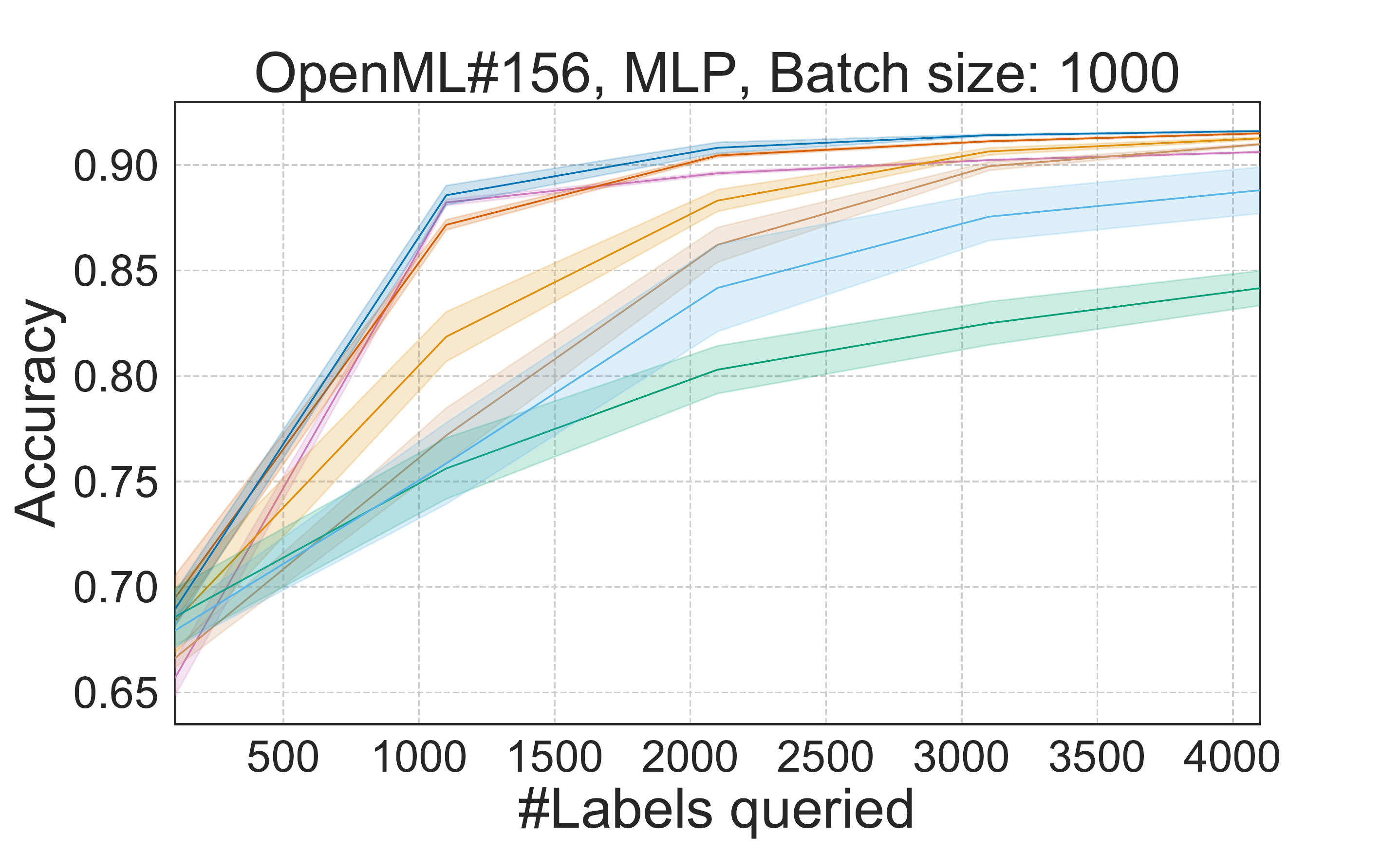}
  \caption{}
  \label{sp}
\end{subfigure}
\begin{subfigure}[b]{0.31\linewidth}
  \includegraphics[trim={1.5cm 0cm 1.6cm 0cm}, clip, width=\textwidth]{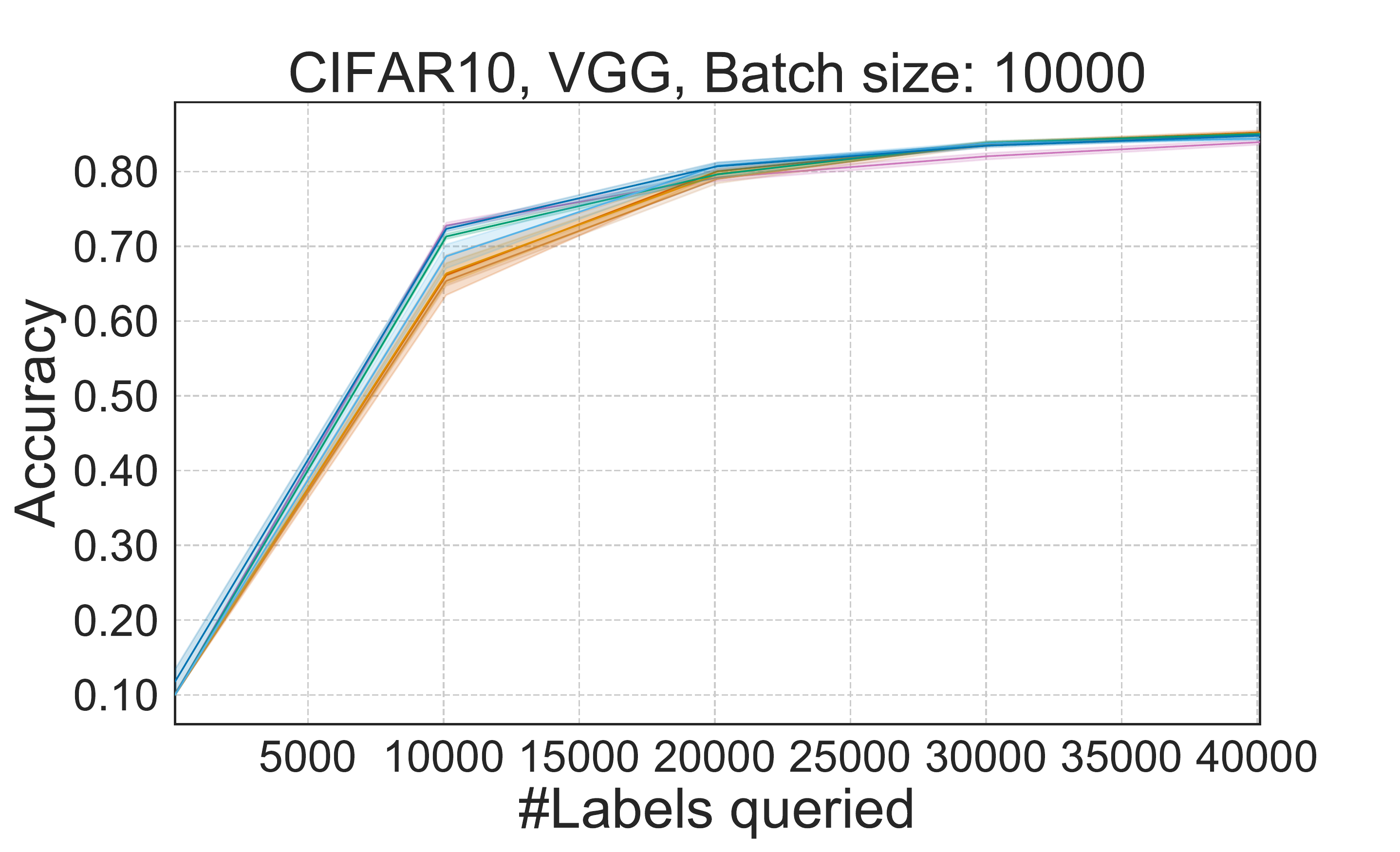}
  \caption{}
  \label{lp}
\end{subfigure}
\\
\centering
\vskip -0.3cm
\caption{Active learning test accuracy versus the number of total labeled samples for a range of conditions. Standard errors are shown by shaded regions.\looseness=-1
}
\label{lrplot}
\vskip -0.55cm
\end{figure}

Often, we see that in early rounds of training, it is better to do diversity sampling, and later in training, it is better to do uncertainty sampling. This kind of event is demonstrated in Figure~\ref{fp}, which shows \coreset outperforming confidence-based methods at first, but then doing worse than these methods later on.

\begin{wrapfigure}{r}{0.50\textwidth}
\vspace{-0.55cm}
  \begin{center}
    \centering{\includegraphics[trim={4.2cm 2.0cm 2cm 1cm}, clip, scale=0.75]{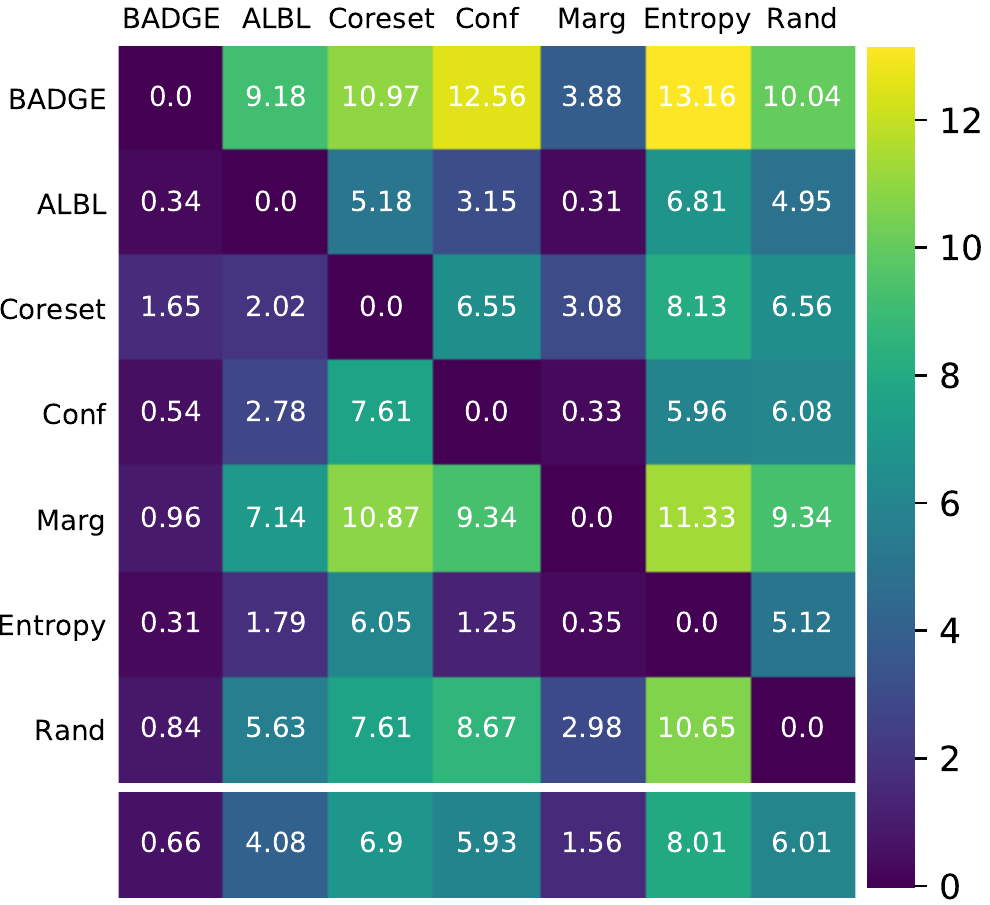}}
  \end{center}
  \vskip -0.2cm
  \caption{\small{A pairwise penalty matrix over all experiments. Element $P_{i,j}$ corresponds roughly to the number of times algorithm $i$ outperforms algorithm $j$. Column-wise averages at the bottom show overall performance (lower is better)}.\looseness=-1
  \label{fig:penalty}
 }
\vskip -0.7cm
\end{wrapfigure}
In this figure, \ouralg performs as well as diversity sampling when that strategy does best, and as well as uncertainty sampling once those methods start outpacing \coreset. This suggests that \ouralg is a good choice regardless of labeling budget.\looseness=-1

Separately, we notice that diversity sampling only seems to work well when either the model has good architectural priors (inductive biases) built in, or when the data are easy to learn. Otherwise, penultimate layer representations are not meaningful, and diverse sampling can be deleterious. For this reason, \coreset often performs worse than random on sufficiently complex data when not using a convolutional network (Figure~\ref{sp}). That is, the diversity induced by unconditional random sampling can often yield a batch that better represents the data.
Even when batch size is large and the model has helpful inductive biases, the  uncertainty information in \ouralg can give it an advantage over pure diversity approaches (Figure~\ref{lp}).
Comprehensive plots of this kind, spanning architecture, dataset, and batch size are in Appendix~\ref{sec:lcs}.\looseness=-1

\paragraph{Pairwise comparisons.}
We next show a comprehensive pairwise comparison of algorithms over all datasets ($D$), batch sizes ($B$),
model architectures ($A$), and label budgets ($L$).
From the learning curves, it can be observed that when label budgets are large enough, all algorithms eventually reach similar
performance, making the comparison between them uninteresting in the large sample limit.
For this reason, for each combination of $(D, B, A)$, we select a set of labeling budgets $L$ where learning is still progressing.
We experimented with three different batch sizes and eleven dataset-architecture pairs, making the total number of $(D, B, A)$ combinations $3 \times 11 = 33$.
Specifically, we compute $n_0$, the smallest number of labels where \rand's accuracy reaches 99\% of its final accuracy, and choose label budget $L$
from $\cbr{M+2^{m-1} B: m \in [\lfloor\log((n_0 - M)/B)\rfloor]}$.
The calculation of scores in the penalty matrix $P$
follows the following protocol: For each $(D, B, A, L)$ combination and each pair of algorithms $(i, j)$, we have $5$ test errors (one for each repeated run), 
$\cbr{e_i^1, \ldots, e_i^5}$ and $\cbr{e_j^1, \ldots, e_j^5}$ respectively. We compute the $t$-score as $t = \nicefrac{\sqrt{5} \hat{\mu}}{\hat{\sigma}}$, where
\vspace{-0.2cm}
\begin{wrapfigure}{r}{0.45\textwidth}
\vspace{-0.0cm}
\vskip -2.5cm
  \begin{center}
    \includegraphics[trim={0cm 0cm 2cm 2cm}, clip, width=\linewidth]{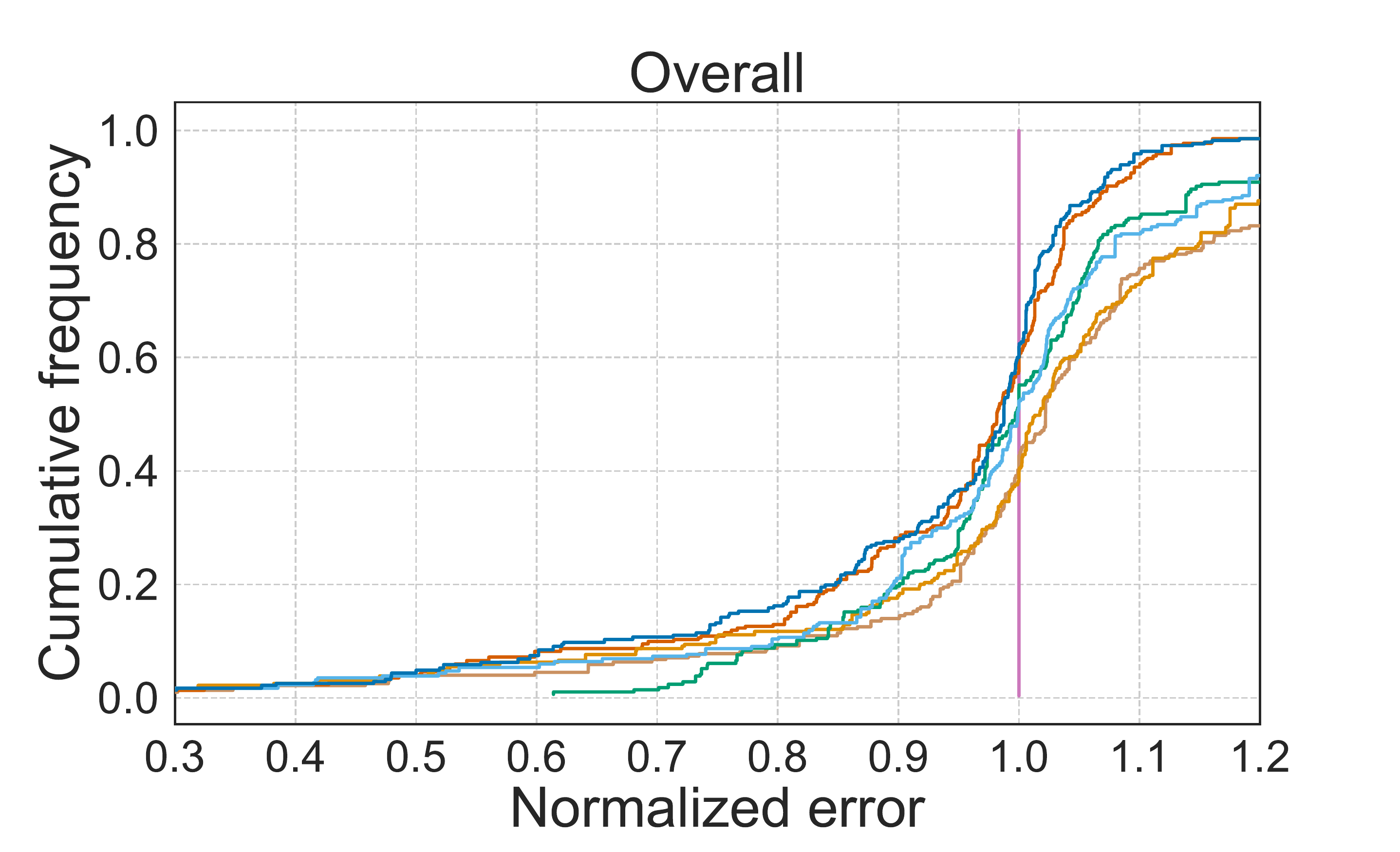}
  \end{center}
  \begin{subfigure}[b]{\linewidth}
    \vspace{-0.3cm}
    \includegraphics[trim={0cm 0cm 0cm 0cm}, clip, width=\textwidth]{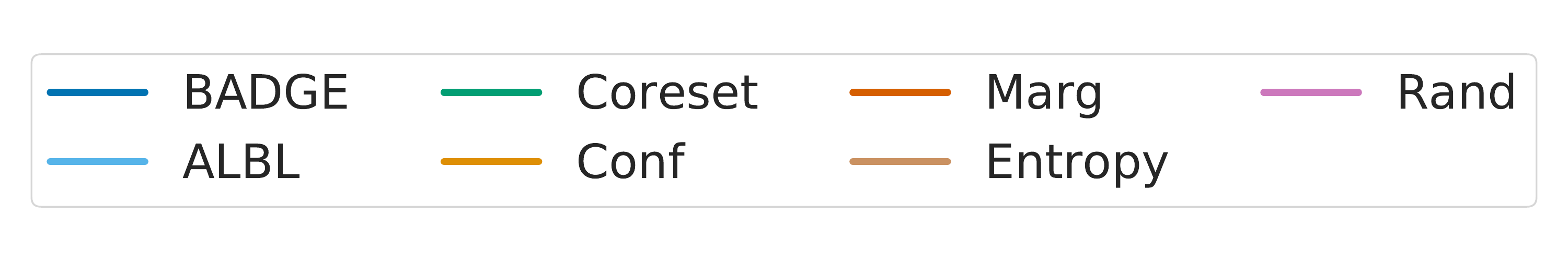}
  \end{subfigure}
  \vskip -0.2cm
  \caption{The cumulative distribution function of normalized errors for all acquisition functions.}
  \vskip -0.6cm
  \label{fig:cdf-all}
\end{wrapfigure}
{
\small
\[  \hat{\mu} = \frac{1}{5} \sum_{l=1}^5 (e_i^l - e_j^l), \;\;\;\;\;\;
  \hat{\sigma} = \sqrt{\frac{1}{4} \sum_{l=1}^5 (e_i^l - e_j^l - \hat{\mu})^2 }. \]
}
We use the two-sided $t$-test to compare pairs of algorithms: algorithm $i$ is said to {\em beat} algorithm $j$ in this setting if $t > 2.776$ (the critical point of $p$-value being $0.05$), and similarly algorithm $j$ beats
algorithm $i$ if $t < -2.776$.
For each $(D, B, A)$ combination, suppose there are $n_{D,B,A}$ different values of $L$. Then, for each $L$, if algorithm $i$ beats algorithm $j$, we accumulate a penalty of $1/n_{D,B,A}$ to $P_{i,j}$; otherwise, if
algorithm $j$ beats algorithm $i$, we accumulate a penalty of $1/n_{D,B,A}$ to $P_{j,i}$.
The choice of the penalty value $1/n_{D,B,A}$ is to ensure that every $(D, B, A)$ combination is assigned equal influence in the aggregated matrix. Therefore, the largest entry of $P$ is at most 33, the total number of $(D, B, A)$ combinations. 
Intuitively, each row $i$ indicates the number of settings in which
algorithm $i$ beats other algorithms and each column $j$ indicates the number of settings in which algorithm $j$
is beaten by another algorithm.


The penalty matrix in Figure~\ref{fig:penalty} summarizes all experiments, showing that \ouralg generally outperforms baselines. Matrices grouped by batch size and architecture in Appendix~\ref{sec:pairwise} show a similar trend.\looseness=-1

\vspace{-0.2cm}
\paragraph{Cumulative distribution functions of normalized errors.} For each $(D, B, A, L)$
combination, we compute the average error for each algorithm $i$ as $\bar{e}_{i} = \frac{1}{5} \sum_{l=1}^5 e_i^l$. To
ensure that the errors of these algorithms are on the same scale in all settings, we compute the normalized error
of every algorithm $i$, defined as $\ner_i = \bar{e}_i / \bar{e}_r$, where $r$ is the index of the \rand algorithm.
By definition, the normalized errors of the \rand algorithm are identically 1 in all settings.
Like with penalty matrices, for each $(D, B, A)$ combination, we only consider a subset of $L$ values from the set $\cbr{M+2^{m-1} B: m \in [\lfloor\log((n_0 - M)/B)\rfloor]}$. We assign a weight proportional to $1/n_{D,B,A}$ to each $(D,B,A,L)$ combination, where there are $n_{D,B,A}$ different $L$ values for this combination
of $(D, B, A)$.
We then plot the cumulative distribution
functions (CDFs) of the normalized errors of all algorithms: for a value of $x$, the $y$ value is the total weight of settings where
the algorithm has normalized error at most $x$; in general, an algorithm that has a higher CDF value has better performance.\looseness=-1

We plot the generated CDFs in Figures~\ref{fig:cdf-all},~\ref{figs:cdfs-batch-sizes} and~\ref{figs:cdfs-models}. We can see from Figure~\ref{fig:cdf-all} that \ouralg has the best overall performance.
In addition, from Figures~\ref{figs:cdfs-batch-sizes} and~\ref{figs:cdfs-models} in Appendix~\ref{sec:cdfs}, we can conclude that when batch size is small (100 or 1000) or when an MLP is used, both \ouralg and \marg perform best. However, in the regime when the batch size
is large (10000), \marg's performance degrades, while \ouralg, \albl and \coreset are the best performing approaches.
\vskip -1cm

\section{Related work}
\vskip -0.3cm
\label{sec:related}
Active learning is a been well-studied problem~\citep{S10, D11, H14}.
There are two major strategies for active learning---representative sampling and uncertainty sampling.

Representative sampling algorithms select batches of unlabeled examples that are representative
of the unlabeled set to ask for labels. It is based on the intuition that the sets of
representative examples chosen, once labeled, can act as a surrogate for the full dataset. Consequently, performing loss minimization on the surrogate suffices to ensure
a low error with respect to the full dataset. In the context of deep learning,
~\citet{sener2018active, geifman2017deep} select representative
examples based on core-set construction, a fundamental problem in computational geometry.
Inspired by generative adversarial learning, ~\citet{gissin2019discriminative} select
 samples that are maximally indistinguishable from the pool of unlabeled examples.\looseness=-1

On the other hand, uncertainty sampling is based on a different principle---to select new samples that maximally reduce the uncertainty the algorithm has
on the target classifier. In the context of linear classification,~\citet{TK01, schohn2000less, tur2005combining}
propose uncertainty sampling methods that query
 examples that lie closest to the current decision boundary. Some uncertainty sampling approaches have theoretical guarantees on statistical consistency~\citep{H14, A^2}. Such methods have also been
 recently generalized to deep learning. For instance, ~\citet{gal2017deep} use
 Dropout as an approximation of the posterior of the model parameters, and develop
 information-based uncertainty reduction criteria; inspired
 by recent advances on adversarial examples generation, ~\citet{ducoffe2018adversarial}
 use the distance between an example and one of its adversarial examples as an approximation
 of its distance to the current decision boundary, and uses it as the criterion of label queries. An ensemble of classifiers could also be used to effectively estimate uncertainty~\citep{ensembles}.


There are several existing approaches that support a hybrid of representative sampling and
uncertainty sampling. For example, ~\citet{baram2004online,hsu2015active} present meta-active learning
algorithms that can combine the advantages of different active learning algorithms.
Inspired by expected loss minimization, ~\citet{huang2010active} develop label query
criteria that balances between the representativeness and informativeness of examples.
Another method for this is Active Learning by Learning~\citep{hsu2015active}, which can select whether to exercise a diversity based algorithm or an uncertainty based algorithm at each round of training as a sequential decision process.

There is also a large body of literature on batch mode
active learning, where the learner
is asked to select a batch of samples within each round~\citep{guo2008discriminative, wang2015querying, pmlr-v28-chen13b, wei2015submodularity,batchbald}. In these works, batch selection is often formulated
as an optimization problem with objectives based on (upper bounds of)
 average log-likelihood, average squared loss, etc.



A different query criterion based on expected gradient length (EGL) has
been proposed in the as well~\citep{settles2008multiple}.
In recent work,~\citet{huang2016active} show that the EGL criterion is related to
the $T$-optimality criterion in experimental design. They further demonstrate that the
samples selected by EGL are very different from those by entropy-based uncertainty criterion.
~\citet{zhang2017active} use the EGL criterion in active sentence and document
classification with CNNs. These approaches differ most substantially from \ouralg in that they do
not take into account the diversity of the examples queried within each batch.


There is a wide array of theoretical articles that focus on the related problem of adaptive subsampling for fully-labeled datasets in regression
settings~\citep{han2016local, wang2018optimal, ting2018optimal}.
Empirical studies of batch stochastic gradient descent also employ adaptive sampling to ``emphasize''
hard or representative examples~\citep{zhang2017determinantal, chang2017active}.
These works aim at reducing computation costs or finding a better local optimal solution, as opposed to
reducing label costs.
Nevertheless, our work is inspired by their sampling criteria, which also emphasize samples that induce
large updates to the model.



As mentioned earlier, our sampling criterion has resemblance to sampling from $k$-determinantal
point processes~\citep{kulesza2011k}. Note that in multiclass classification settings,
our gradient-based embedding
of an example can be viewed as the outer product of the original embedding in the penultimate layer
and a probability score vector that encodes the uncertainty information on this example
(see Section~\ref{sec:alg}).
In this view, the penultimate layer embedding characterizes the diversity of each example,
whereas the probability score vector characterizes the quality of each example.
The $k$-DPP is also a natural probabilistic tool for sampling that trades off between
quality and diversity~\citep[See][Section 3.1]{kulesza2012determinantal}. We remark that concurrent to our work, ~\citet{biyik2019batch} develops $k$-DPP based active learning algorithms based on this principle by explicitly designing diversity and uncertainty measures. \looseness=-1
\vskip -0.5cm

\section{Discussion}
We have established that \ouralg{} is empirically an effective deep active learning algorithm across different architectures and batch sizes, performing similar to or better than other active learning algorithms.  A fundamental remaining question is: "Why?" While deep learning is notoriously difficult to analyze theoretically, there are several intuitively appealing properties of \ouralg{}:
\begin{enumerate}
    \item The definition of uncertainty (a lower bound on the gradient magnitude of the last layer) guarantees some update of parameters.
    \item It optimizes for diversity as well as uncertainty, eliminating a failure mode of choosing many identical uncertain examples in a batch, and does so without requiring any hyperparameters.
    \item The randomization associated with the \kmeansp initialization sampler implies that, even for adversarially constructed datasets, it eventually converges to a good solution.
\end{enumerate}
\vskip -0.2cm
The combination of these properties appears to generate the robustness that we observe empirically.

\bibliographystyle{unsrtnat}
\bibliography{deepactive}

\appendix

\section{The \kmeansp seeding algorithm}
\label{sec:kmeansp}
Here we briefly review the \kmeansp seeding algorithm by~\citep{arthur2007k}. Its basic idea is
to perform sequential sampling of $k$ centers, where each new center is sampled from the
ground set with probability proportional to the squared distance to its nearest center.
It is shown in~\citep{arthur2007k} that the set of centers returned is guaranteed to
approximate the $k$-means objective function in expectation, thus ensuring diversity.

\begin{algorithm}
\begin{algorithmic}
\REQUIRE Ground set $G \subset \RR^d$, target size $k$.

\ENSURE Center set $C$ of size $k$.

\STATE $C_1 \gets \cbr{c_1}$, where $c_1$ is sampled uniformly at random from $G$.

\FOR{$t = 2,\ldots,k$:}

\STATE Define $D_t(x) := \min_{c \in C_{t-1}} \| x - c \|_2$.

\STATE $c_t \gets$ Sample $x$ from $G$ with probability $\frac{D_t(x)^2}{\sum_{x \in G} D_t(x)^2}$.

\STATE $C_t \gets C_{t-1} \cup \cbr{c_t}$.

\ENDFOR

\RETURN $C_k$.

\end{algorithmic}
  \caption{The \kmeansp seeding algorithm~\citep{arthur2007k}}
  \label{alg:kmeansp}
\end{algorithm}

\section{\ouralg for binary logistic regression}
\label{sec:bin-lr}

We consider instantiating \ouralg for binary logistic regression, where $\Ycal = \cbr{-1,+1}$. Given a linear classifier $w$, we define the predictive probability of $w$ on $x$ as $p_w(y|x,\theta) = \sigma(y w \cdot x)$, where $\sigma(z) = \frac{1}{1+e^{-z}}$ is the sigmoid funciton.

Recall that $\hat{y} = \hat{y}(x)$ is the hallucinated label:
\[ \hat{y}(x) = \begin{cases} +1, & p_w(+1|x,\theta) > 1/2, \\ -1, & p_w(+1|x,\theta) \leq 1/2. \end{cases} \]

The binary logistic loss of classifier $w$ on example $(x,y)$ is defined as:
\[ \ell(w, (x, y)) = \ln(1 + \exp(-y w \cdot x)). \]

Now, given model $w$ and example $x$, we define $\hat{g}_x = \frac{\partial}{\partial w} \ell(w, (x, \hat{y})) = (1 - p_w(\hat{y}|x,\theta)) \cdot (-\hat{y} \cdot x)$ as the loss gradient induced by the example with hallucinated label, and $\tilde{g}_x = \frac{\partial}{\partial w} \ell(w, (x, y)) = (1 - p_w(y|x,\theta)) \cdot (-y \cdot x)$ as the loss gradient induced by the example with true label.

Suppose that \ouralg only selects examples from region $S_w = \cbr{x: w \cdot x = 0}$, then as $p_w(+1|x,\theta) = p_w(-1|x,\theta) = \frac12$, we have that
for all $x$ in $S_w$, $\hat{g}_x = s_x \cdot g_x$ for some $s_x \in \cbr{\pm 1}$.
This implies that,
sampling from a DPP induced by ${\hat{g}_x}$'s is equivalent to sampling from a DPP induced by $g_x$'s.
It is noted in~\citet{mussmann2018uncertainty} that uncertainty sampling (i.e. sampling from $D_{|S_w}$) implicitly performs preconditioned stochastic gradient descent on the expected 0-1 loss. In addition, it has been shown that DPP sampling over gradients may reduce the variance of the mini-batch stochastic gradient updates~\citep{zhang2017determinantal}; this suggests that \ouralg, when restricted its sampling over low-margin regions ($S_w$), improves over uncertainty sampling by collecting examples that together induce lower-variance updates on the gradient direction of expected 0-1 loss.


%


\vskip 0.5cm
\section{All learning curves}
\label{sec:lcs}
\vskip -0.3cm
We plot all learning curves (test accuracy as a function of the number of labeled example queried) in
Figures~\ref{fig:6-lc-full} to~\ref{fig:cifar10-lc-full}.
In addition, we zoom into regions of the learning curves that discriminates the performance of all algorithms in Figures~\ref{fig:6-lc} to~\ref{fig:cifar10-lc}.


\begin{figure}
  \centering
  \includegraphics[trim={0.4cm 0cm 27.9cm 0cm}, clip, width=0.012\textwidth]{figs/learning_curves/all_algs_Accuracy_Data=_SVHN__Model=_rn__nQuery=_100__TrainAug=_0___.pdf}
  \includegraphics[trim={1.5cm 0cm 1.6cm 0cm}, clip,  width=0.32\textwidth]{{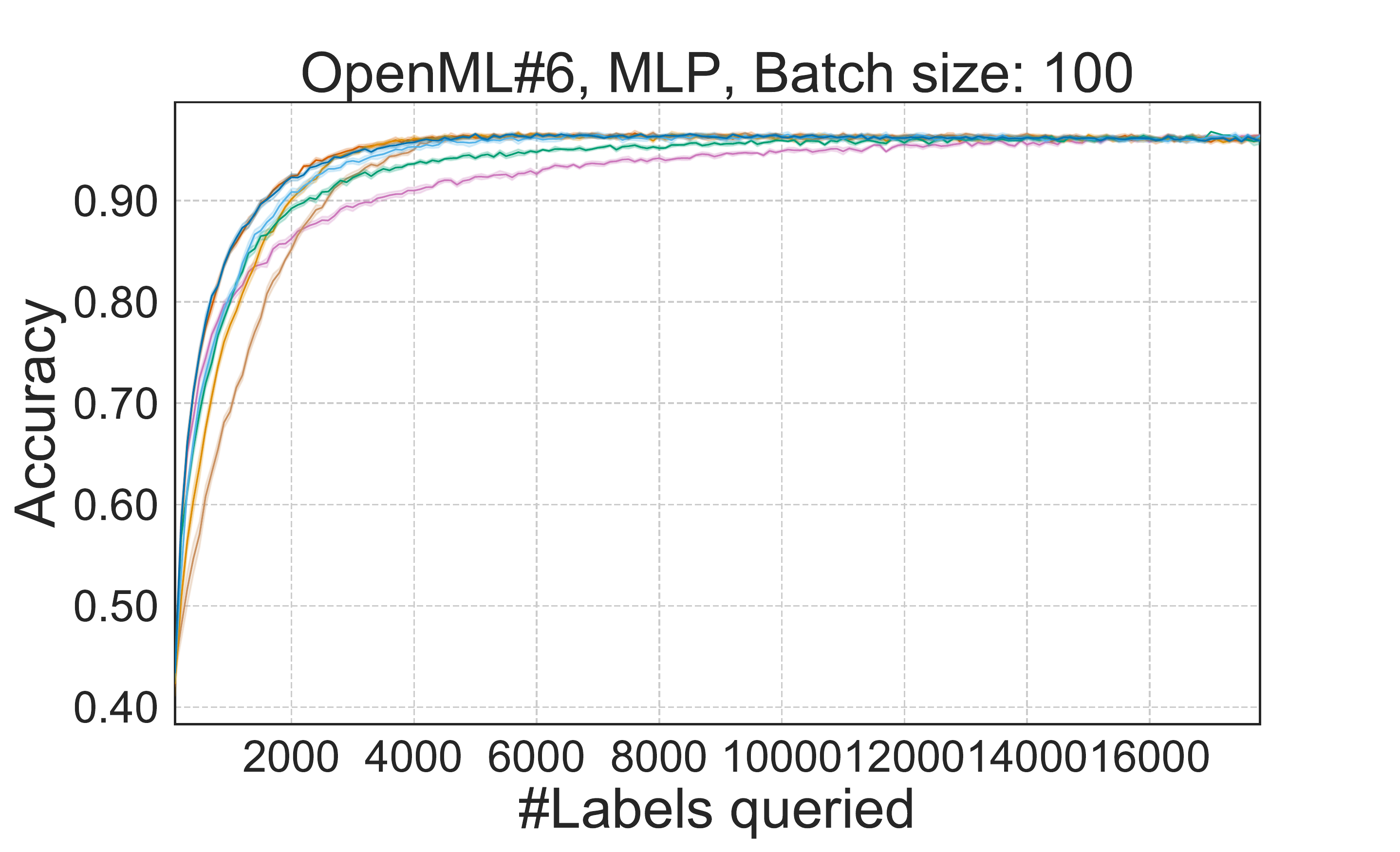}}
  \hfill
  \includegraphics[trim={1.5cm 0cm 1.6cm 0cm}, clip,  width=0.32\textwidth]{{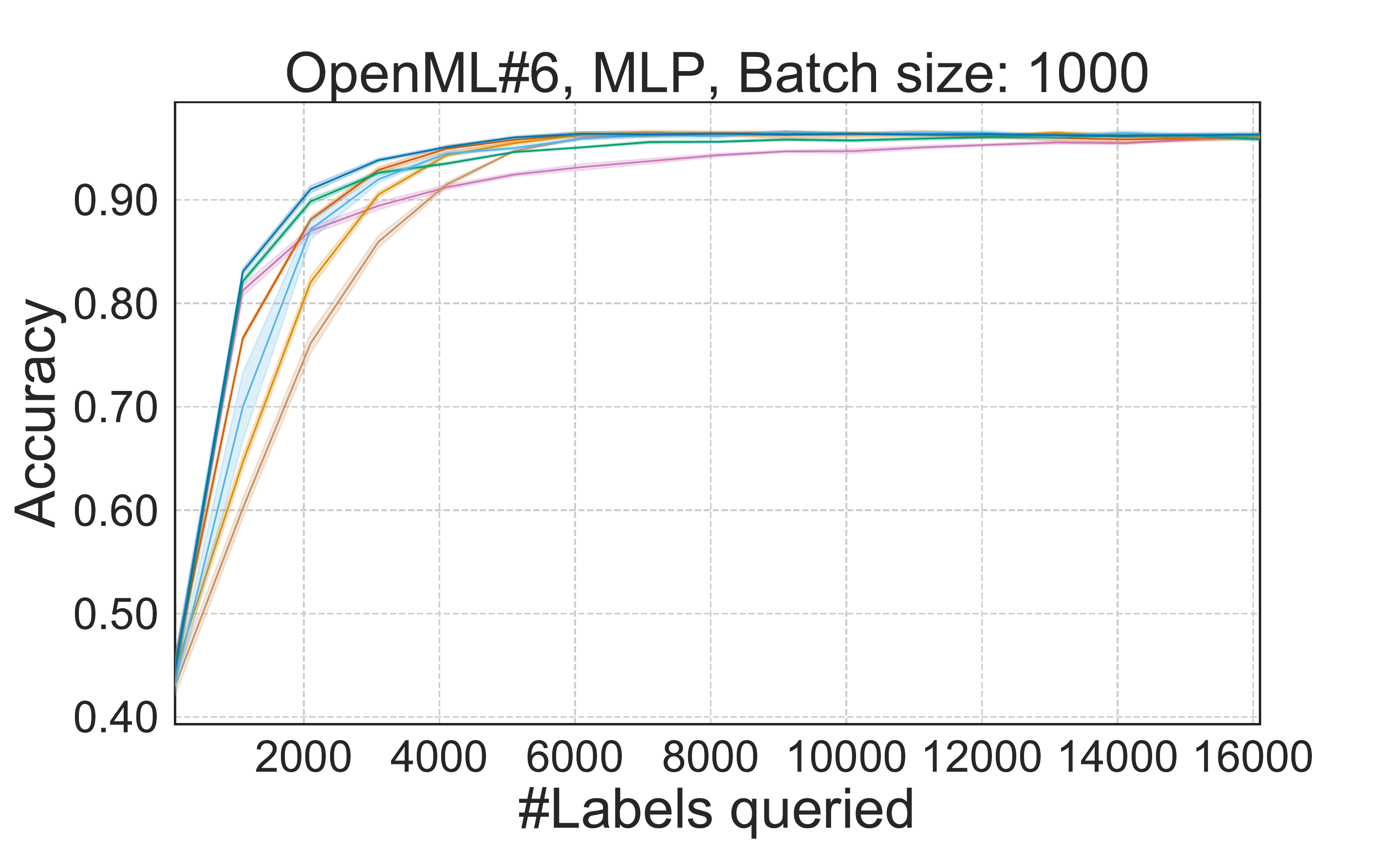}}
  \hfill
  \includegraphics[trim={1.5cm 0cm 1.6cm 0cm}, clip,  width=0.32\textwidth]{{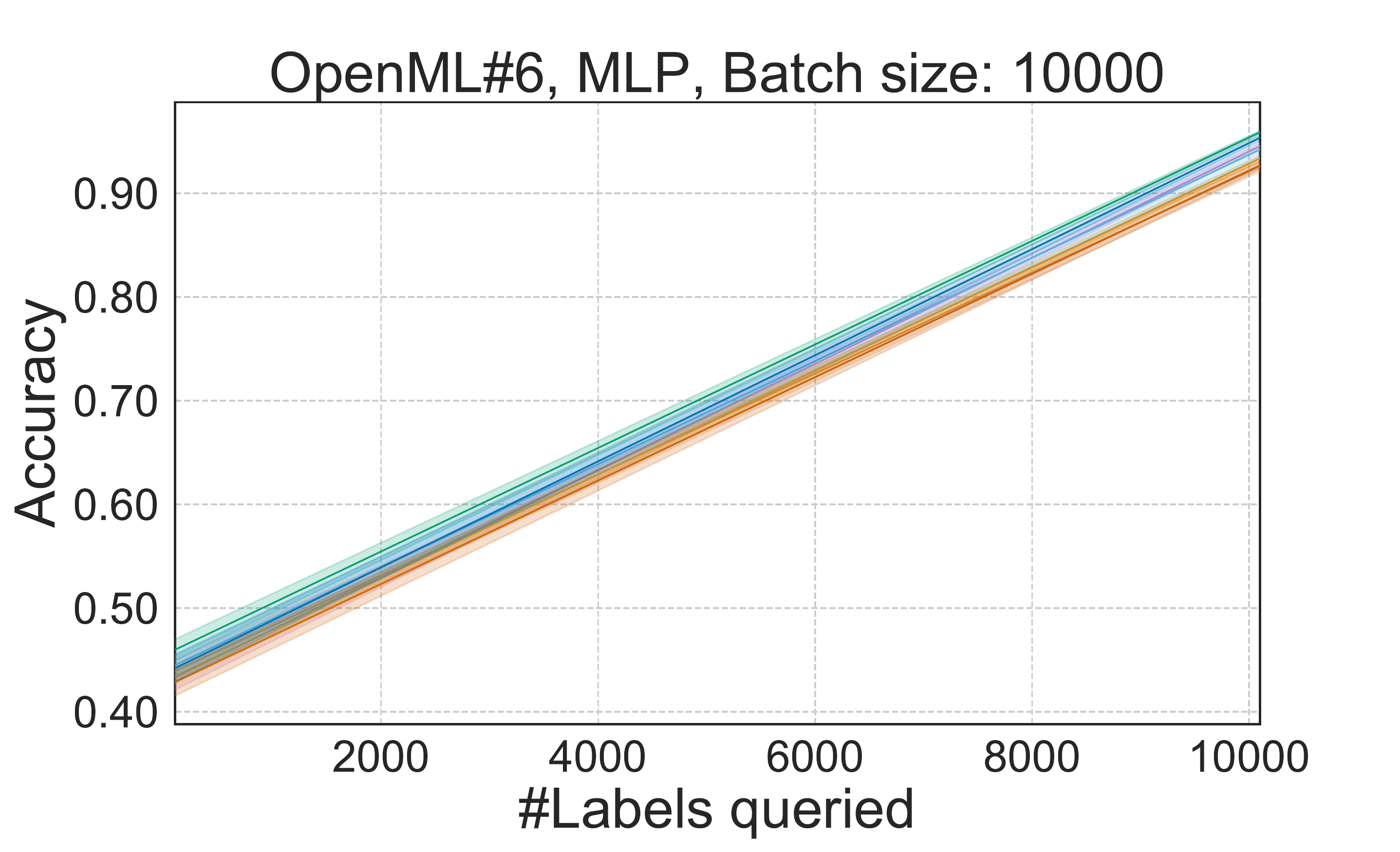}}
  \\
  \centering
  \begin{subfigure}[b]{\linewidth}
    \includegraphics[trim={0cm 0cm 0cm 0cm}, clip, width=\textwidth]{figs/legends/legend.pdf}
  \end{subfigure}
  \vskip -0.5cm
\caption{Full learning curves for OpenML \#6 with MLP.}
\label{fig:6-lc-full}
\end{figure}

\begin{figure}
  \centering
  \includegraphics[trim={0.4cm 0cm 27.9cm 0cm}, clip, width=0.012\textwidth]{figs/learning_curves/all_algs_Accuracy_Data=_SVHN__Model=_rn__nQuery=_100__TrainAug=_0___.pdf}
  \includegraphics[trim={1.5cm 0cm 1.6cm 0cm}, clip,  width=0.32\textwidth]{{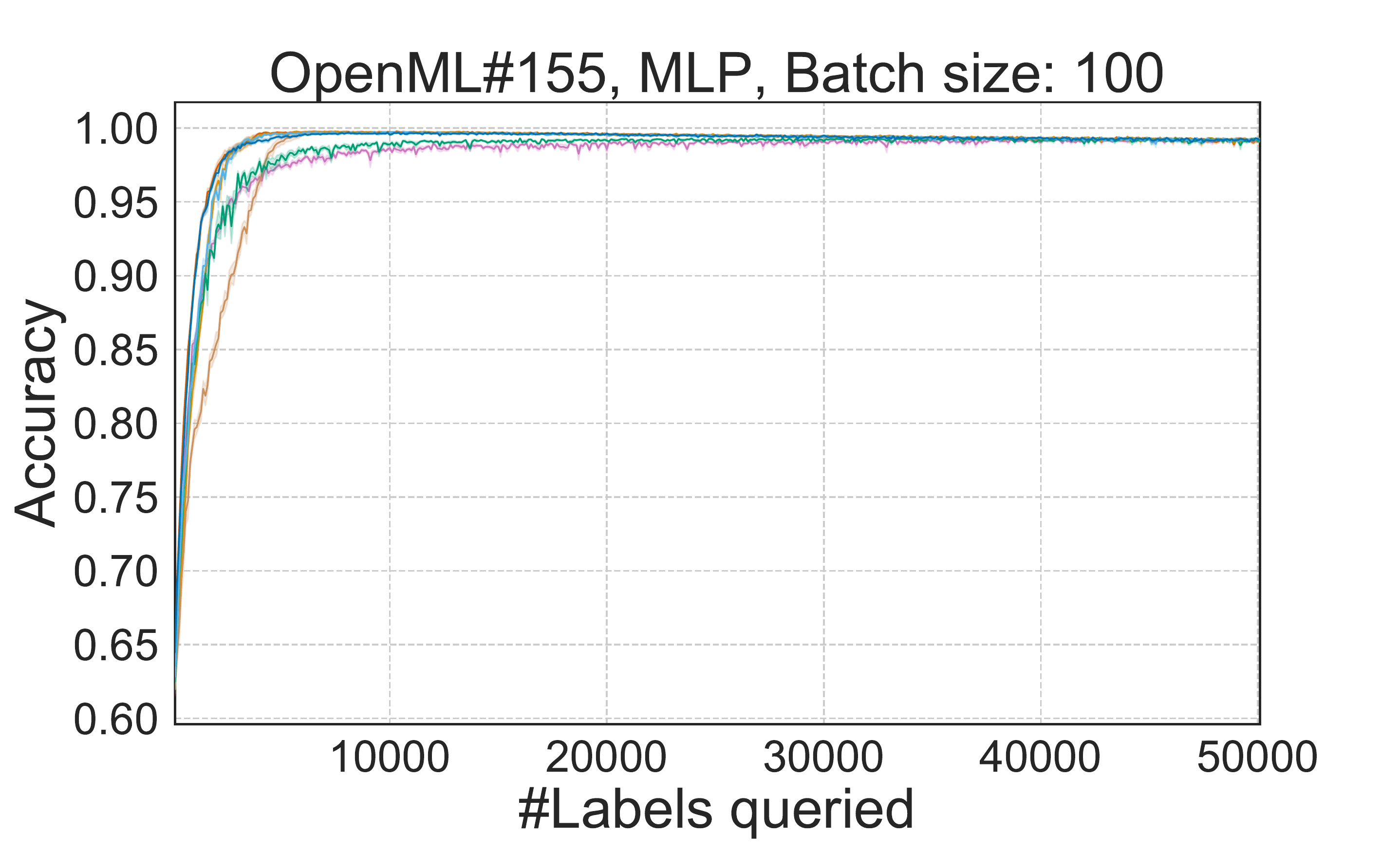}}
  \hfill
  \includegraphics[trim={1.5cm 0cm 1.6cm 0cm}, clip,  width=0.32\textwidth]{{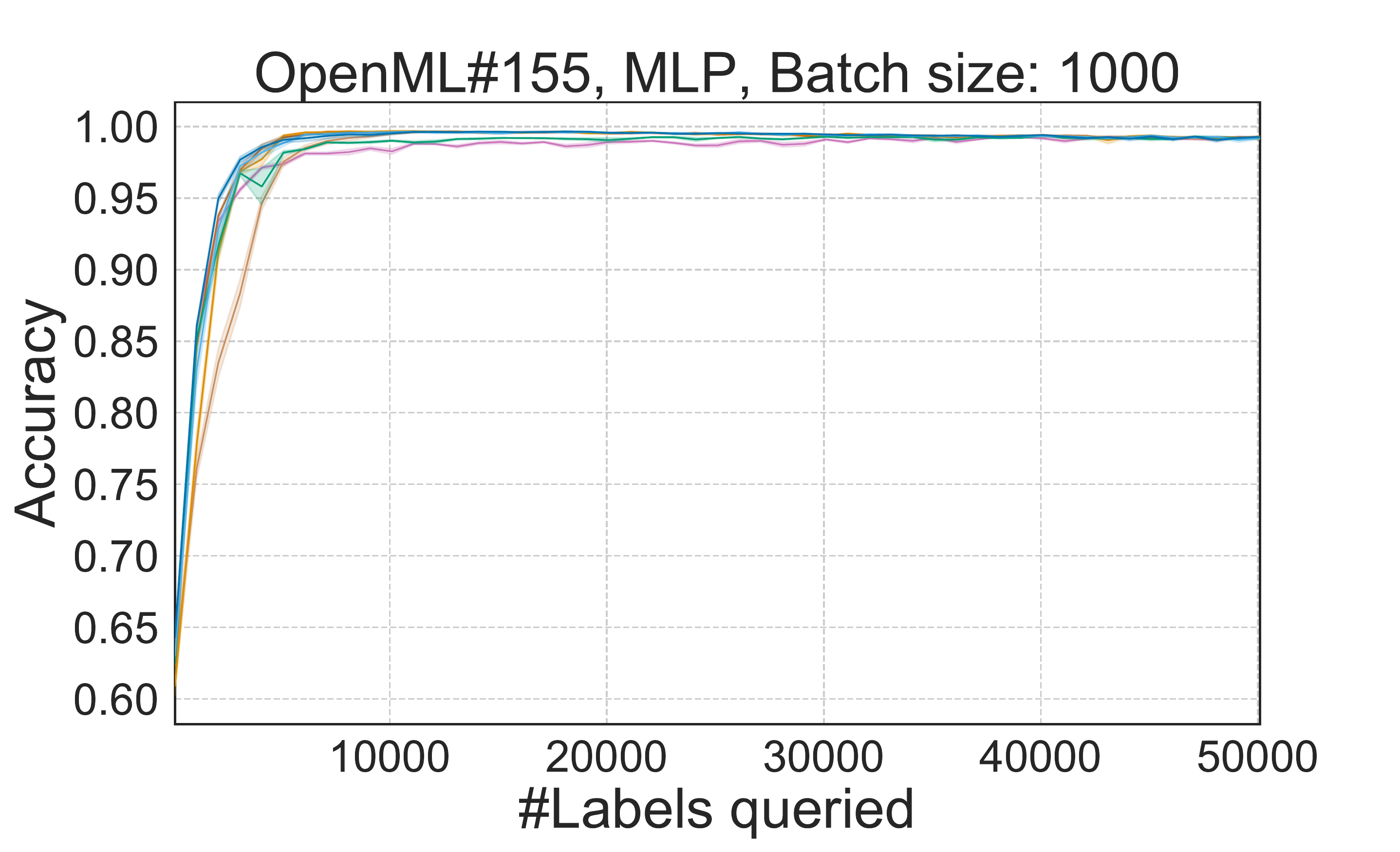}}
  \hfill
  \includegraphics[trim={1.5cm 0cm 1.6cm 0cm}, clip,  width=0.32\textwidth]{{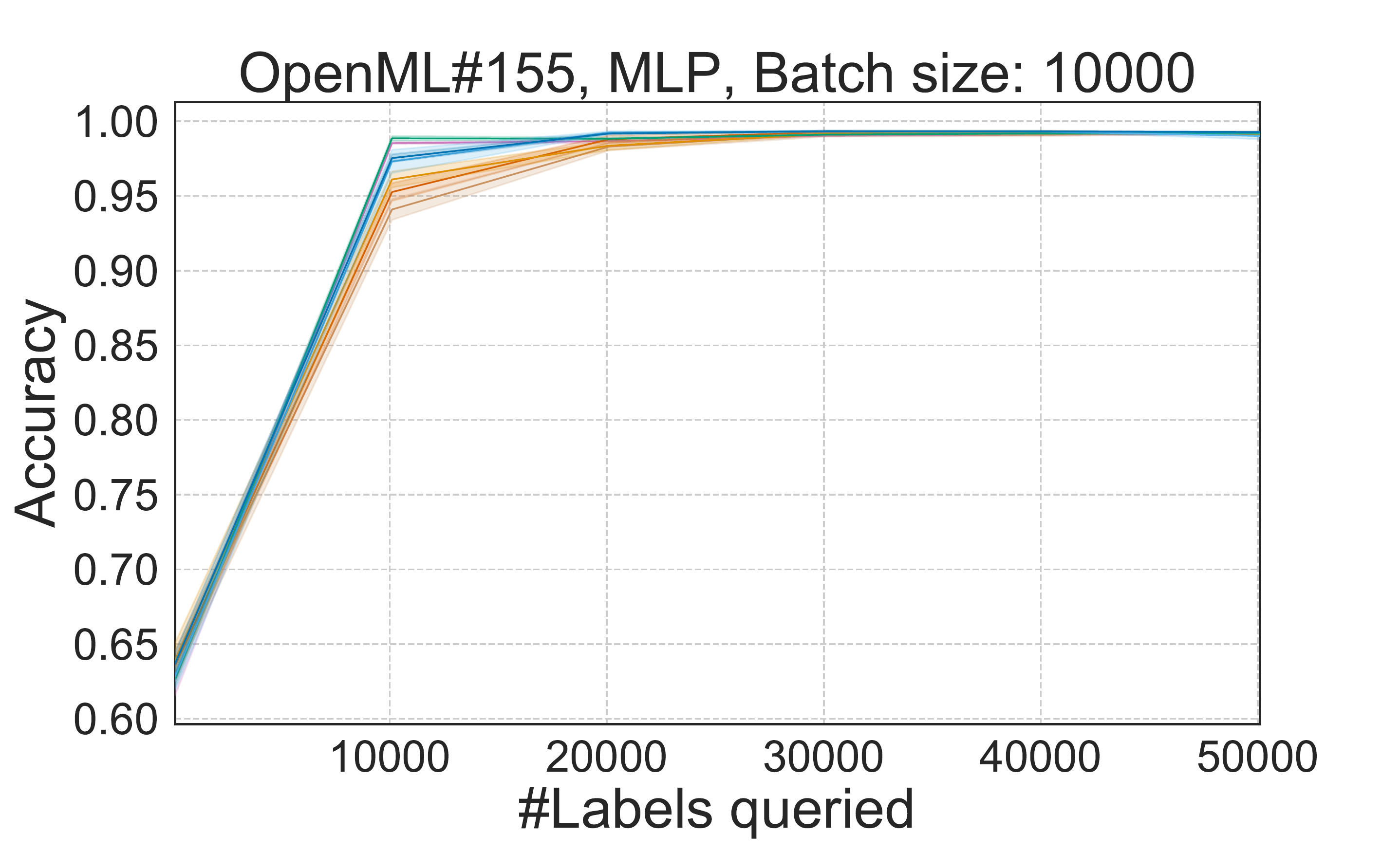}}
  \\
  \centering
  \begin{subfigure}[b]{\linewidth}
    \includegraphics[trim={0cm 0cm 0cm 0cm}, clip, width=\textwidth]{figs/legends/legend.pdf}
  \end{subfigure}
  \vskip -0.6cm
\caption{Full learning curves for OpenML \#155 with MLP.}
\vskip -0.5cm
\label{fig:155-lc-full}
\end{figure}

\begin{figure}
  \centering
    \includegraphics[trim={0.4cm 0cm 27.9cm 0cm}, clip, width=0.012\textwidth]{figs/learning_curves/all_algs_Accuracy_Data=_SVHN__Model=_rn__nQuery=_100__TrainAug=_0___.pdf}
  \includegraphics[trim={1.5cm 0cm 1.6cm 0cm}, clip, width=0.32\textwidth]{{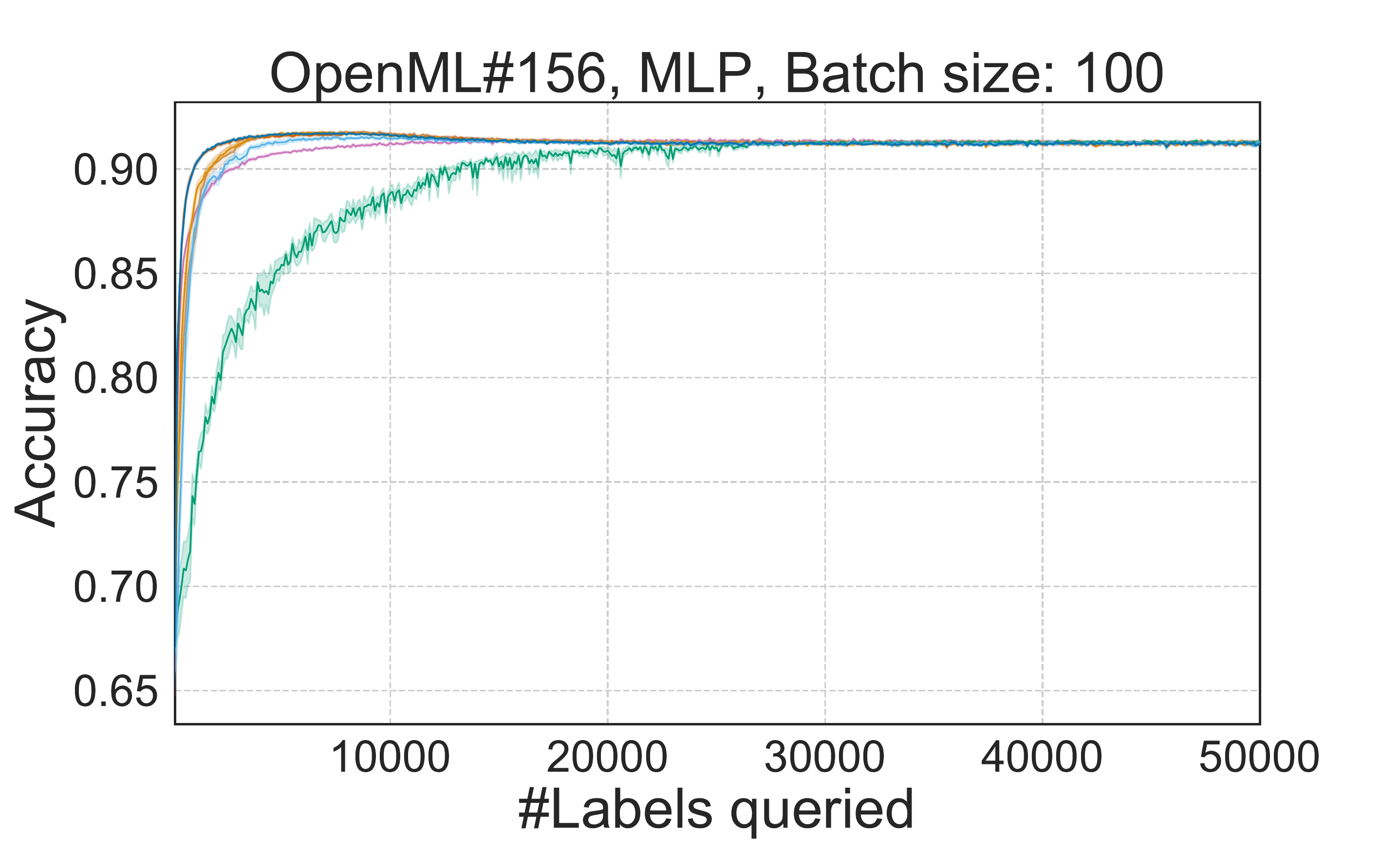}}
  \hfill
  \includegraphics[trim={1.5cm 0cm 1.6cm 0cm}, clip, width=0.32\textwidth]{{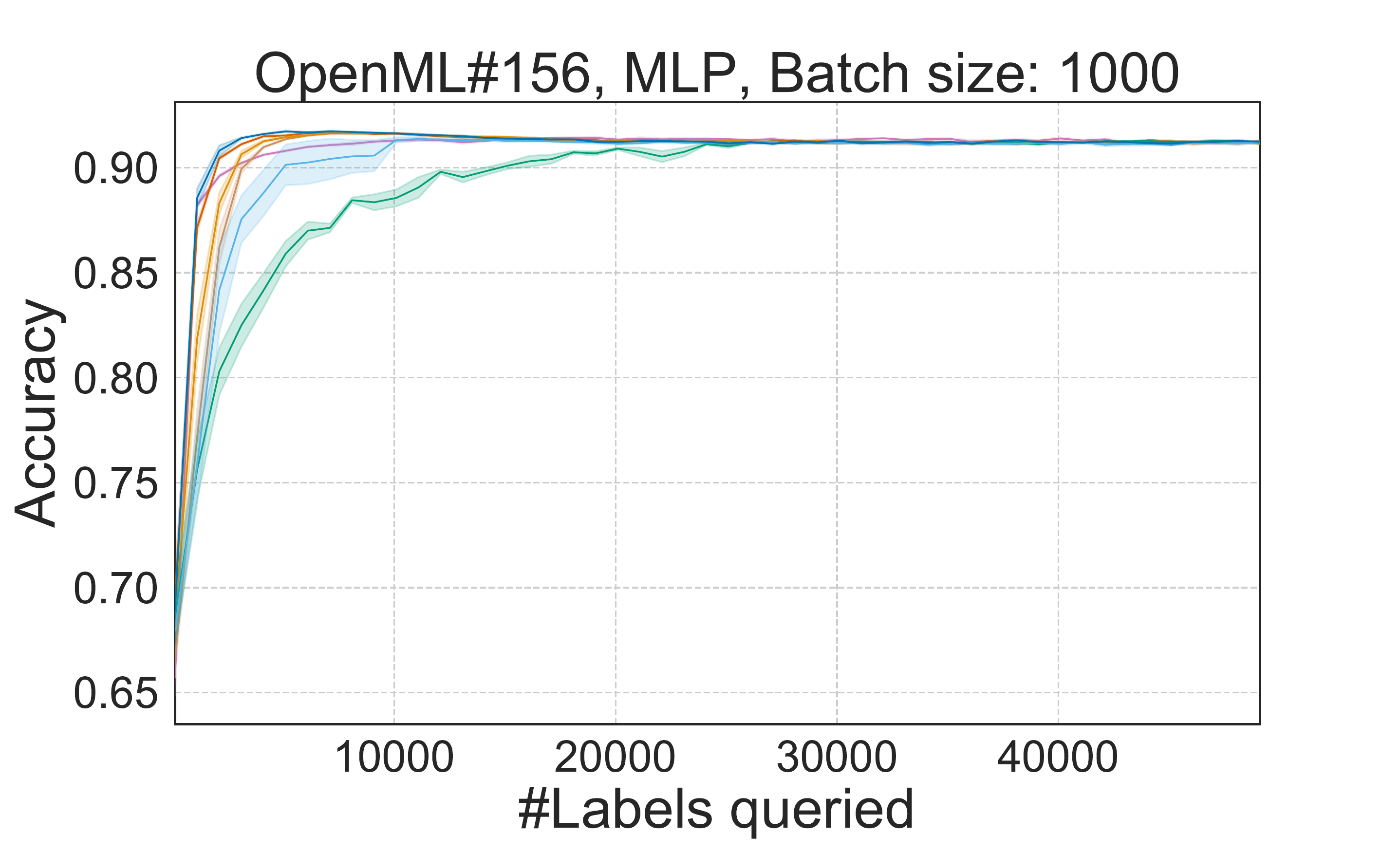}}
  \hfill
  \includegraphics[trim={1.5cm 0cm 1.6cm 0cm}, clip, width=0.32\textwidth]{{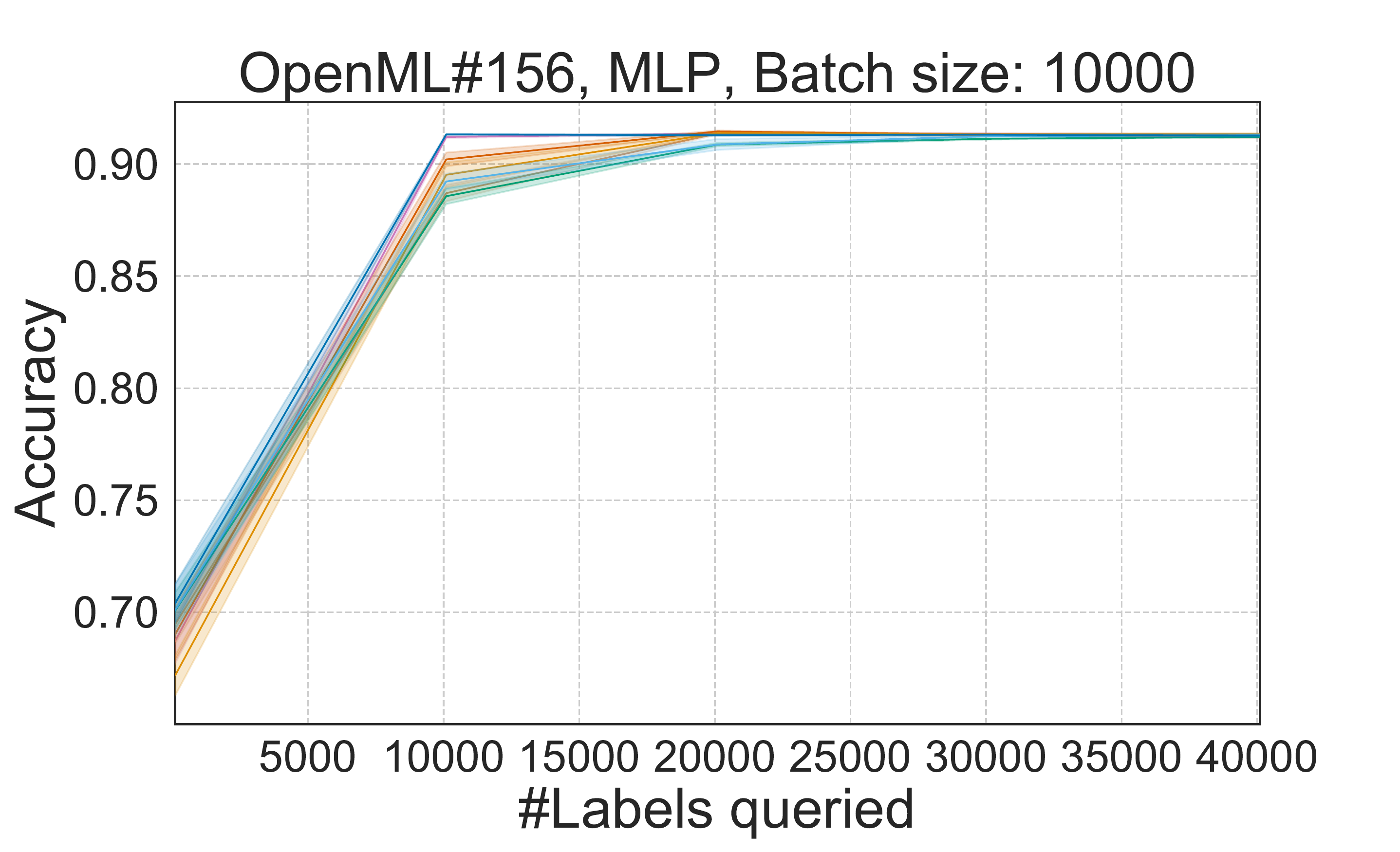}}
  \\
  \centering
  \begin{subfigure}[b]{\linewidth}
    \includegraphics[trim={0cm 0cm 0cm 0cm}, clip, width=\textwidth]{figs/legends/legend.pdf}
  \end{subfigure}
\caption{Full learning curves for OpenML \#156 with MLP.}
\label{fig:156-lc-full}
\end{figure}

\begin{figure}
  \centering
      \includegraphics[trim={0.4cm 0cm 27.9cm 0cm}, clip, width=0.012\textwidth]{figs/learning_curves/all_algs_Accuracy_Data=_SVHN__Model=_rn__nQuery=_100__TrainAug=_0___.pdf}
  \includegraphics[trim={1.5cm 0cm 1.6cm 0cm}, clip, width=0.32\textwidth]{{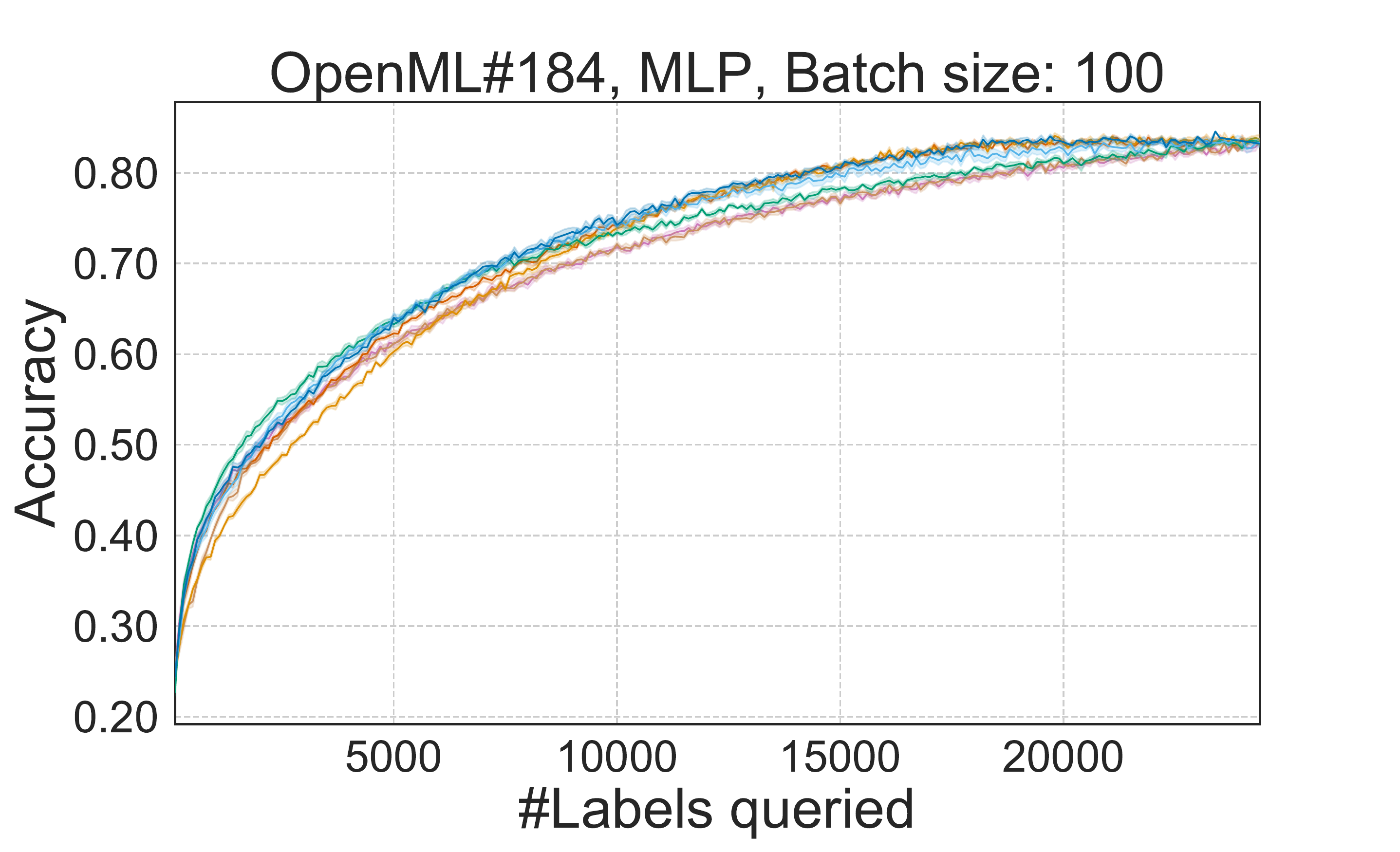}}
  \hfill
  \includegraphics[trim={1.5cm 0cm 1.6cm 0cm}, clip, width=0.32\textwidth]{{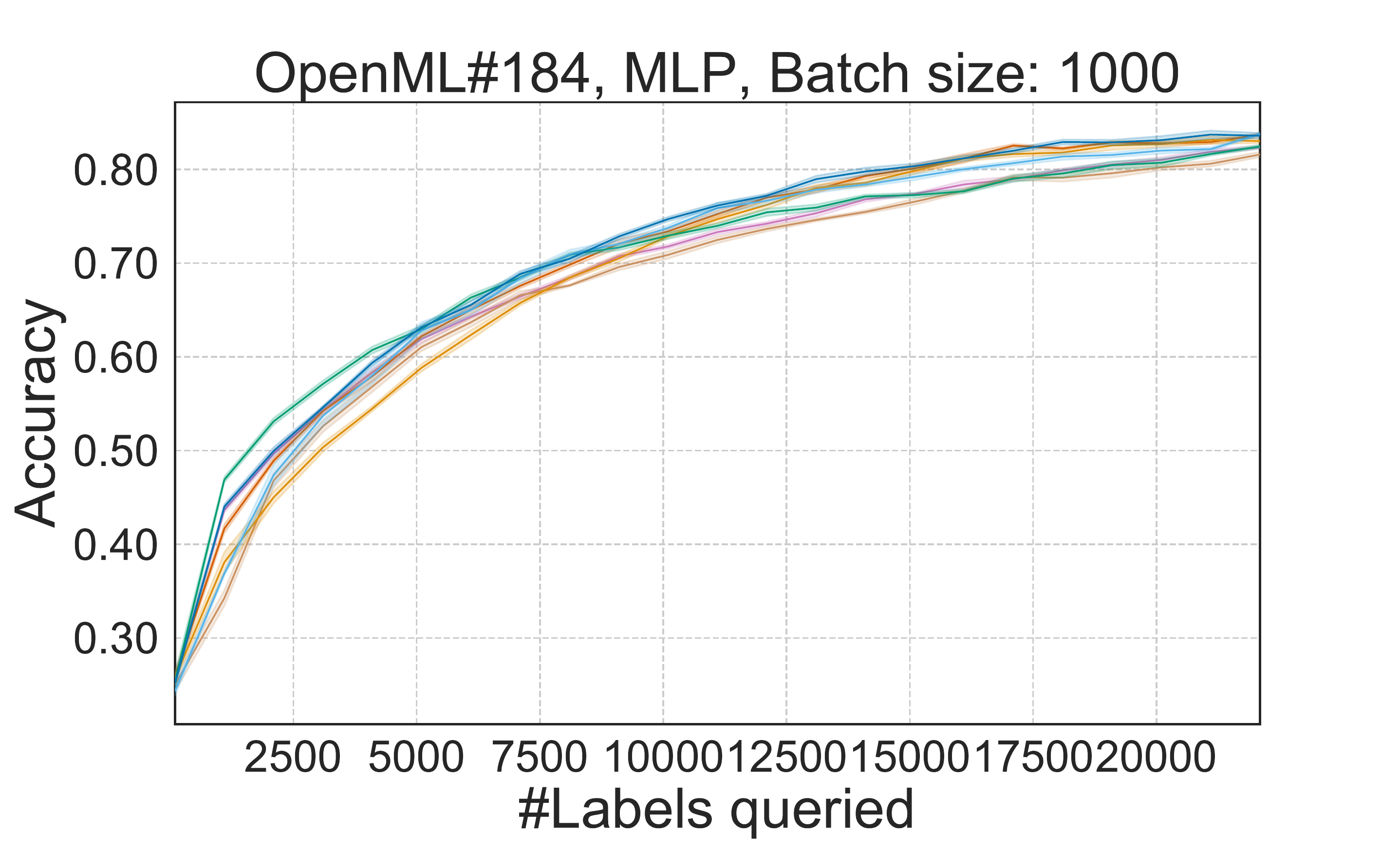}}
  \hfill
  \includegraphics[trim={1.5cm 0cm 1.6cm 0cm}, clip, width=0.32\textwidth]{{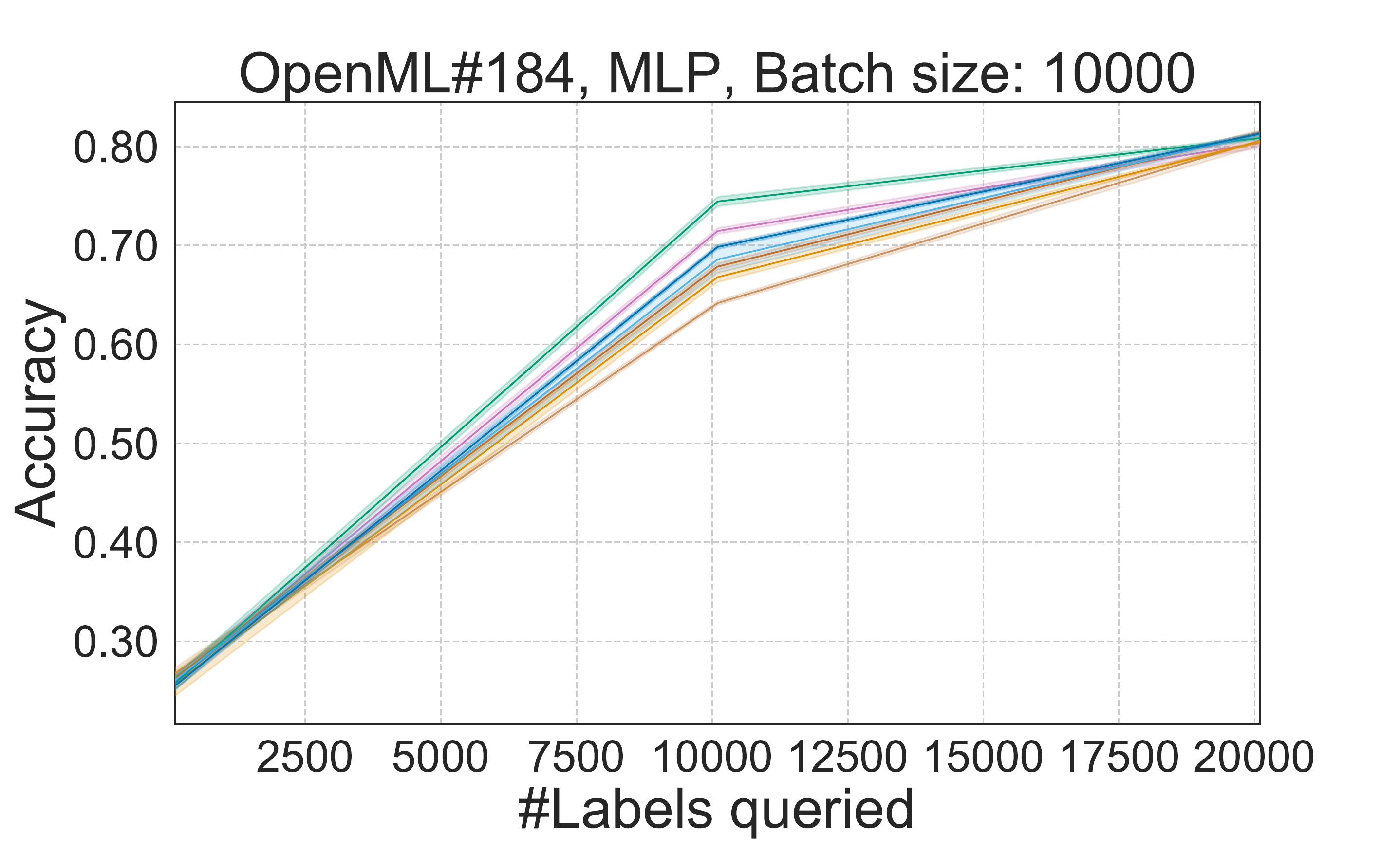}}
  \\
  \centering
  \begin{subfigure}[b]{\linewidth}
    \includegraphics[trim={0cm 0cm 0cm 0cm}, clip, width=\textwidth]{figs/legends/legend.pdf}
  \end{subfigure}
\caption{Full learning curves for OpenML \#184 with MLP.}
\label{fig:184-lc-full}
\end{figure}

\begin{figure}
  \centering
      \includegraphics[trim={0.4cm 0cm 27.9cm 0cm}, clip, width=0.012\textwidth]{figs/learning_curves/all_algs_Accuracy_Data=_SVHN__Model=_rn__nQuery=_100__TrainAug=_0___.pdf}
  \includegraphics[trim={1.5cm 0cm 1.6cm 0cm}, clip, width=0.32\textwidth]{{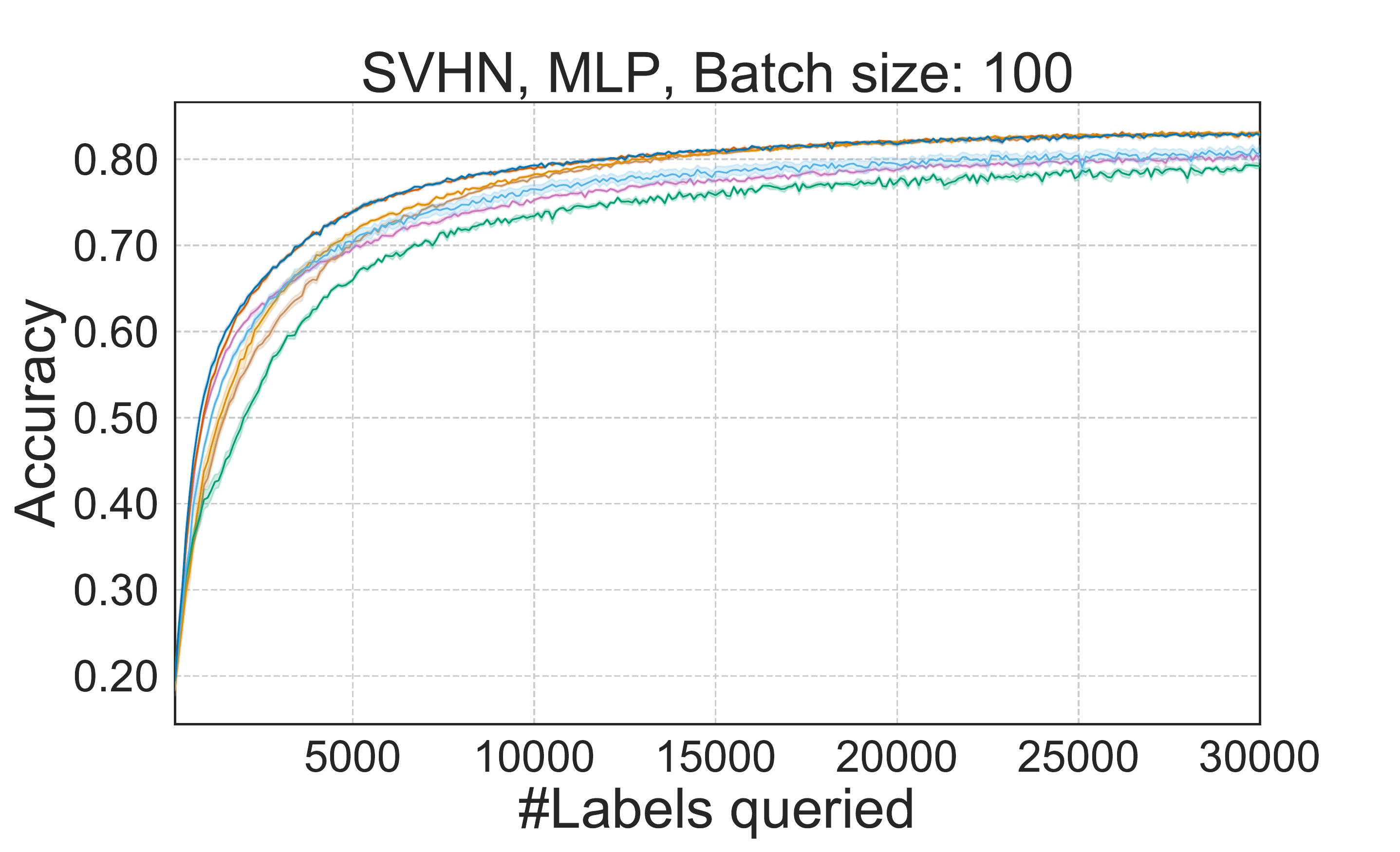}}
  \hfill
  \includegraphics[trim={1.5cm 0cm 1.6cm 0cm}, clip, width=0.32\textwidth]{{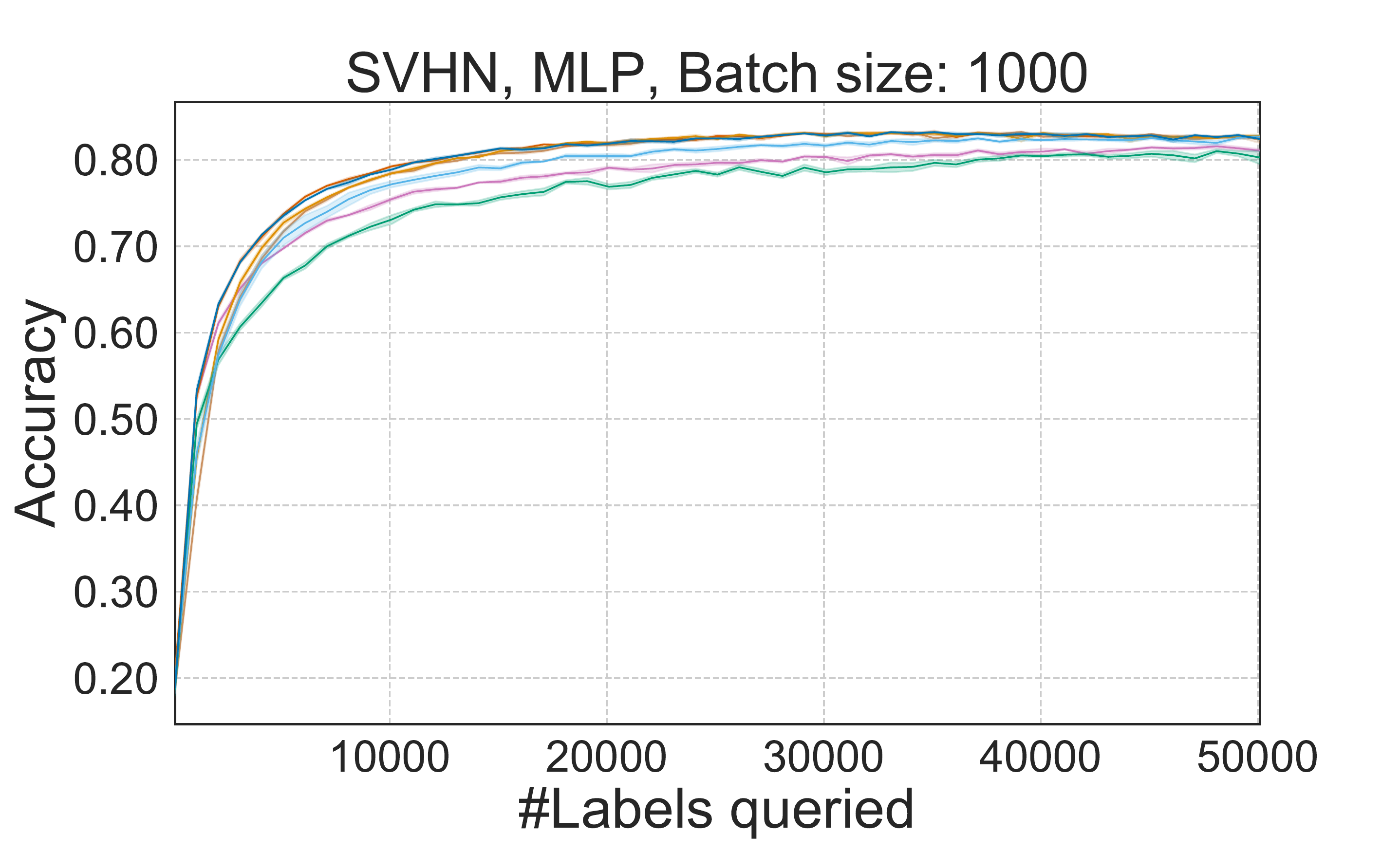}}
  \hfill
  \includegraphics[trim={1.5cm 0cm 1.6cm 0cm}, clip, trim={2.18cm 0cm 2cm 0cm}, clip,width=0.32\textwidth]{{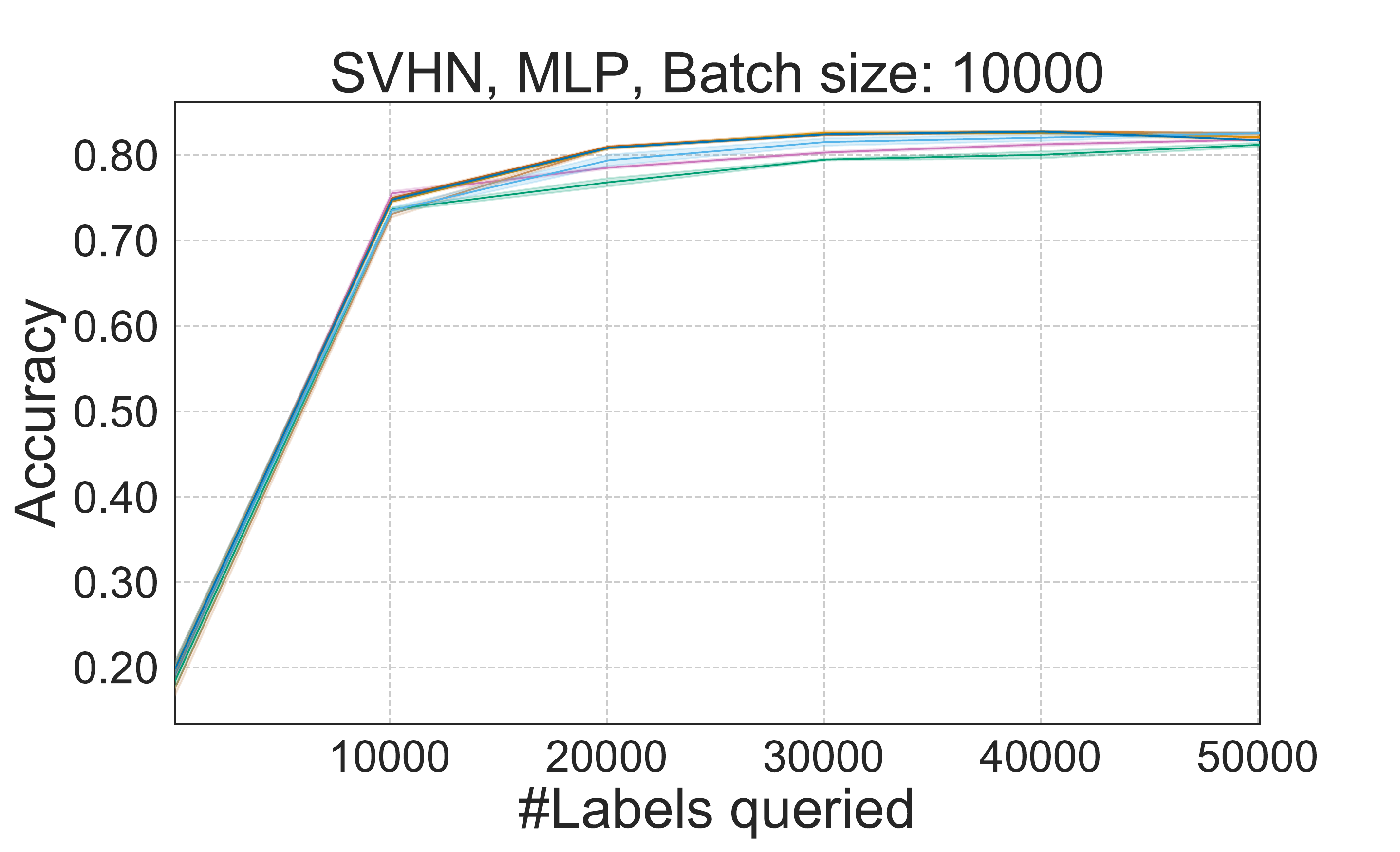}}

    \includegraphics[trim={0.4cm 0cm 27.9cm 0cm}, clip, width=0.012\textwidth]{figs/learning_curves/all_algs_Accuracy_Data=_SVHN__Model=_rn__nQuery=_100__TrainAug=_0___.pdf}
  \includegraphics[trim={1.5cm 0cm 1.6cm 0cm}, clip, width=0.32\textwidth]{{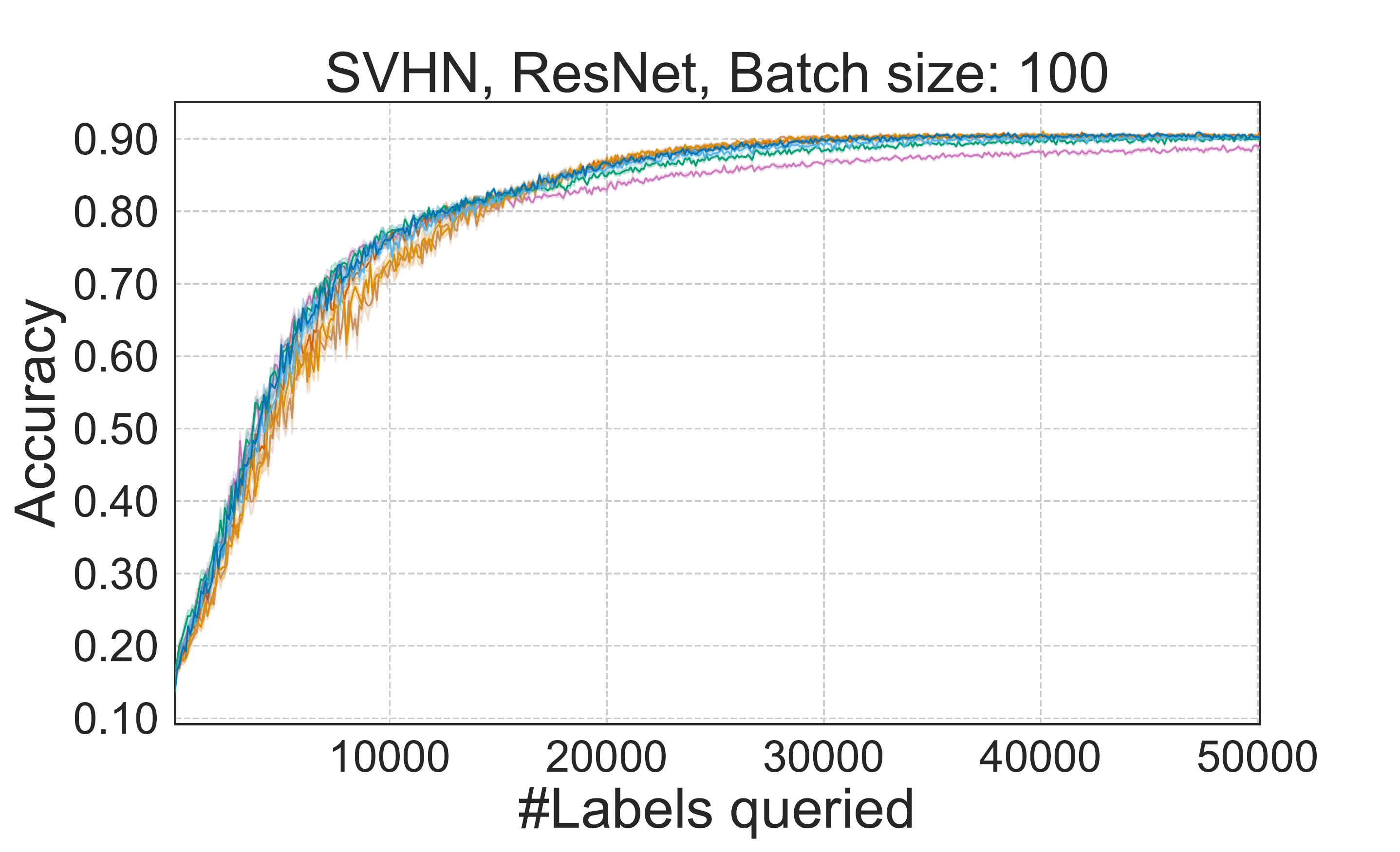}}
  \hfill
  \includegraphics[trim={1.5cm 0cm 1.6cm 0cm}, clip, width=0.32\textwidth]{{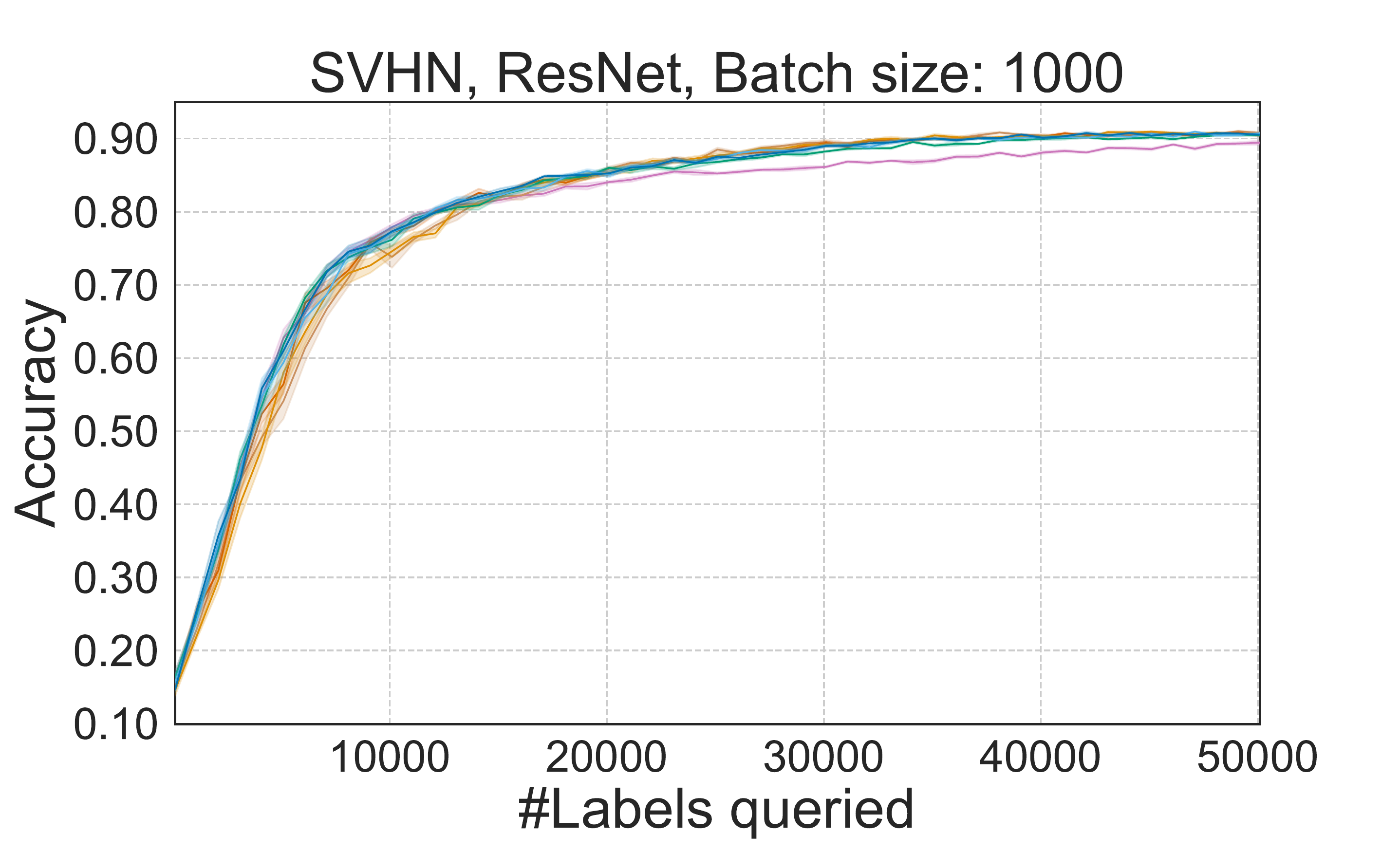}}
  \hfill
  \includegraphics[trim={1.5cm 0cm 1.6cm 0cm}, clip, width=0.32\textwidth]{{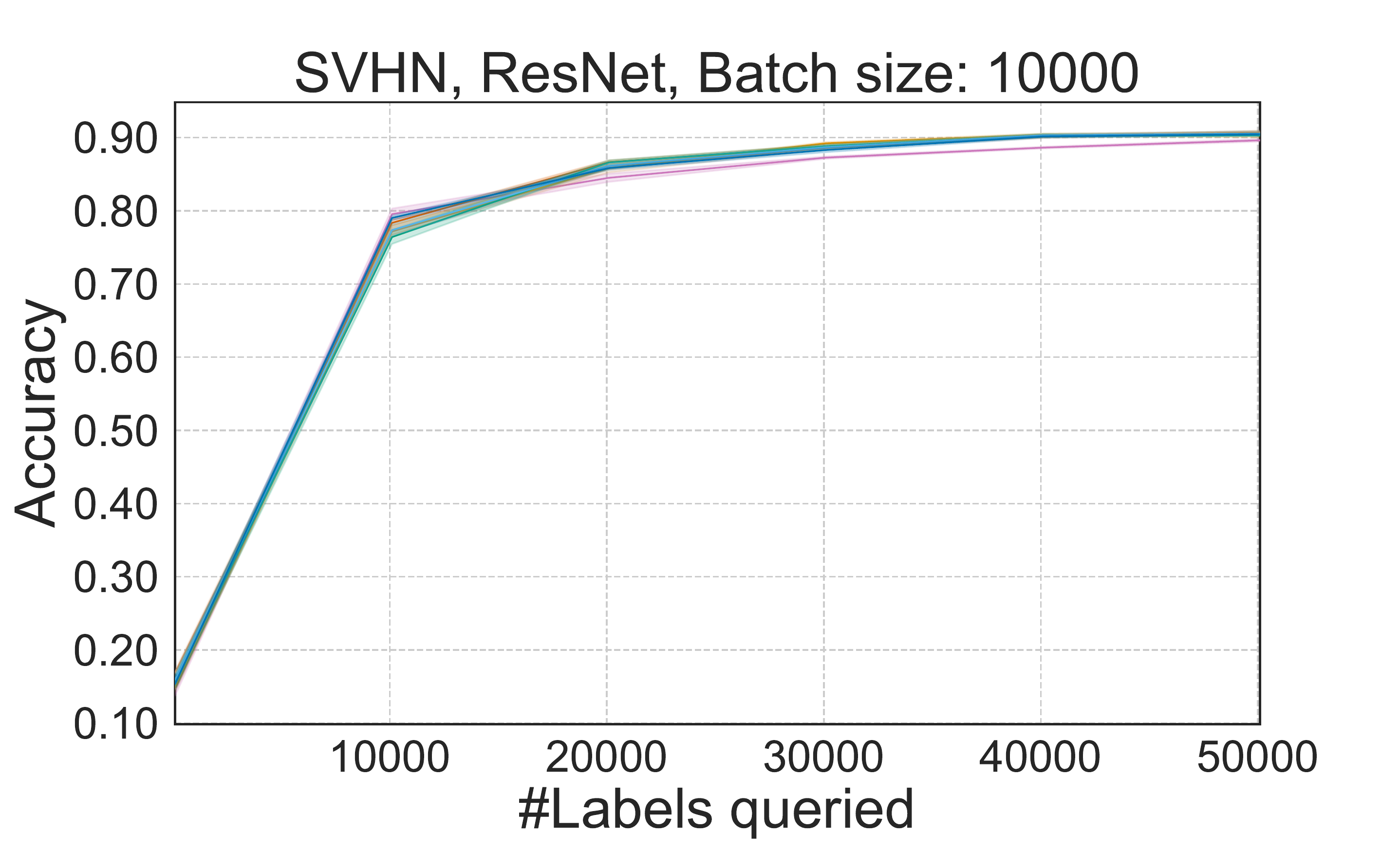}}

    \includegraphics[trim={0.4cm 0cm 27.9cm 0cm}, clip, width=0.012\textwidth]{figs/learning_curves/all_algs_Accuracy_Data=_SVHN__Model=_rn__nQuery=_100__TrainAug=_0___.pdf}
  \includegraphics[trim={1.5cm 0cm 1.6cm 0cm}, clip, width=0.32\textwidth]{{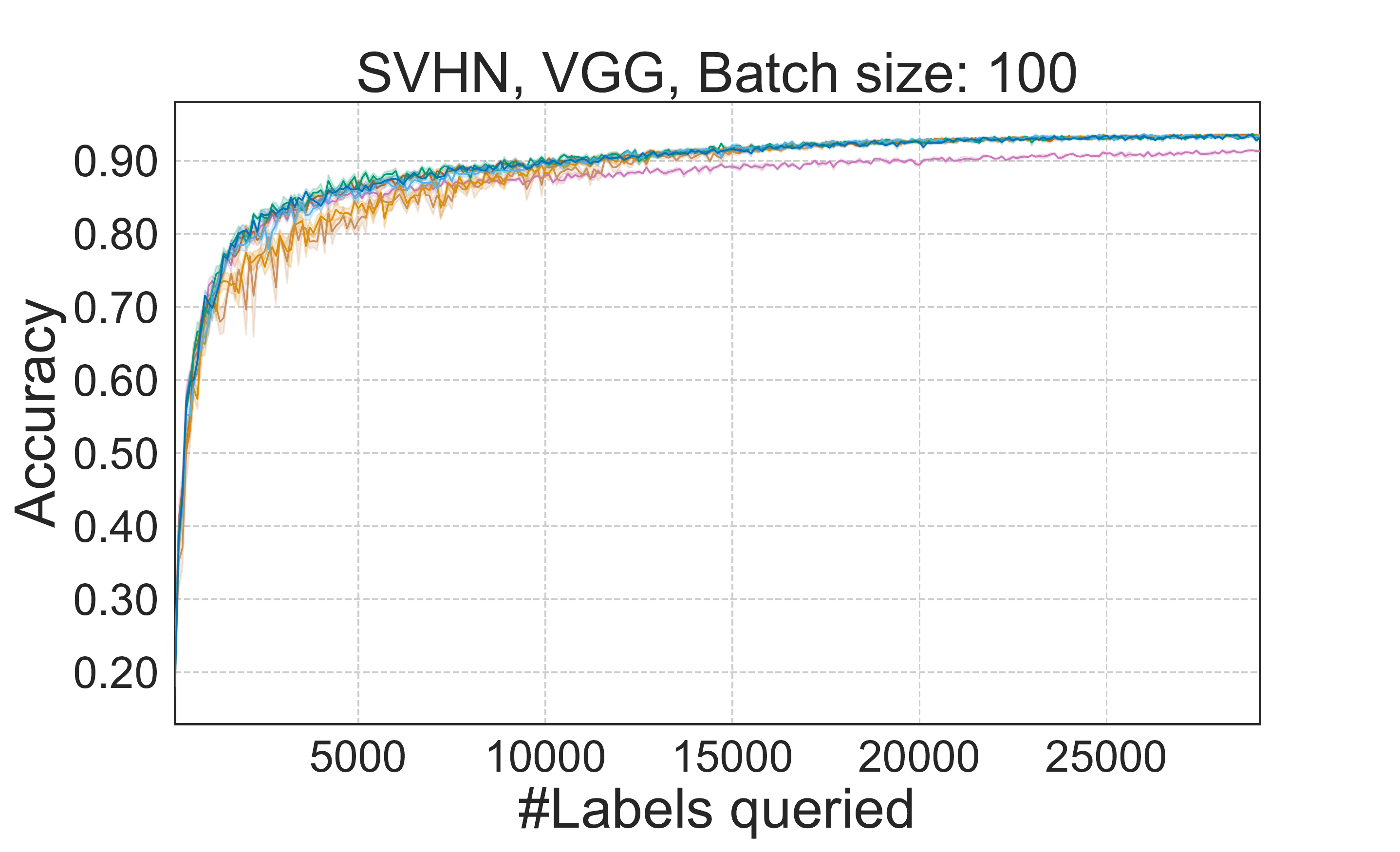}}
  \hfill
  \includegraphics[trim={1.5cm 0cm 1.6cm 0cm}, clip, width=0.32\textwidth]{{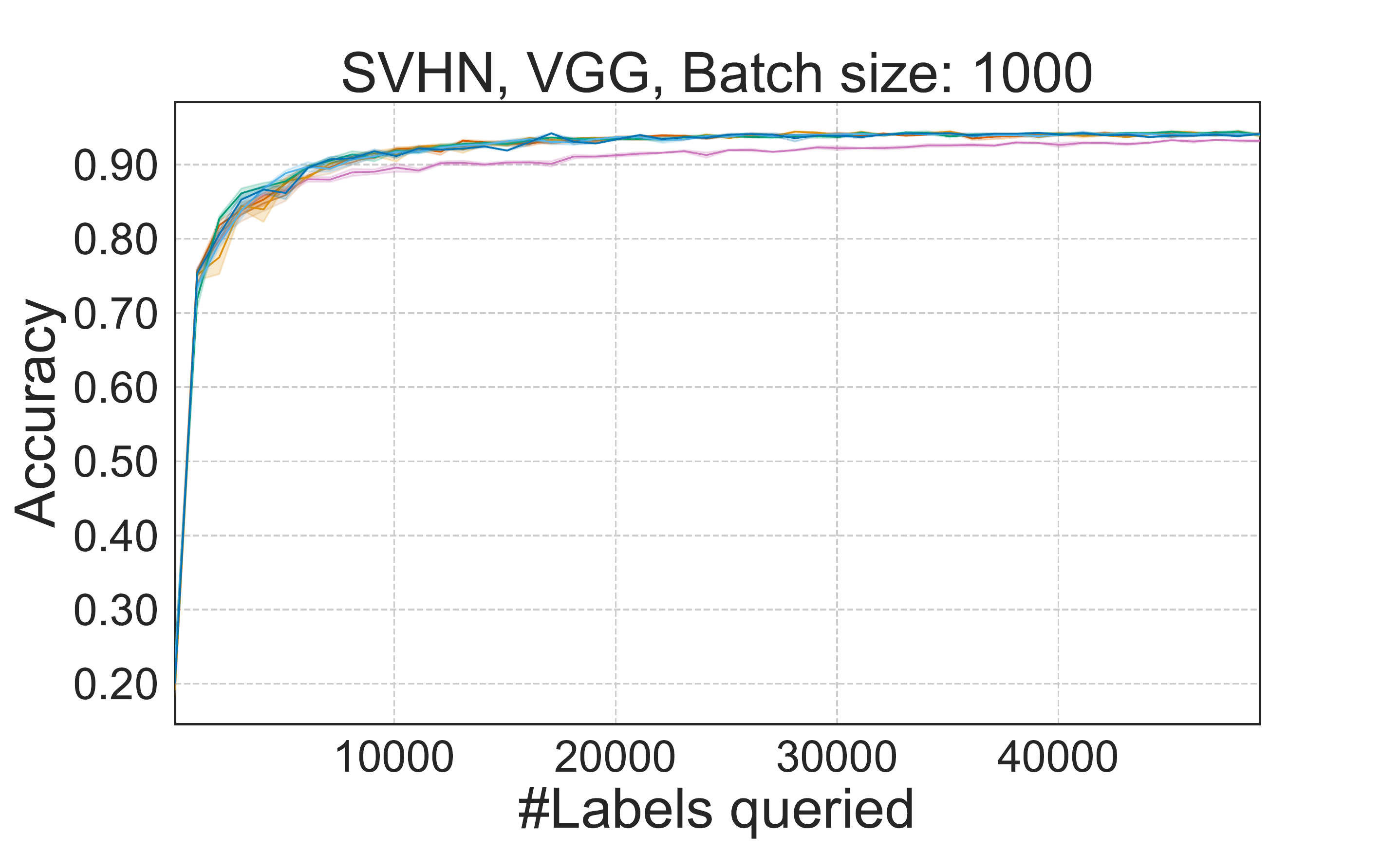}}
  \hfill
  \includegraphics[trim={1.5cm 0cm 1.6cm 0cm}, clip, width=0.32\textwidth]{{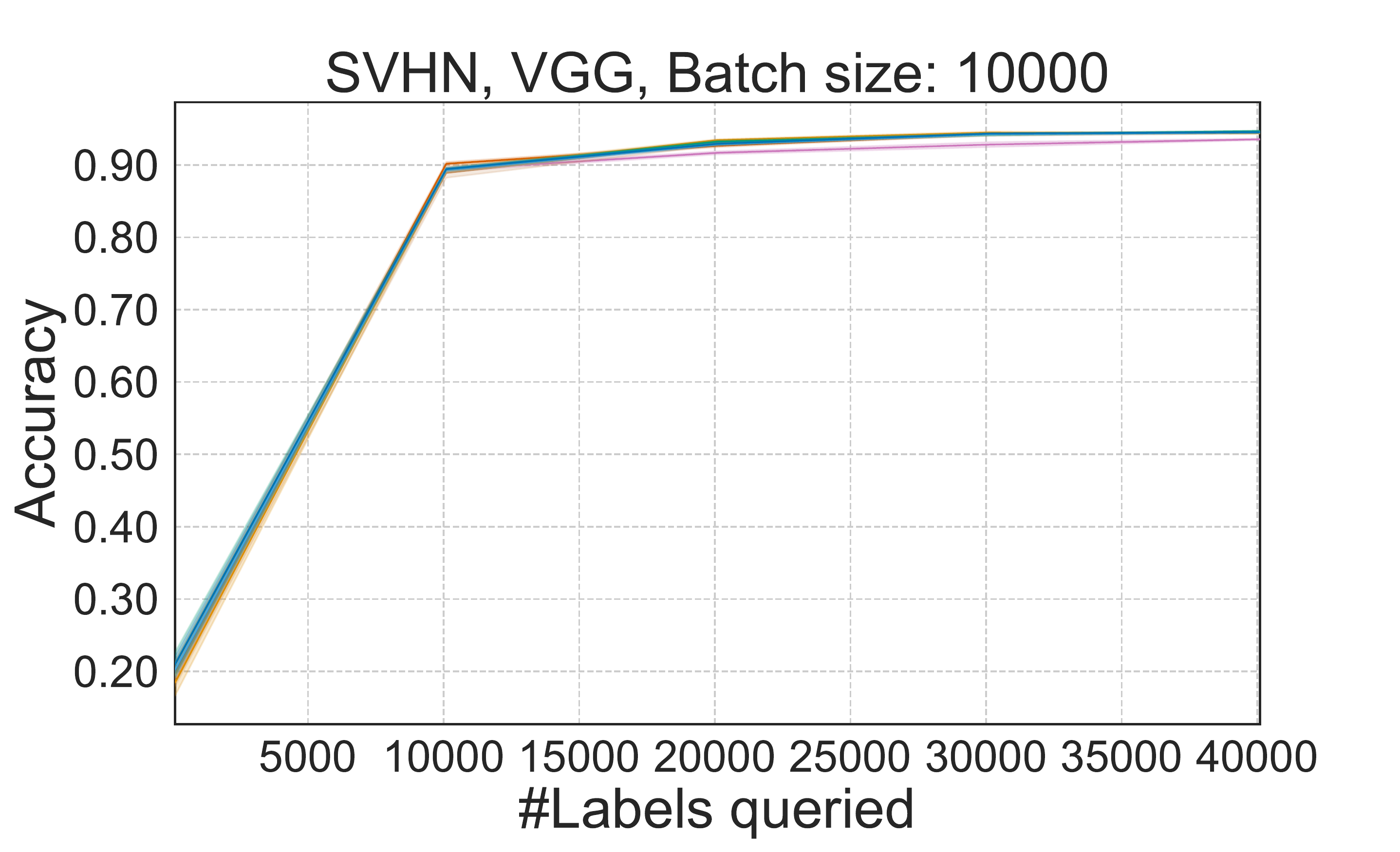}}
  \\
  \centering
  \begin{subfigure}[b]{\linewidth}
    \includegraphics[trim={0cm 0cm 0cm 0cm}, clip, width=\textwidth]{figs/legends/legend.pdf}
  \end{subfigure}

\caption{Full learning curves for SVHN with MLP, ResNet and VGG.}
\label{fig:svhn-lc-full}
\end{figure}

\begin{figure}
  \centering
      \includegraphics[trim={0.4cm 0cm 27.9cm 0cm}, clip, width=0.012\textwidth]{figs/learning_curves/all_algs_Accuracy_Data=_SVHN__Model=_rn__nQuery=_100__TrainAug=_0___.pdf}
  \includegraphics[trim={1.5cm 0cm 1.6cm 0cm}, clip, width=0.32\textwidth]{{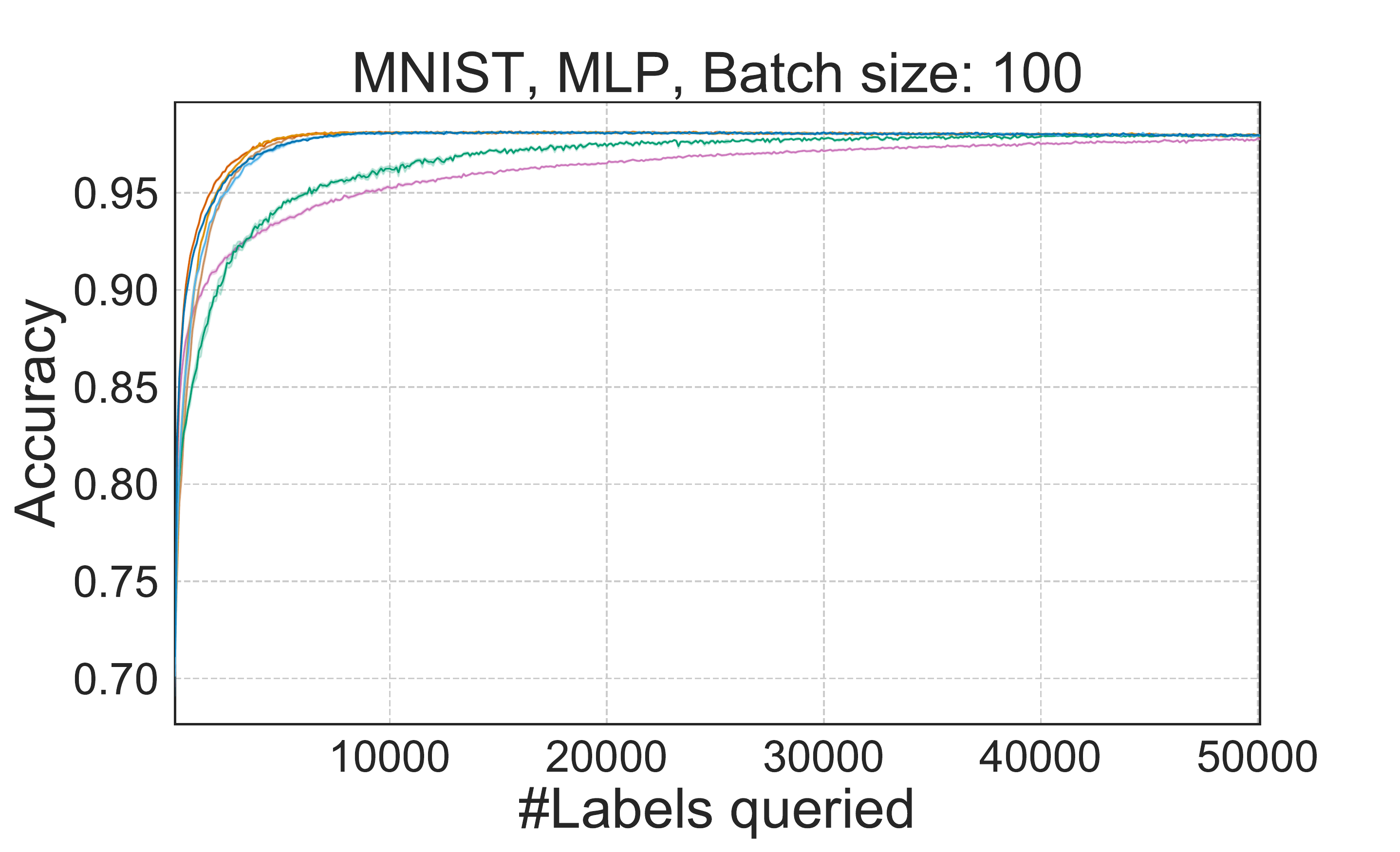}}
  \hfill
  \includegraphics[trim={1.5cm 0cm 1.6cm 0cm}, clip, width=0.32\textwidth]{{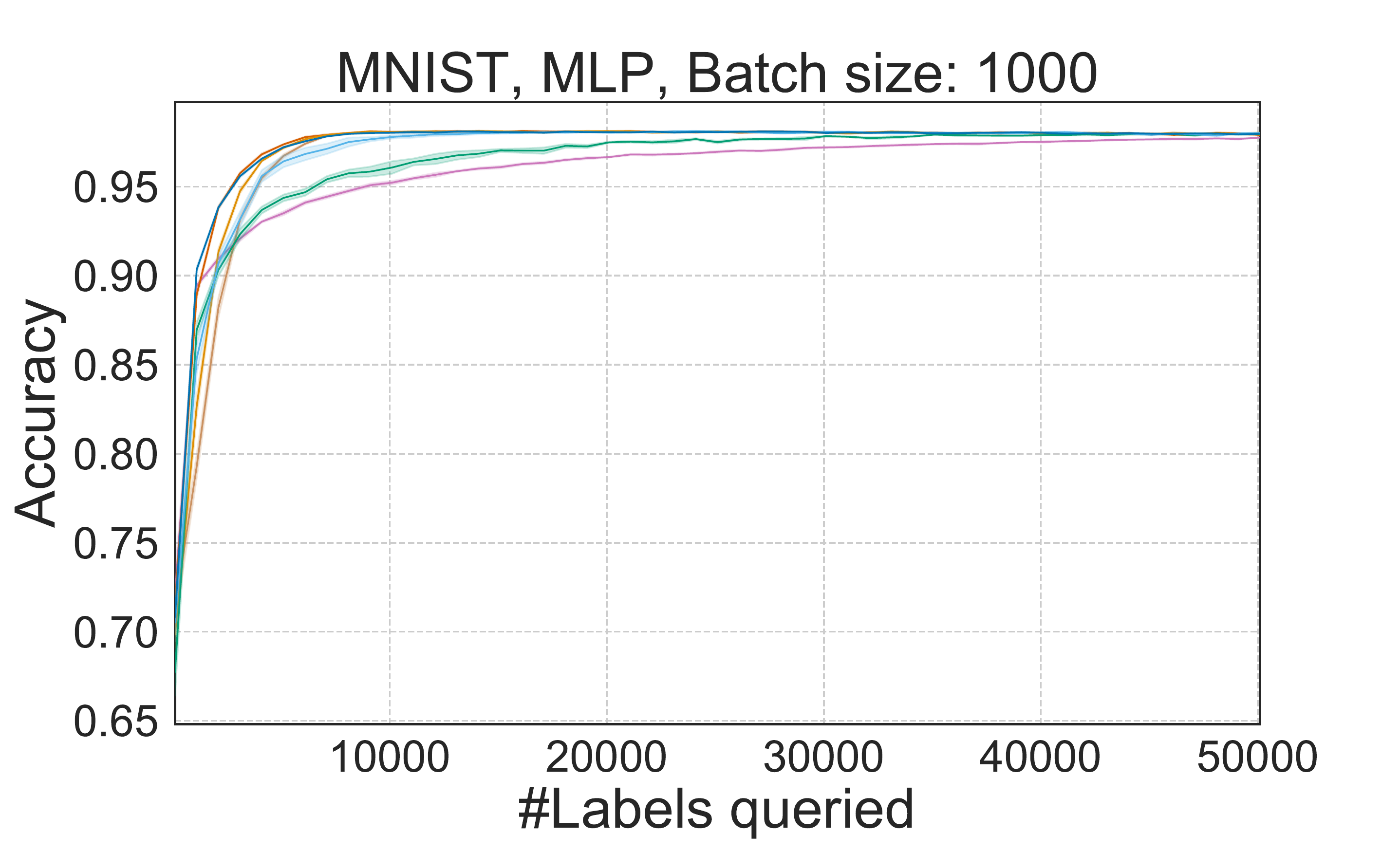}}
  \hfill
  \includegraphics[trim={1.5cm 0cm 1.6cm 0cm}, clip, width=0.32\textwidth]{{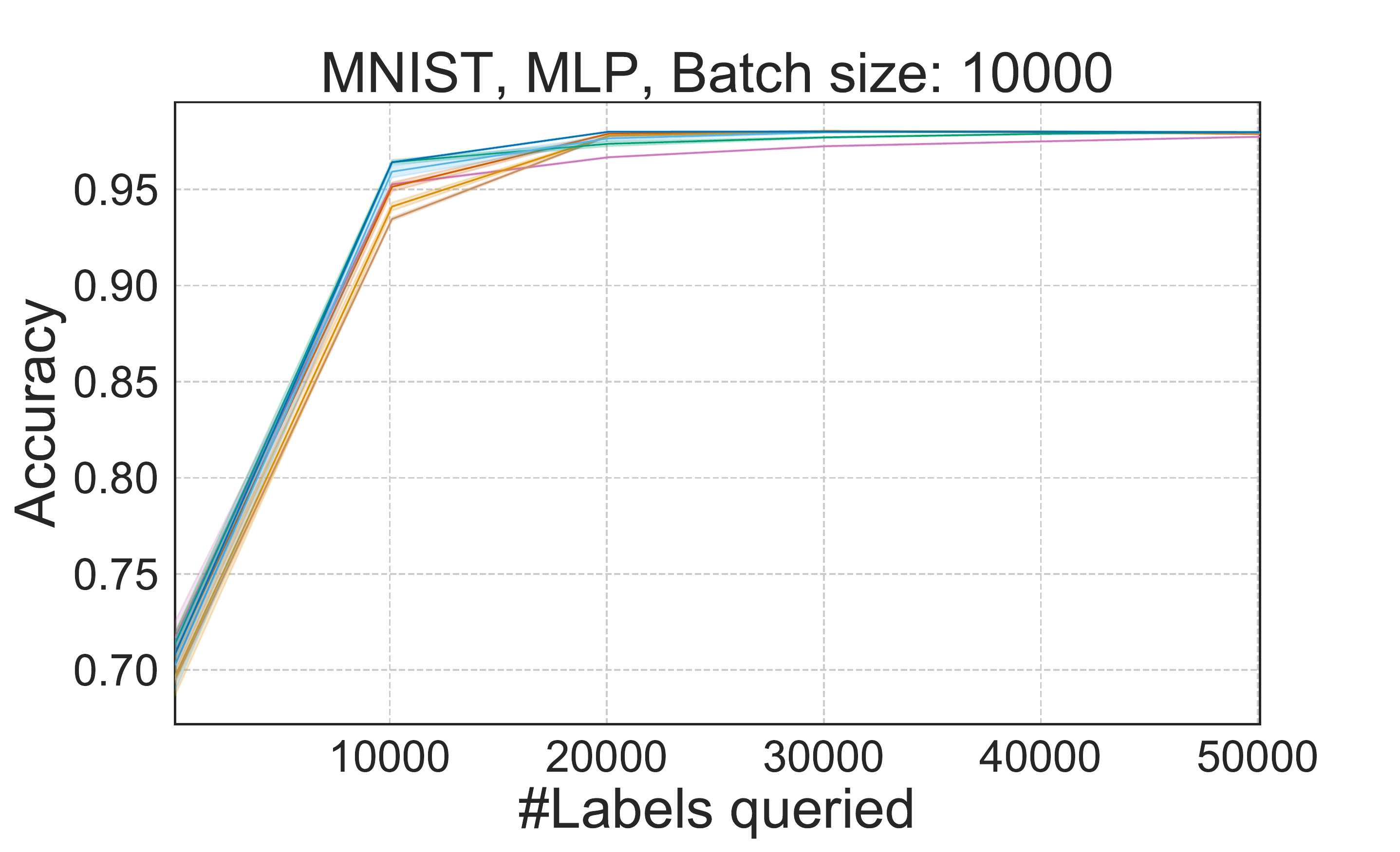}}
  \\
  \centering
  \begin{subfigure}[b]{\linewidth}
    \includegraphics[trim={0cm 0cm 0cm 0cm}, clip, width=\textwidth]{figs/legends/legend.pdf}
  \end{subfigure}

\caption{Full learning curves for MNIST with MLP.}
\label{fig:mnist-lc-full}
\end{figure}

\begin{figure}
  \centering
      \includegraphics[trim={0.4cm 0cm 27.9cm 0cm}, clip, width=0.012\textwidth]{figs/learning_curves/all_algs_Accuracy_Data=_SVHN__Model=_rn__nQuery=_100__TrainAug=_0___.pdf}
  \includegraphics[trim={1.5cm 0cm 1.6cm 0cm}, clip, width=0.32\textwidth]{{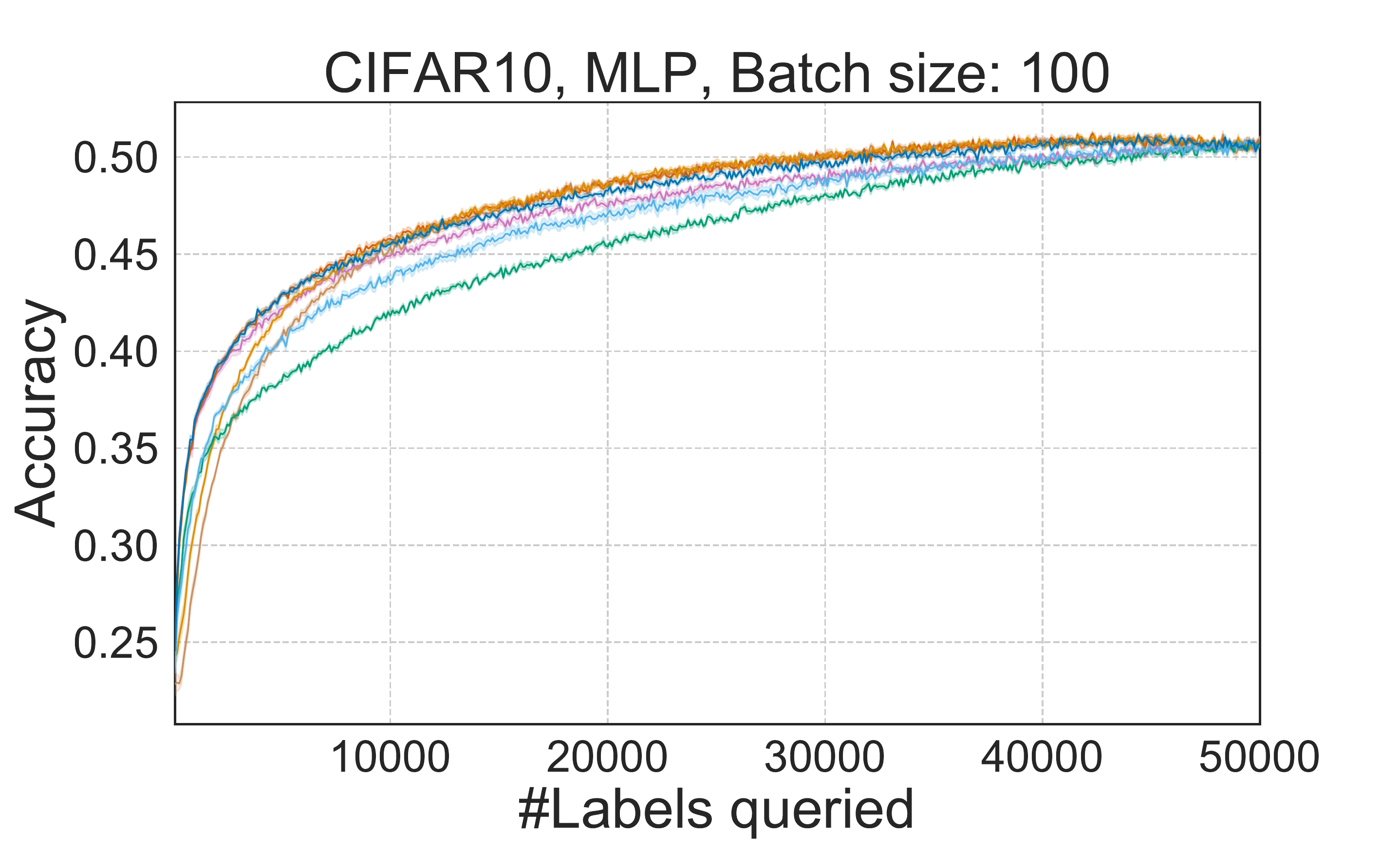}}
  \hfill
  \includegraphics[trim={1.5cm 0cm 1.6cm 0cm}, clip, width=0.32\textwidth]{{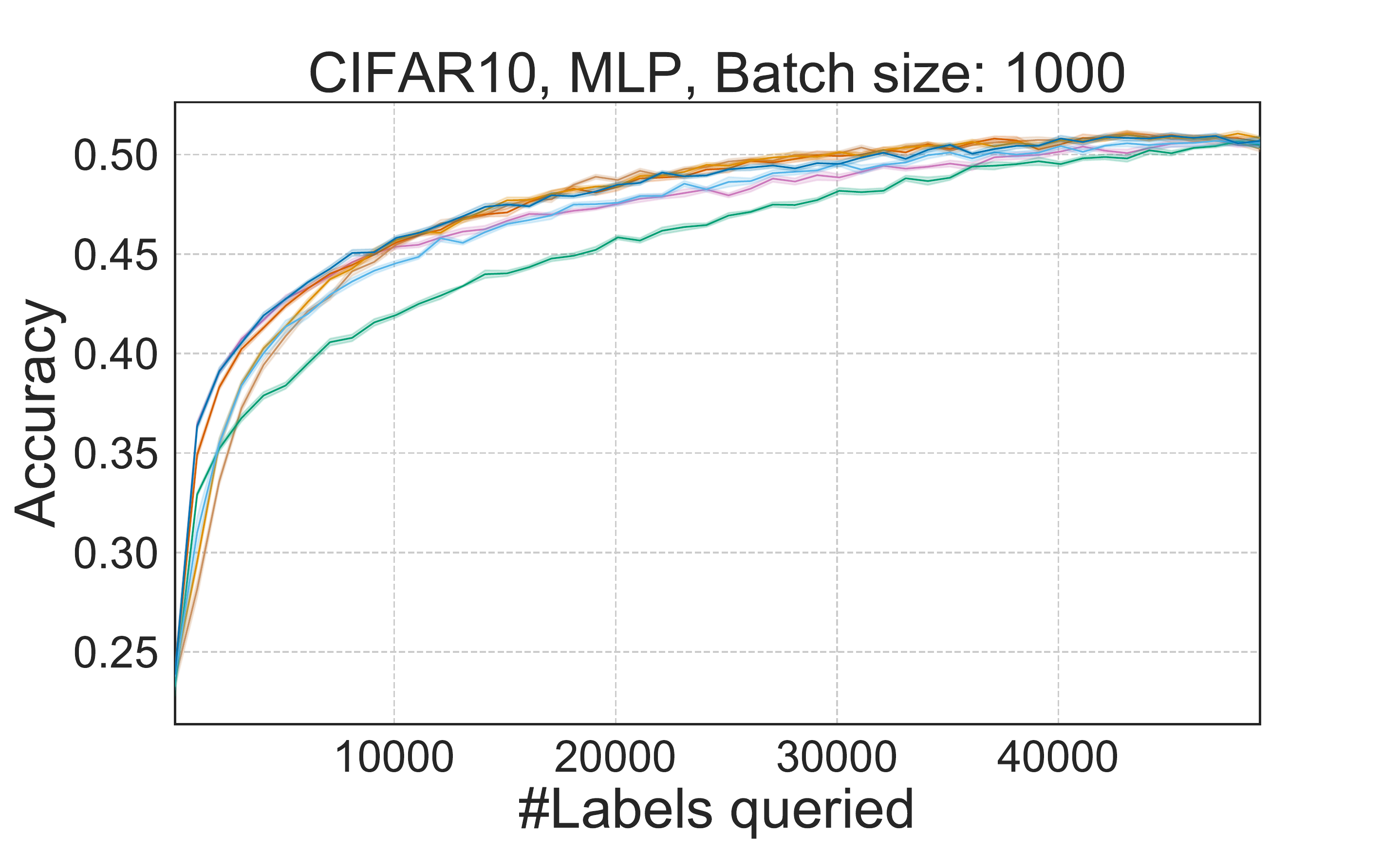}}
  \hfill
  \includegraphics[trim={1.5cm 0cm 1.6cm 0cm}, clip, width=0.32\textwidth]{{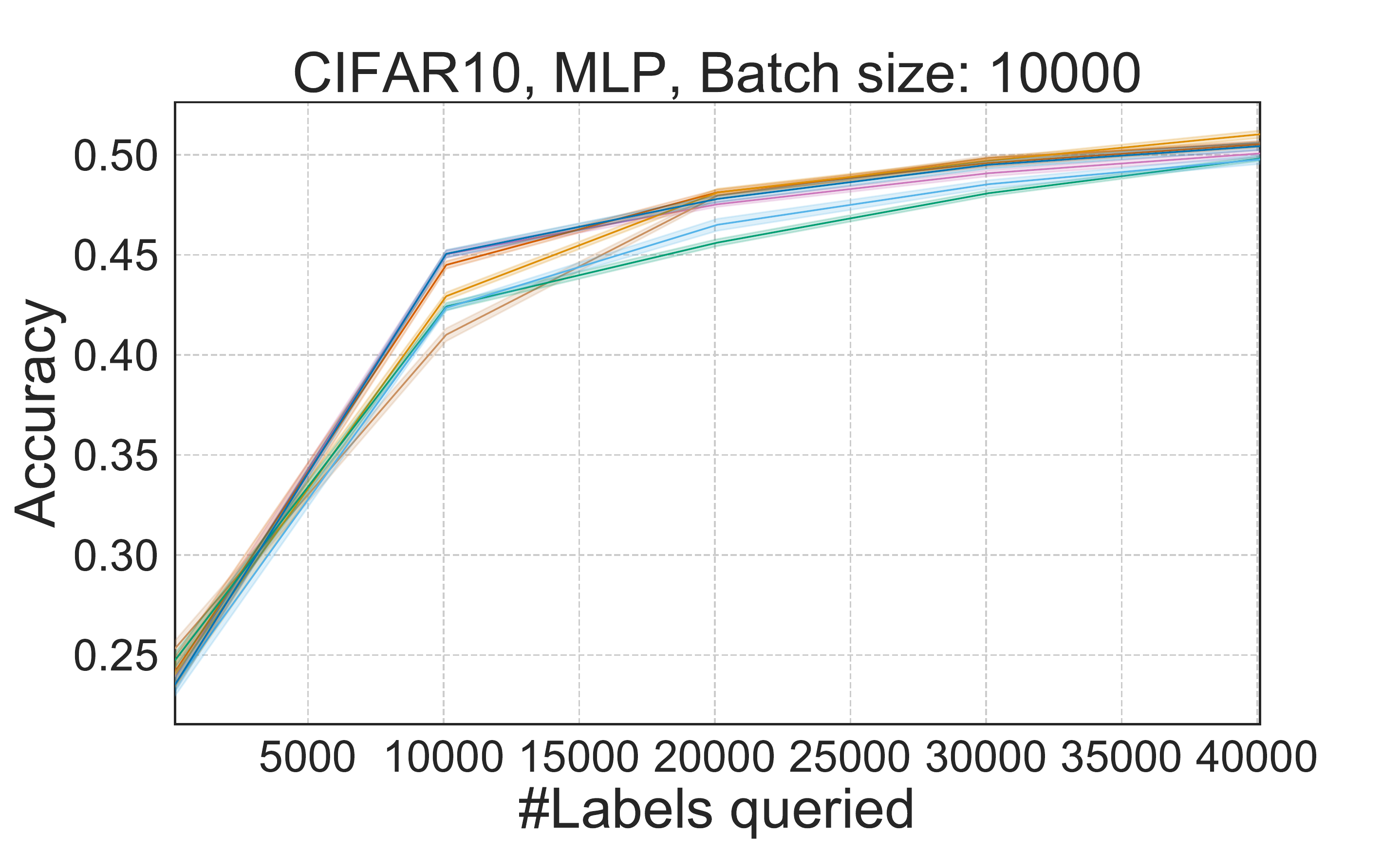}}

    \includegraphics[trim={0.4cm 0cm 27.9cm 0cm}, clip, width=0.012\textwidth]{figs/learning_curves/all_algs_Accuracy_Data=_SVHN__Model=_rn__nQuery=_100__TrainAug=_0___.pdf}
  \includegraphics[trim={1.5cm 0cm 1.6cm 0cm}, clip, width=0.32\textwidth]{{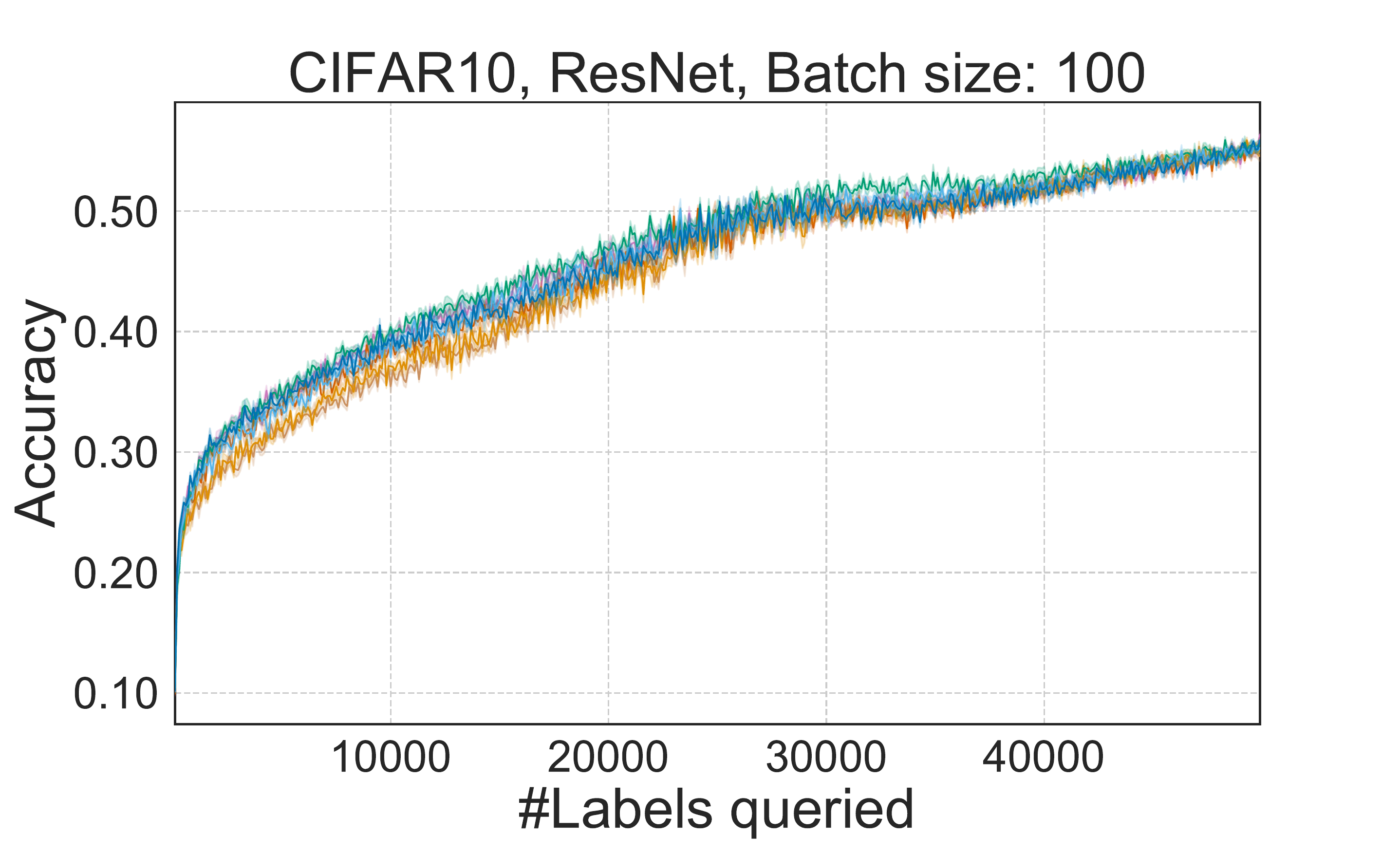}}
  \hfill
  \includegraphics[trim={1.5cm 0cm 1.6cm 0cm}, clip, width=0.32\textwidth]{{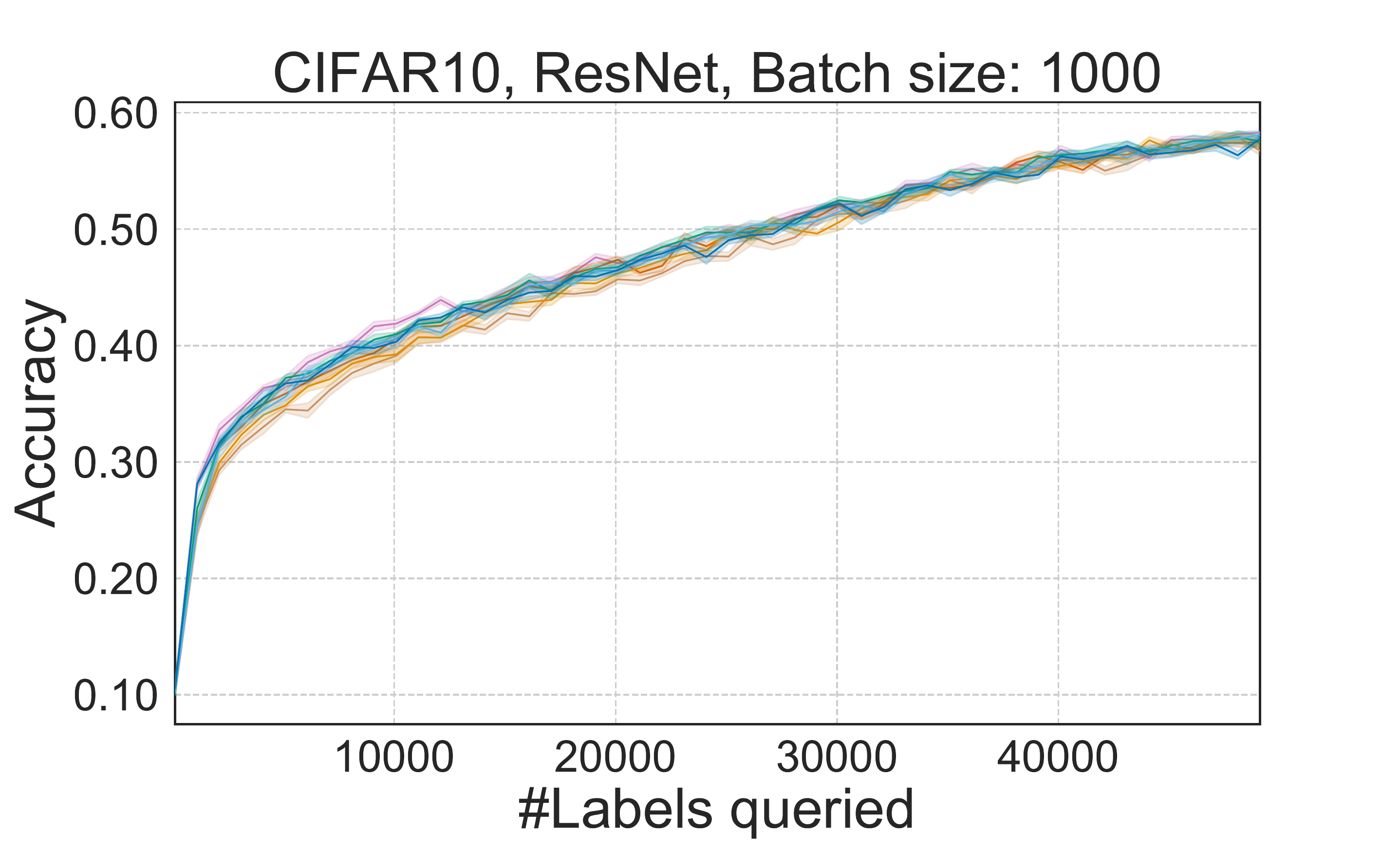}}
  \hfill
  \includegraphics[trim={1.5cm 0cm 1.6cm 0cm}, clip, width=0.32\textwidth]{{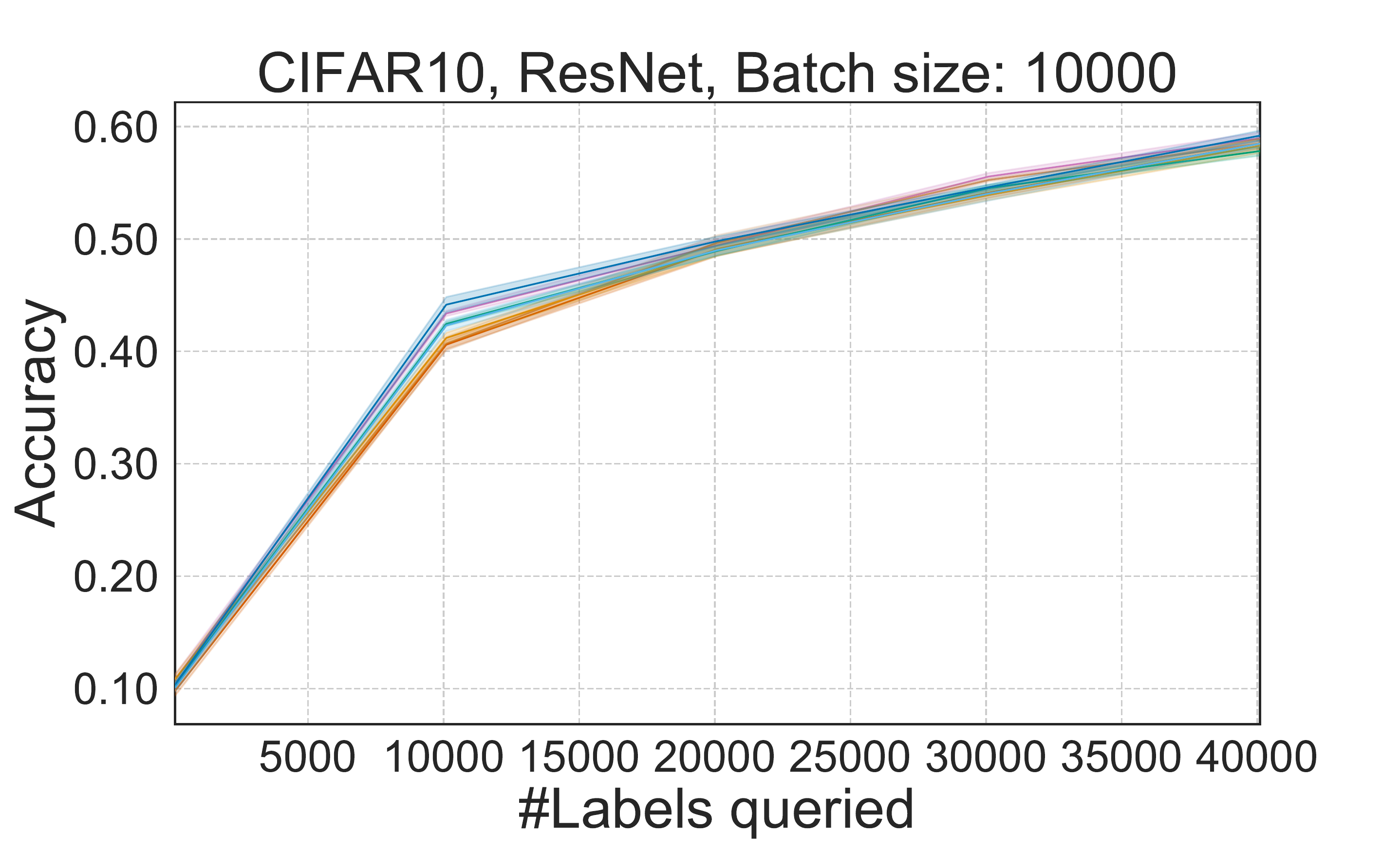}}

    \includegraphics[trim={0.4cm 0cm 27.9cm 0cm}, clip, width=0.012\textwidth]{figs/learning_curves/all_algs_Accuracy_Data=_SVHN__Model=_rn__nQuery=_100__TrainAug=_0___.pdf}
  \includegraphics[trim={1.5cm 0cm 1.6cm 0cm}, clip, width=0.32\textwidth]{{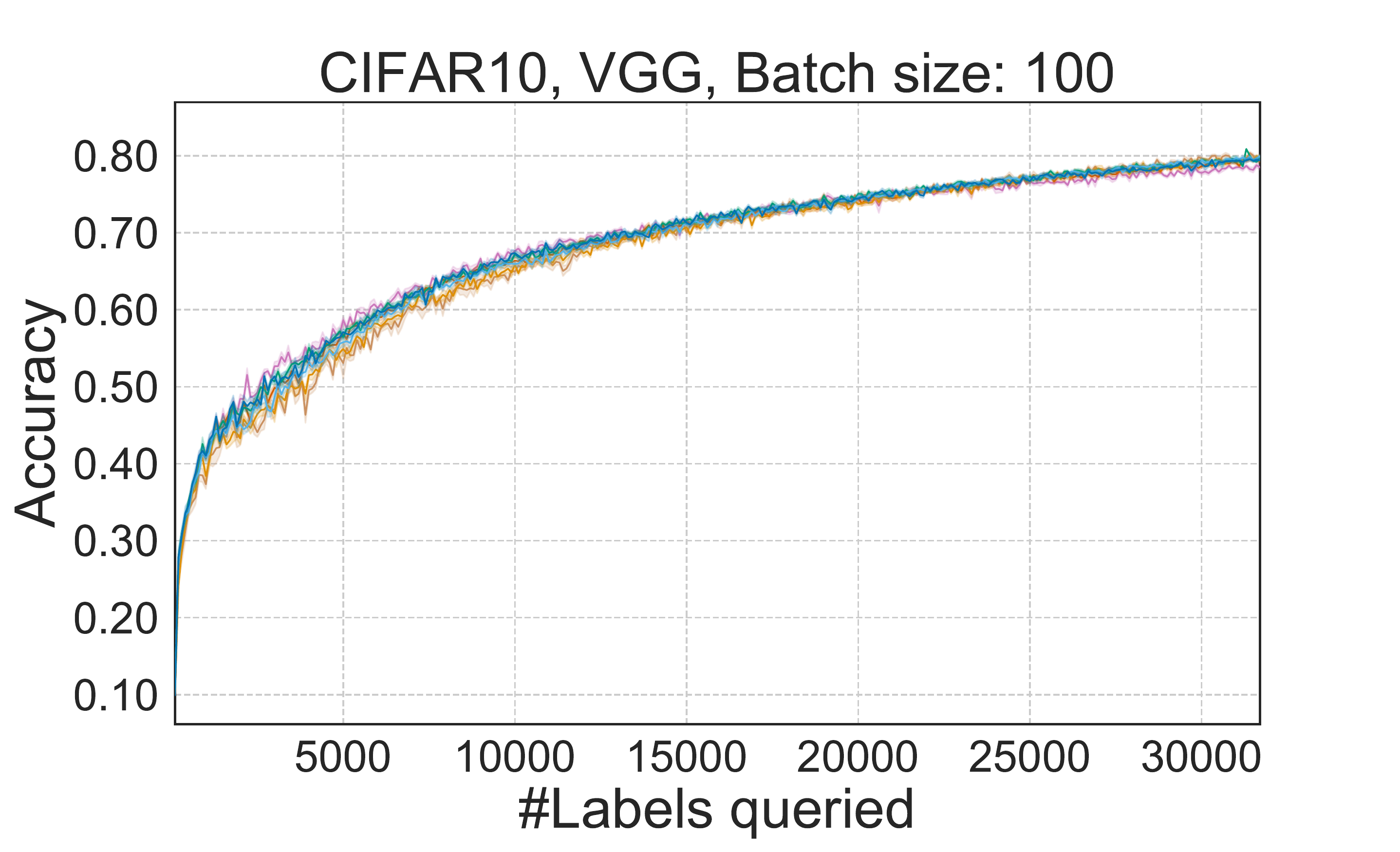}}
  \hfill
  \includegraphics[trim={1.5cm 0cm 1.6cm 0cm}, clip, width=0.32\textwidth]{{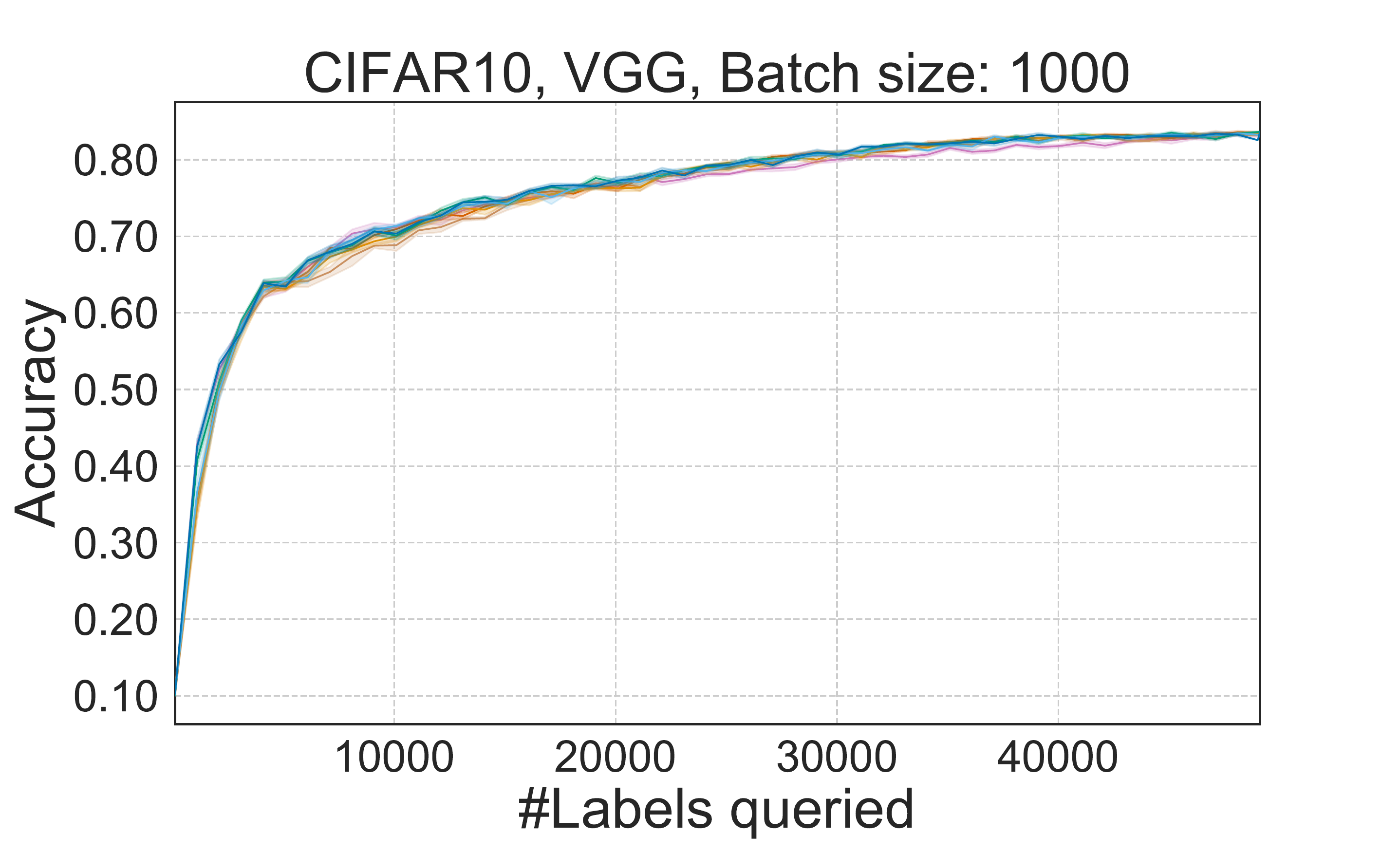}}
  \hfill
  \includegraphics[trim={1.5cm 0cm 1.6cm 0cm}, clip, width=0.32\textwidth]{{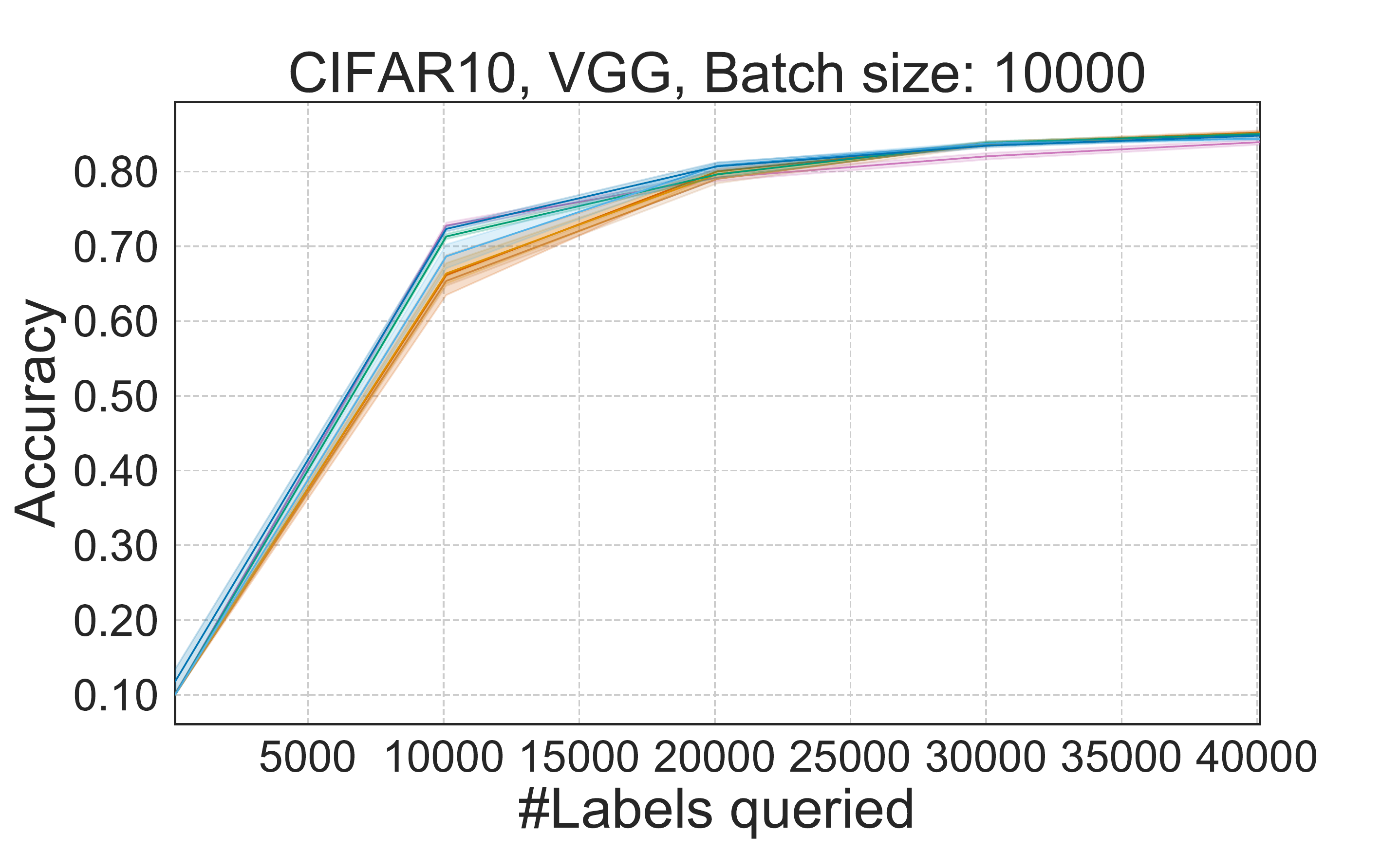}}
  \\
  \centering
  \begin{subfigure}[b]{\linewidth}
    \includegraphics[trim={0cm 0cm 0cm 0cm}, clip, width=\textwidth]{figs/legends/legend.pdf}
  \end{subfigure}

\caption{Full learning curves for CIFAR10 with MLP, ResNet and VGG.}
\label{fig:cifar10-lc-full}
\end{figure}


\begin{figure}
  \centering
      \includegraphics[trim={0.4cm 0cm 27.9cm 0cm}, clip, width=0.012\textwidth]{figs/learning_curves/all_algs_Accuracy_Data=_SVHN__Model=_rn__nQuery=_100__TrainAug=_0___.pdf}
  \includegraphics[trim={1.5cm 0cm 1.6cm 0cm}, clip, width=0.32\textwidth]{{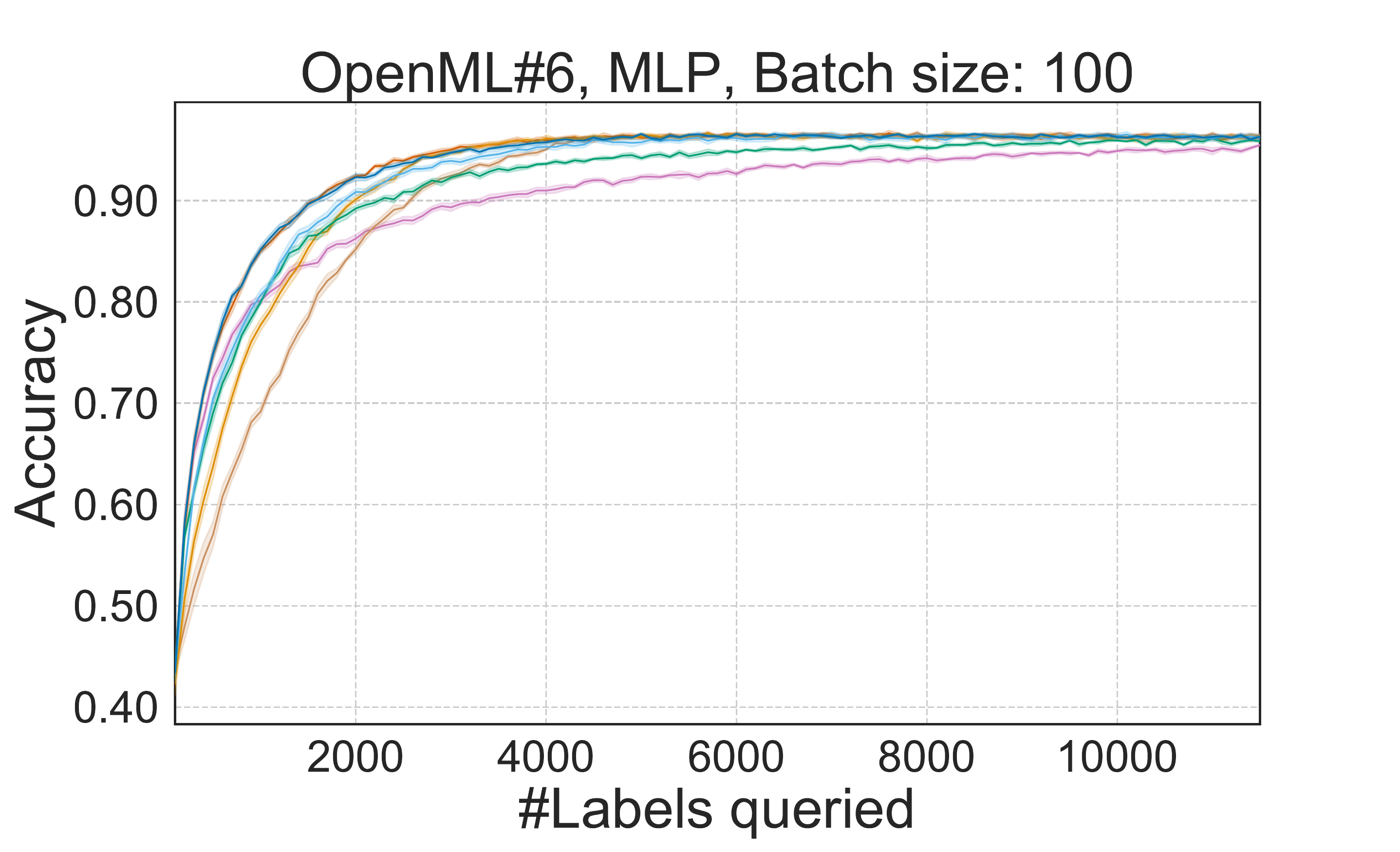}}
  \hfill
  \includegraphics[trim={1.5cm 0cm 1.6cm 0cm}, clip, width=0.32\textwidth]{{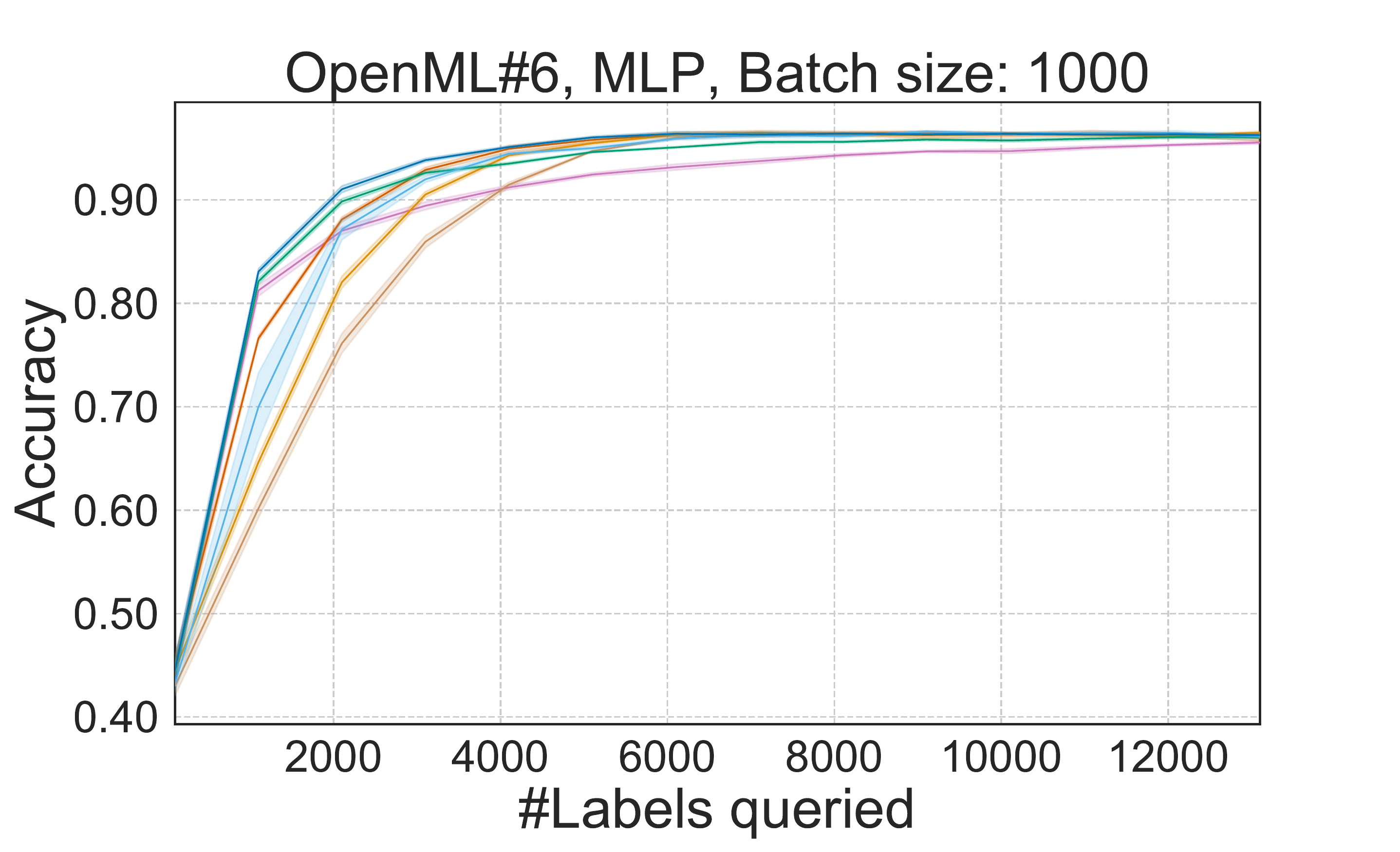}}
  \hfill
  \includegraphics[trim={1.5cm 0cm 1.6cm 0cm}, clip, width=0.32\textwidth]{{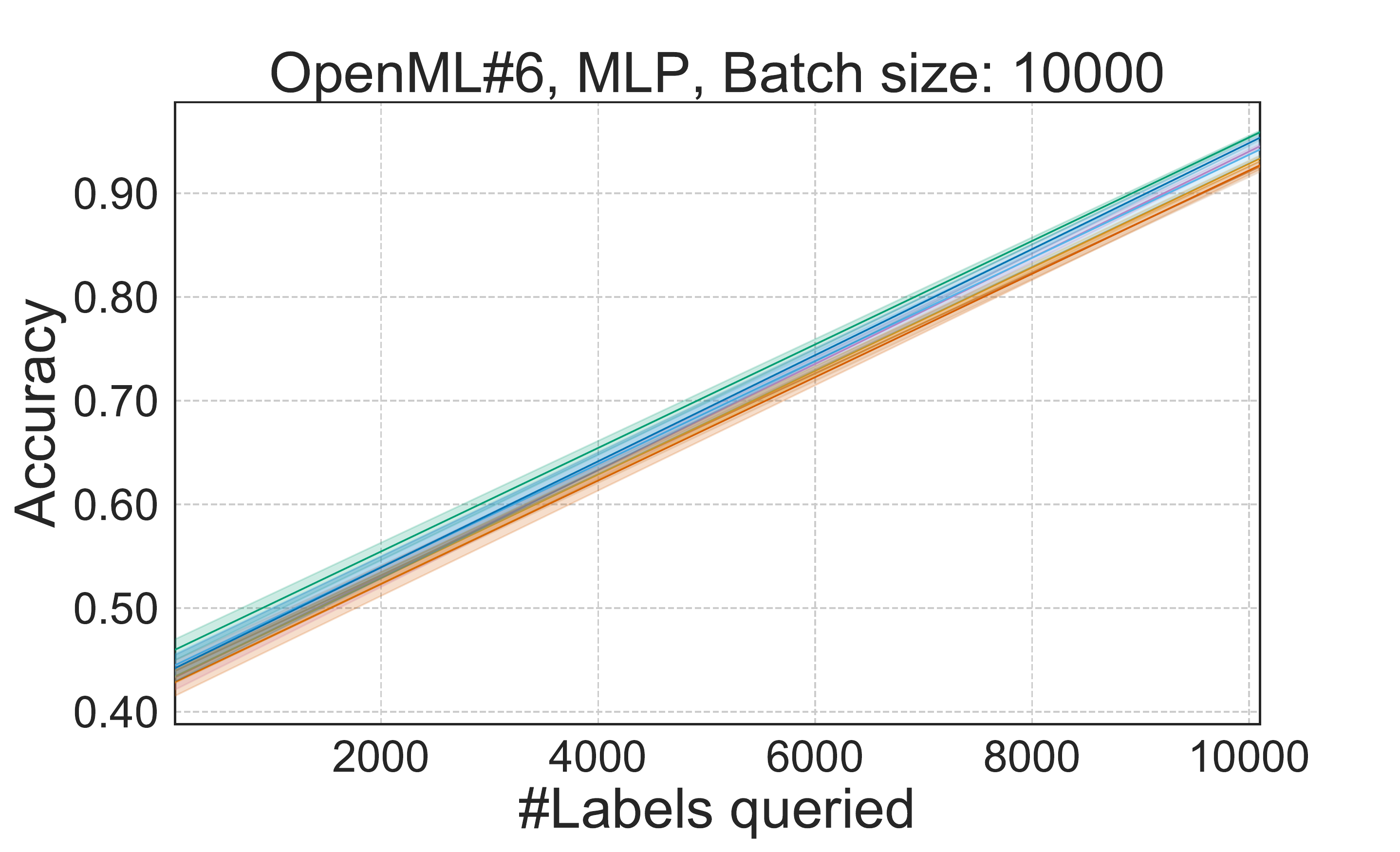}}
  \\
  \centering
  \begin{subfigure}[b]{\linewidth}
    \includegraphics[trim={0cm 0cm 0cm 0cm}, clip, width=\textwidth]{figs/legends/legend.pdf}
  \end{subfigure}
\caption{Zoomed-in learning curves for OpenML \#6 with MLP.}
\label{fig:6-lc}
\end{figure}

\begin{figure}
  \centering
      \includegraphics[trim={0.4cm 0cm 27.9cm 0cm}, clip, width=0.012\textwidth]{figs/learning_curves/all_algs_Accuracy_Data=_SVHN__Model=_rn__nQuery=_100__TrainAug=_0___.pdf}
  \includegraphics[trim={1.5cm 0cm 1.6cm 0cm}, clip, width=0.32\textwidth]{{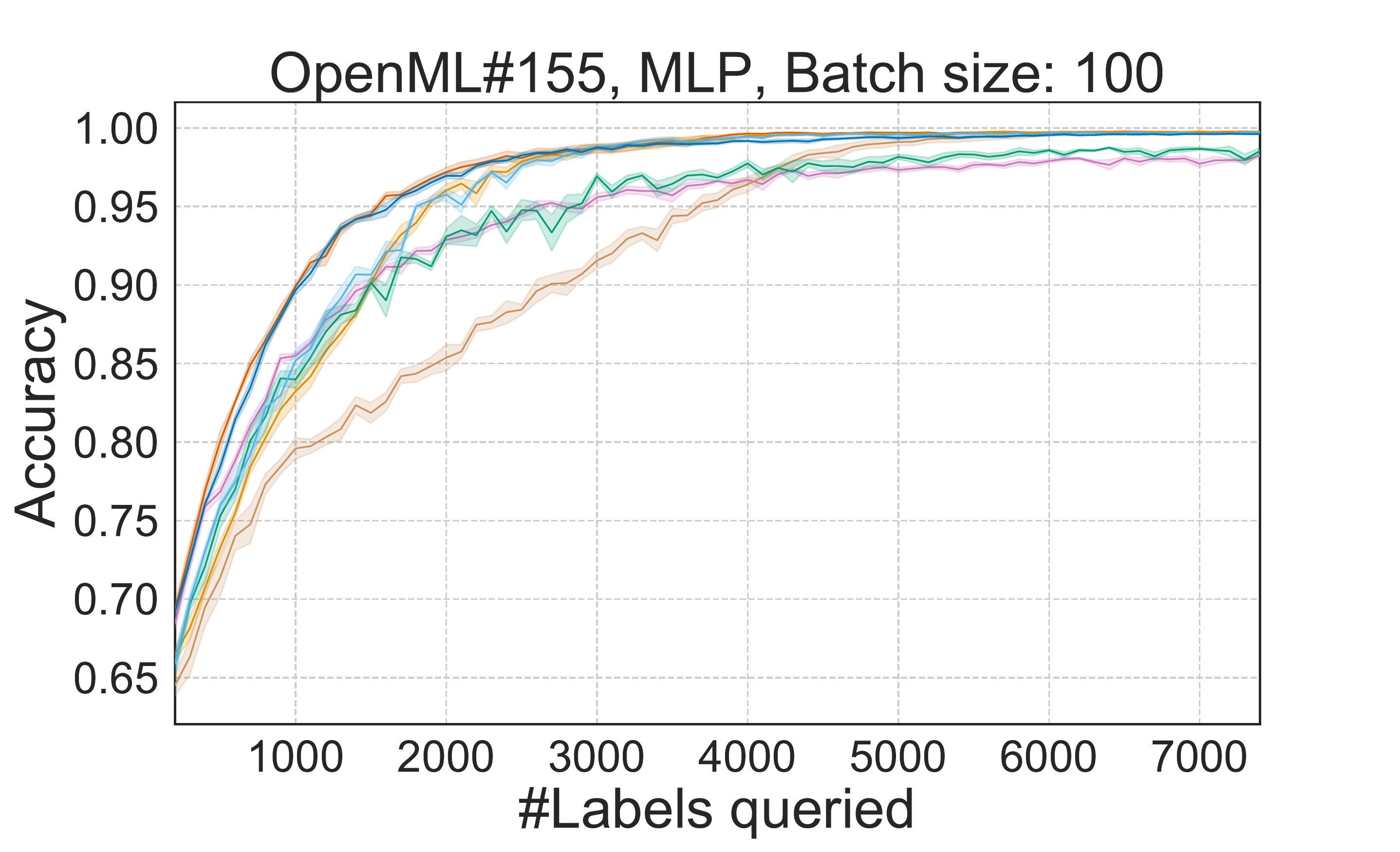}}
  \hfill
  \includegraphics[trim={1.5cm 0cm 1.6cm 0cm}, clip, width=0.32\textwidth]{{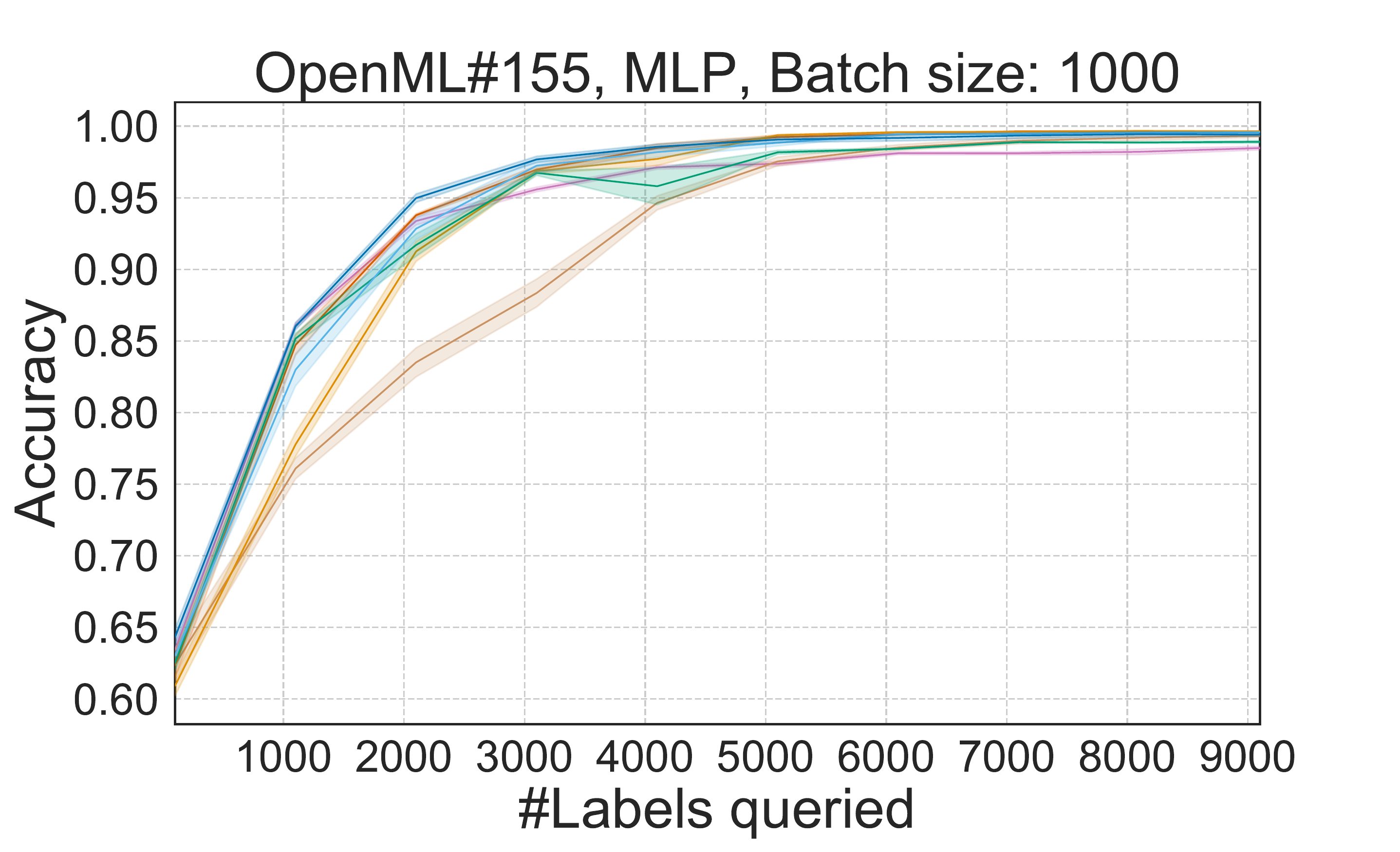}}
  \hfill
  \includegraphics[trim={1.5cm 0cm 1.6cm 0cm}, clip, width=0.32\textwidth]{{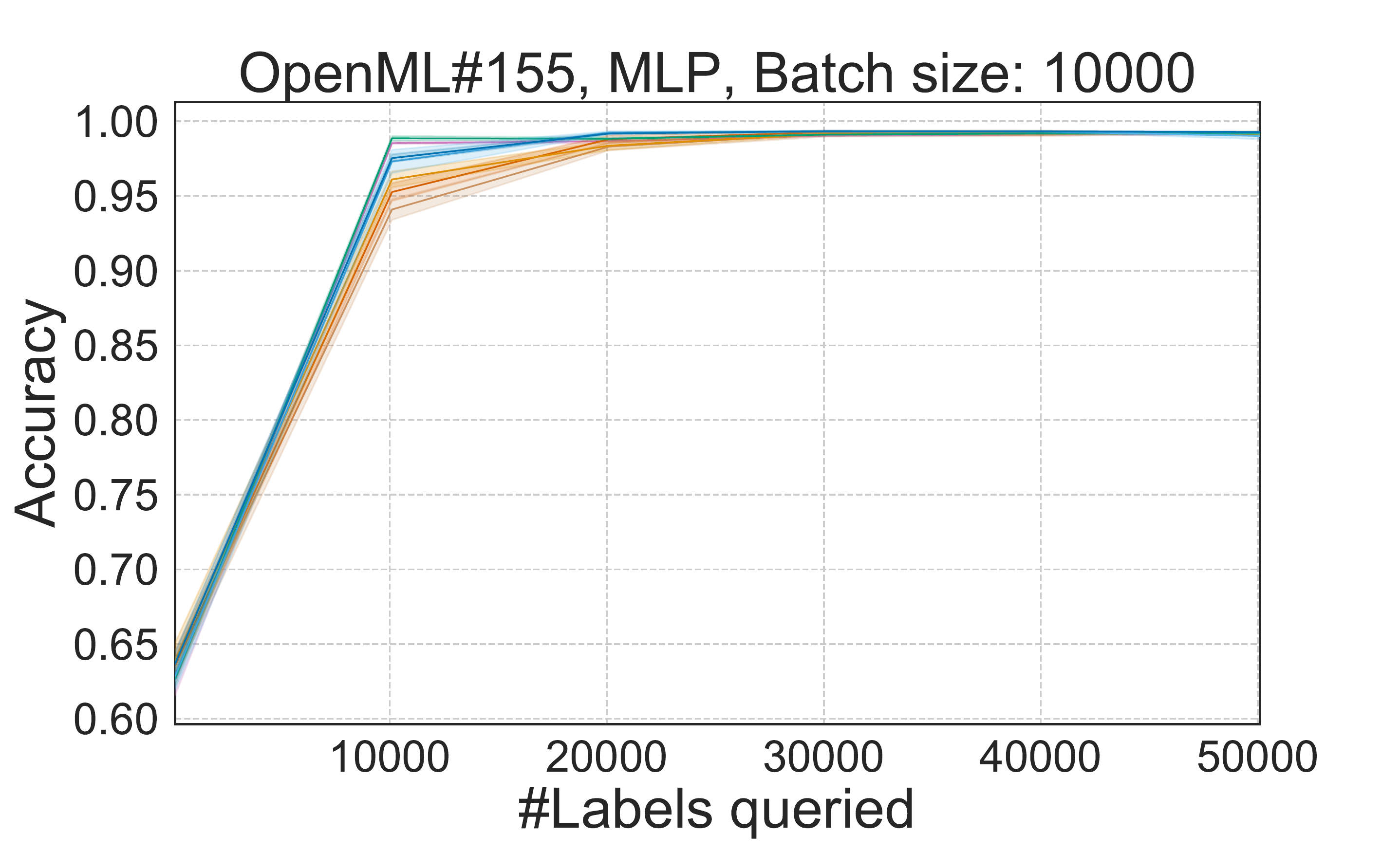}}
  \\
  \centering
  \begin{subfigure}[b]{\linewidth}
    \includegraphics[trim={0cm 0cm 0cm 0cm}, clip, width=\textwidth]{figs/legends/legend.pdf}
  \end{subfigure}
\caption{Zoomed-in learning curves for OpenML \#155 with MLP.}
\label{fig:155-lc}
\end{figure}

\begin{figure}
  \centering
      \includegraphics[trim={0.4cm 0cm 27.9cm 0cm}, clip, width=0.012\textwidth]{figs/learning_curves/all_algs_Accuracy_Data=_SVHN__Model=_rn__nQuery=_100__TrainAug=_0___.pdf}
  \includegraphics[trim={1.5cm 0cm 1.6cm 0cm}, clip, width=0.32\textwidth]{{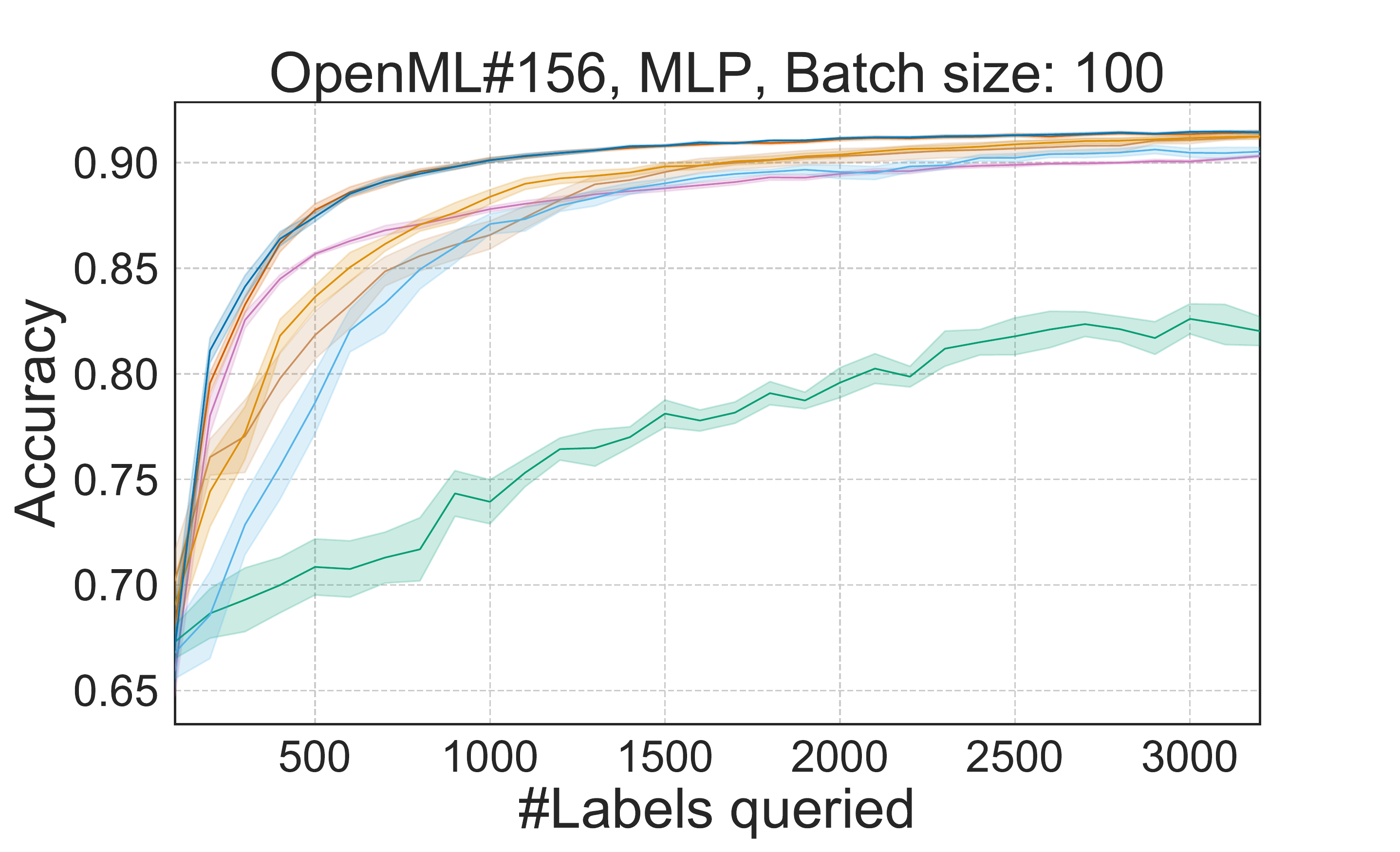}}
  \hfill
  \includegraphics[trim={1.5cm 0cm 1.6cm 0cm}, clip, width=0.32\textwidth]{{figs/learning_curves/all_algs_Accuracy_Data=_156__Model=_mlp__nQuery=_1000__TrainAug=_0___.pdf}}
  \hfill
  \includegraphics[trim={1.5cm 0cm 1.6cm 0cm}, clip, width=0.32\textwidth]{{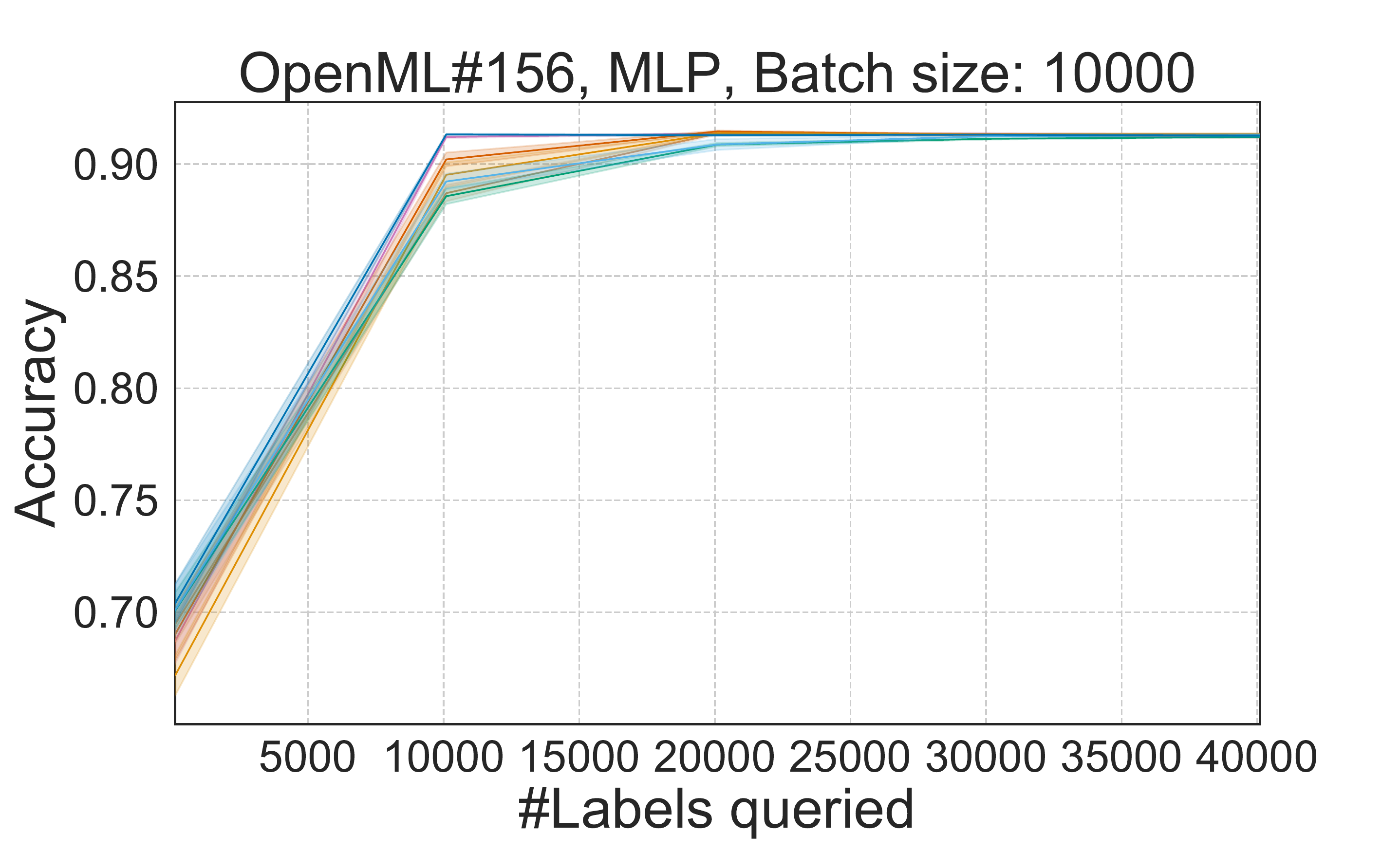}}
  \\
  \centering
  \begin{subfigure}[b]{\linewidth}
    \includegraphics[trim={0cm 0cm 0cm 0cm}, clip, width=\textwidth]{figs/legends/legend.pdf}
  \end{subfigure}
\caption{Zoomed-in learning curves for OpenML \#156 with MLP.}
\label{fig:156-lc}
\end{figure}

\begin{figure}
  \centering
      \includegraphics[trim={0.4cm 0cm 27.9cm 0cm}, clip, width=0.012\textwidth]{figs/learning_curves/all_algs_Accuracy_Data=_SVHN__Model=_rn__nQuery=_100__TrainAug=_0___.pdf}
  \includegraphics[trim={1.5cm 0cm 1.6cm 0cm}, clip, width=0.32\textwidth]{{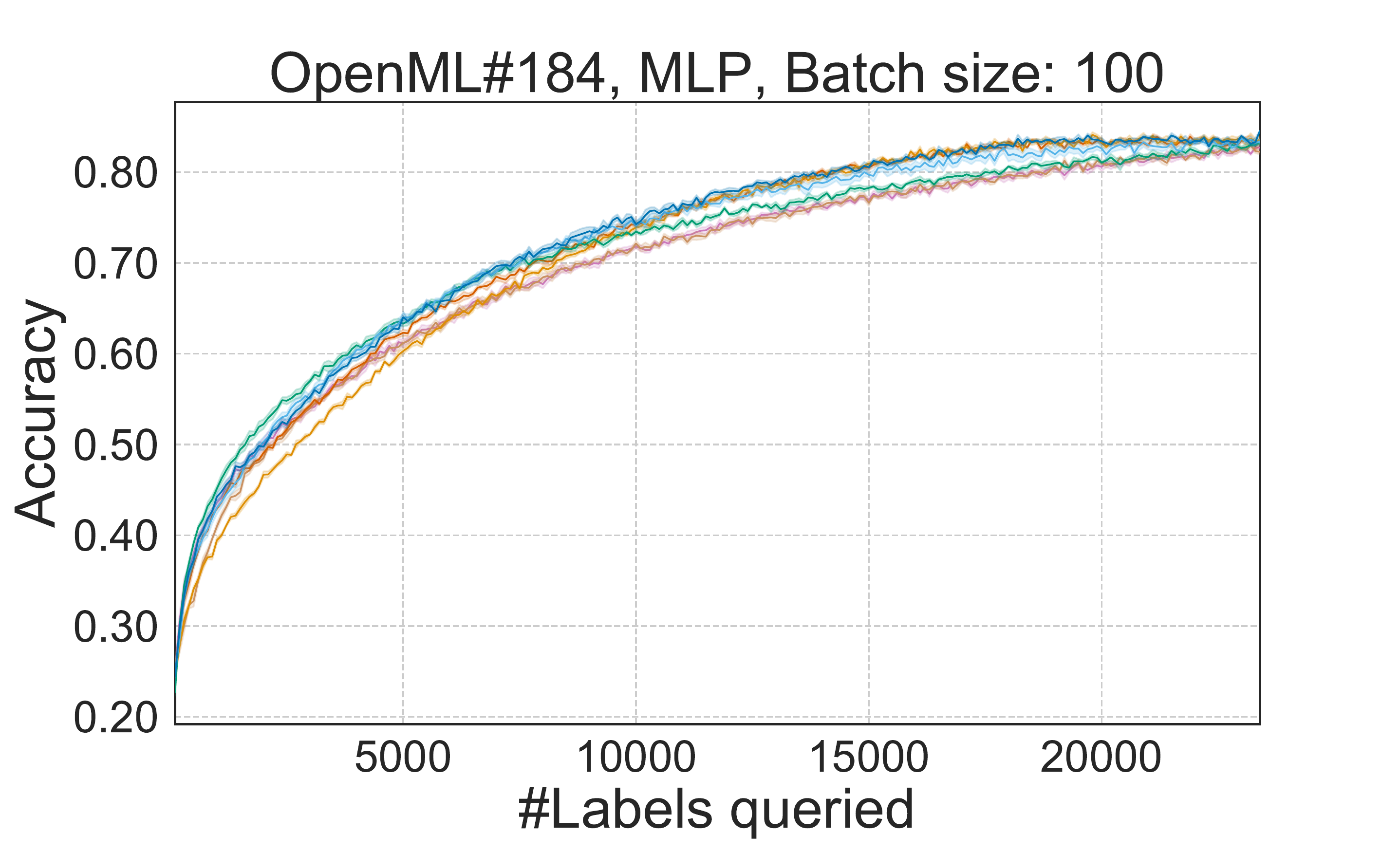}}
  \hfill
  \includegraphics[trim={1.5cm 0cm 1.6cm 0cm}, clip, width=0.32\textwidth]{{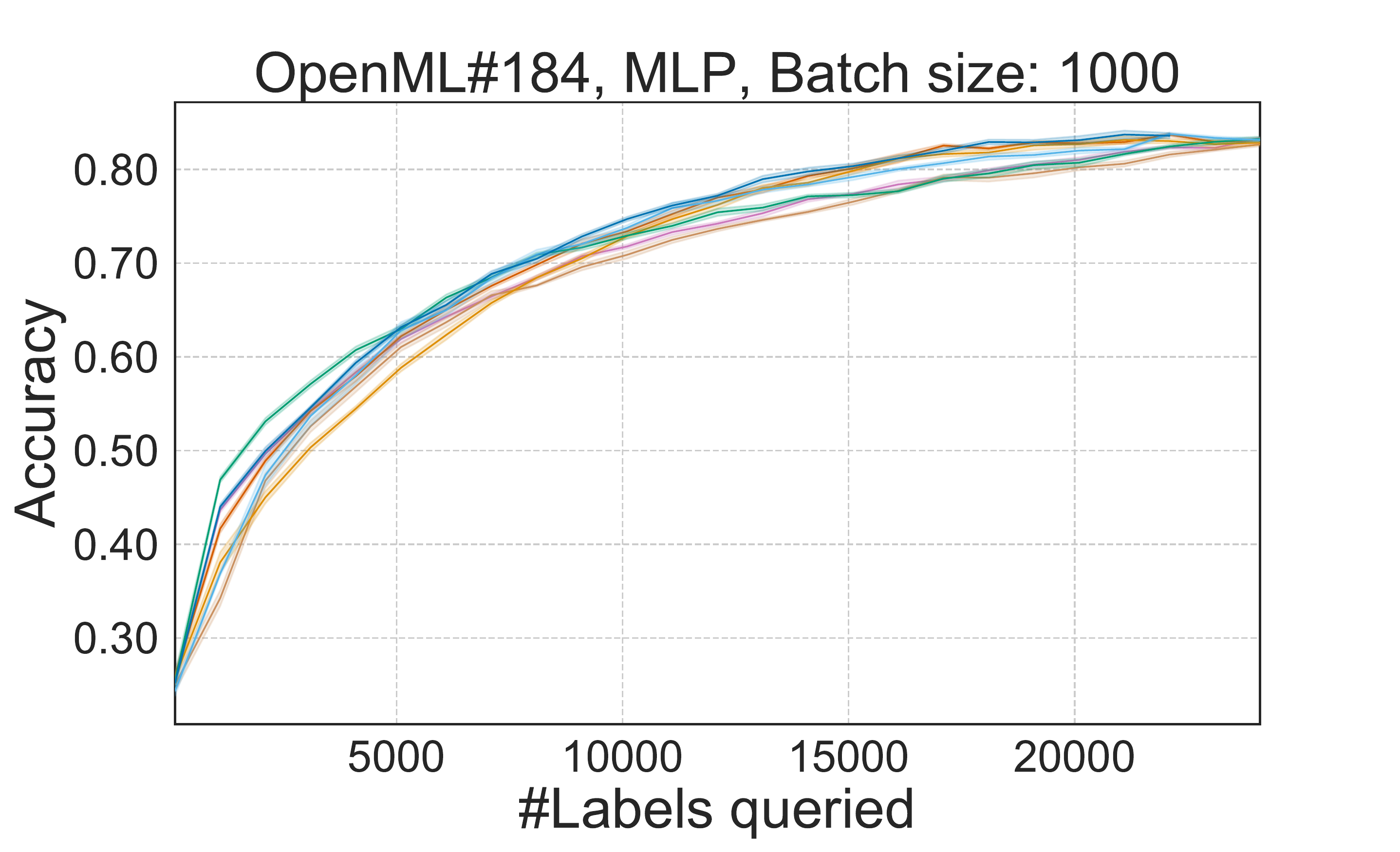}}
  \hfill
  \includegraphics[trim={1.5cm 0cm 1.6cm 0cm}, clip, width=0.32\textwidth]{{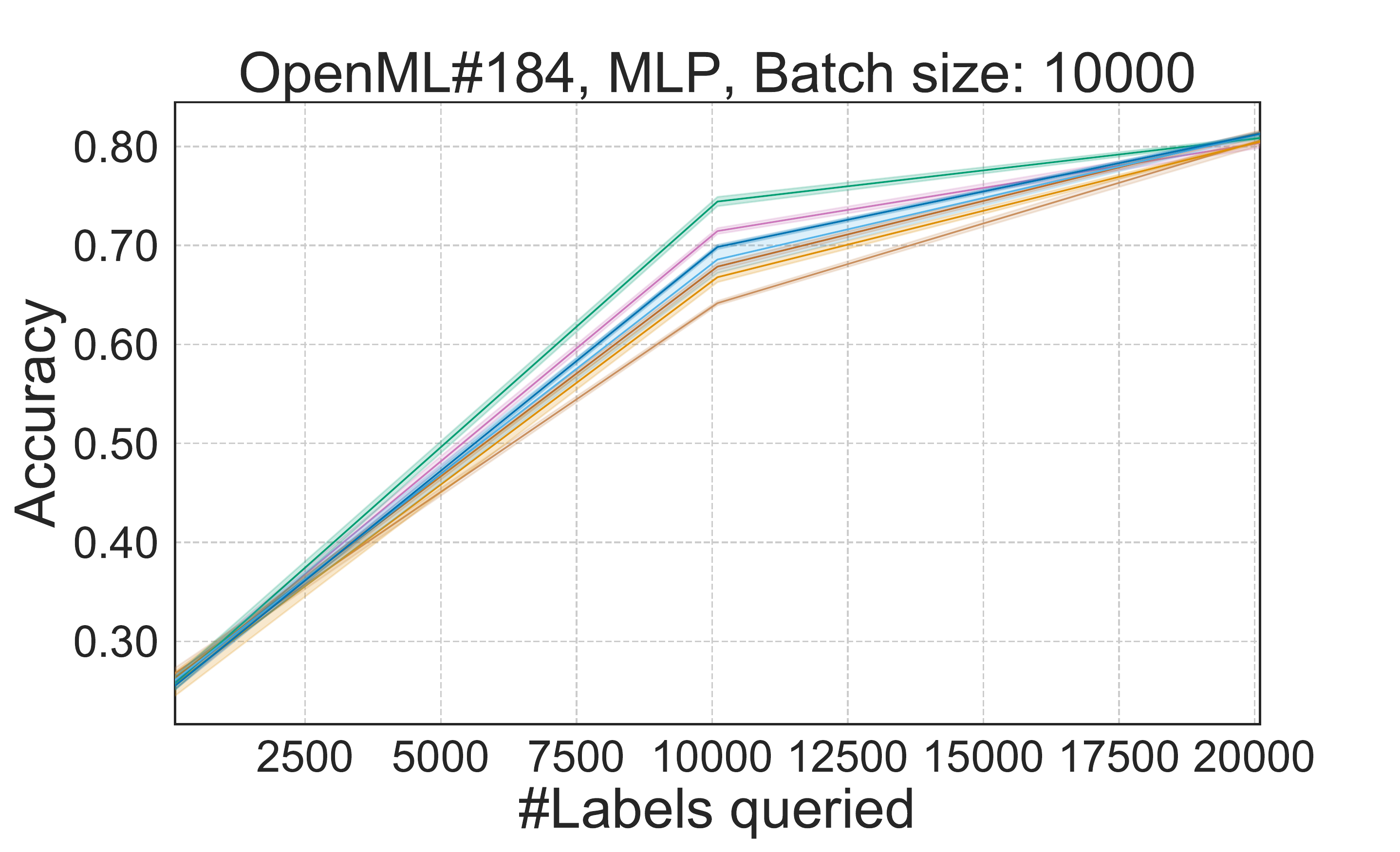}}
  \\
  \centering
  \begin{subfigure}[b]{\linewidth}
    \includegraphics[trim={0cm 0cm 0cm 0cm}, clip, width=\textwidth]{figs/legends/legend.pdf}
  \end{subfigure}
\caption{Zoomed-in learning curves for OpenML \#184 with MLP.}
\label{fig:184-lc}
\end{figure}

\begin{figure}
  \centering
      \includegraphics[trim={0.4cm 0cm 27.9cm 0cm}, clip, width=0.012\textwidth]{figs/learning_curves/all_algs_Accuracy_Data=_SVHN__Model=_rn__nQuery=_100__TrainAug=_0___.pdf}
  \includegraphics[trim={1.5cm 0cm 1.6cm 0cm}, clip, width=0.32\textwidth]{{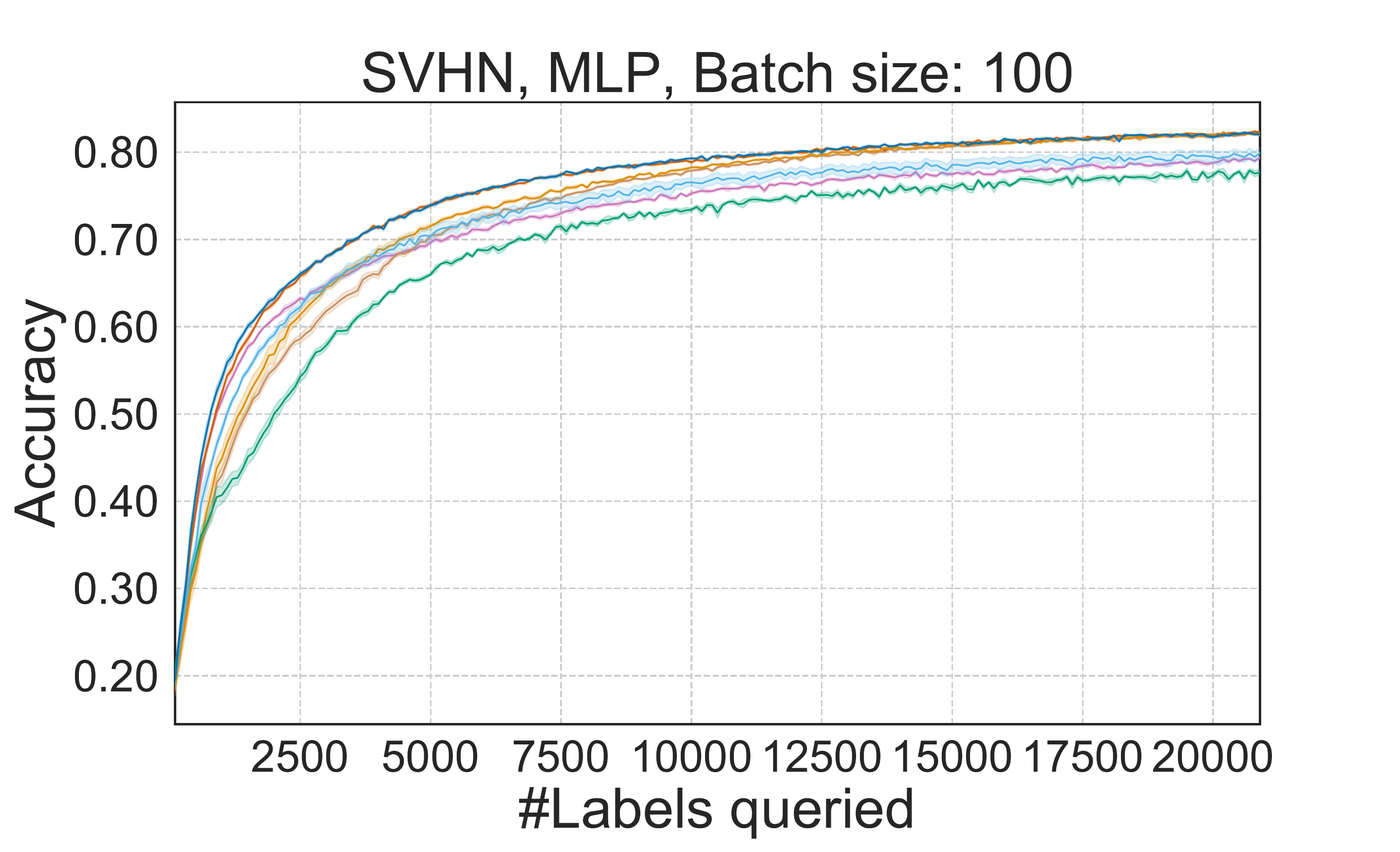}}
  \hfill
  \includegraphics[trim={1.5cm 0cm 1.6cm 0cm}, clip, width=0.32\textwidth]{{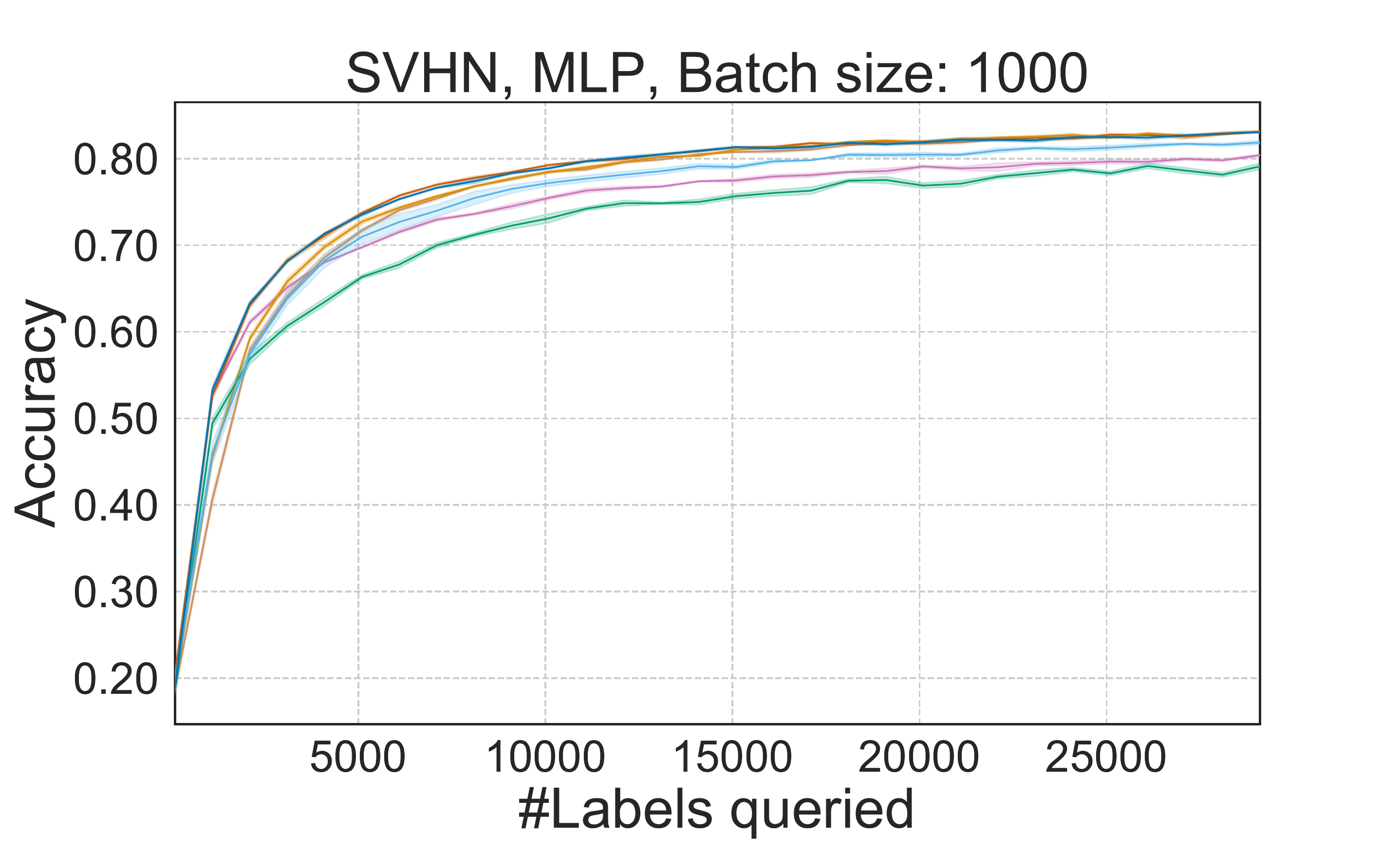}}
  \hfill
  \includegraphics[trim={1.5cm 0cm 1.6cm 0cm}, clip, width=0.32\textwidth]{{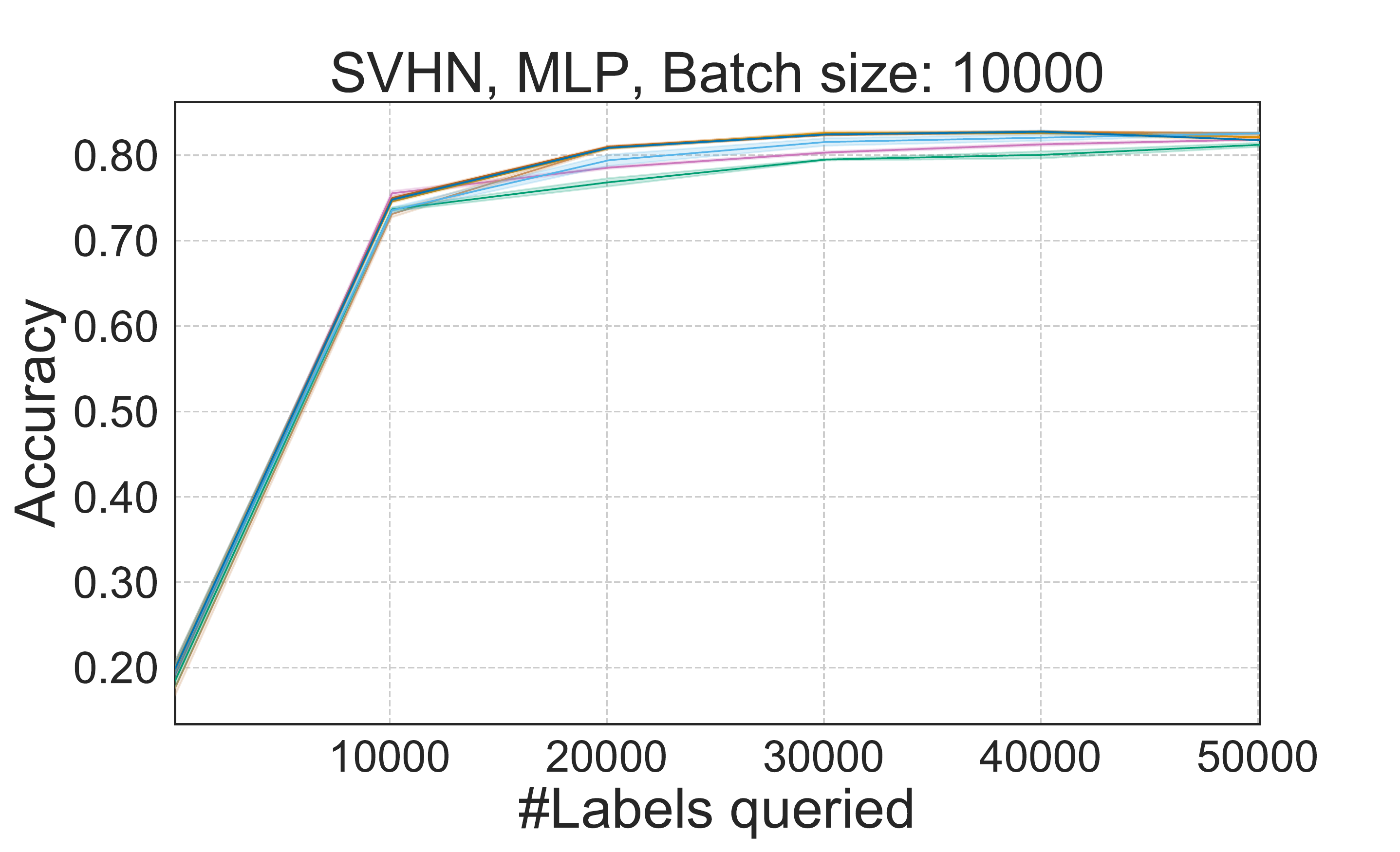}}

    \includegraphics[trim={0.4cm 0cm 27.9cm 0cm}, clip, width=0.012\textwidth]{figs/learning_curves/all_algs_Accuracy_Data=_SVHN__Model=_rn__nQuery=_100__TrainAug=_0___.pdf}
  \includegraphics[trim={1.5cm 0cm 1.6cm 0cm}, clip, width=0.32\textwidth]{{figs/learning_curves/all_algs_Accuracy_Data=_SVHN__Model=_rn__nQuery=_100__TrainAug=_0___.pdf}}
  \hfill
  \includegraphics[trim={1.5cm 0cm 1.6cm 0cm}, clip, width=0.32\textwidth]{{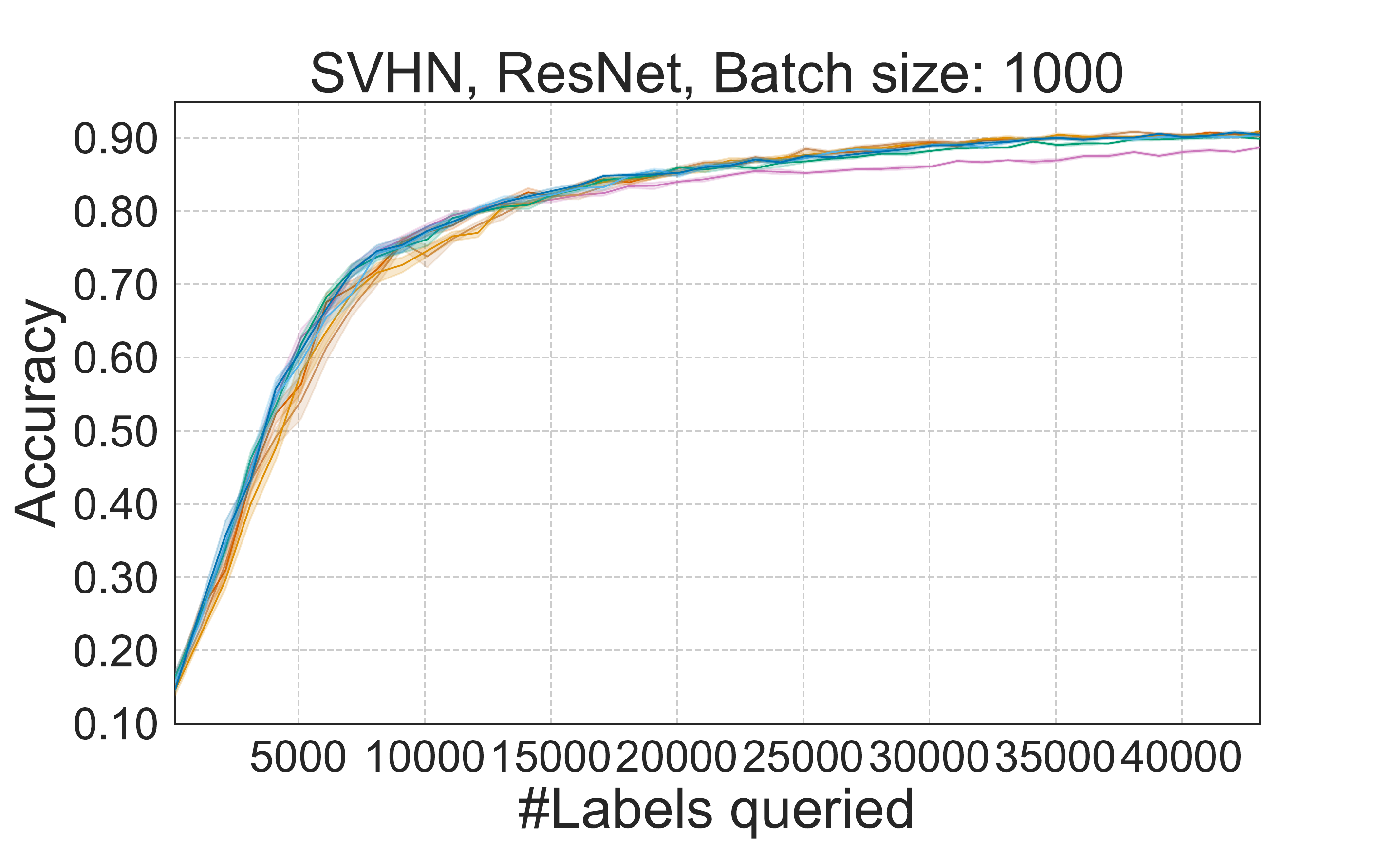}}
  \hfill
  \includegraphics[trim={1.5cm 0cm 1.6cm 0cm}, clip, width=0.32\textwidth]{{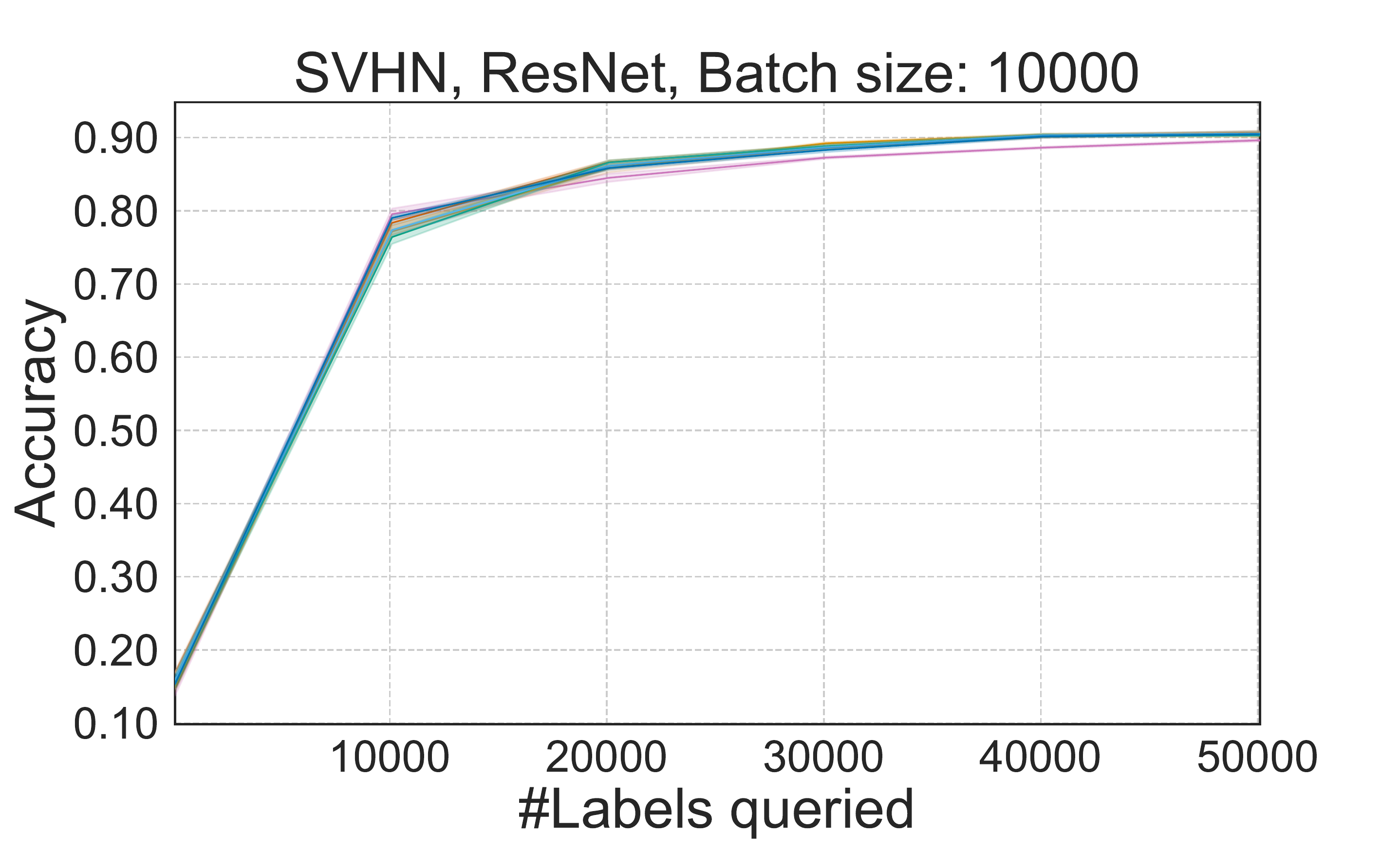}}

    \includegraphics[trim={0.4cm 0cm 27.9cm 0cm}, clip, width=0.012\textwidth]{figs/learning_curves/all_algs_Accuracy_Data=_SVHN__Model=_rn__nQuery=_100__TrainAug=_0___.pdf}
  \includegraphics[trim={1.5cm 0cm 1.6cm 0cm}, clip, width=0.32\textwidth]{{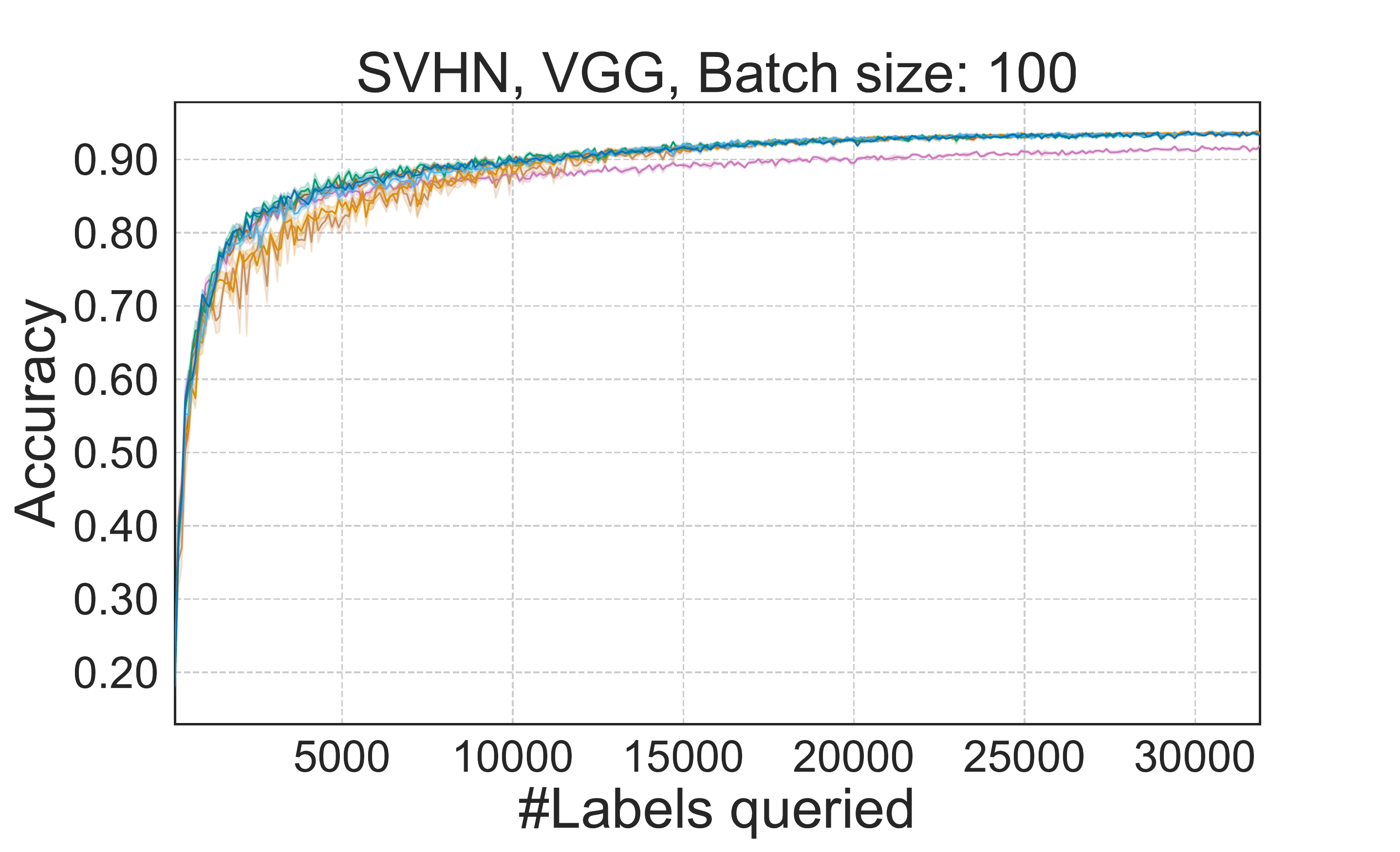}}
  \hfill
  \includegraphics[trim={1.5cm 0cm 1.6cm 0cm}, clip, width=0.32\textwidth]{{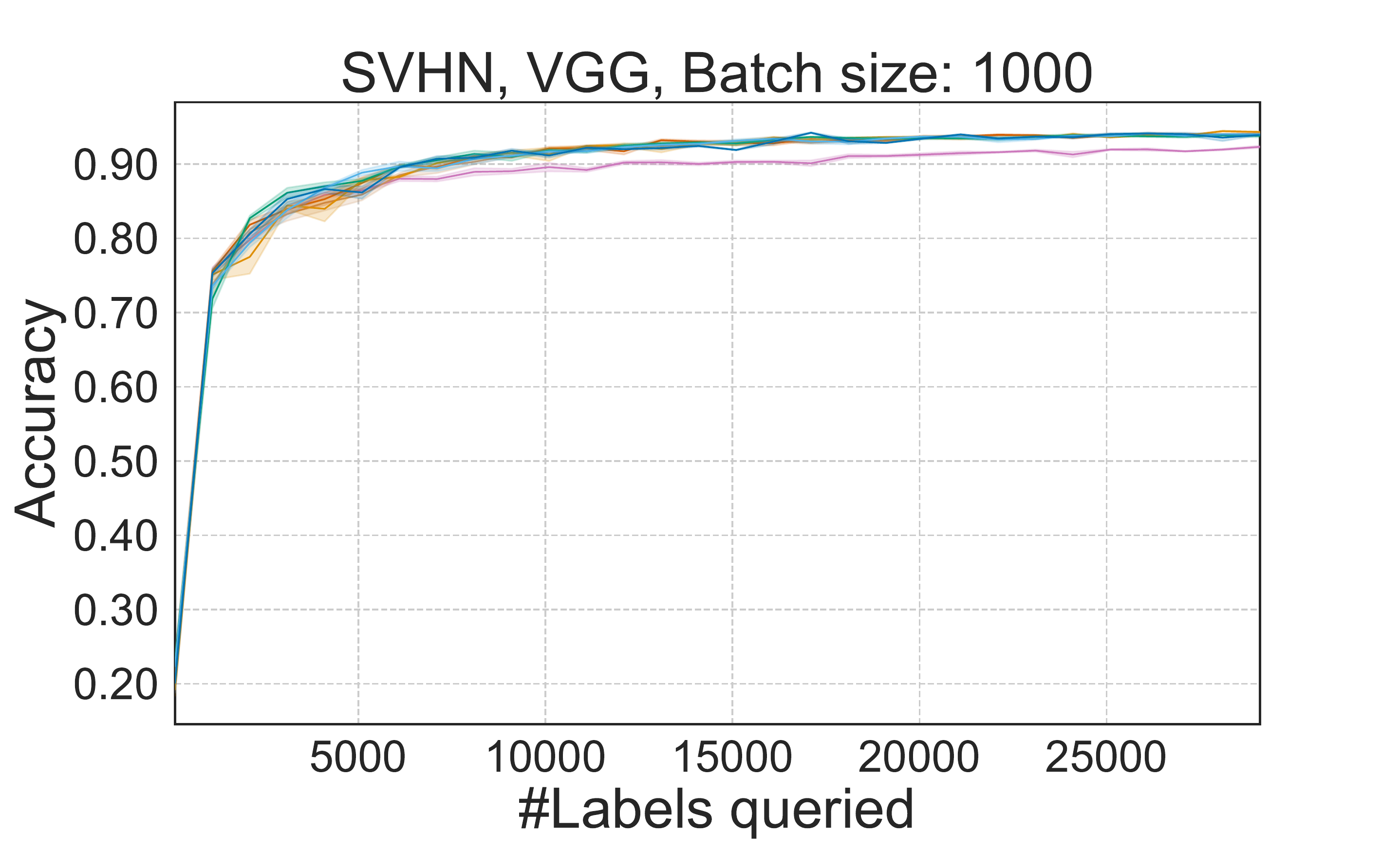}}
  \hfill
  \includegraphics[trim={1.5cm 0cm 1.6cm 0cm}, clip, width=0.32\textwidth]{{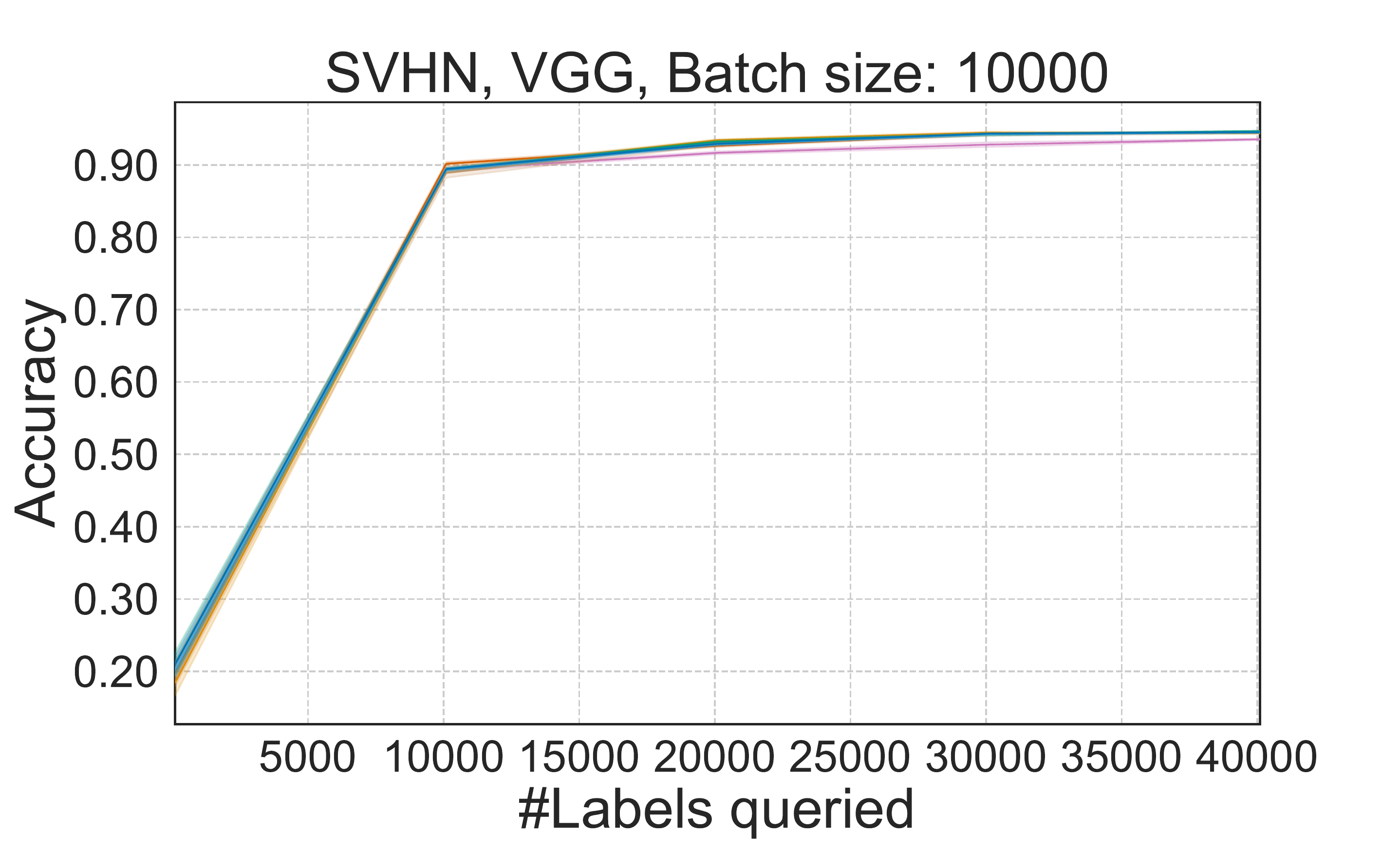}}
  \\
  \centering
  \begin{subfigure}[b]{\linewidth}
    \includegraphics[trim={0cm 0cm 0cm 0cm}, clip, width=\textwidth]{figs/legends/legend.pdf}
  \end{subfigure}

\caption{Zoomed-in learning curves for SVHN with MLP, ResNet and VGG.}
\label{fig:svhn-lc}
\end{figure}

\begin{figure}
  \centering
      \includegraphics[trim={0.4cm 0cm 27.9cm 0cm}, clip, width=0.012\textwidth]{figs/learning_curves/all_algs_Accuracy_Data=_SVHN__Model=_rn__nQuery=_100__TrainAug=_0___.pdf}
  \includegraphics[trim={1.5cm 0cm 1.6cm 0cm}, clip, width=0.32\textwidth]{{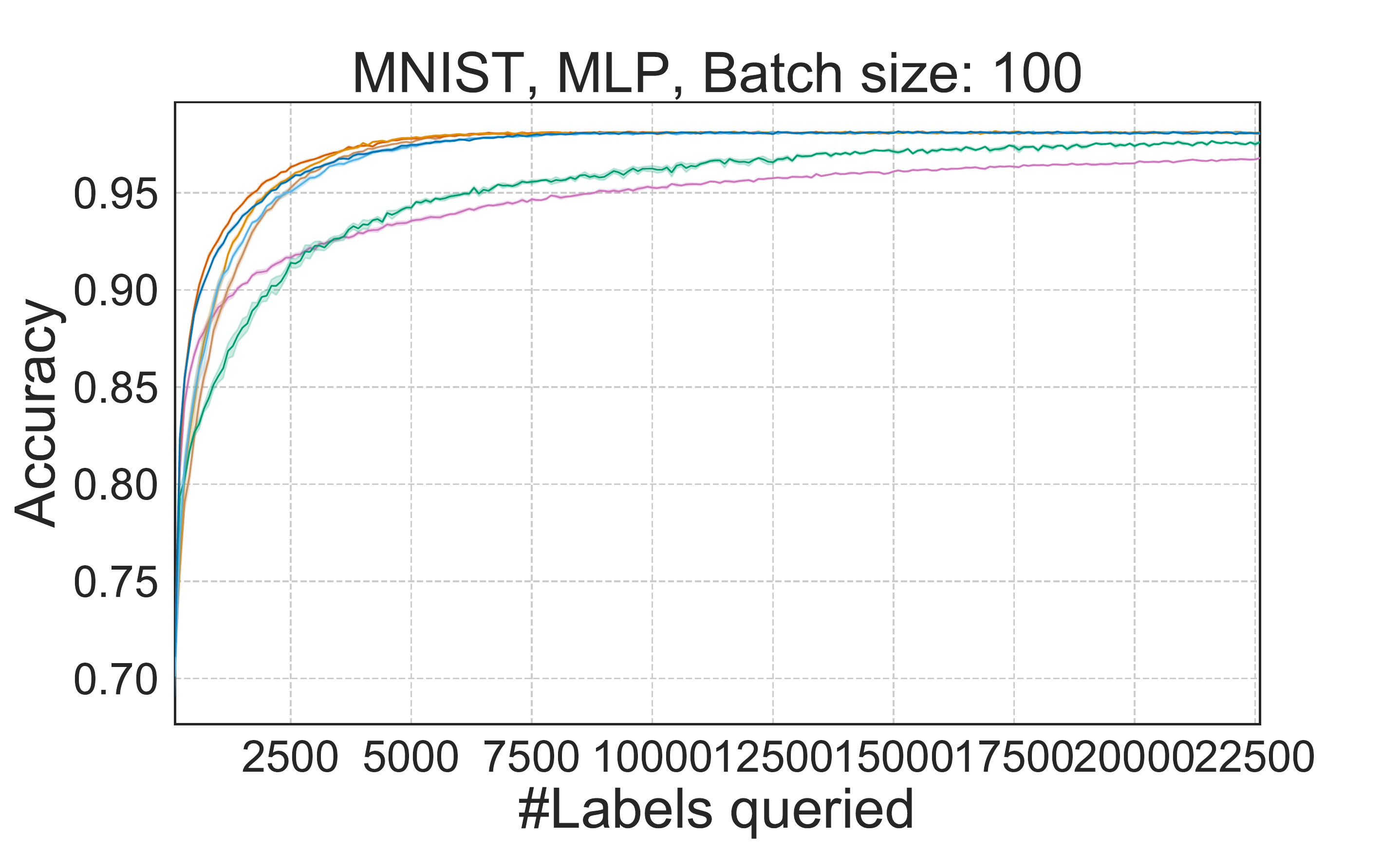}}
  \hfill
  \includegraphics[trim={1.5cm 0cm 1.6cm 0cm}, clip, width=0.32\textwidth]{{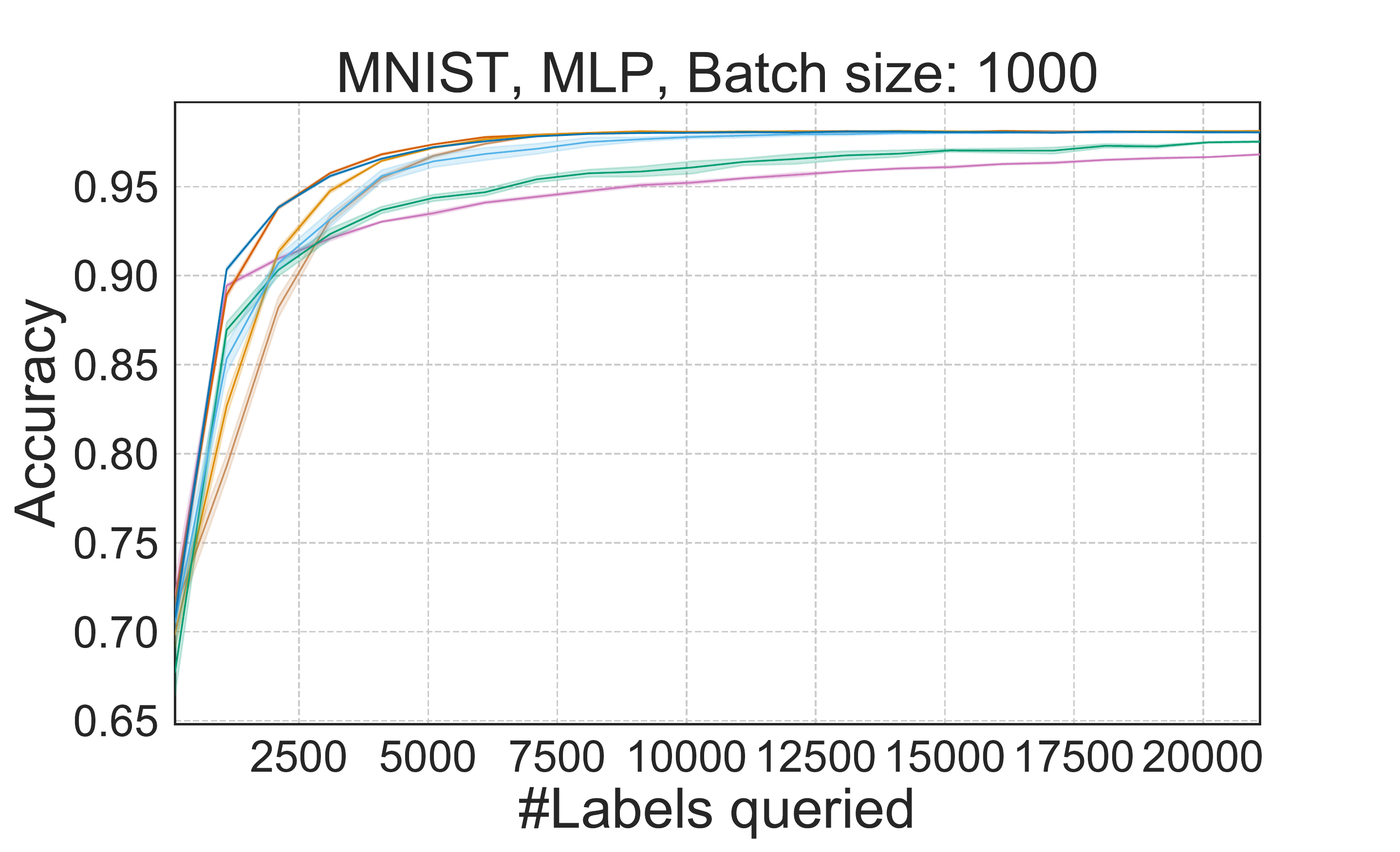}}
  \hfill
  \includegraphics[trim={1.5cm 0cm 1.6cm 0cm}, clip, width=0.32\textwidth]{{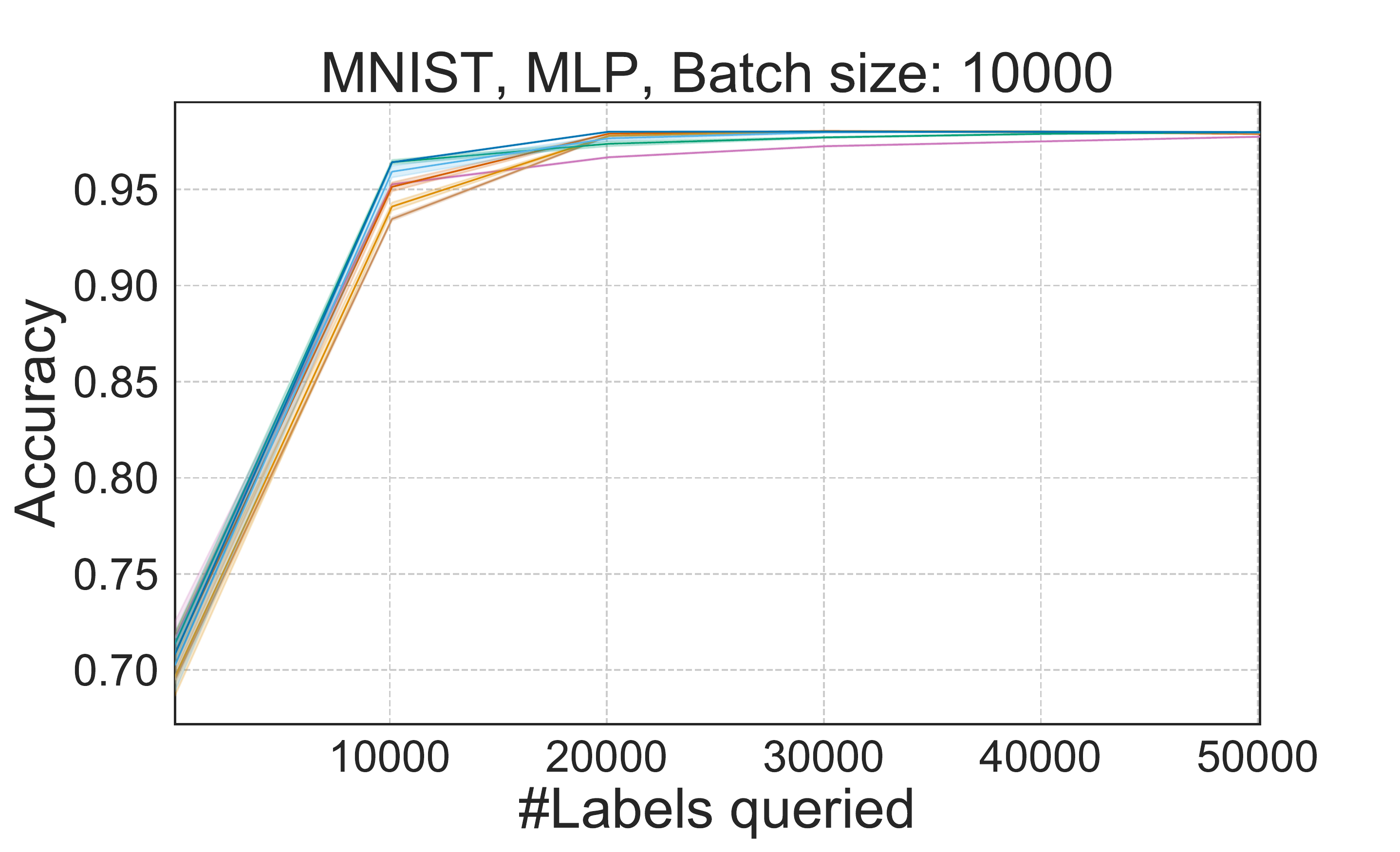}}
  \\
  \centering
  \begin{subfigure}[b]{\linewidth}
    \includegraphics[trim={0cm 0cm 0cm 0cm}, clip, width=\textwidth]{figs/legends/legend.pdf}
  \end{subfigure}

\caption{Zoomed-in learning curves for MNIST with MLP.}
\label{fig:mnist-lc}
\end{figure}

\begin{figure}
  \centering
      \includegraphics[trim={0.4cm 0cm 27.9cm 0cm}, clip, width=0.012\textwidth]{figs/learning_curves/all_algs_Accuracy_Data=_SVHN__Model=_rn__nQuery=_100__TrainAug=_0___.pdf}
  \includegraphics[trim={1.5cm 0cm 1.6cm 0cm}, clip, width=0.32\textwidth]{{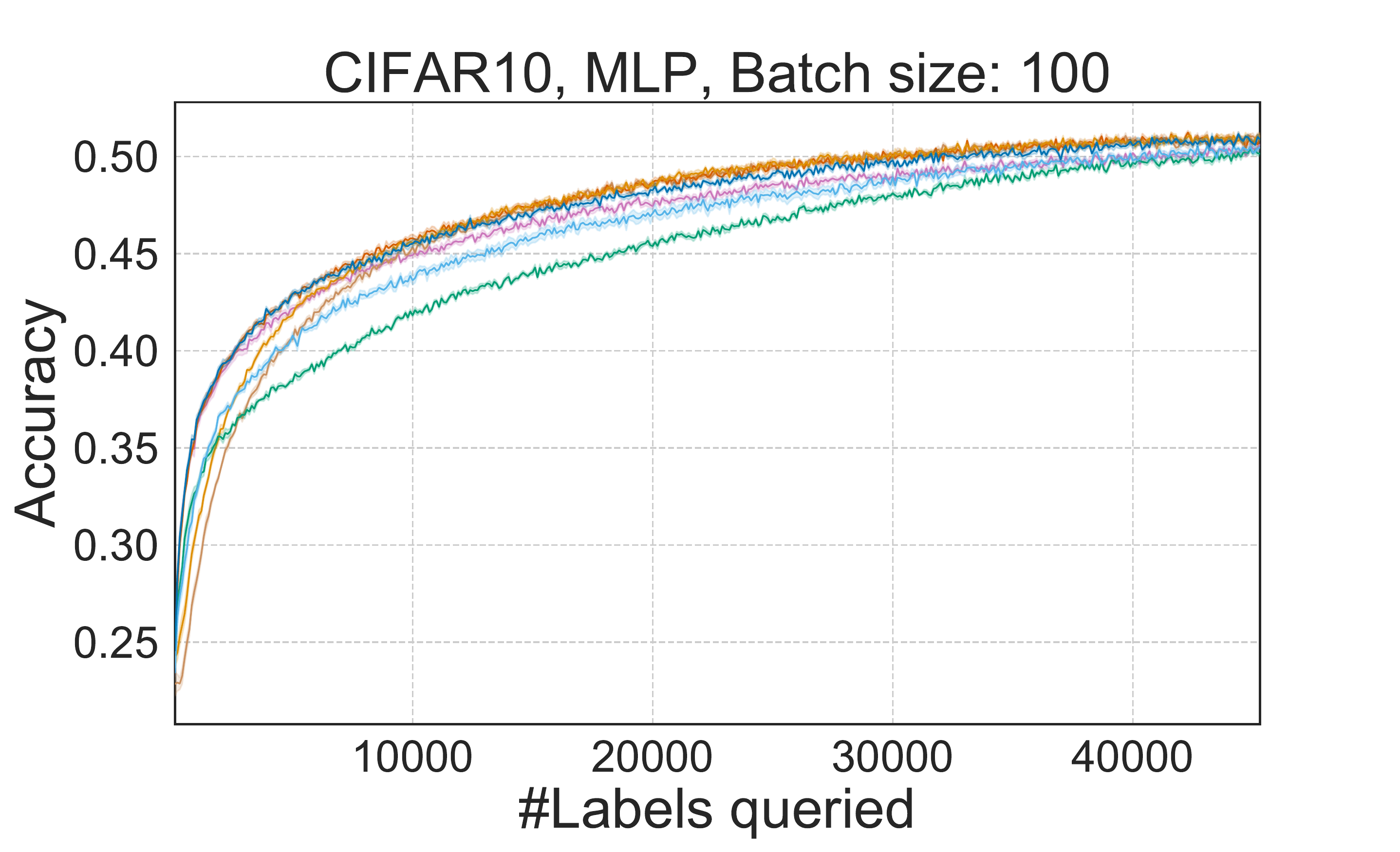}}
  \hfill
  \includegraphics[trim={1.5cm 0cm 1.6cm 0cm}, clip, width=0.32\textwidth]{{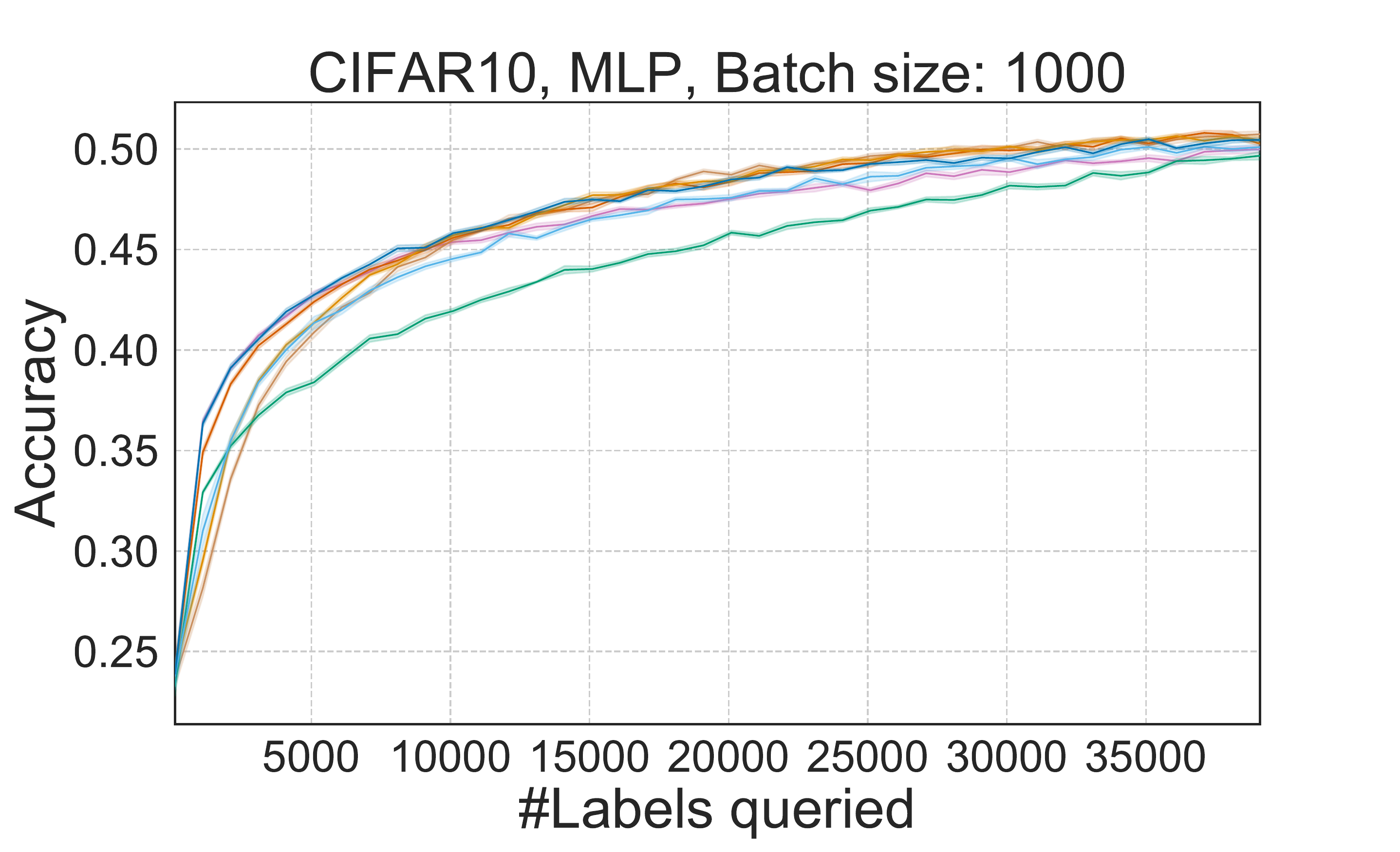}}
  \hfill
  \includegraphics[trim={1.5cm 0cm 1.6cm 0cm}, clip, width=0.32\textwidth]{{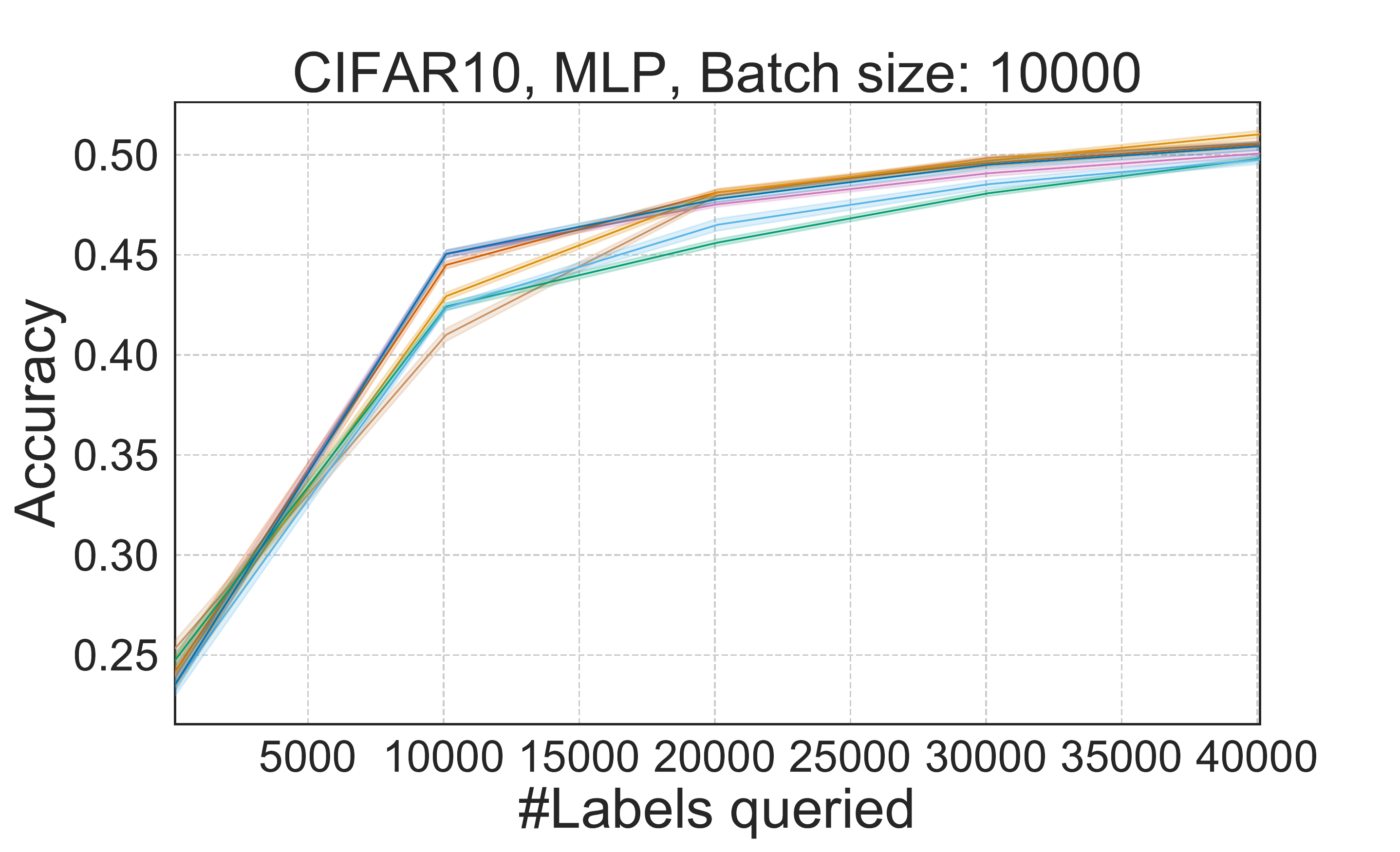}}

    \includegraphics[trim={0.4cm 0cm 27.9cm 0cm}, clip, width=0.012\textwidth]{figs/learning_curves/all_algs_Accuracy_Data=_SVHN__Model=_rn__nQuery=_100__TrainAug=_0___.pdf}
  \includegraphics[trim={1.5cm 0cm 1.6cm 0cm}, clip, width=0.32\textwidth]{{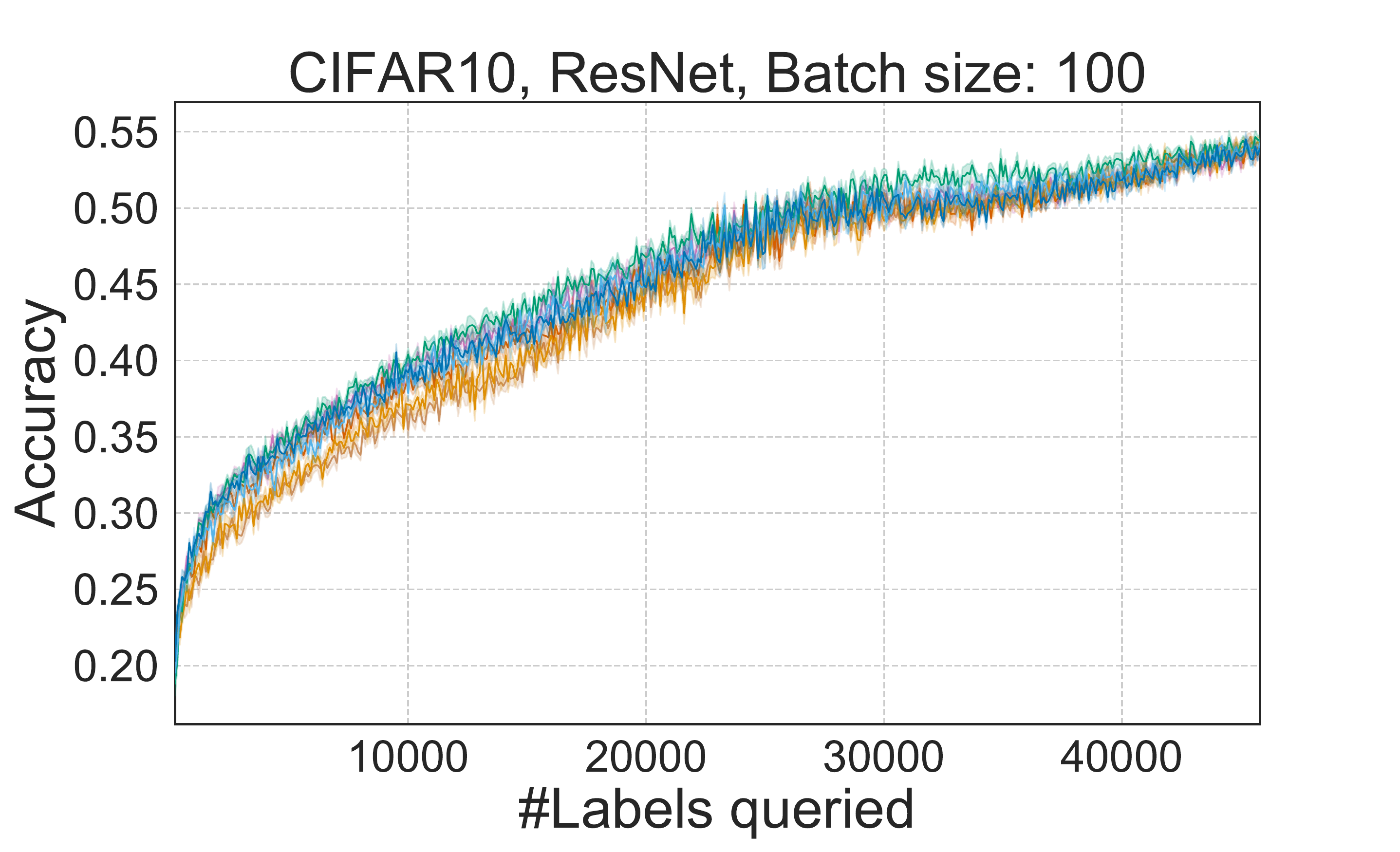}}
  \hfill
  \includegraphics[trim={1.5cm 0cm 1.6cm 0cm}, clip, width=0.32\textwidth]{{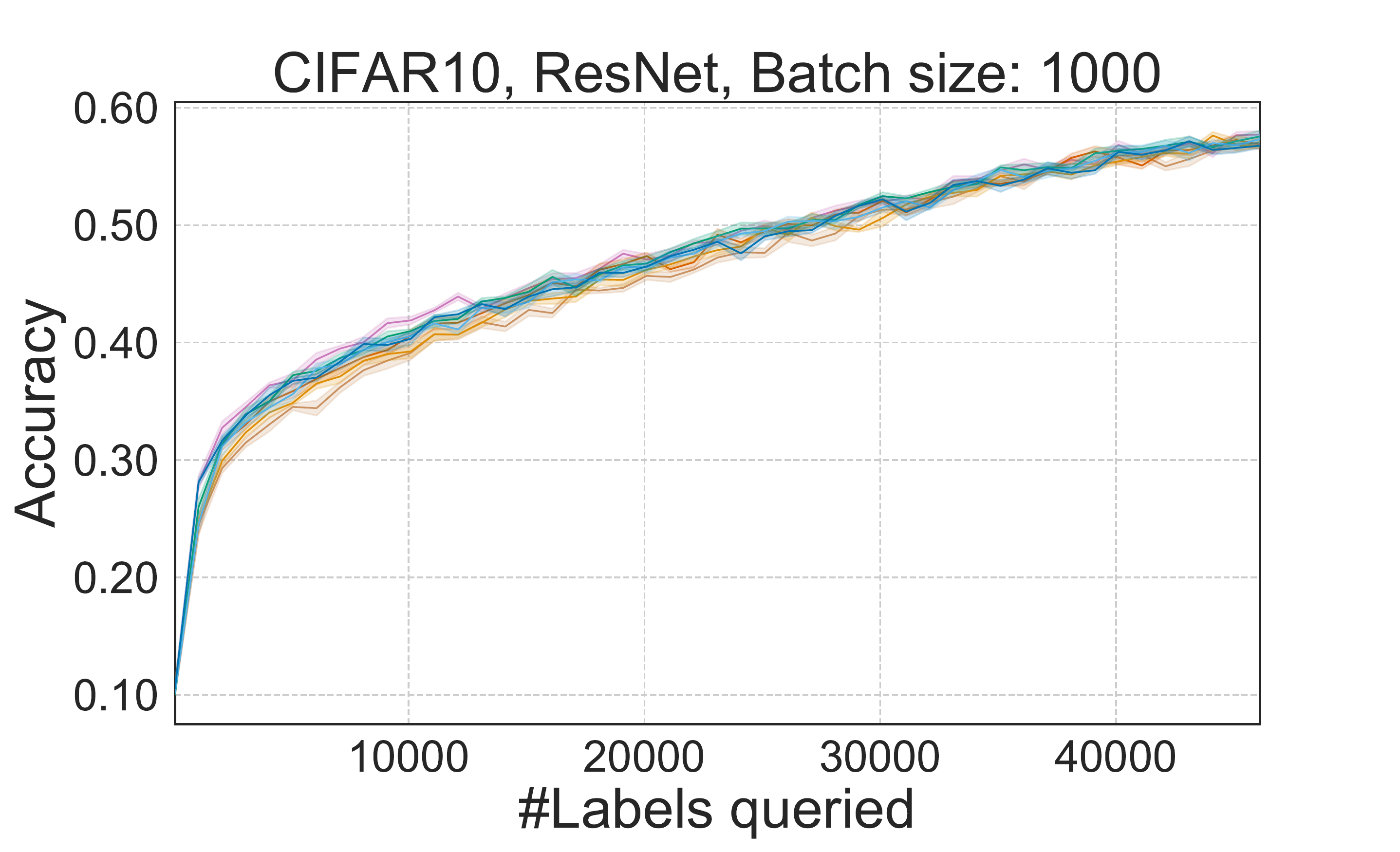}}
  \hfill
  \includegraphics[trim={1.5cm 0cm 1.6cm 0cm}, clip, width=0.32\textwidth]{{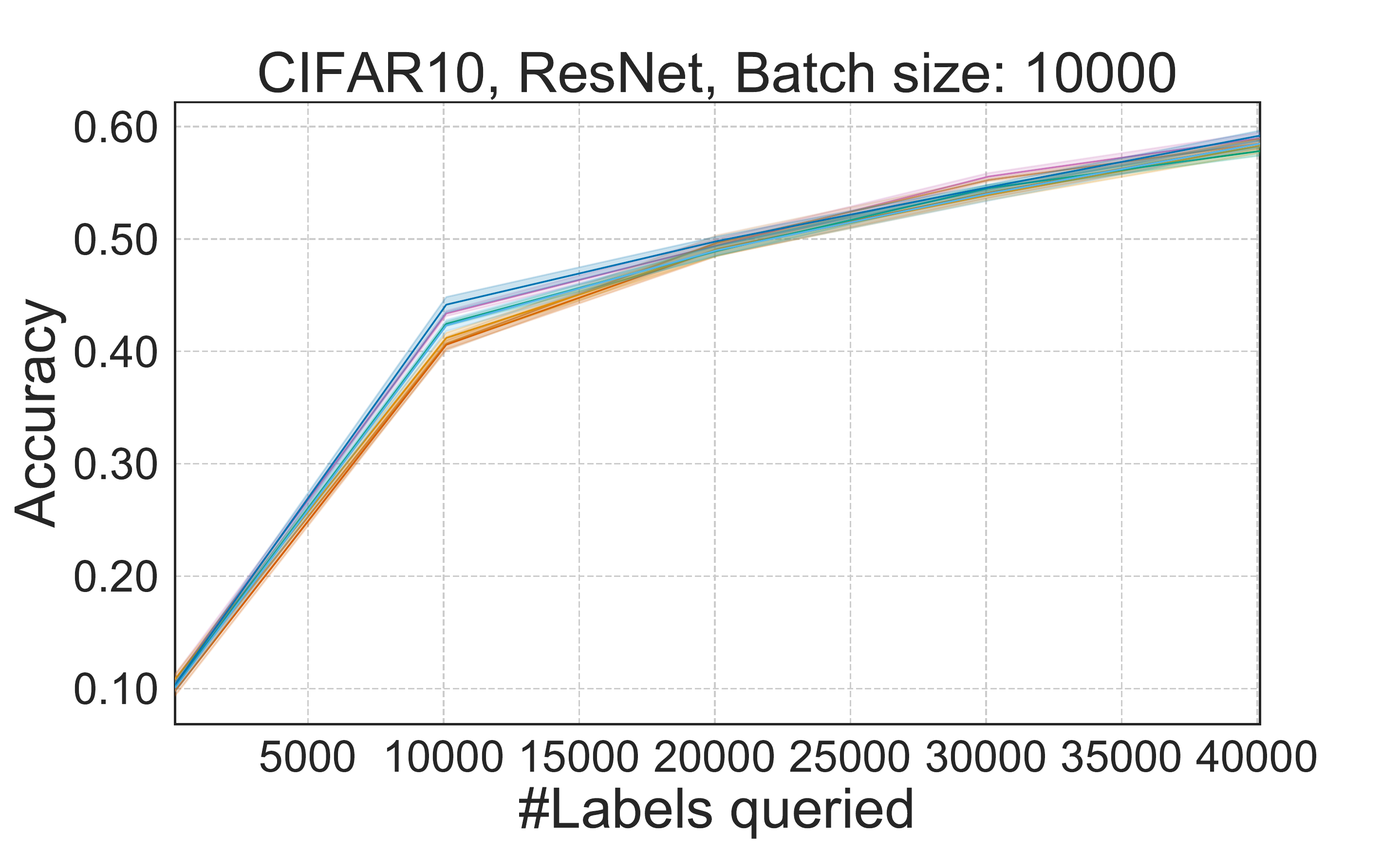}}

    \includegraphics[trim={0.4cm 0cm 27.9cm 0cm}, clip, width=0.012\textwidth]{figs/learning_curves/all_algs_Accuracy_Data=_SVHN__Model=_rn__nQuery=_100__TrainAug=_0___.pdf}
  \includegraphics[trim={1.5cm 0cm 1.6cm 0cm}, clip, width=0.32\textwidth]{{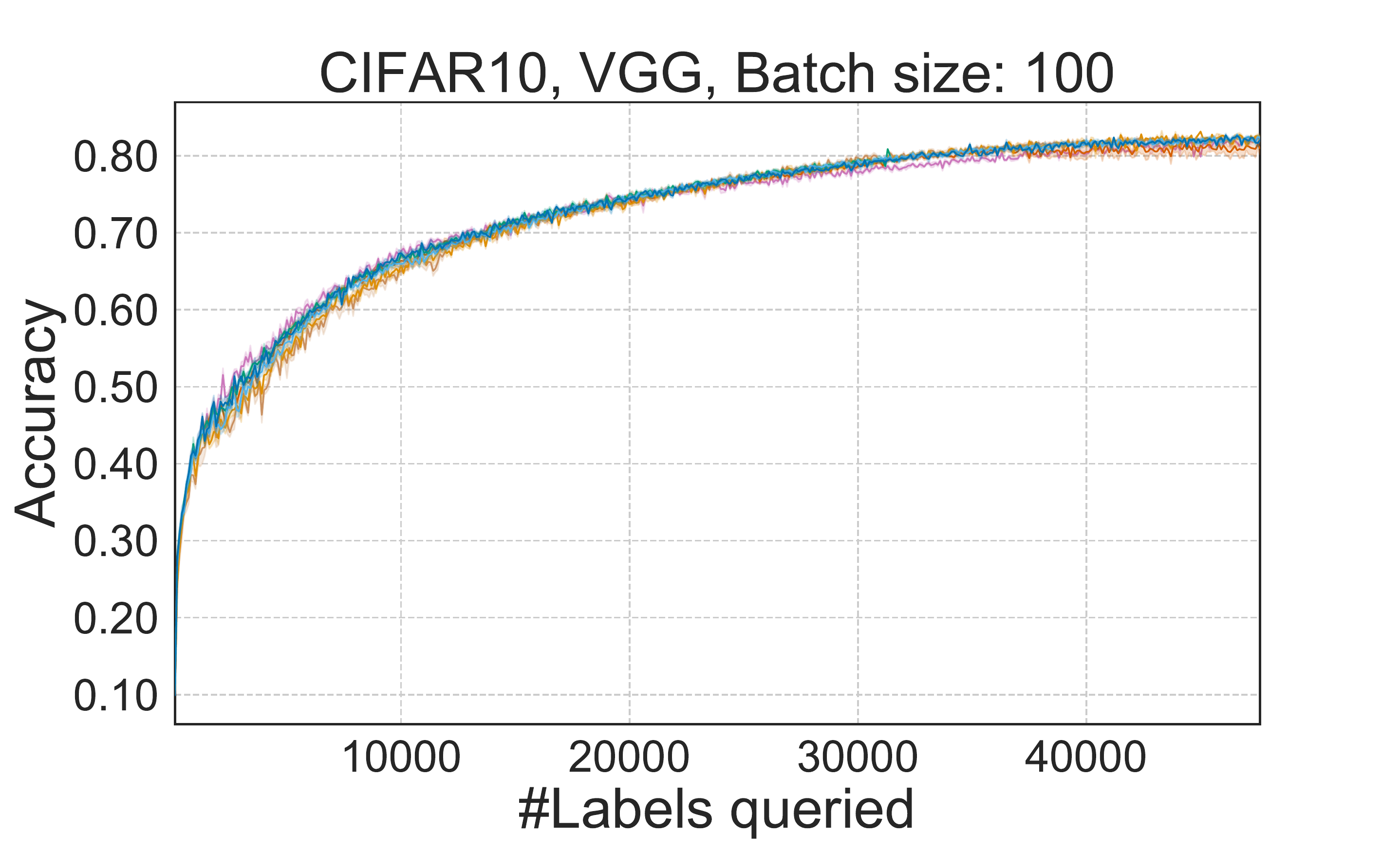}}
  \hfill
  \includegraphics[trim={1.5cm 0cm 1.6cm 0cm}, clip, width=0.32\textwidth]{{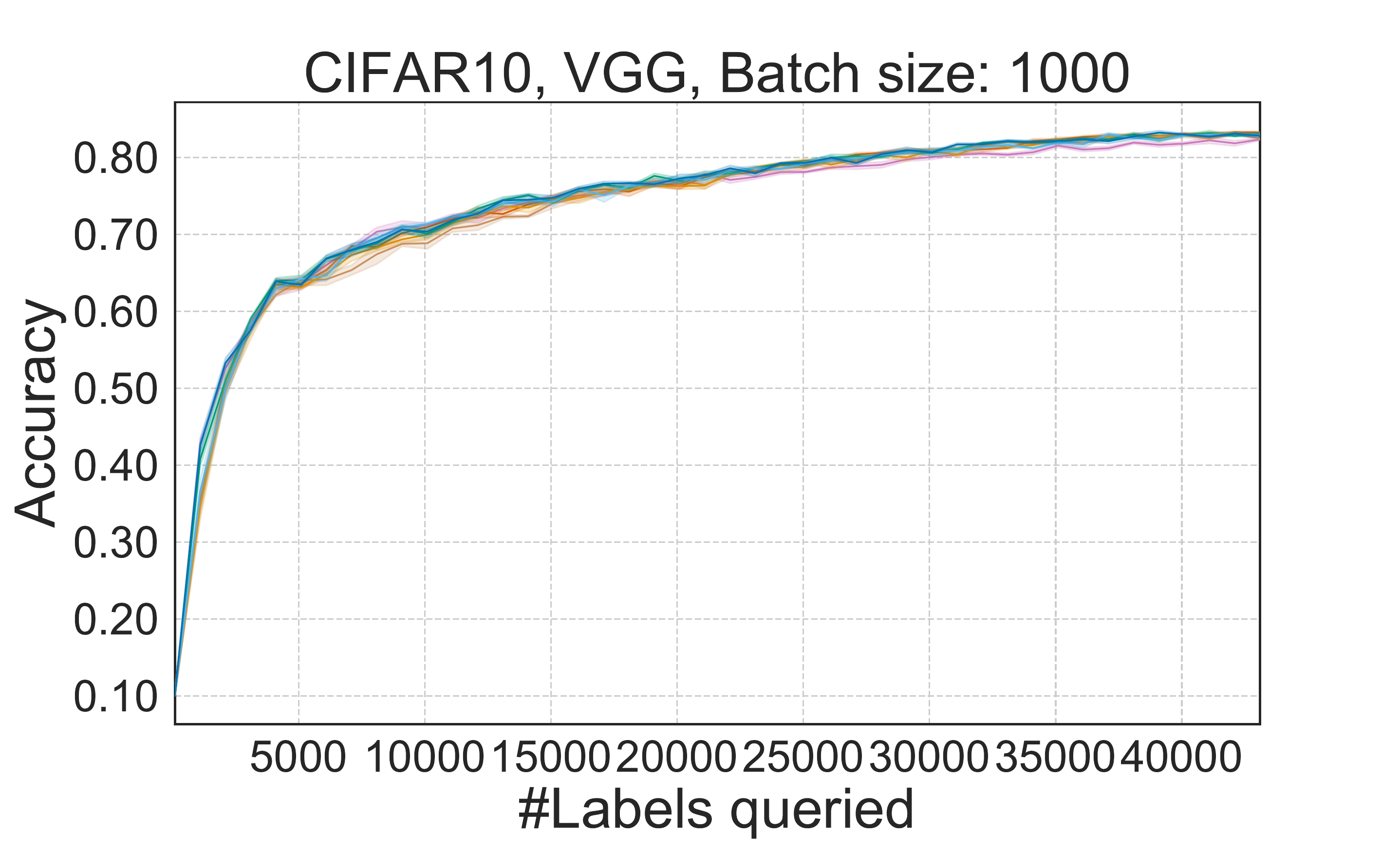}}
  \hfill
  \includegraphics[trim={1.5cm 0cm 1.6cm 0cm}, clip, width=0.32\textwidth]{{figs/learning_curves/all_algs_Accuracy_Data=_CIFAR10__Model=_vgg__nQuery=_10000__TrainAug=_0___.pdf}}
  \\
  \centering
  \begin{subfigure}[b]{\linewidth}
    \includegraphics[trim={0cm 0cm 0cm 0cm}, clip, width=\textwidth]{figs/legends/legend.pdf}
  \end{subfigure}

\caption{Zoomed-in learning curves for CIFAR10 with MLP, ResNet and VGG.}
\label{fig:cifar10-lc}
\end{figure}
\vskip -0.5cm
\section{Pairwise comparisons of algorithms}
\vskip -0.3cm
\label{sec:pairwise}

In addition to Figure~\ref{fig:penalty} in the main text, we also provide penalty
matrices (Figures~\ref{figs:pw-batch-sizes} and~\ref{figs:pw-models}), where the results are aggregated by conditioning on a fixed batch size (100, 1000 and 10000) or on a fixed
neural network model (MLP, ResNet and VGG). For each penalty matrix, the parenthesized number in its title is the total number of $(D, B, A)$ combinations aggregated; as discussed in Section~\ref{sec:experiments}, this is also an upper bound on all its entries.
It can be seen that uncertainty-based methods (e.g. \marg) perform well only in small batch size regimes (100) or when using MLP models; representative sampling based methods (e.g. \coreset) only perform well in large batch size regimes (10000) or when using ResNet or VGG models.
In contrast, \ouralg's performance is competitive across all batch sizes and neural network models.

\begin{figure}
  \centering
      \includegraphics[trim={0.38cm 0cm 27.7cm 0cm}, clip, width=0.022\textwidth]{figs/learning_curves/all_algs_Accuracy_Data=_SVHN__Model=_rn__nQuery=_100__TrainAug=_0___.pdf}
  \includegraphics[trim={4cm 0cm 1.8cm 0cm},clip,width=0.32\textwidth]{{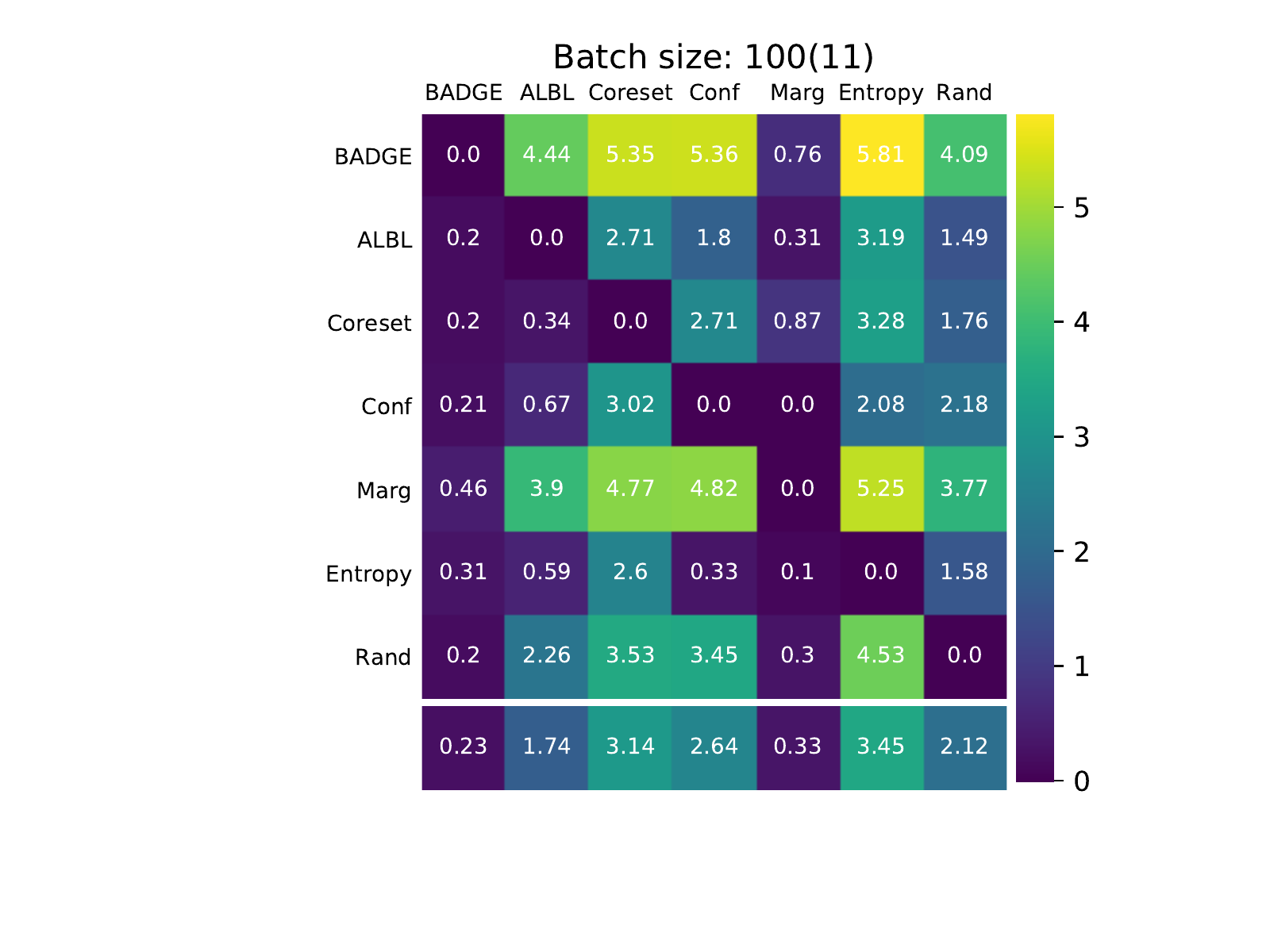}}
  \hfill
  \includegraphics[trim={4cm 0cm 1.8cm 0cm},clip,width=0.32\textwidth]{{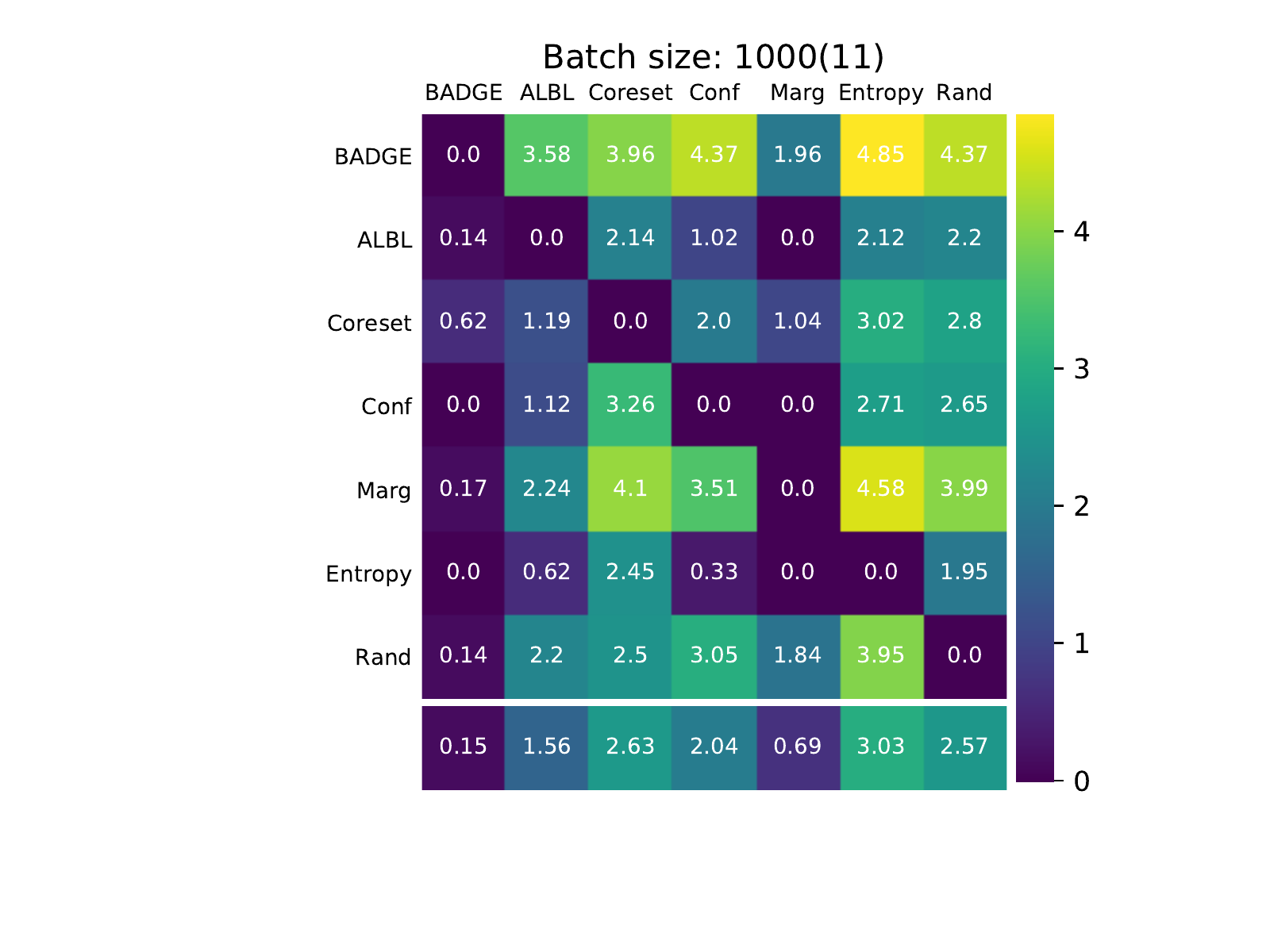}}
  \hfill
  \includegraphics[trim={4cm 0cm 1.8cm 0cm},clip,width=0.32\textwidth]{{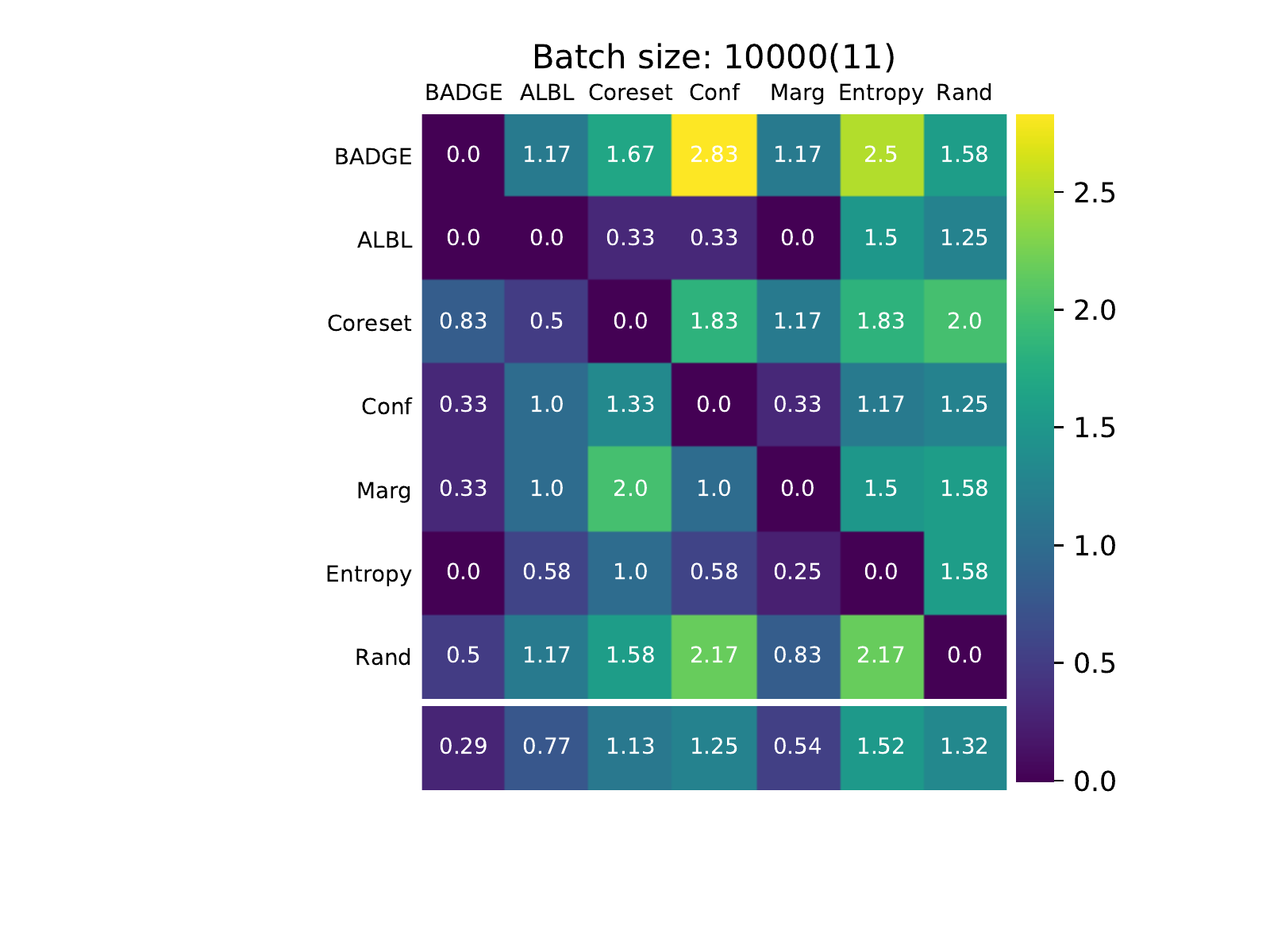}}
\caption{Pairwise penalty matrices of the algorithms, grouped by different batch sizes. The parenthesized number in the title is the total number of $(D, B, A)$ combinations aggregated, which is also an upper bound on all its entries. Element $(i, j)$ corresponds roughly to the number of times algorithm $i$ beats algorithm $j$. Column-wise averages at the bottom show aggregate performance (lower is better). From left to right: batch size = 100, 1000, 10000.}
\label{figs:pw-batch-sizes}
\end{figure}

\begin{figure}
  \centering
      \includegraphics[trim={0.38cm 0cm 27.7cm 0cm}, clip, width=0.022\textwidth]{figs/learning_curves/all_algs_Accuracy_Data=_SVHN__Model=_rn__nQuery=_100__TrainAug=_0___.pdf}
  \includegraphics[trim={4cm 0cm 1.8cm 0cm},clip,width=0.32\textwidth]{{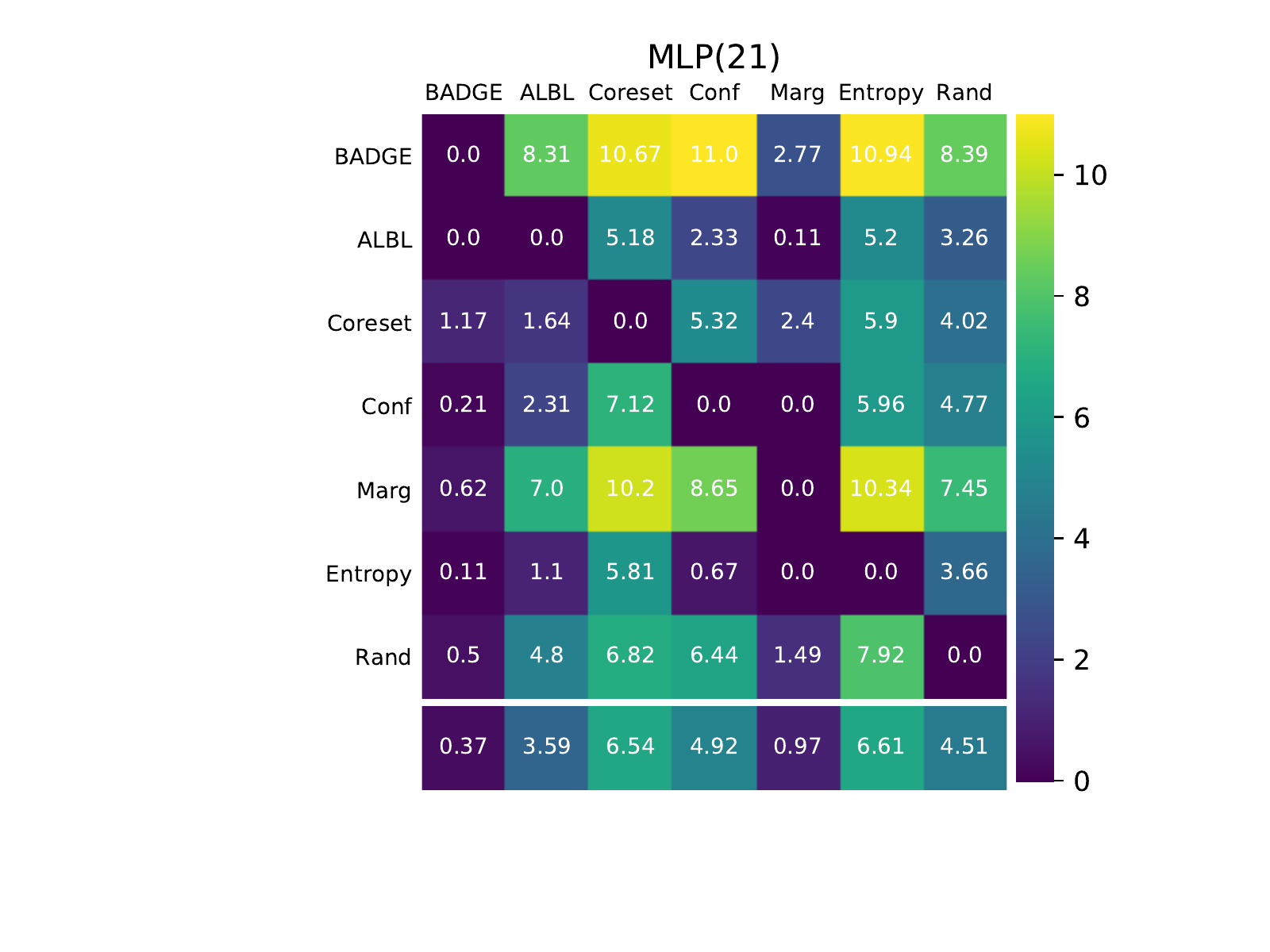}}
  \hfill
  \includegraphics[trim={4cm 0cm 1.8cm 0cm},clip,width=0.32\textwidth]{{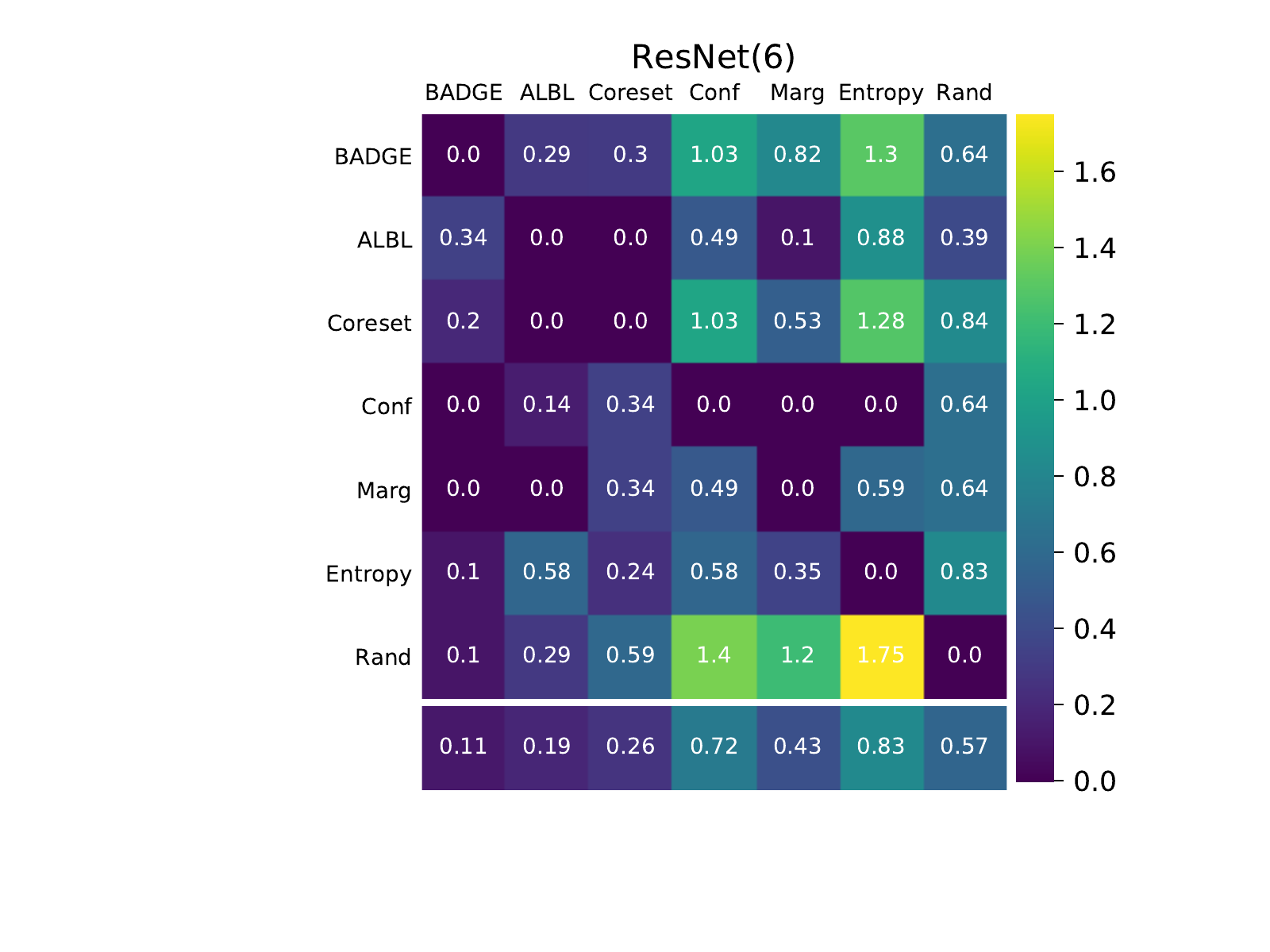}}
  \hfill
  \includegraphics[trim={4cm 0cm 1.8cm 0cm},clip,width=0.32\textwidth]{{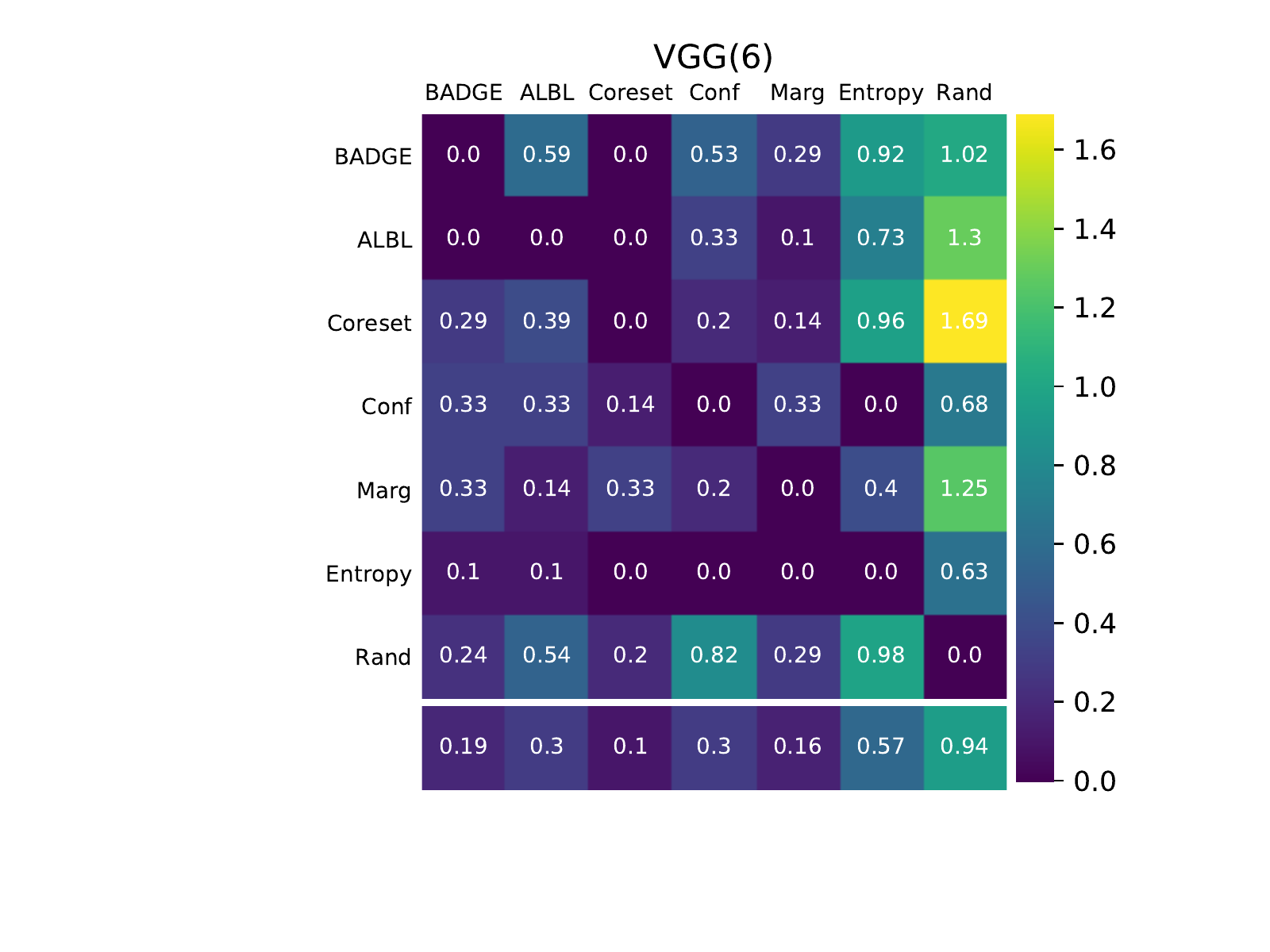}}
\caption{Pairwise penalty matrices of the algorithms, grouped by different neural network models. The parenthesized number in the title is the total number of $(D, B, A)$ combinations aggregated, which is also an upper bound on all its entries. Element $(i, j)$ corresponds roughly to the number of times algorithm $i$ beats algorithm $j$. Column-wise averages at the bottom show aggregate performance (lower is better). From left to right: MLP, ResNet and VGG.}
\label{figs:pw-models}
\end{figure}


\section{CDFs of normalized errors of different algorithms}
\label{sec:cdfs}

In addition to Figure~\ref{fig:cdf-all} that aggregates over all settings, we show here the CDFs of normalized errors by conditioning on fixed batch sizes (100, 1000 and 10000) in Figure~\ref{figs:cdfs-batch-sizes}, and show the CDFs of normalized errors by conditioning on fixed neural network models (MLP, ResNet and VGG) in Figure~\ref{figs:cdfs-models}.

\begin{figure}
  \centering
      \includegraphics[trim={0.5cm 0cm 27.3cm 0cm}, clip, width=0.017\textwidth]{{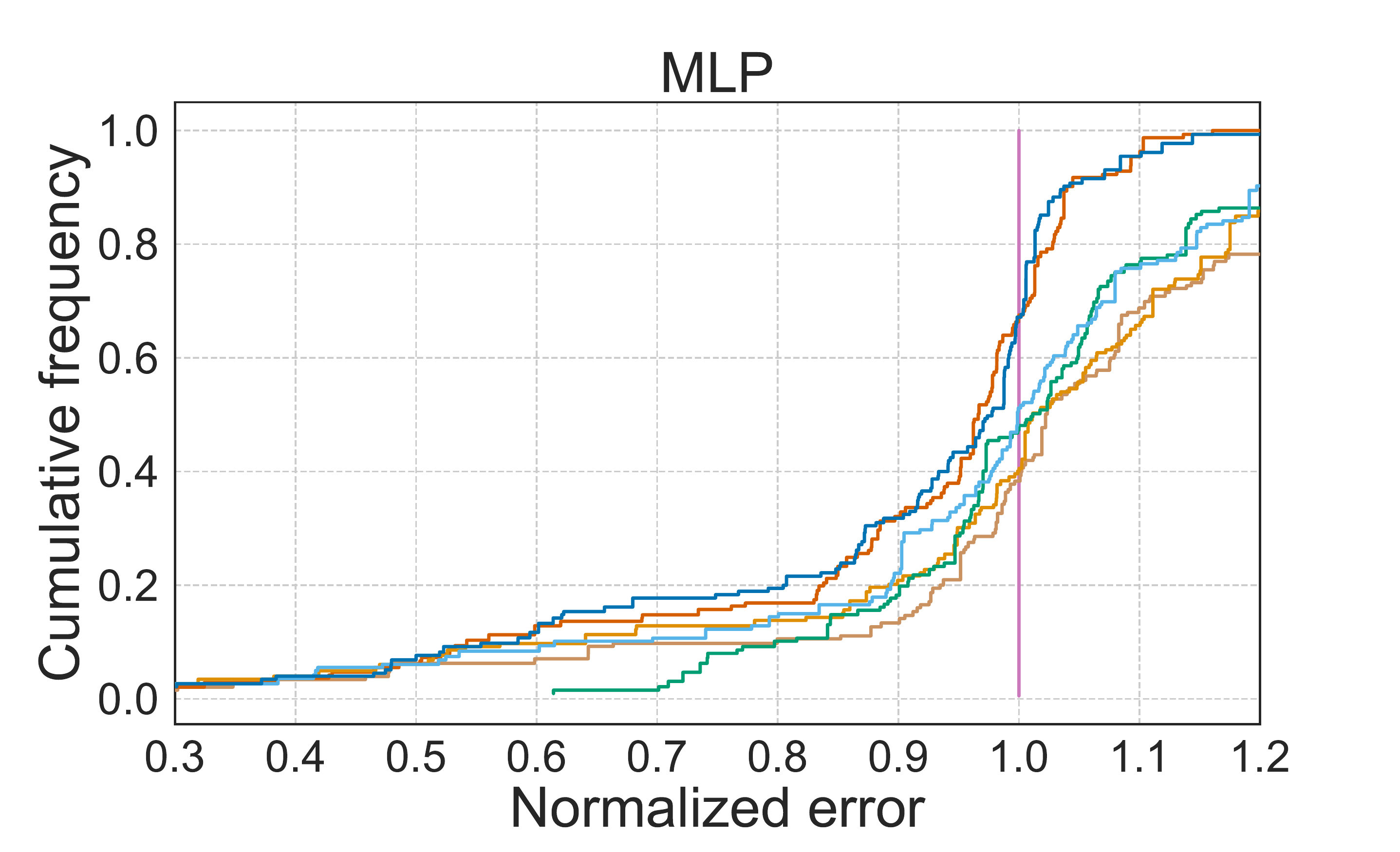}}
  \includegraphics[trim={2cm 0cm 2cm 0cm}, clip, width=0.32\textwidth]{{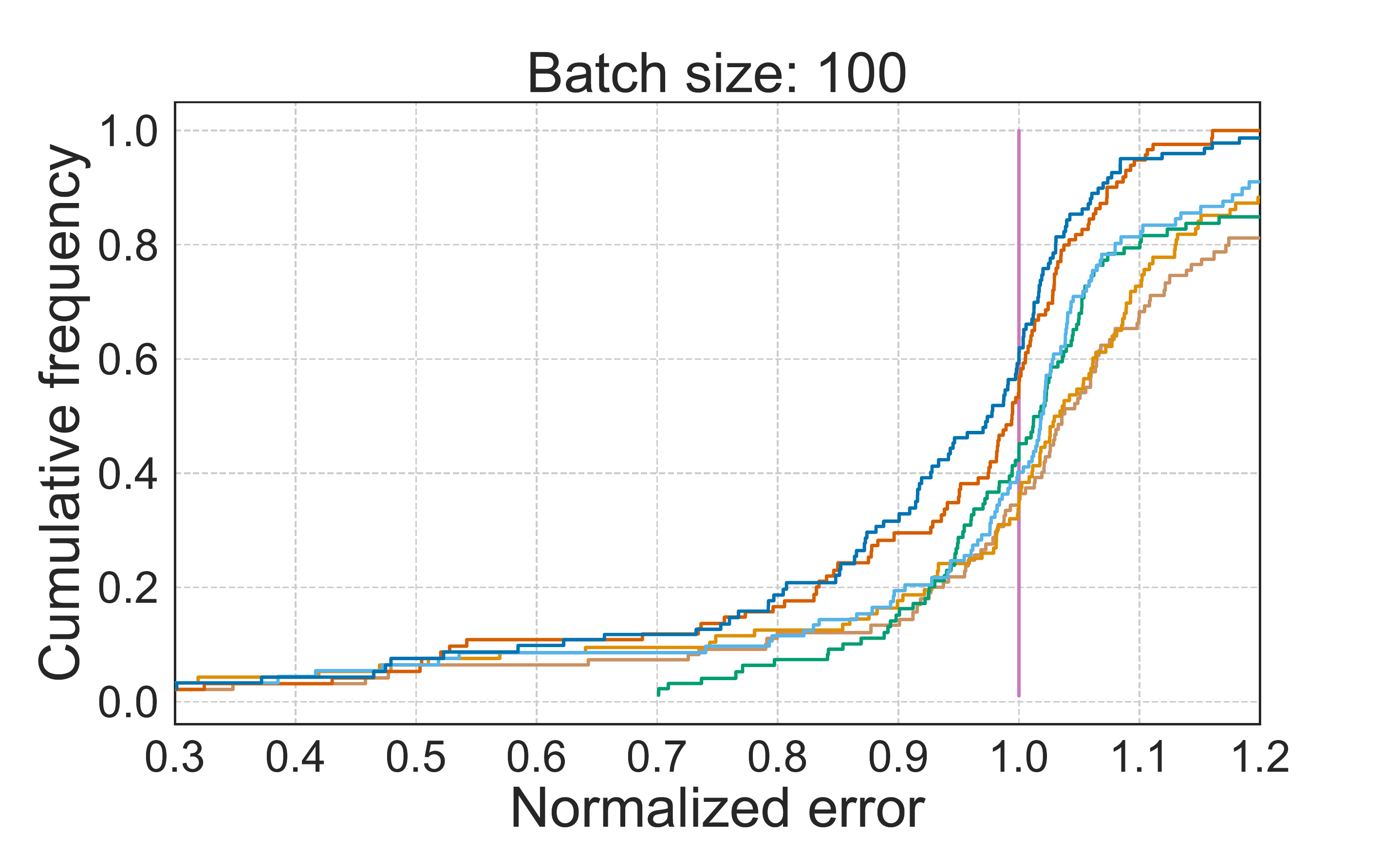}}
  \hfill
  \includegraphics[trim={2cm 0cm 2cm 0cm}, clip, width=0.32\textwidth]{{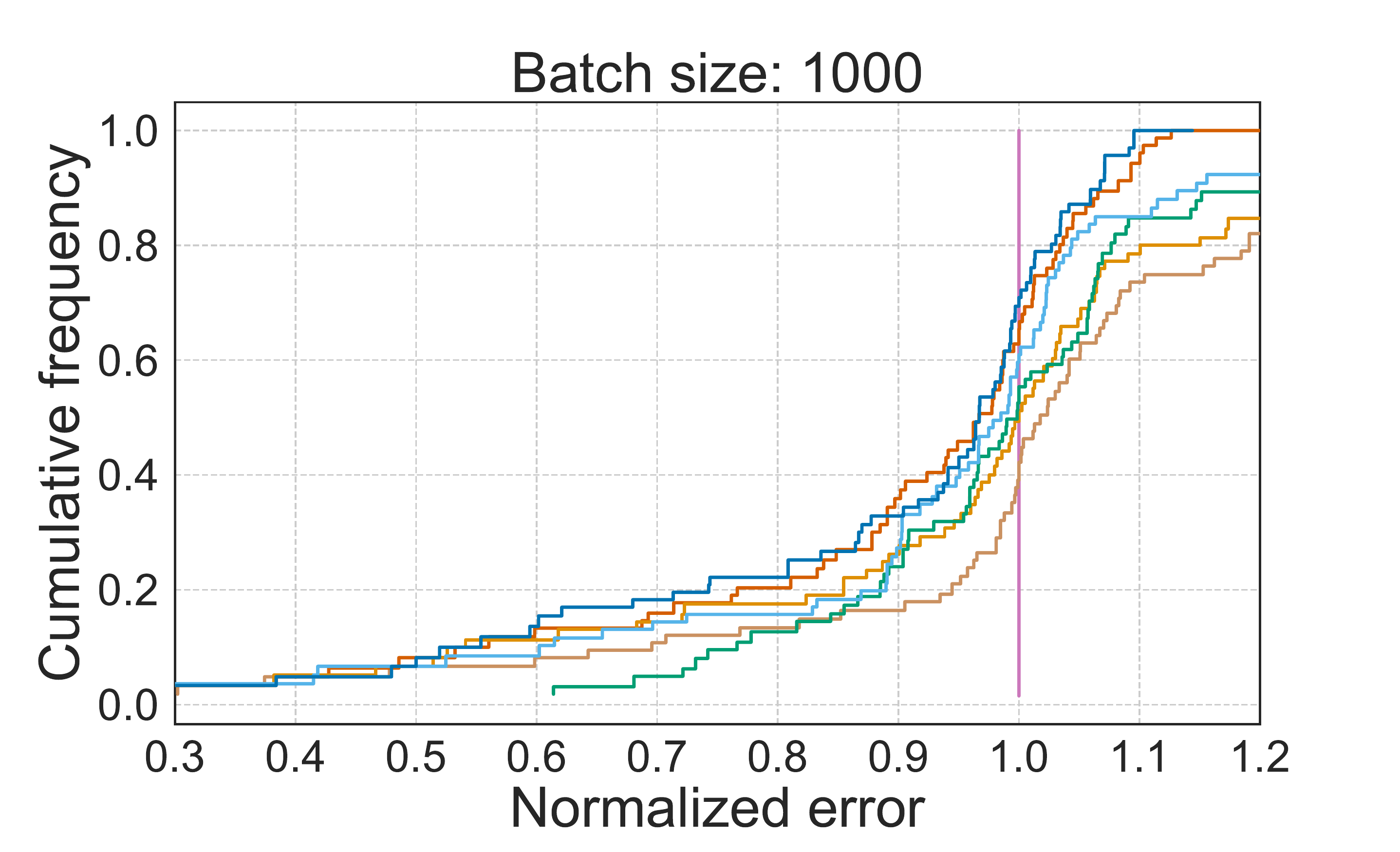}}
  \hfill
  \includegraphics[trim={2cm 0cm 2cm 0cm}, clip, width=0.32\textwidth]{{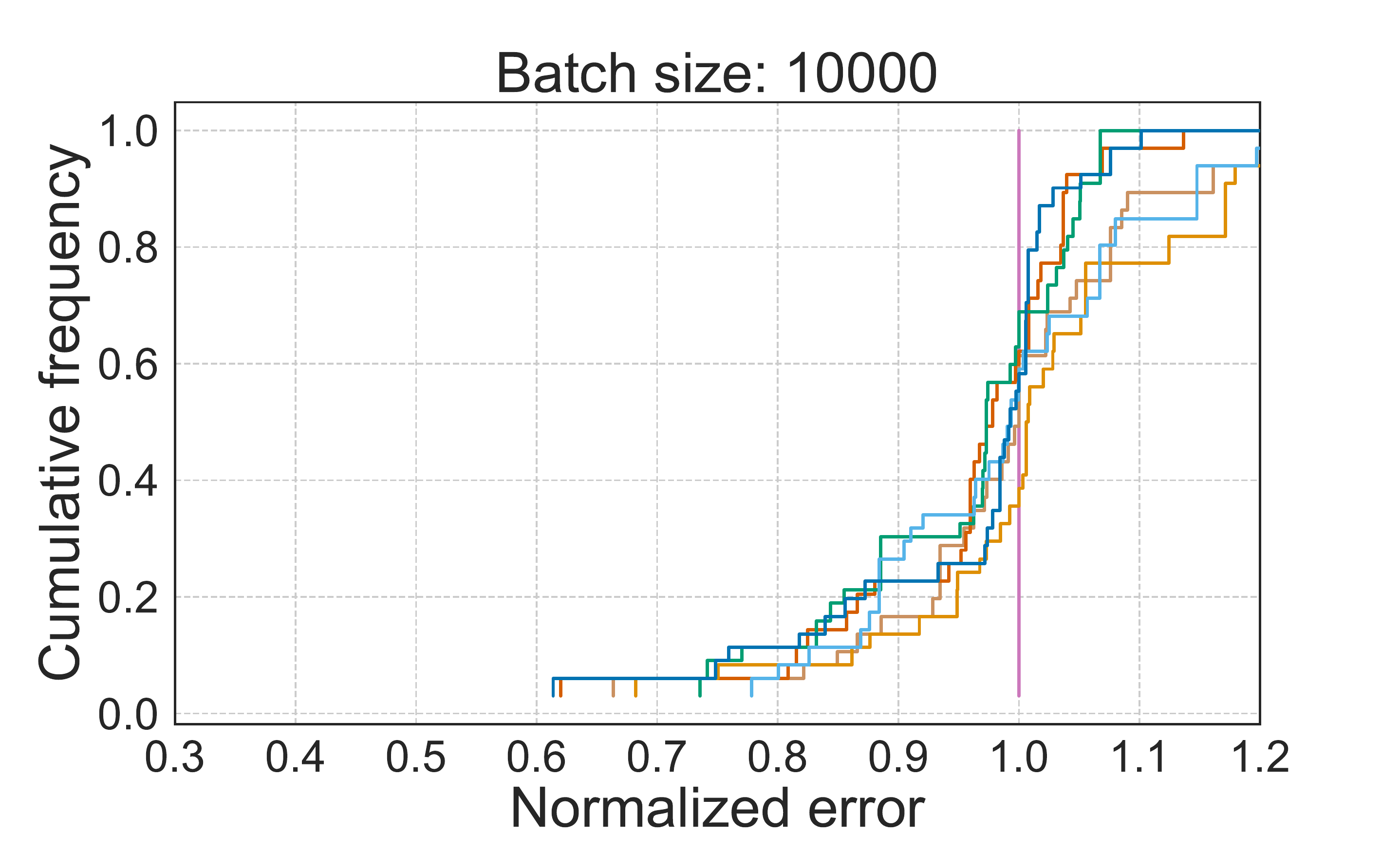}}
  \\
  \centering
  \begin{subfigure}[b]{\linewidth}
    \includegraphics[trim={0cm 0cm 0cm 0cm}, clip, width=\textwidth]{figs/legends/legend.pdf}
  \end{subfigure}
\caption{CDFs of normalized errors of the algorithms, group by different batch sizes. Higher CDF indicates better performance. From left to right: batch size = 100, 1000, 10000.}
\label{figs:cdfs-batch-sizes}
\end{figure}

\begin{figure}
  \centering
    \includegraphics[trim={0.5cm 0cm 27.3cm 0cm}, clip, width=0.017\textwidth]{{figs/cdfs/cdf_Model=_mlp__.pdf}}
  \includegraphics[trim={2cm 0cm 2cm 0cm}, clip, width=0.32\textwidth]{{figs/cdfs/cdf_Model=_mlp__.pdf}}
  \hfill
  \includegraphics[trim={2cm 0cm 2cm 0cm}, clip, width=0.32\textwidth]{{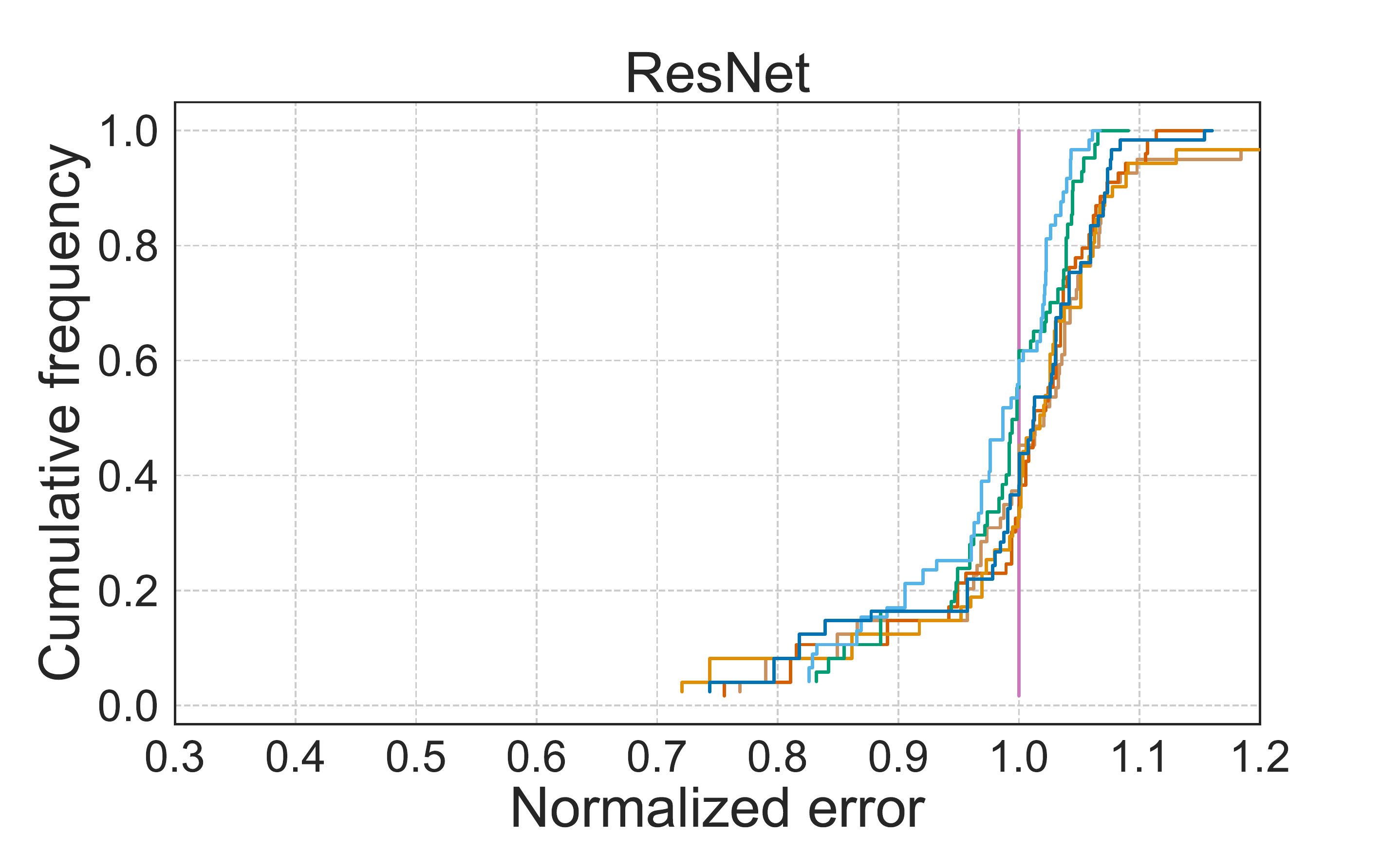}}
  \hfill
  \includegraphics[trim={2cm 0cm 2cm 0cm}, clip, width=0.32\textwidth]{{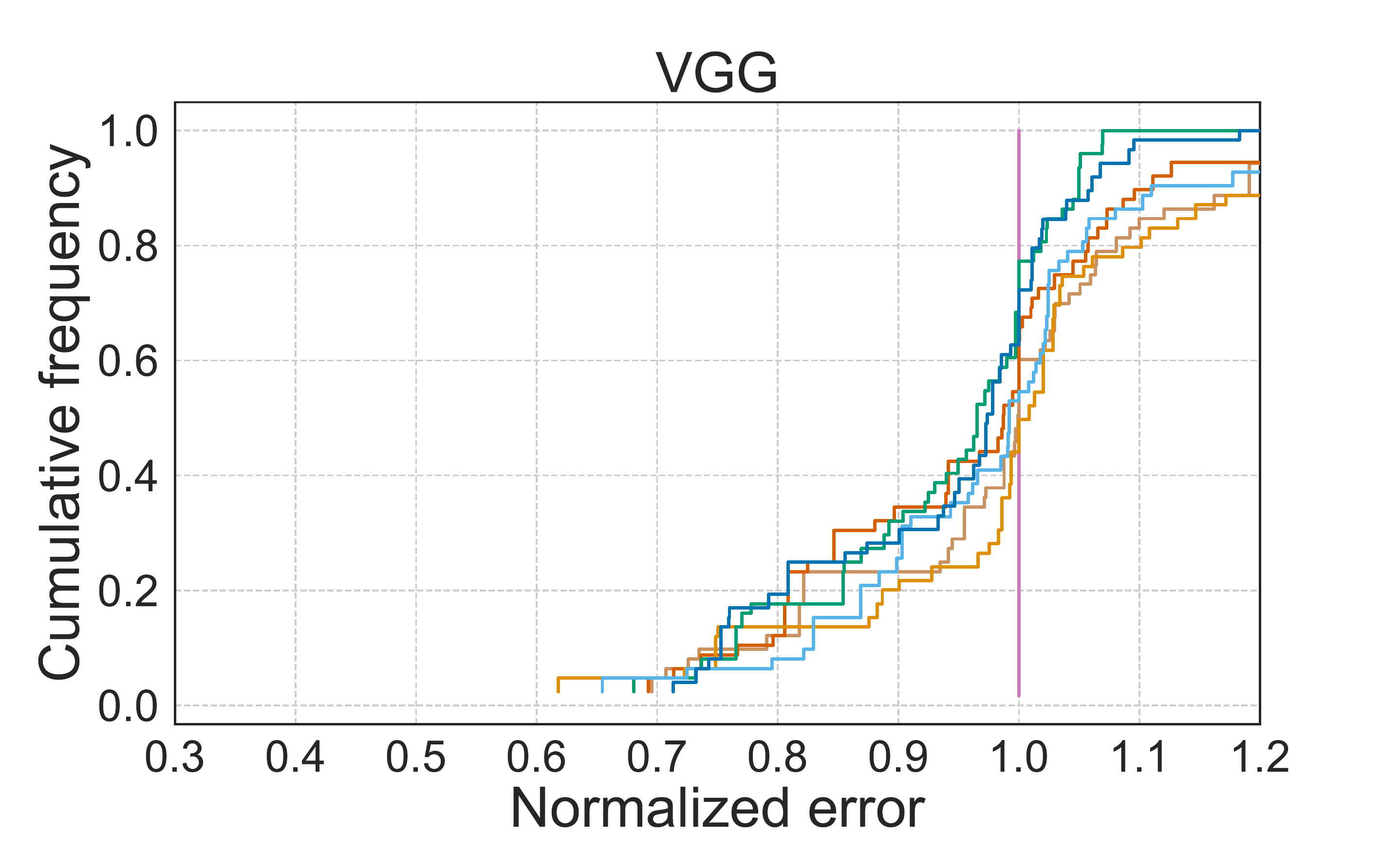}}
  \\
  \centering
  \begin{subfigure}[b]{\linewidth}
    \includegraphics[trim={0cm 0cm 0cm 0cm}, clip, width=\textwidth]{figs/legends/legend.pdf}
  \end{subfigure}
\caption{CDFs of normalized errors of the algorithms, group by different neural network models. Higher CDF indicates better performance. From left to right: MLP, ResNet and VGG.}
\label{figs:cdfs-models}
\end{figure}

\section{Batch uncertainty and diversity}
\label{sec:batchdiv}
Figure~\ref{fig:k-dpp-3} gives a comparison of sampling methods with gradient embedding in two settings (OpenML \# 6, MLP, batchsize 100 and SVHN, ResNet, batchsize 1000), in terms of uncertainty and diversity of examples selected within batches. These two properties are measured by average $\ell_2$ norm and determinant of the Gram matrix of gradient embedding, respectively.
It can be seen that, \kmeansp (\ouralg) induces good batch diversity in both settings.
\conf generally selects examples with high uncertainty, but in some iterations of OpenML \#6, the batch diversity is relatively low, as evidenced by the corresponding log Gram determinant being $-\infty$. These areas are indicated by gaps in the learning curve for \conf. Situations where there are many gaps in the \conf plot seem to correspond to situations in which \conf performs poorly in terms of accuracy (see Figure~\ref{fig:6-lc} for the corresponding learning curve). Both $k$-DPP and \ffkc (an algorithm that approximately minimizes $k$-center objective) select batches that have lower diversity than \kmeansp (\ouralg).

\begin{figure}
\centering
\begin{subfigure}[b]{0.45\linewidth}
\includegraphics[trim={0cm 0cm 0cm 0cm}, clip, width=\textwidth]{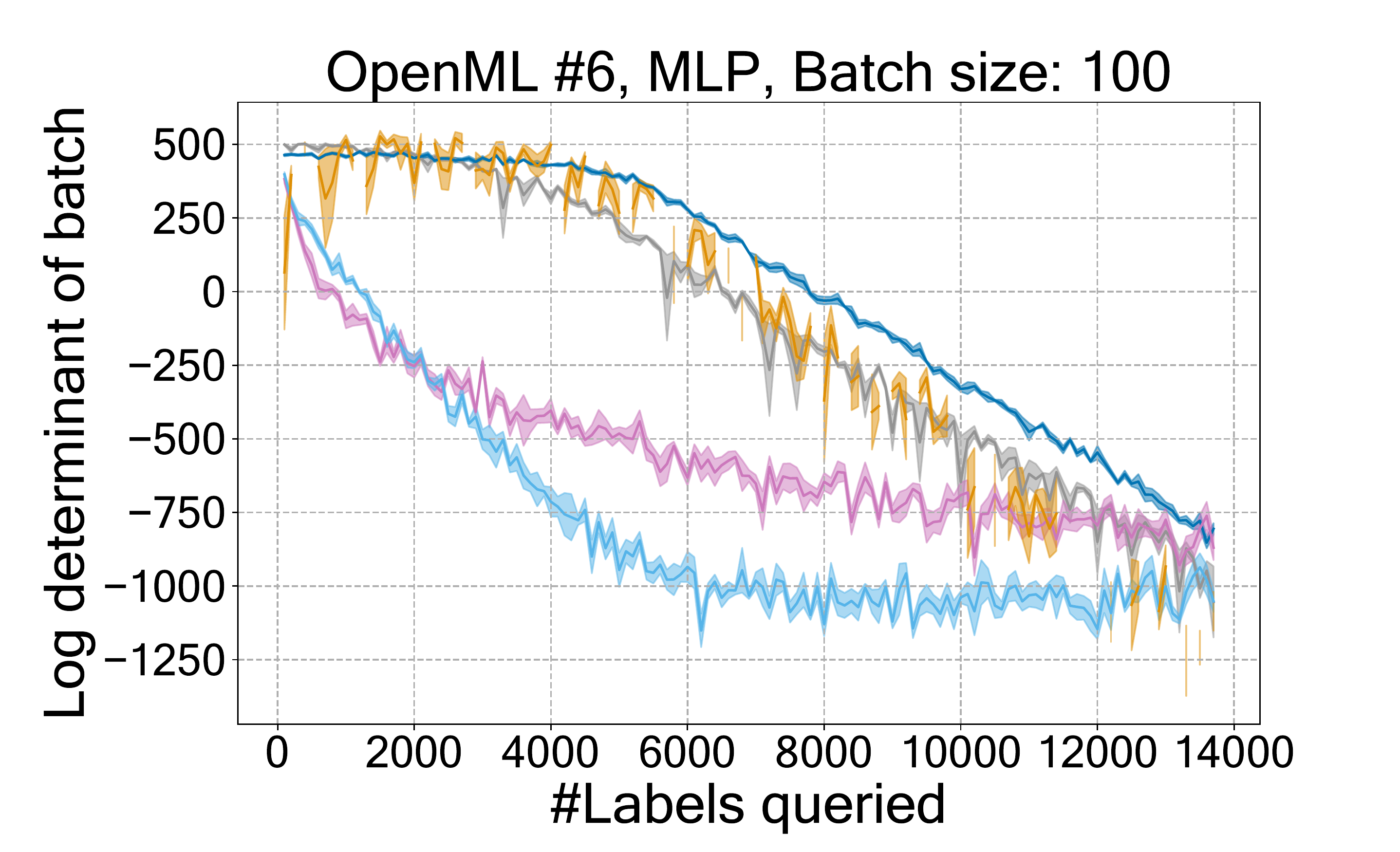}
\caption{}
\end{subfigure}
\hfill
\begin{subfigure}[b]{0.45\linewidth}
  \includegraphics[trim={0cm 0cm 0cm 0cm}, clip, width=\textwidth]{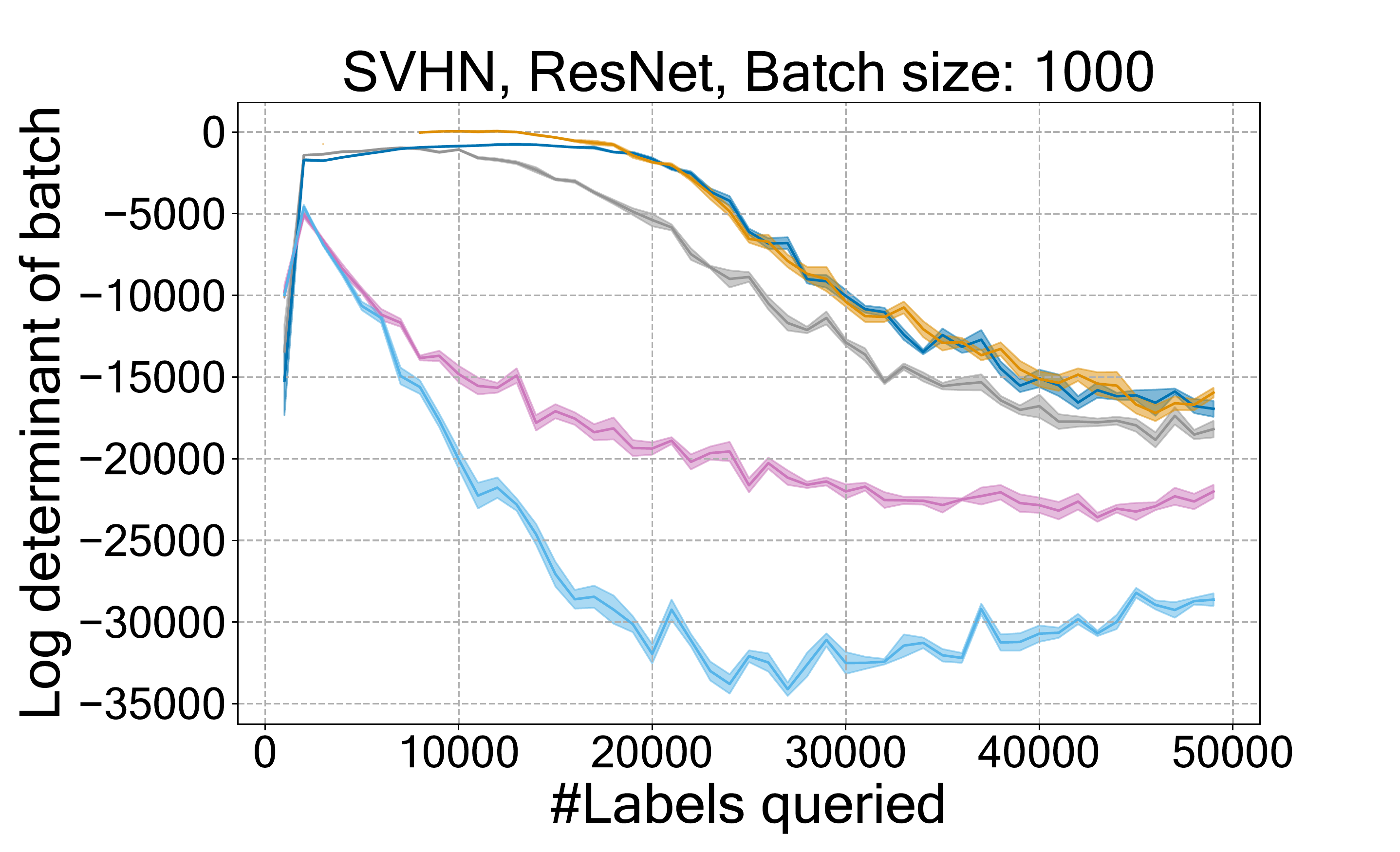}
\caption{}
\end{subfigure}
\hfill
\begin{subfigure}[b]{0.45\linewidth}
  \includegraphics[trim={0cm 0cm 0cm 0cm}, clip, width=\textwidth]{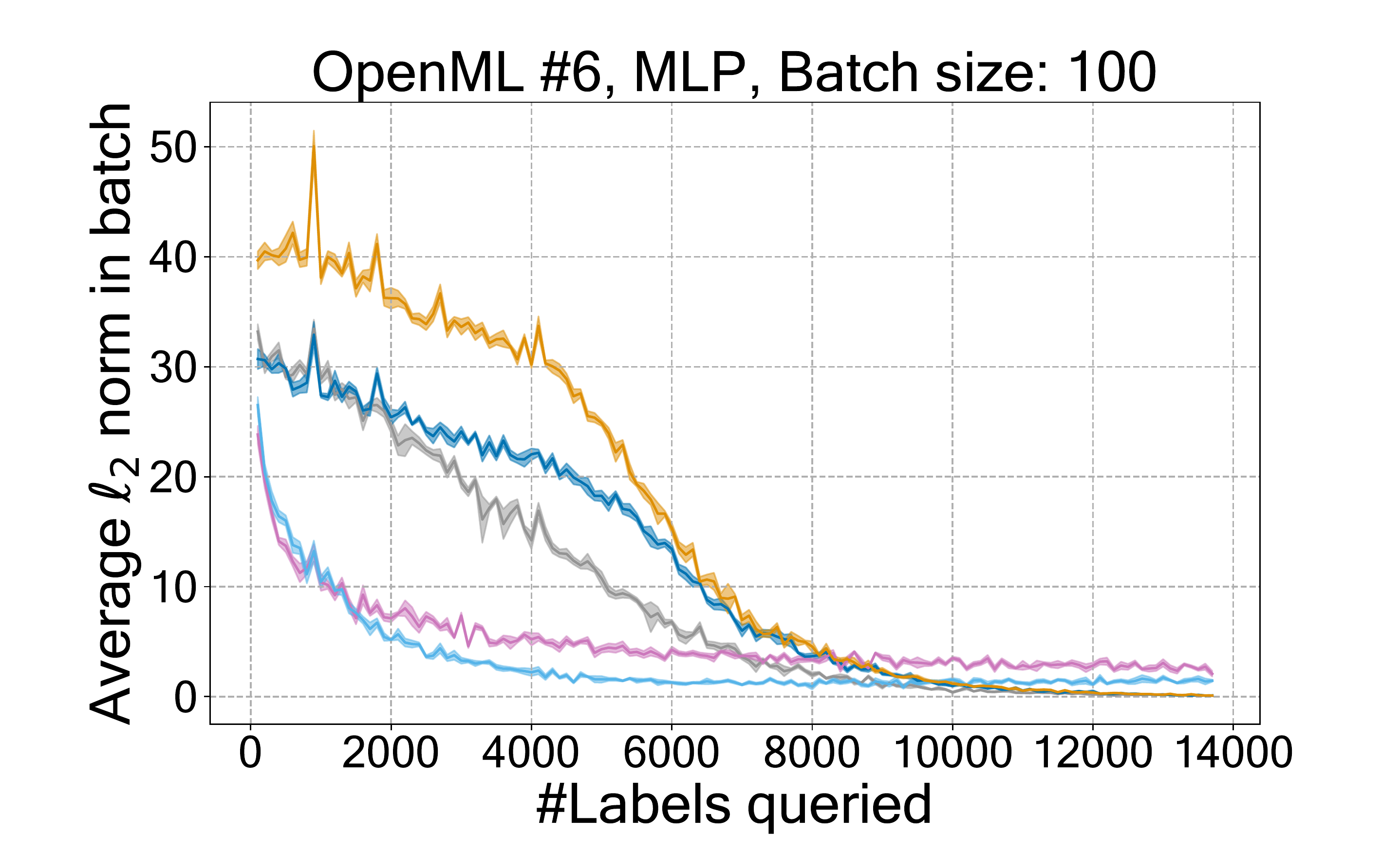}
  \caption{}
\end{subfigure}
\hfill
\begin{subfigure}[b]{0.45\linewidth}
  \includegraphics[trim={0cm 0cm 0cm 0cm}, clip, width=\textwidth]{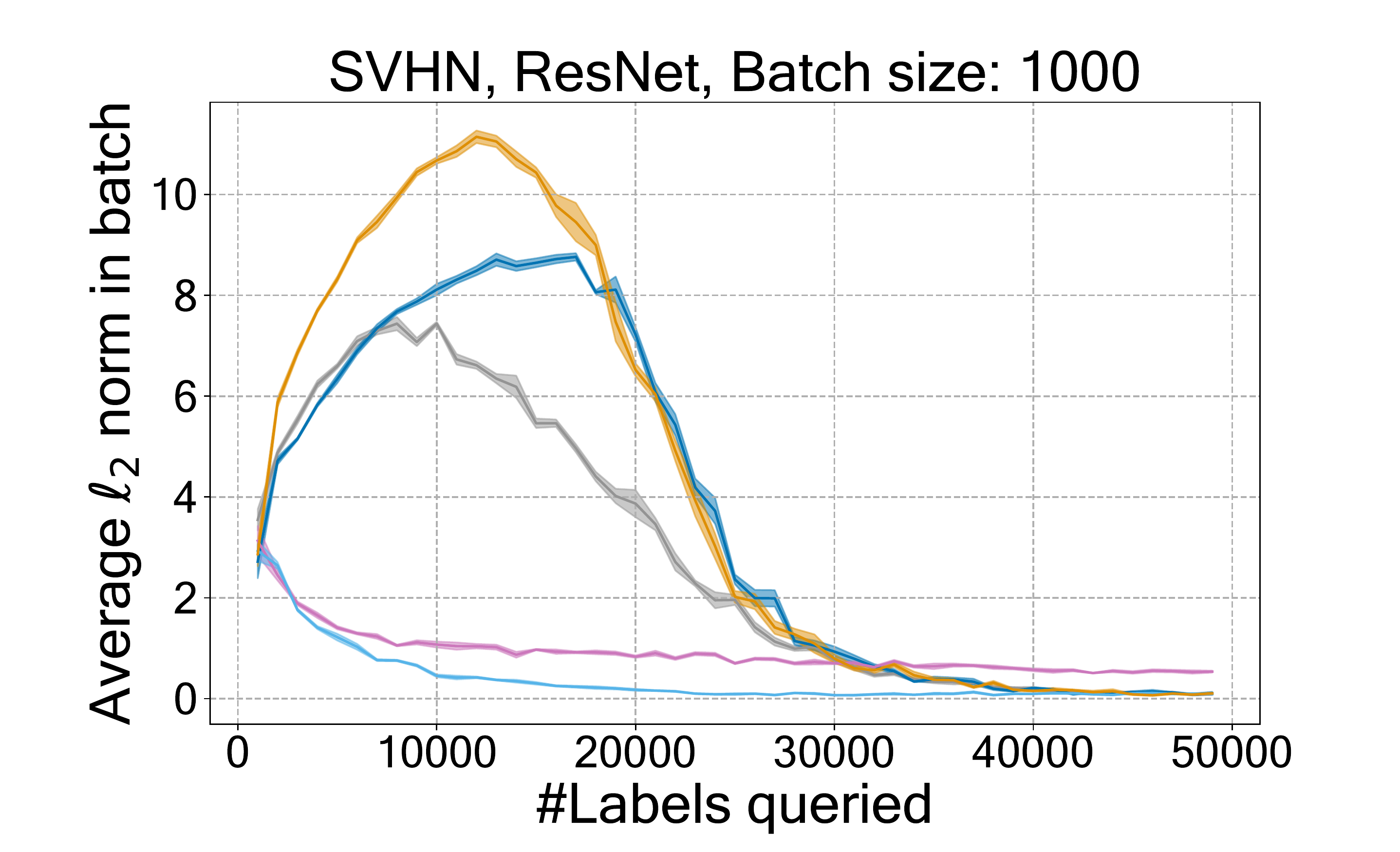}
  \caption{}
\end{subfigure}
\begin{subfigure}[b]{\linewidth}
  \includegraphics[trim={0cm 0cm 0cm 0cm}, clip, width=0.9\textwidth]{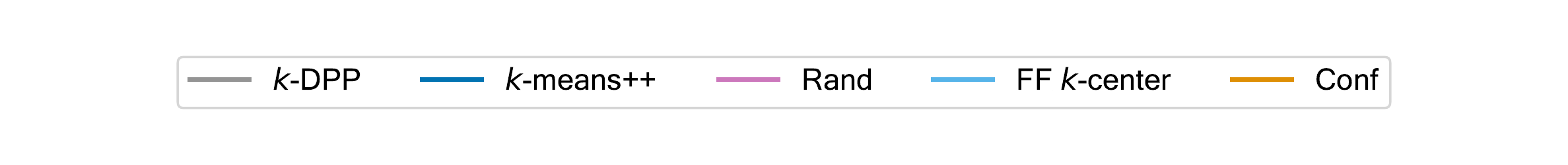}
\end{subfigure}
\caption{A comparison of batch selection algorithms in gradient space. Plots \textbf{a and b} show the log determinants of the Gram matrices of gradient embeddings within batches as learning progresses. Plots \textbf{c and d} show the average embedding magnitude (a measurement of predictive uncertainty) in the selected batch. The $k$-centers sampler finds points that are not as diverse or high-magnitude as other samplers. Notice also that \kmeansp tends to actually select samples that are both more diverse and higher-magnitude than a $k$-DPP, a potential pathology of the $k$-DPP's degree of stochastisity. Among all algorithms, \conf has the largest average norm of gradient embeddings within a batch; however, in OpenML \#6, and the first few interations of SVHN, some batches have a log Gram determinant of $-\infty$ (shown as gaps in the curve), which shows that \conf sometimes selects batches that are inferior in diversity.\looseness=-1}
\label{fig:k-dpp-3}
\end{figure}

\section{Comparison of \kmeansp and $k$-DPP in batch selection}
\label{sec:comp}

In Figures~\ref{fig:comp-6} to~\ref{fig:comp-cifar10}, we give running time and test accuracy comparisons between \kmeansp and $k$-DPP for selecting examples based on gradient embedding in batch mode active learning.
We implement the $k$-DPP sampling using the MCMC algorithm from~\citep{kang2013fast}, which has a time complexity of $O(\tau \cdot (k^2 + kd))$ and space complexity of $O(k^2 + kd)$, where $\tau$ is the number of sampling steps. We set $\tau$ as $\lfloor 5 k \ln k \rfloor$ in our experiment.
The comparisons for batch size 10000 are not shown here as the implementation of $k$-DPP sampling runs out of memory.

It can be seen from the figures that, although $k$-DPP and \kmeansp are based on different sampling criteria, the classification accuracies of their induced active learning algorithm are similar. In addition, when large batch sizes are required (e.g. $k = 1000$), the running times of $k$-DPP sampling are generally much higher than those of \kmeansp.

\begin{figure}
  \centering
  \includegraphics[trim={0.3cm 0cm 2.5cm 0cm}, clip, width=0.23\textwidth]{{figs/dpp_learning_curves/comp_Accuracy_Data=_6__Model=_mlp__nQuery=_100__TrainAug=_0___.pdf}}
  \includegraphics[trim={0.3cm 0cm 2.5cm 0cm}, clip, width=0.23\textwidth]{{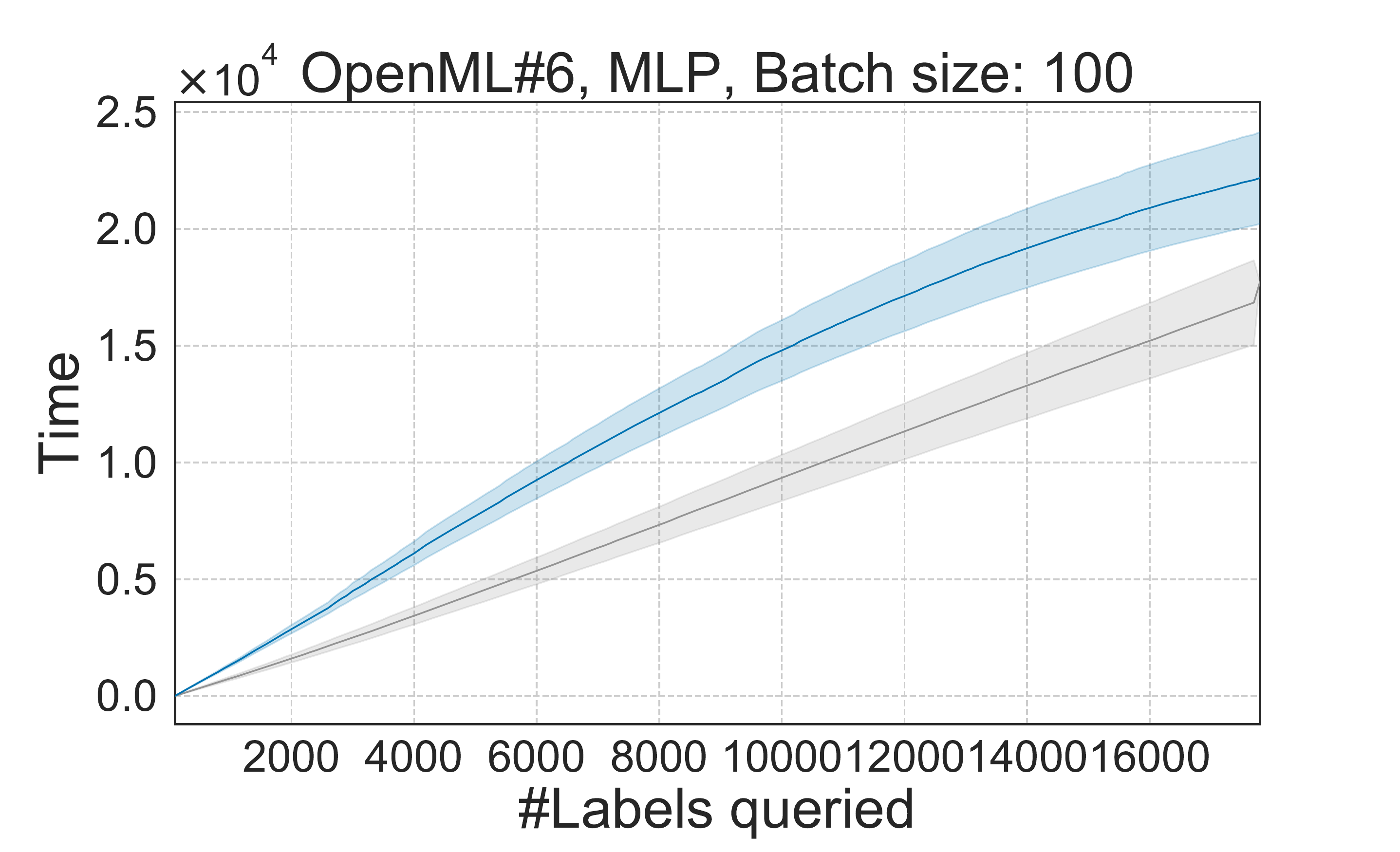}}
  \includegraphics[trim={0.3cm 0cm 2.5cm 0cm}, clip, width=0.23\textwidth]{{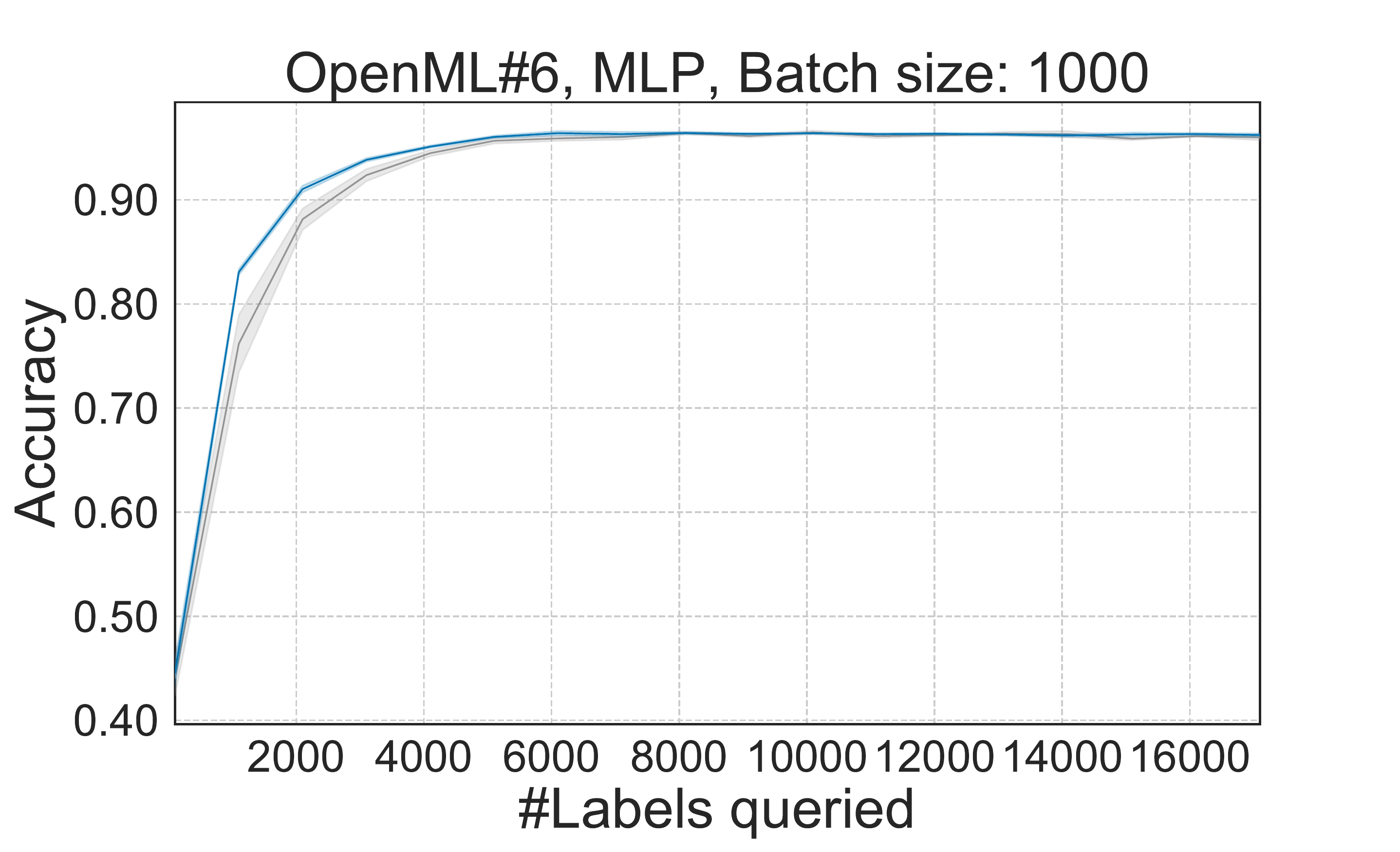}}
  \includegraphics[trim={0.3cm 0cm 2.5cm 0cm}, clip, width=0.23\textwidth]{{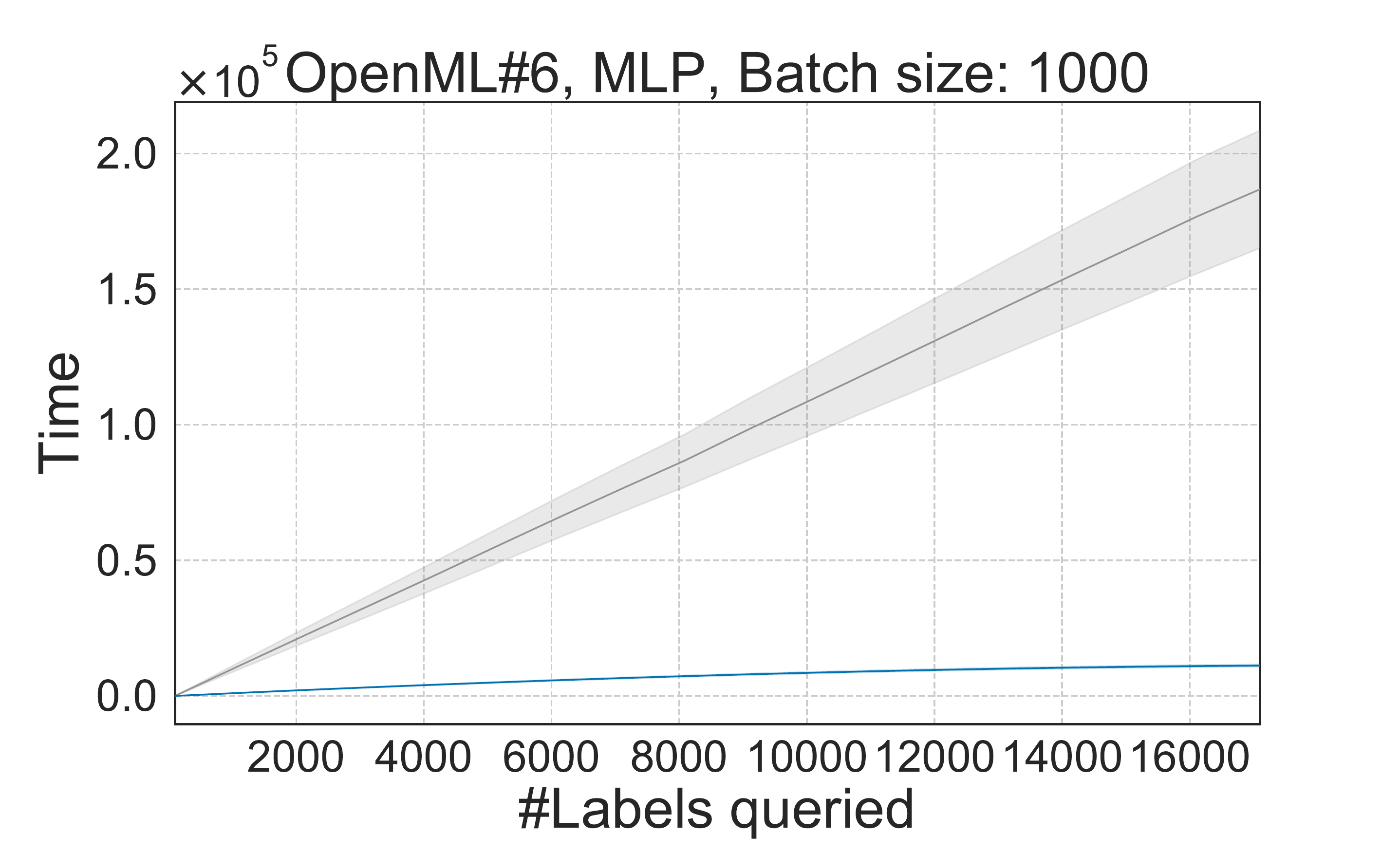}}
  \\
  \centering
  \begin{subfigure}[b]{0.32\linewidth}
    \includegraphics[trim={0cm 0cm 0cm 0cm}, clip, width=\textwidth]{figs/dpp_learning_curves/legend.pdf}
  \end{subfigure}
\caption{Learning curves and running times for OpenML \#6 with MLP.}
\label{fig:comp-6}
\end{figure}

\begin{figure}
  \centering
  \includegraphics[trim={0.3cm 0cm 2.5cm 0cm}, clip, width=0.24\textwidth]{{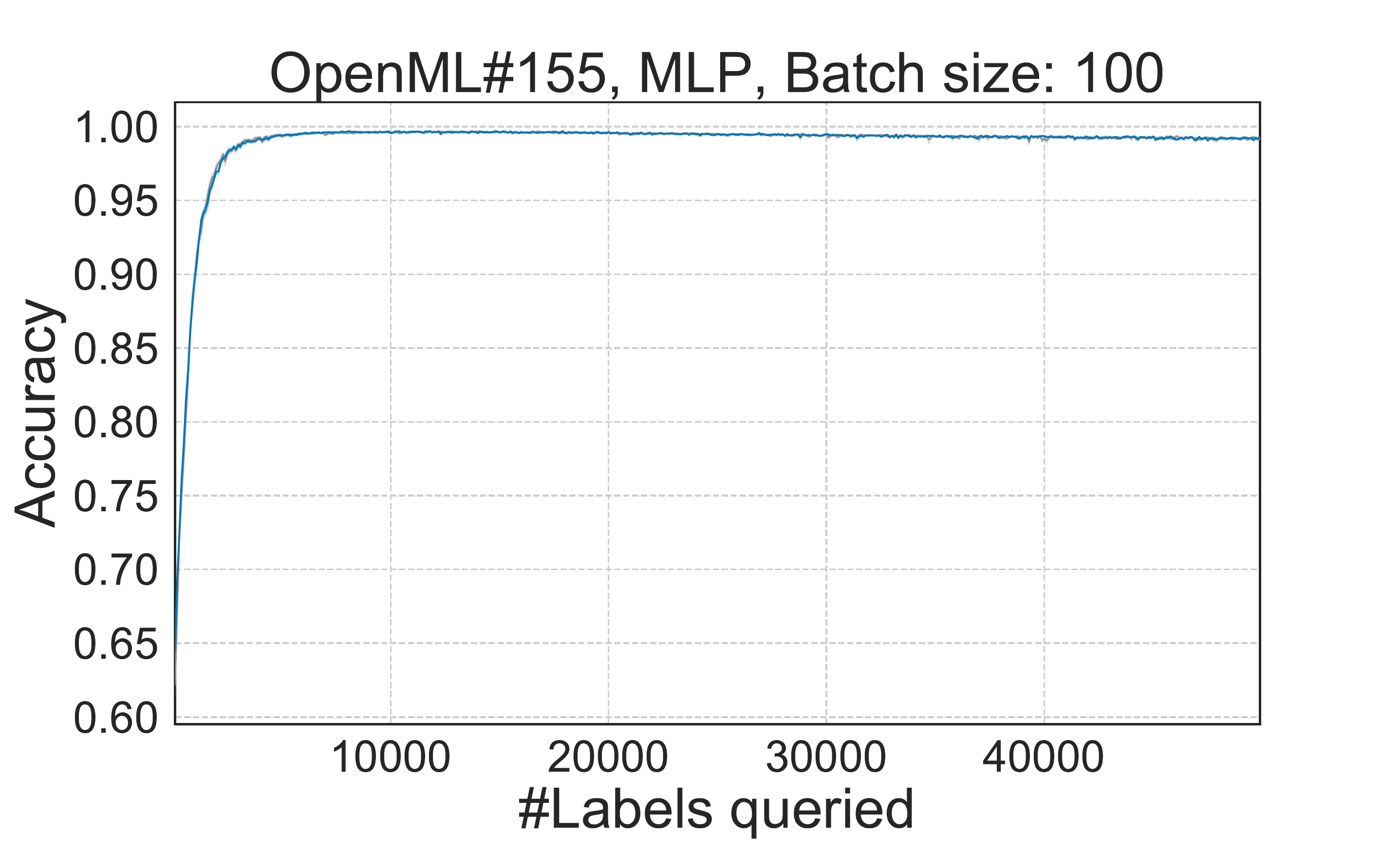}}
  \includegraphics[trim={0.3cm 0cm 2.5cm 0cm}, clip, width=0.24\textwidth]{{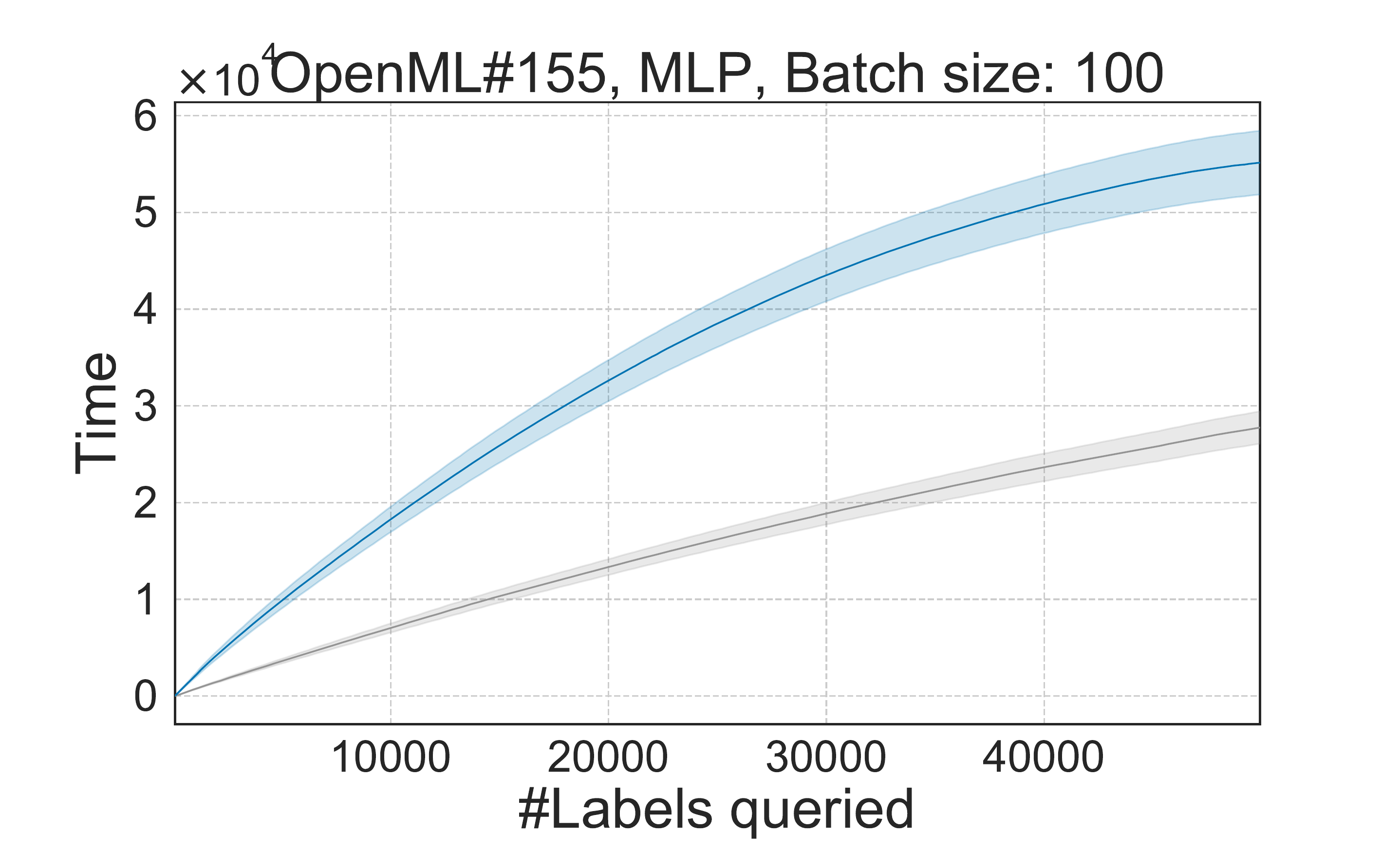}}
  \includegraphics[trim={0.3cm 0cm 2.5cm 0cm}, clip, width=0.24\textwidth]{{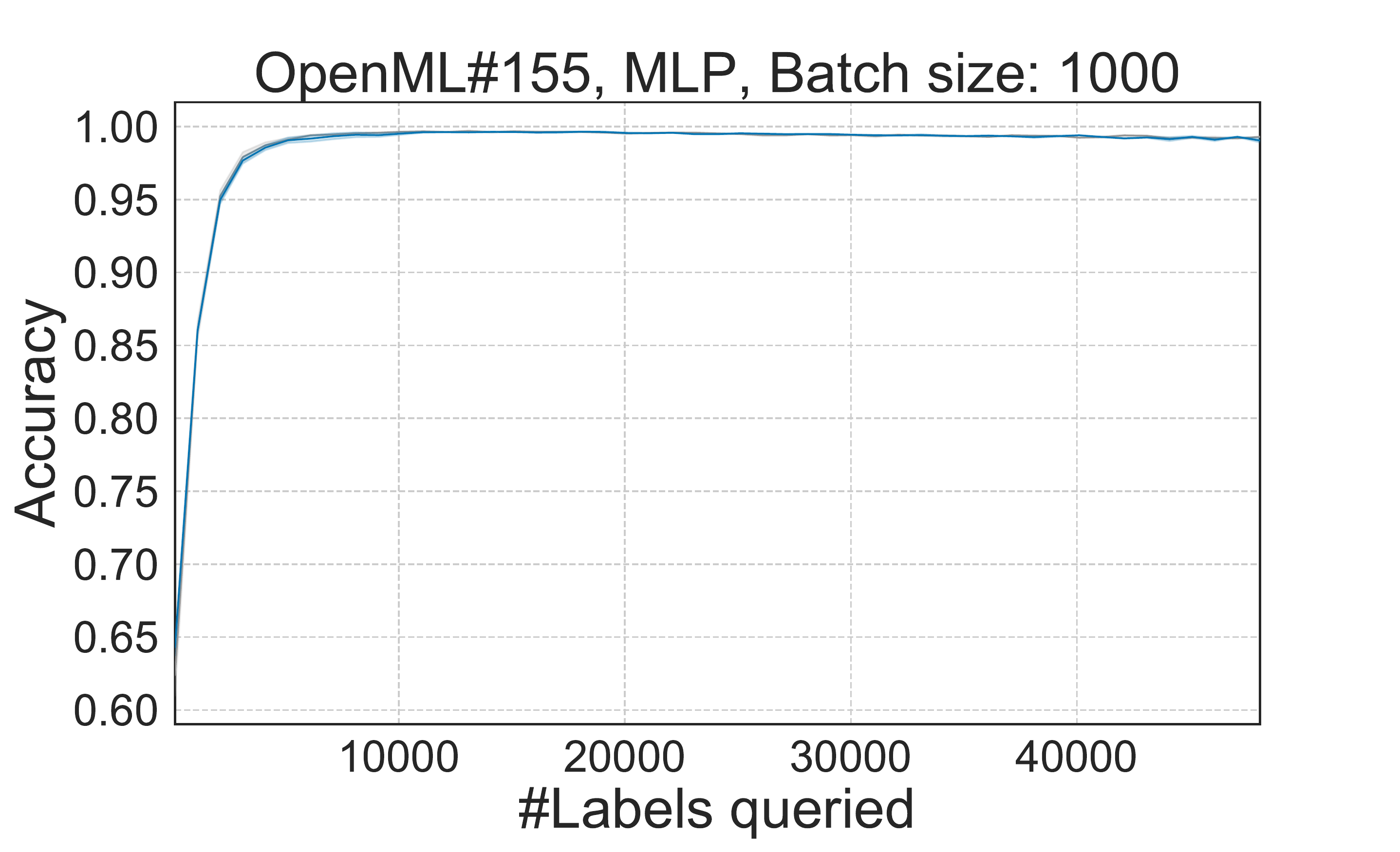}}
  \includegraphics[trim={0.3cm 0cm 2.5cm 0cm}, clip, width=0.24\textwidth]{{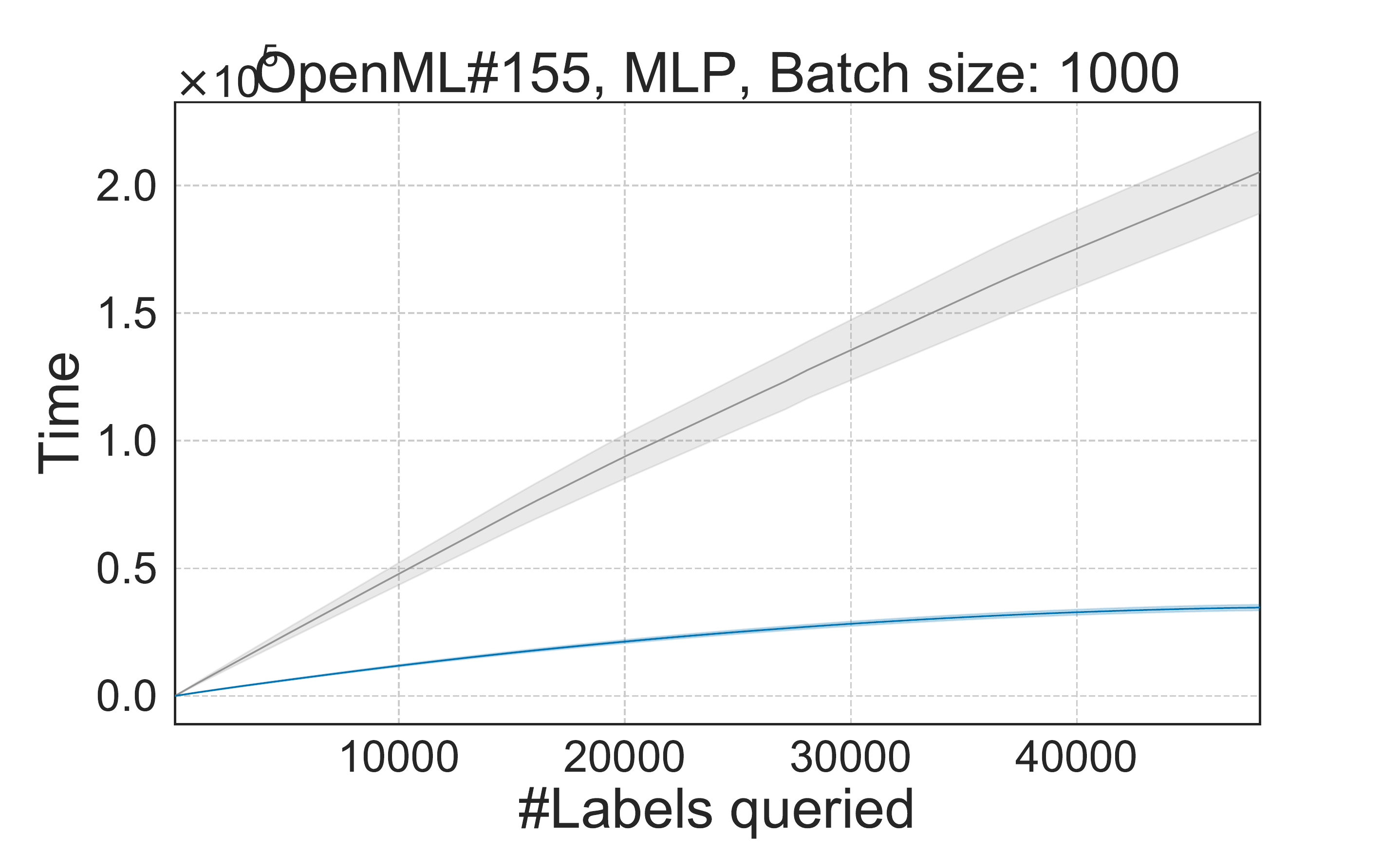}}
  \\
  \centering
  \begin{subfigure}[b]{0.32\linewidth}
    \includegraphics[trim={0cm 0cm 0cm 0cm}, clip, width=\textwidth]{figs/dpp_learning_curves/legend.pdf}
  \end{subfigure}
\caption{Learning curves and running times for OpenML \#155 with MLP.}
\end{figure}

\begin{figure}
  \centering
  \includegraphics[trim={0.3cm 0cm 2.5cm 0cm}, clip, width=0.24\textwidth]{{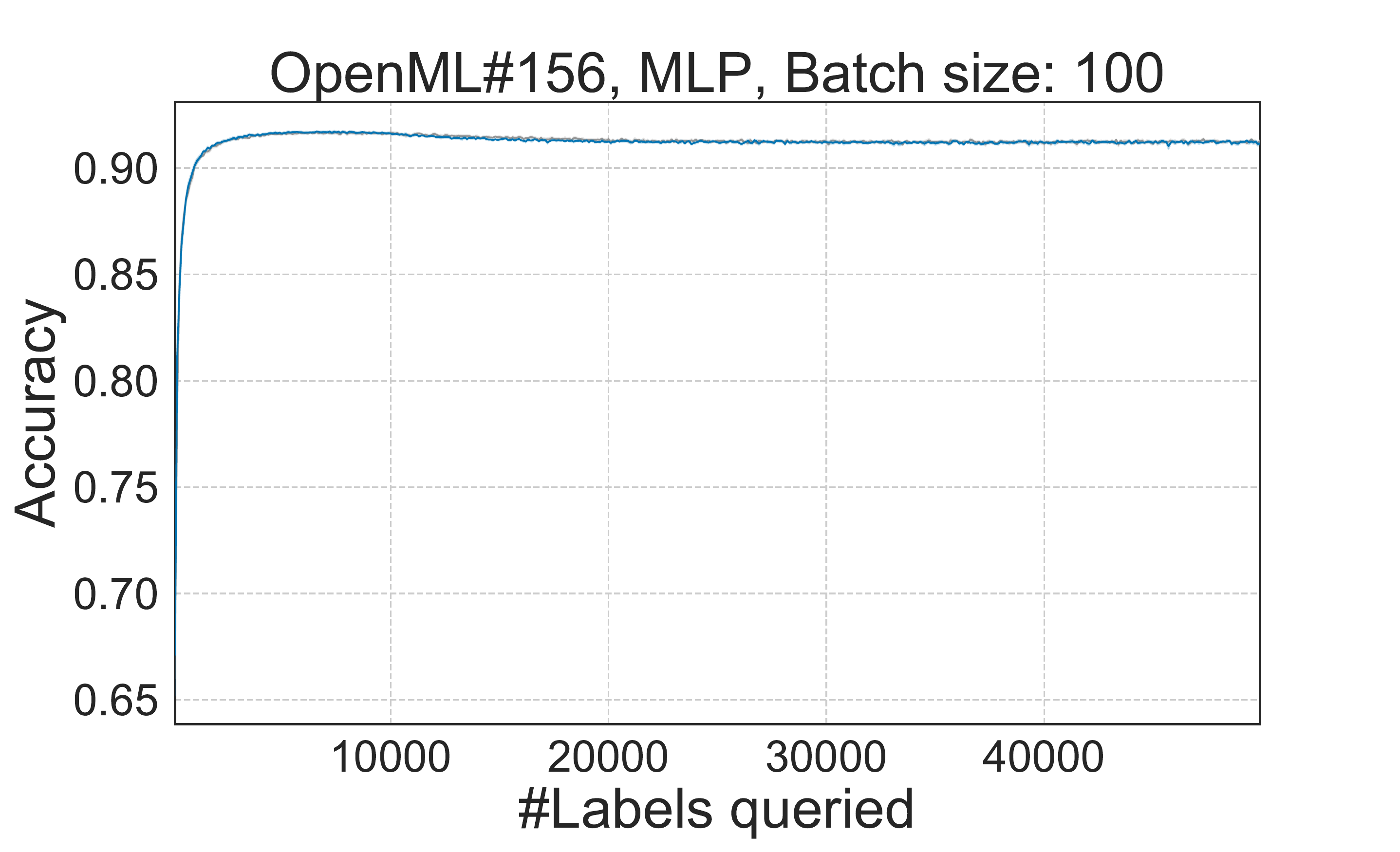}}
  \includegraphics[trim={0.3cm 0cm 2.5cm 0cm}, clip, width=0.24\textwidth]{{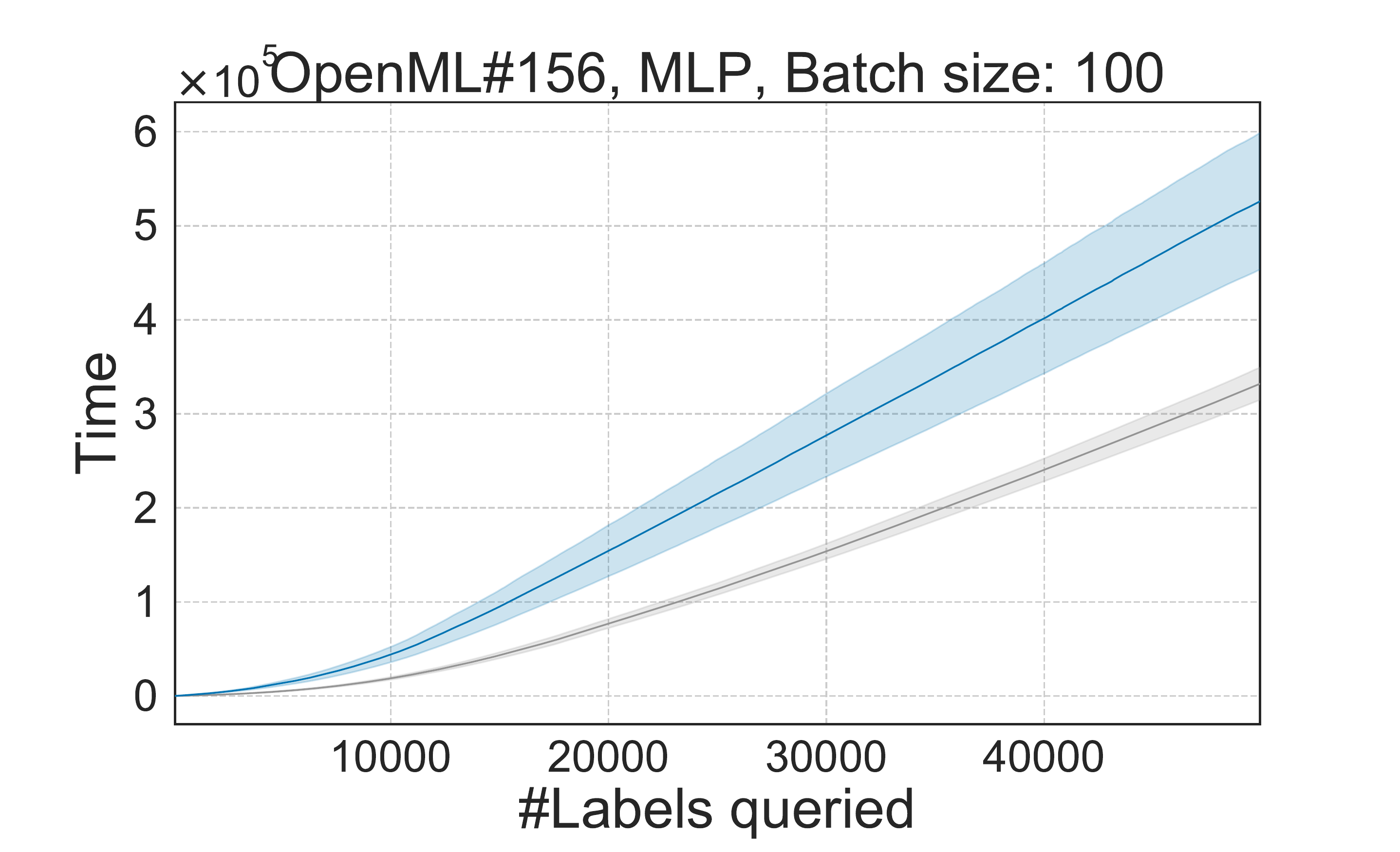}}
  \includegraphics[trim={0.3cm 0cm 2.5cm 0cm}, clip, width=0.24\textwidth]{{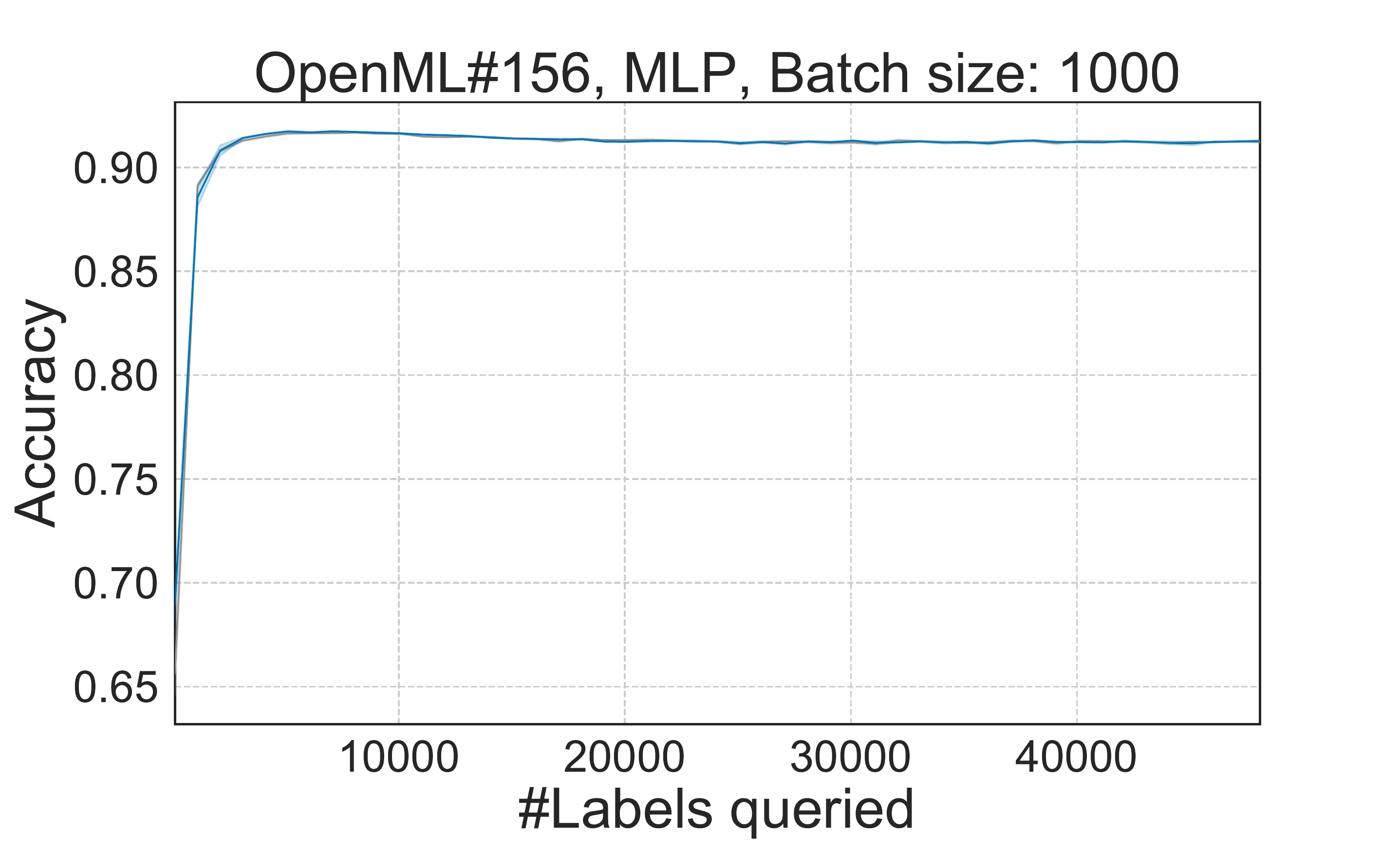}}
  \includegraphics[trim={0.3cm 0cm 2.5cm 0cm}, clip, width=0.24\textwidth]{{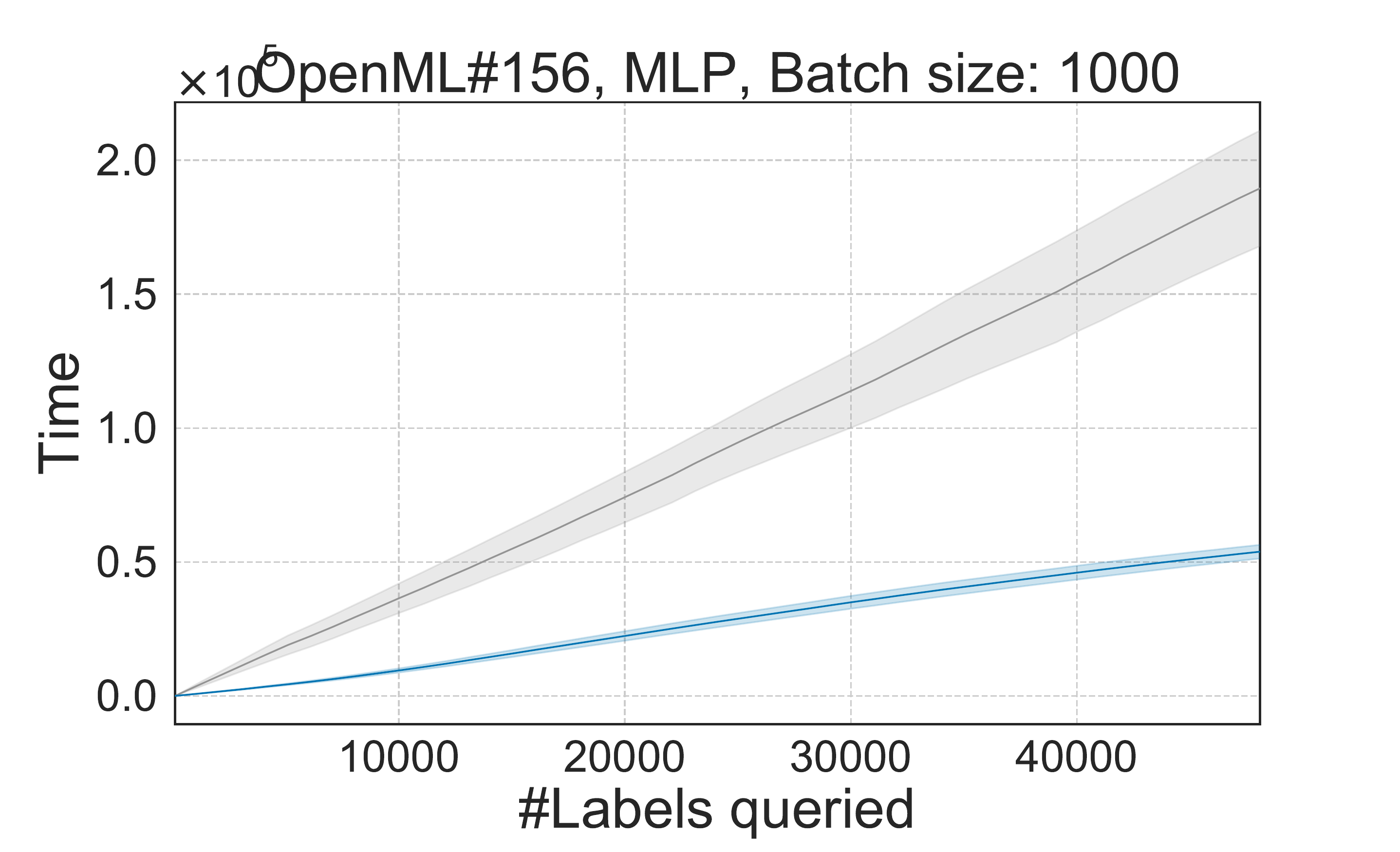}}
  \\
  \centering
  \begin{subfigure}[b]{0.32\linewidth}
    \includegraphics[trim={0cm 0cm 0cm 0cm}, clip, width=\textwidth]{figs/dpp_learning_curves/legend.pdf}
  \end{subfigure}
\caption{Learning curves and running times for OpenML \#156 with MLP.}
\end{figure}

\begin{figure}
  \centering
  \includegraphics[trim={0.3cm 0cm 2.5cm 0cm}, clip, width=0.24\textwidth]{{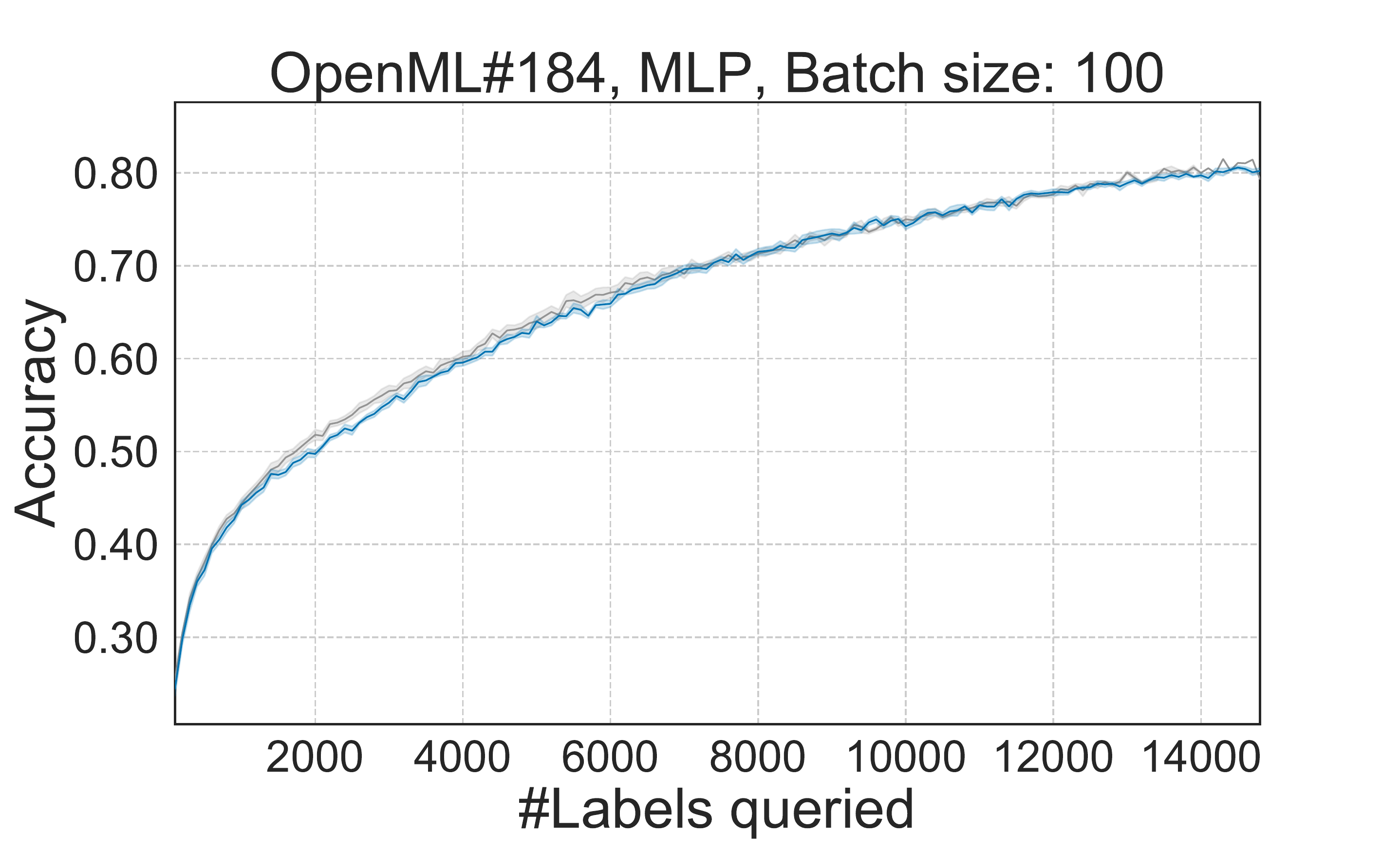}}
  \includegraphics[trim={0.3cm 0cm 2.5cm 0cm}, clip, width=0.24\textwidth]{{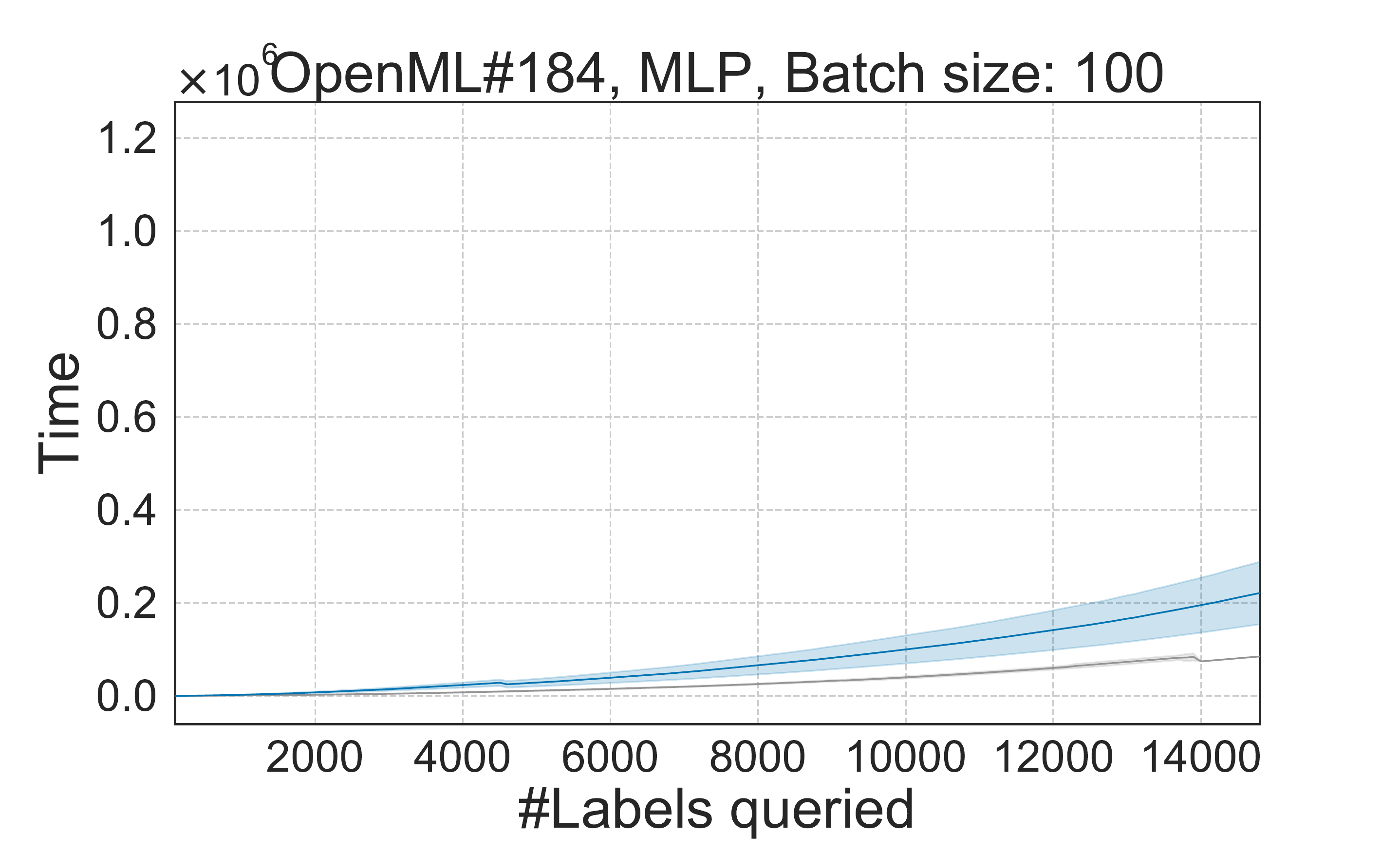}}
  \includegraphics[trim={0.3cm 0cm 2.5cm 0cm}, clip, width=0.24\textwidth]{{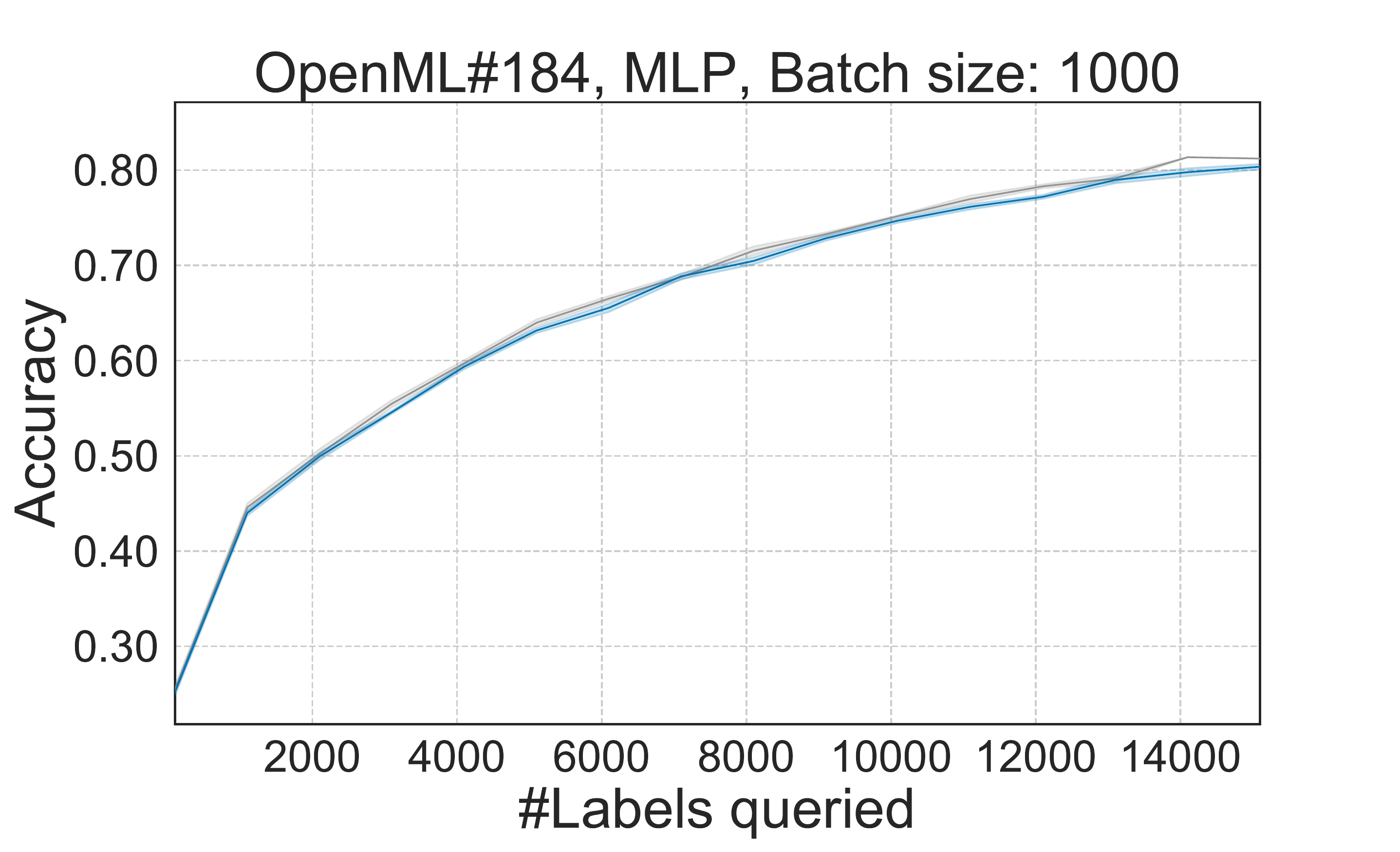}}
  \includegraphics[trim={0.3cm 0cm 2.5cm 0cm}, clip, width=0.24\textwidth]{{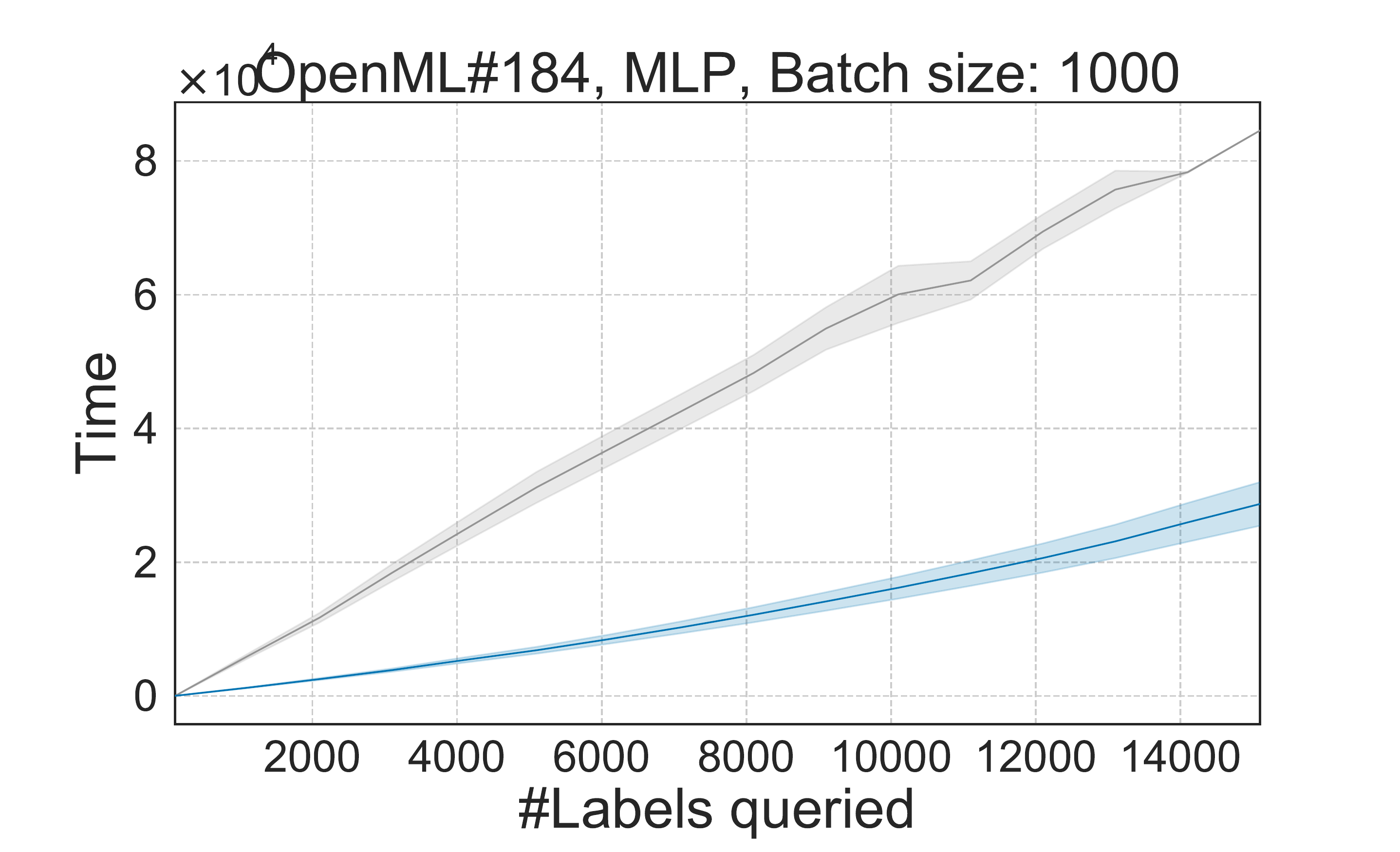}}
  \\
  \centering
  \begin{subfigure}[b]{0.32\linewidth}
    \includegraphics[trim={0cm 0cm 0cm 0cm}, clip, width=\textwidth]{figs/dpp_learning_curves/legend.pdf}
  \end{subfigure}
\caption{Learning curves and running times for OpenML \#184 with MLP.}
\end{figure}

\begin{figure}
  \centering
  \includegraphics[trim={0.3cm 0cm 2.5cm 0cm}, clip, width=0.24\textwidth]{{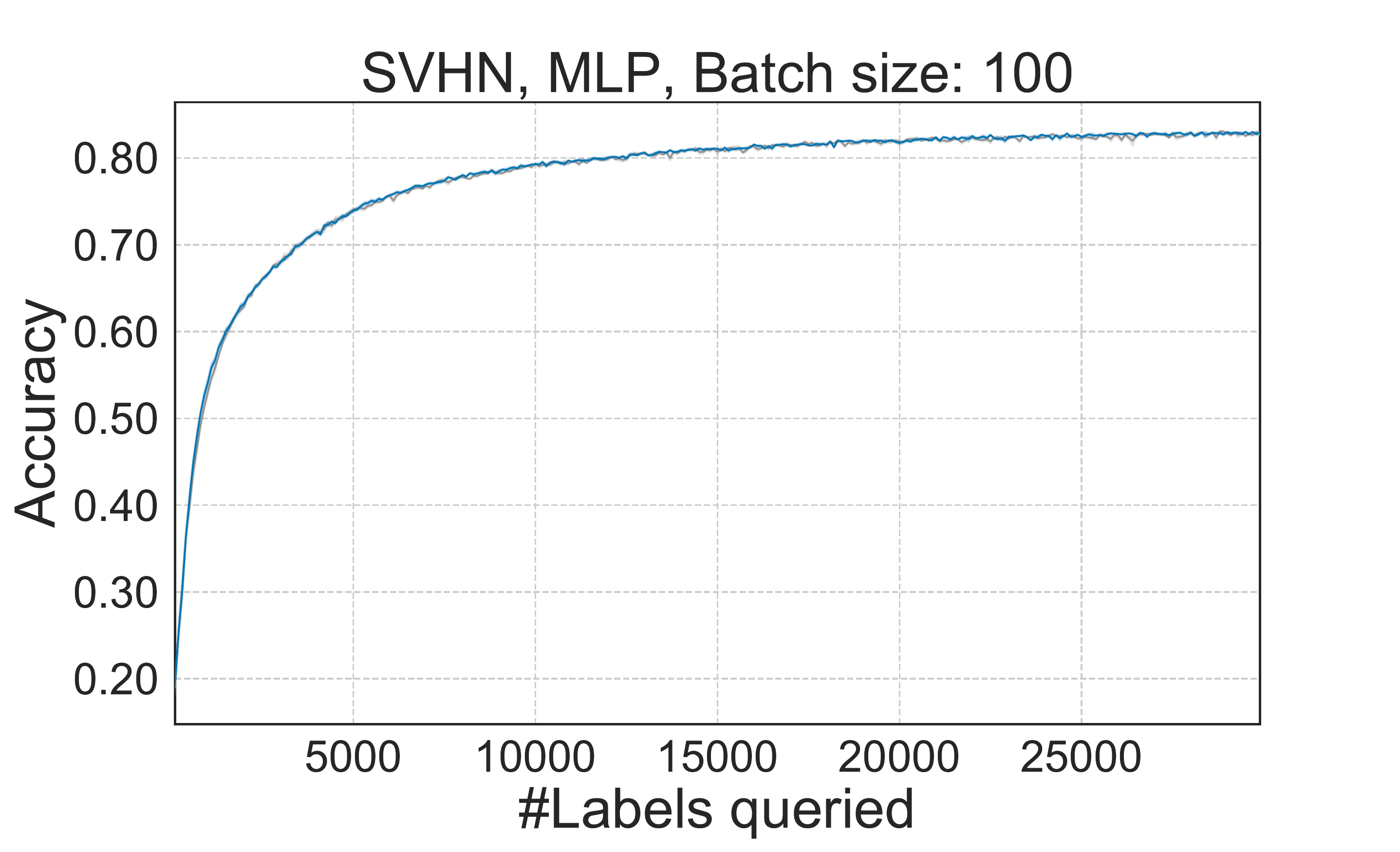}}
  \includegraphics[trim={0.3cm 0cm 2.5cm 0cm}, clip, width=0.24\textwidth]{{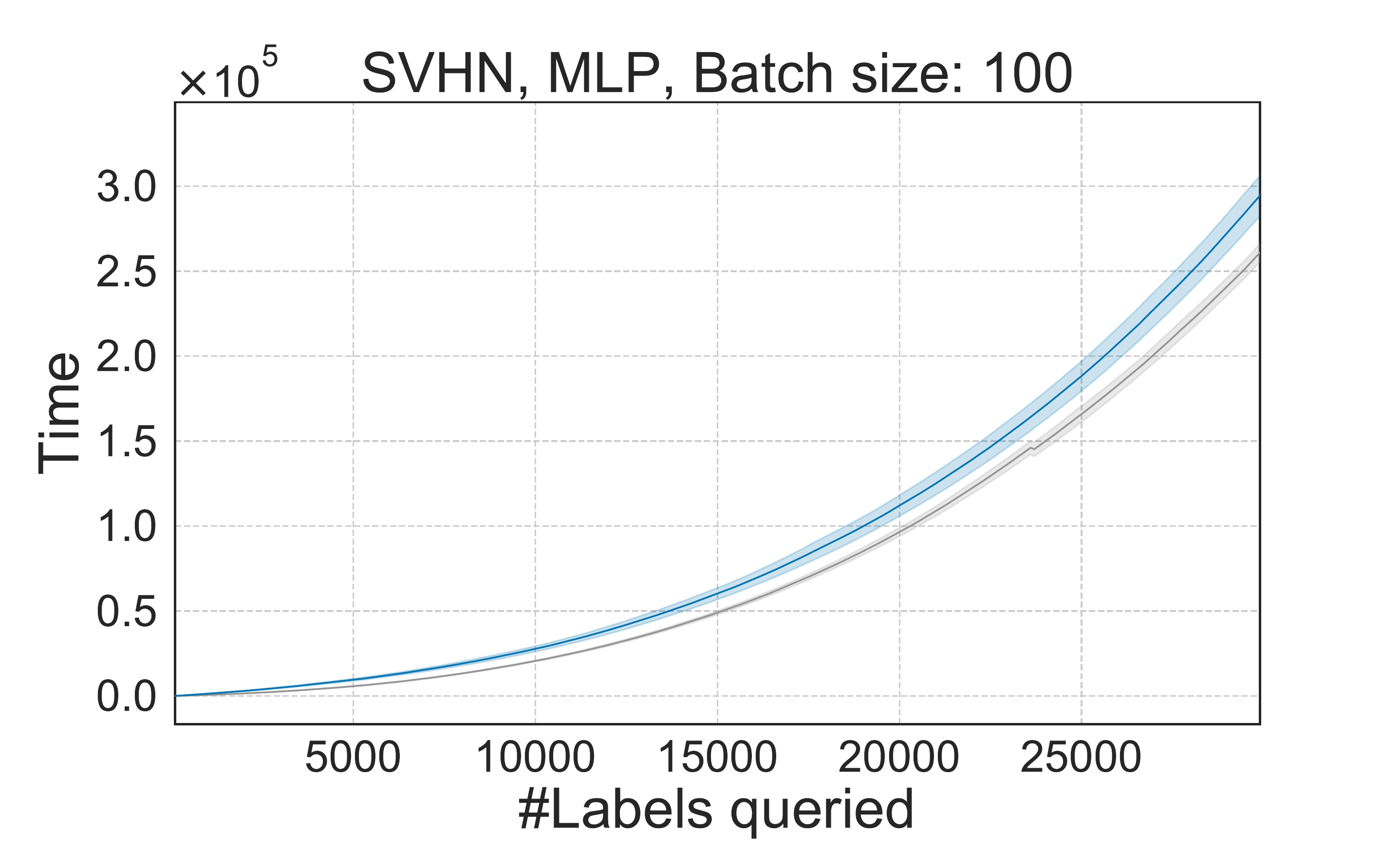}}
  \includegraphics[trim={0.3cm 0cm 2.5cm 0cm}, clip, width=0.24\textwidth]{{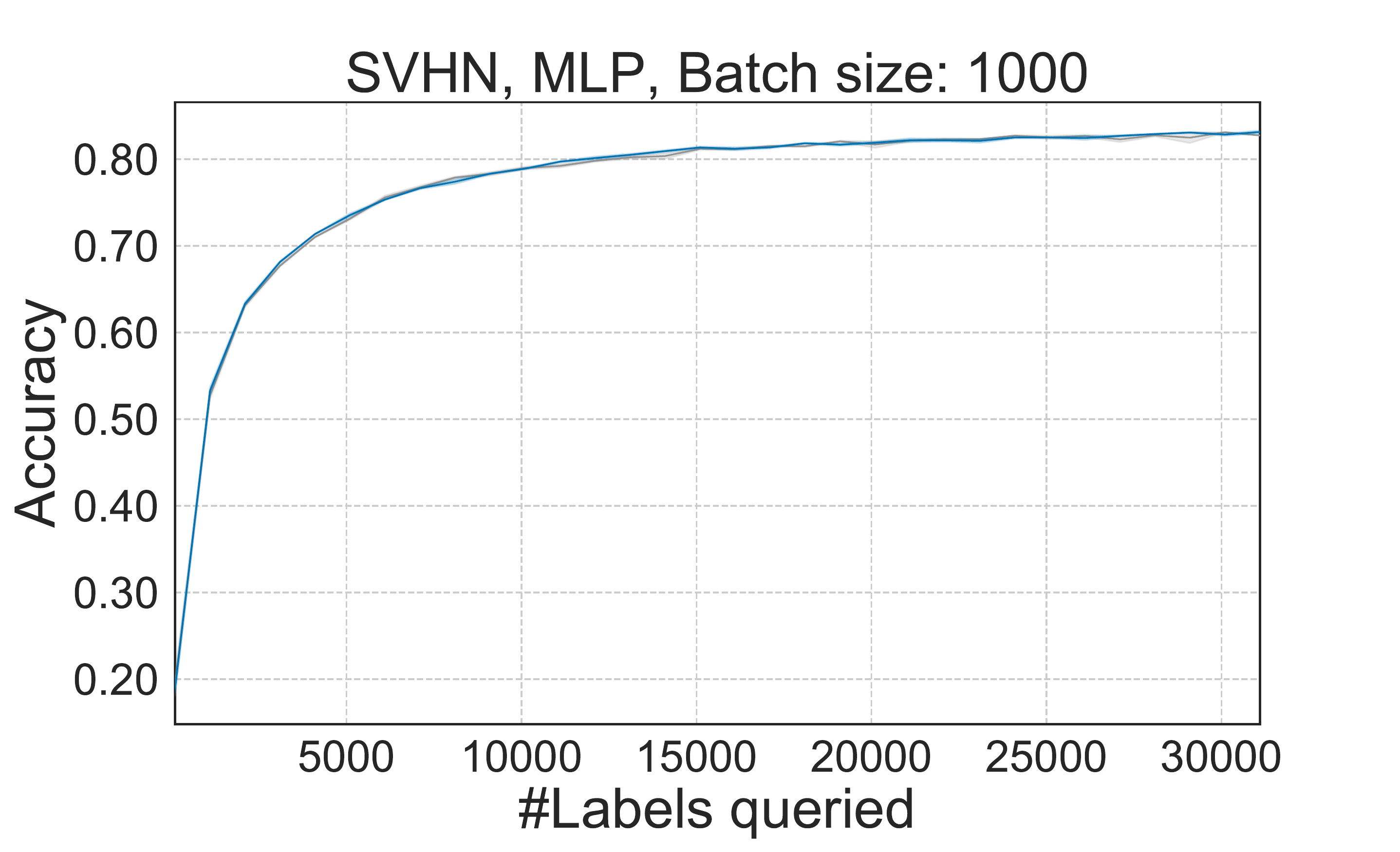}}
  \includegraphics[trim={0.3cm 0cm 2.5cm 0cm}, clip, width=0.24\textwidth]{{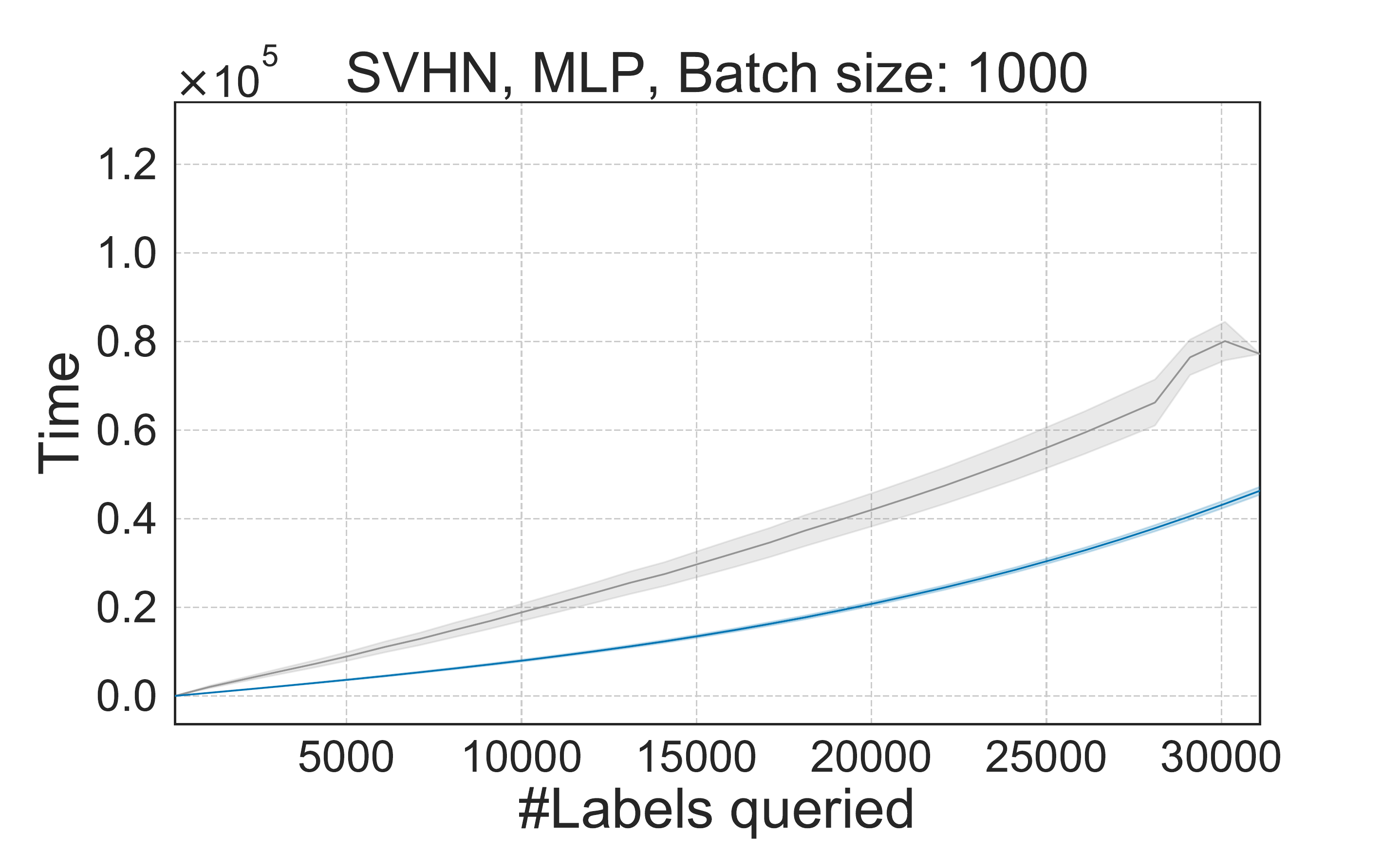}}
  \\
  \centering
  \includegraphics[trim={0.3cm 0cm 2.5cm 0cm}, clip, width=0.24\textwidth]{{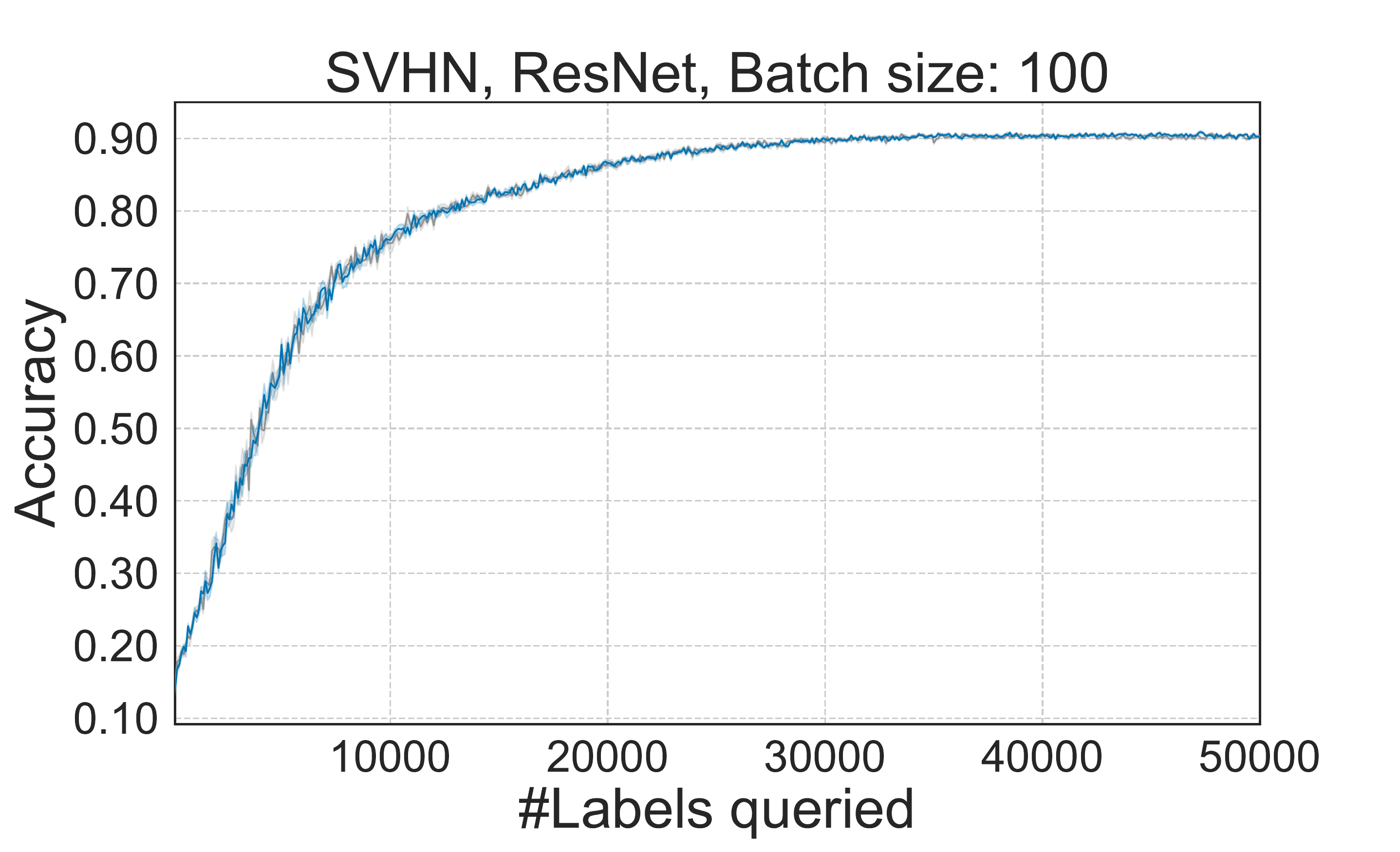}}
  \includegraphics[trim={0.3cm 0cm 2.5cm 0cm}, clip, width=0.24\textwidth]{{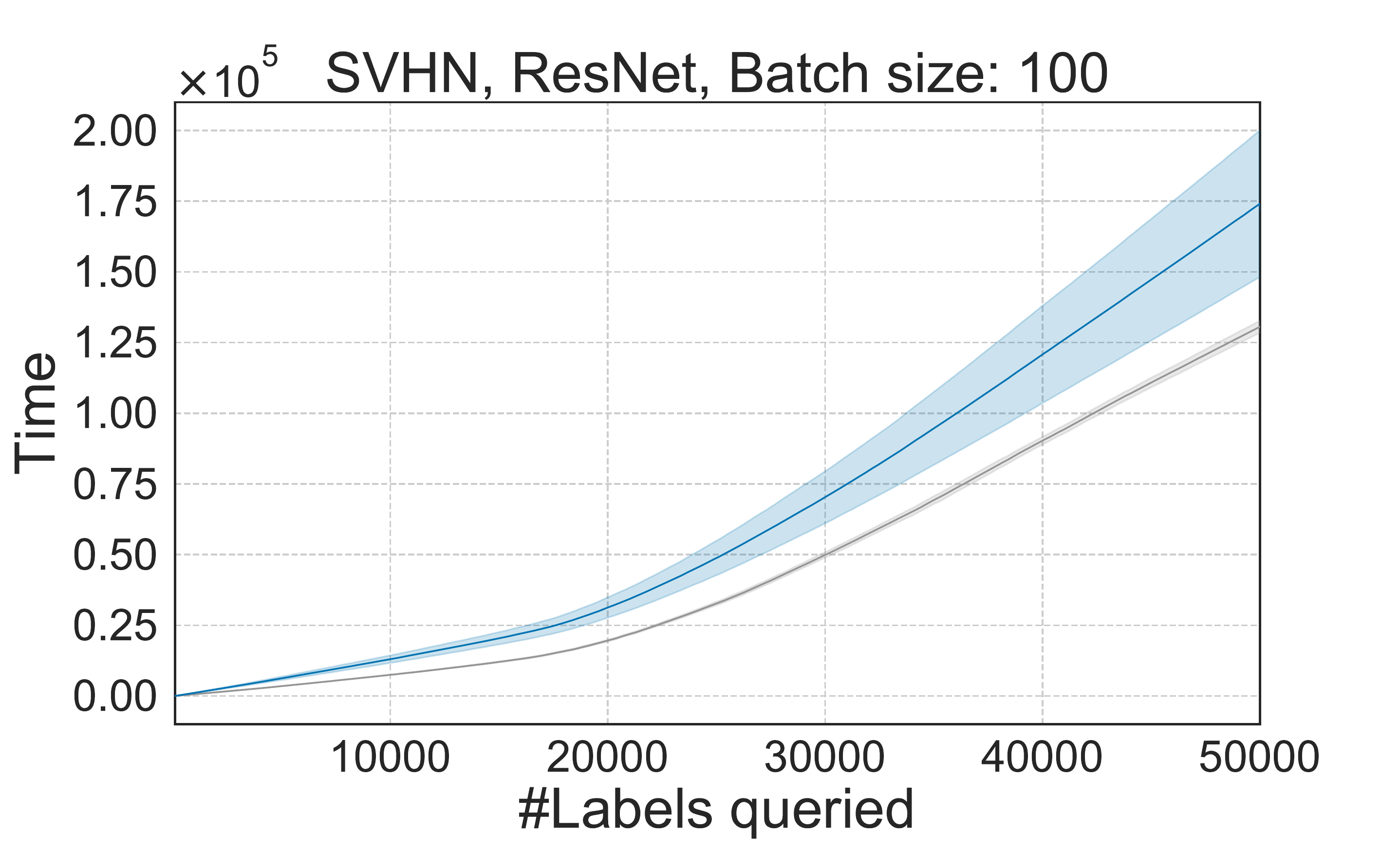}}
  \includegraphics[trim={0.3cm 0cm 2.5cm 0cm}, clip, width=0.24\textwidth]{{figs/dpp_learning_curves/comp_Accuracy_Data=_SVHN__Model=_rn__nQuery=_1000__TrainAug=_0___.pdf}}
  \includegraphics[trim={0.3cm 0cm 2.5cm 0cm}, clip, width=0.24\textwidth]{{figs/dpp_learning_curves/comp_Time_Data=_SVHN__Model=_rn__nQuery=_1000__TrainAug=_0___.pdf}}
  \\
  \centering
  \begin{subfigure}[b]{0.32\linewidth}
    \includegraphics[trim={0cm 0cm 0cm 0cm}, clip, width=\textwidth]{figs/dpp_learning_curves/legend.pdf}
  \end{subfigure}
\caption{Learning curves and running times for SVHN with MLP and ResNet.}
\end{figure}

\begin{figure}
  \centering
  \includegraphics[trim={0.3cm 0cm 2.5cm 0cm}, clip, width=0.24\textwidth]{{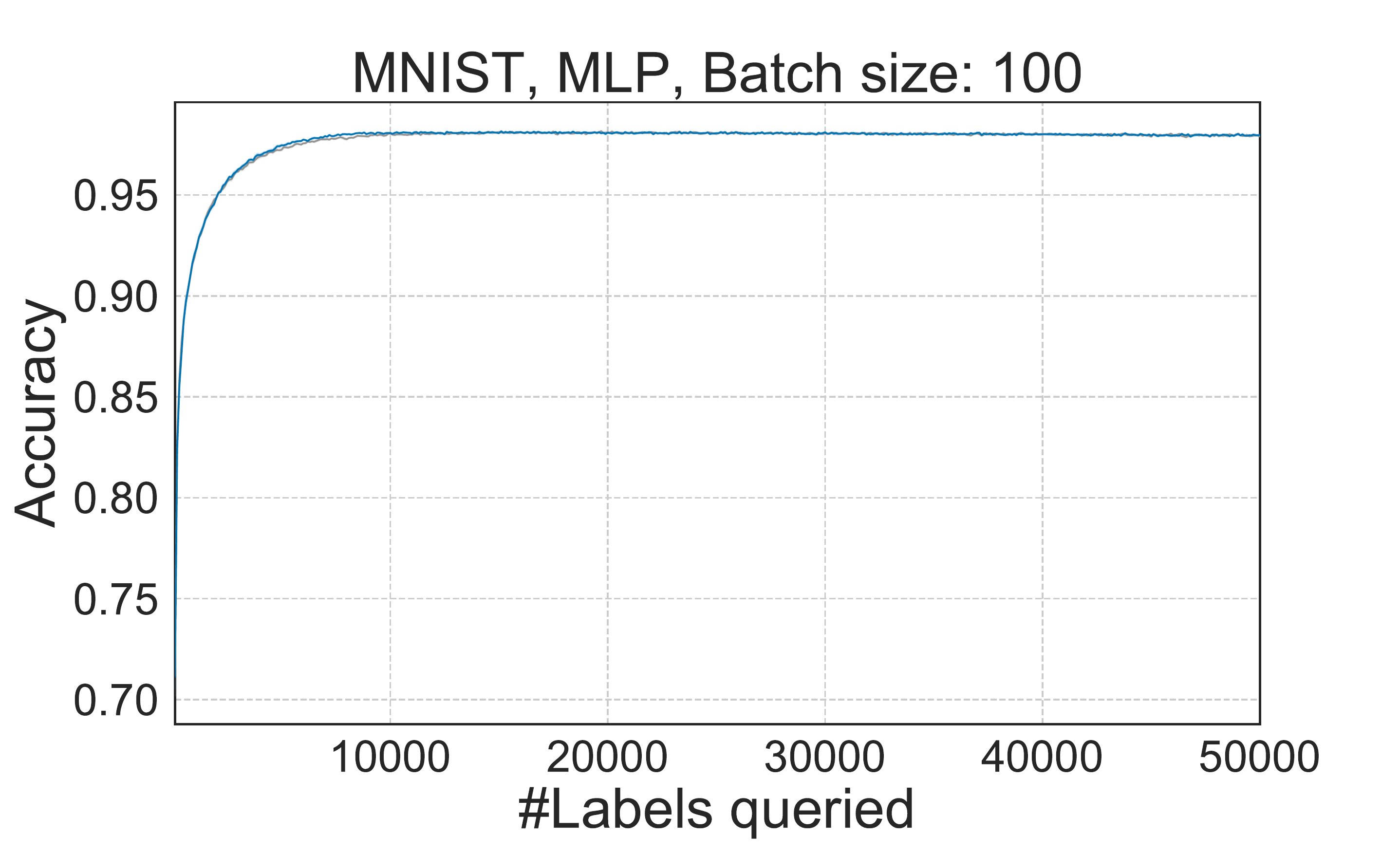}}
  \includegraphics[trim={0.3cm 0cm 2.5cm 0cm}, clip, width=0.24\textwidth]{{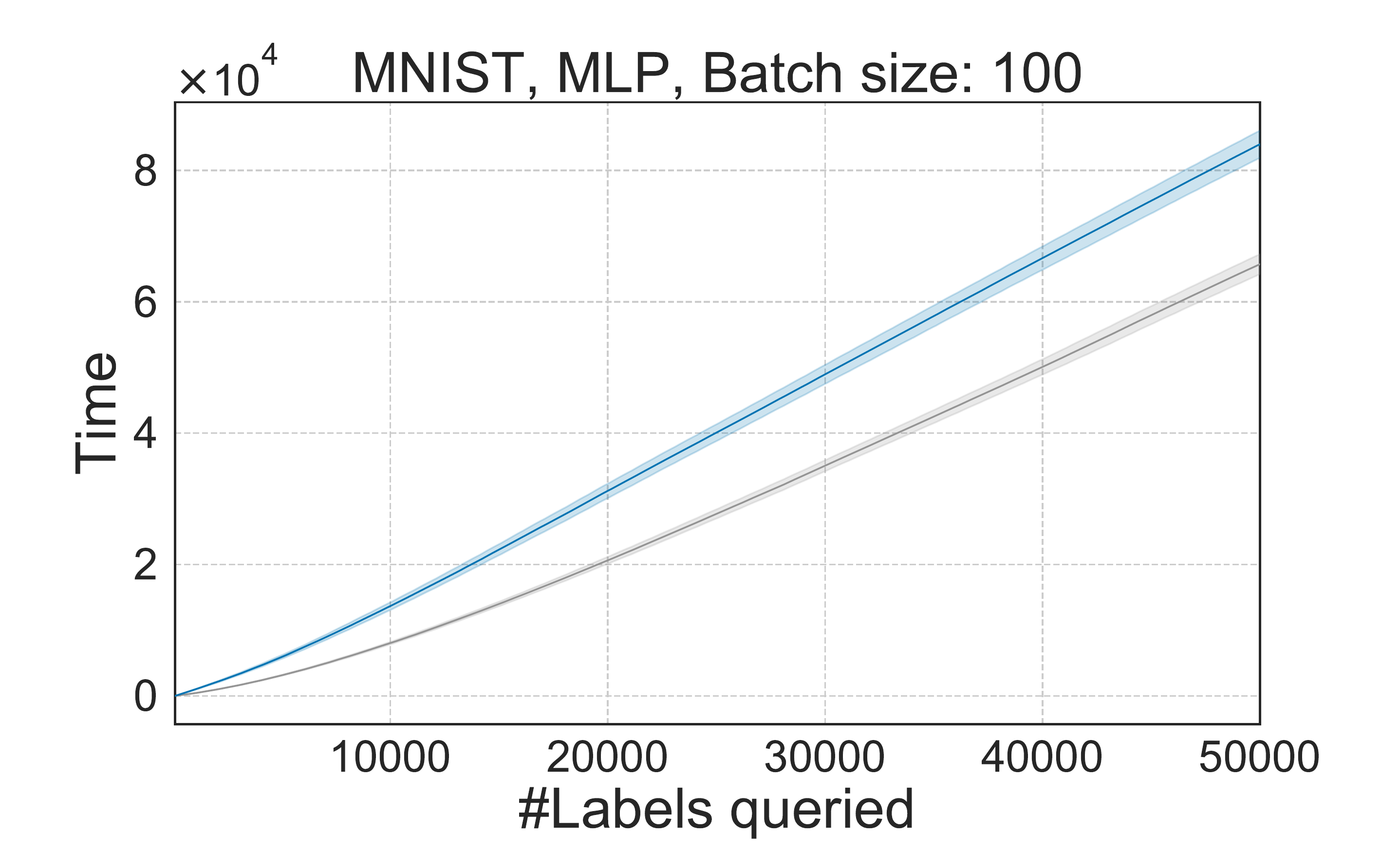}}
  \includegraphics[trim={0.3cm 0cm 2.5cm 0cm}, clip, width=0.24\textwidth]{{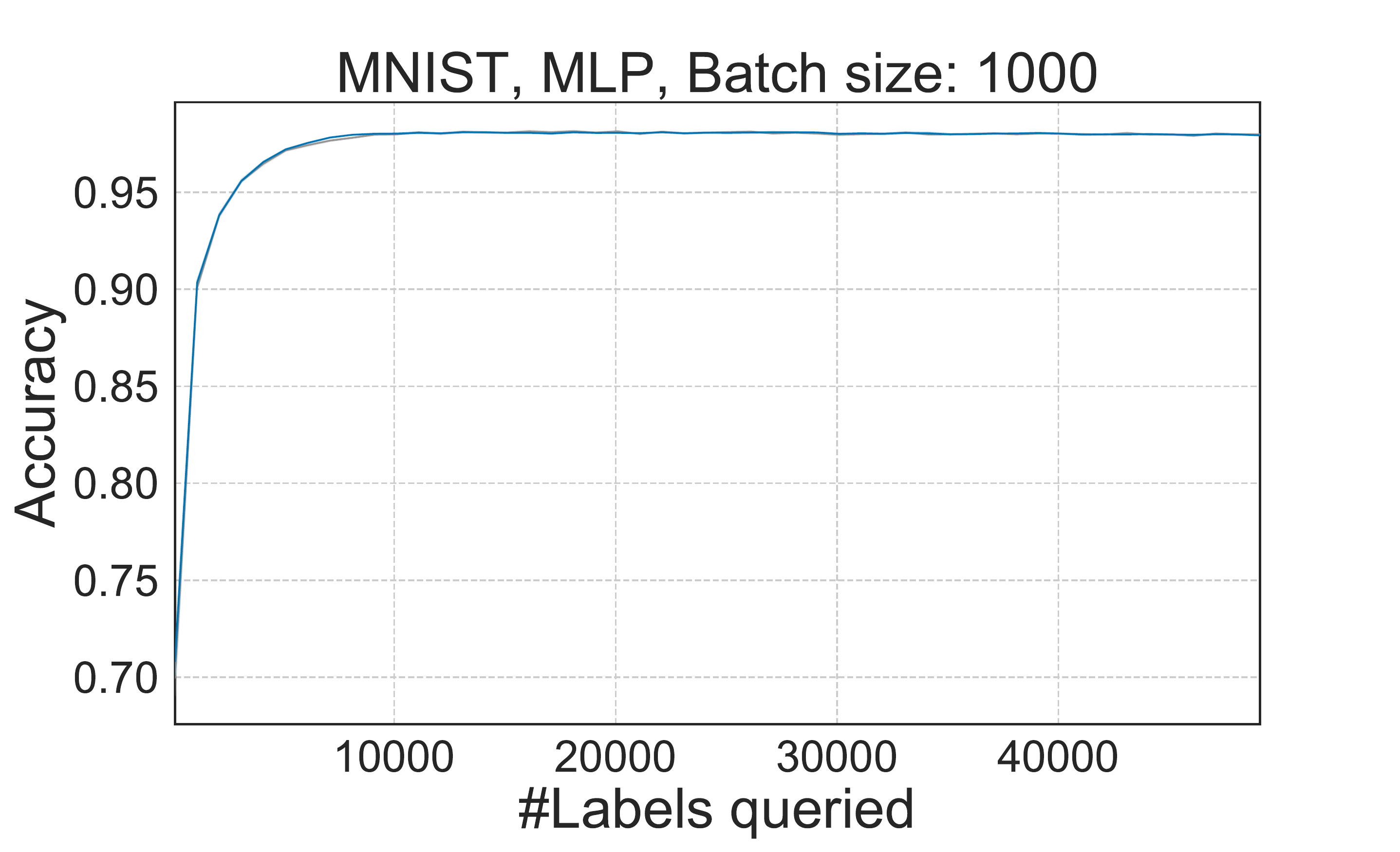}}
  \includegraphics[trim={0.3cm 0cm 2.5cm 0cm}, clip, width=0.24\textwidth]{{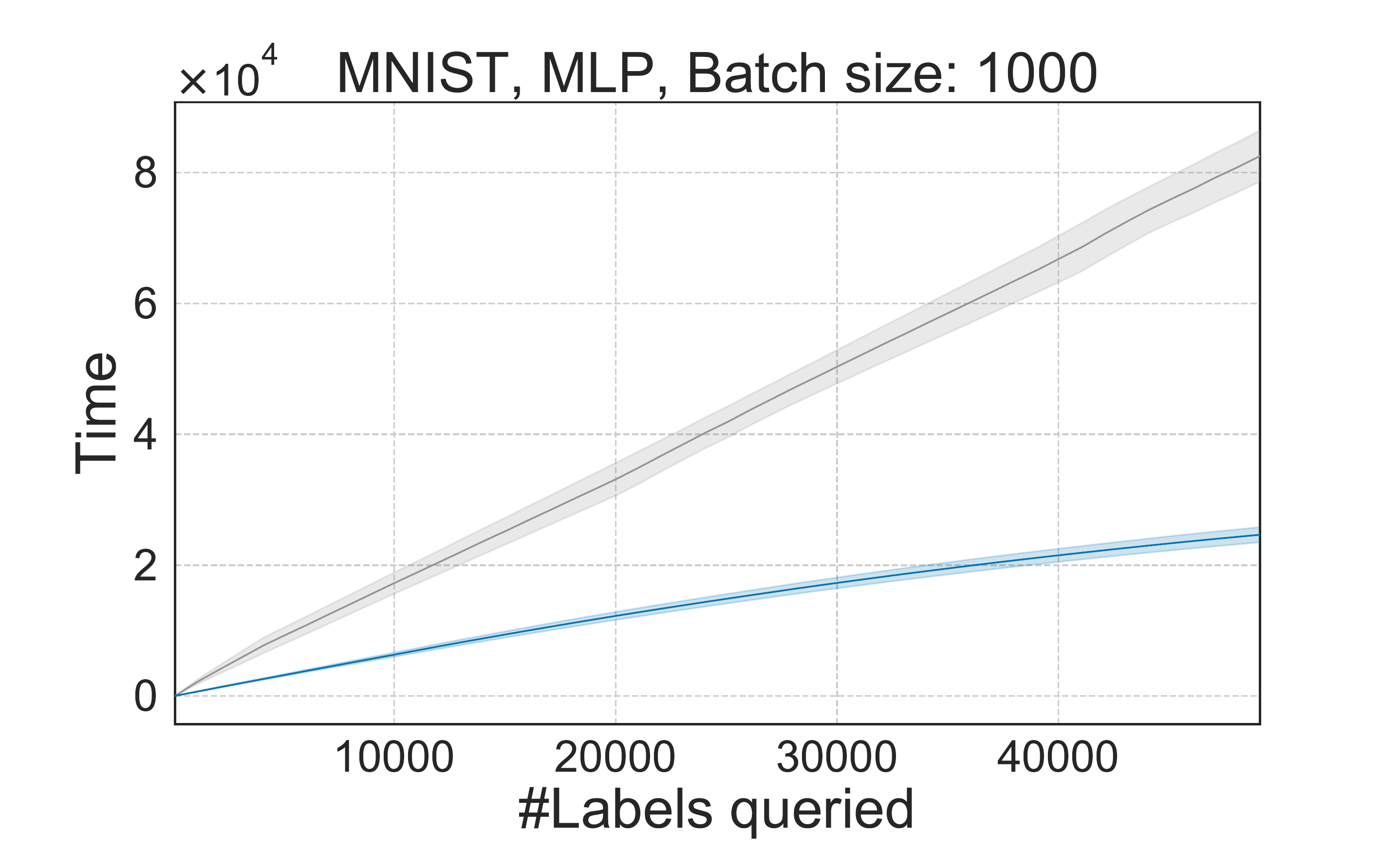}}
  \\
  \centering
  \begin{subfigure}[b]{0.32\linewidth}
    \includegraphics[trim={0cm 0cm 0cm 0cm}, clip, width=\textwidth]{figs/dpp_learning_curves/legend.pdf}
  \end{subfigure}
\caption{Learning curves and running times for MNIST with MLP.}
\end{figure}

\begin{figure}
  \centering
  \includegraphics[trim={0.3cm 0cm 2.5cm 0cm}, clip, width=0.24\textwidth]{{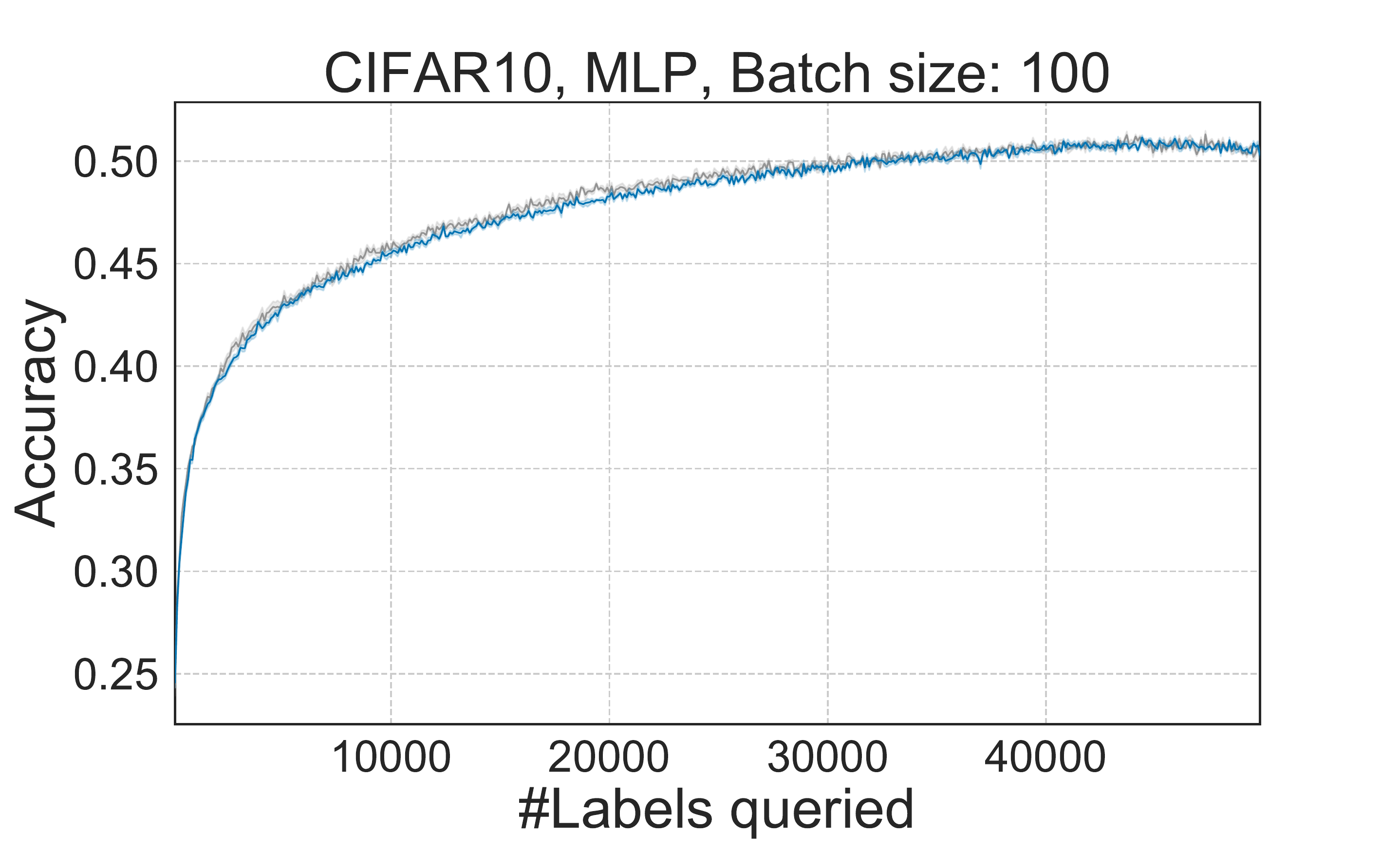}}
  \includegraphics[trim={0.3cm 0cm 2.5cm 0cm}, clip, width=0.24\textwidth]{{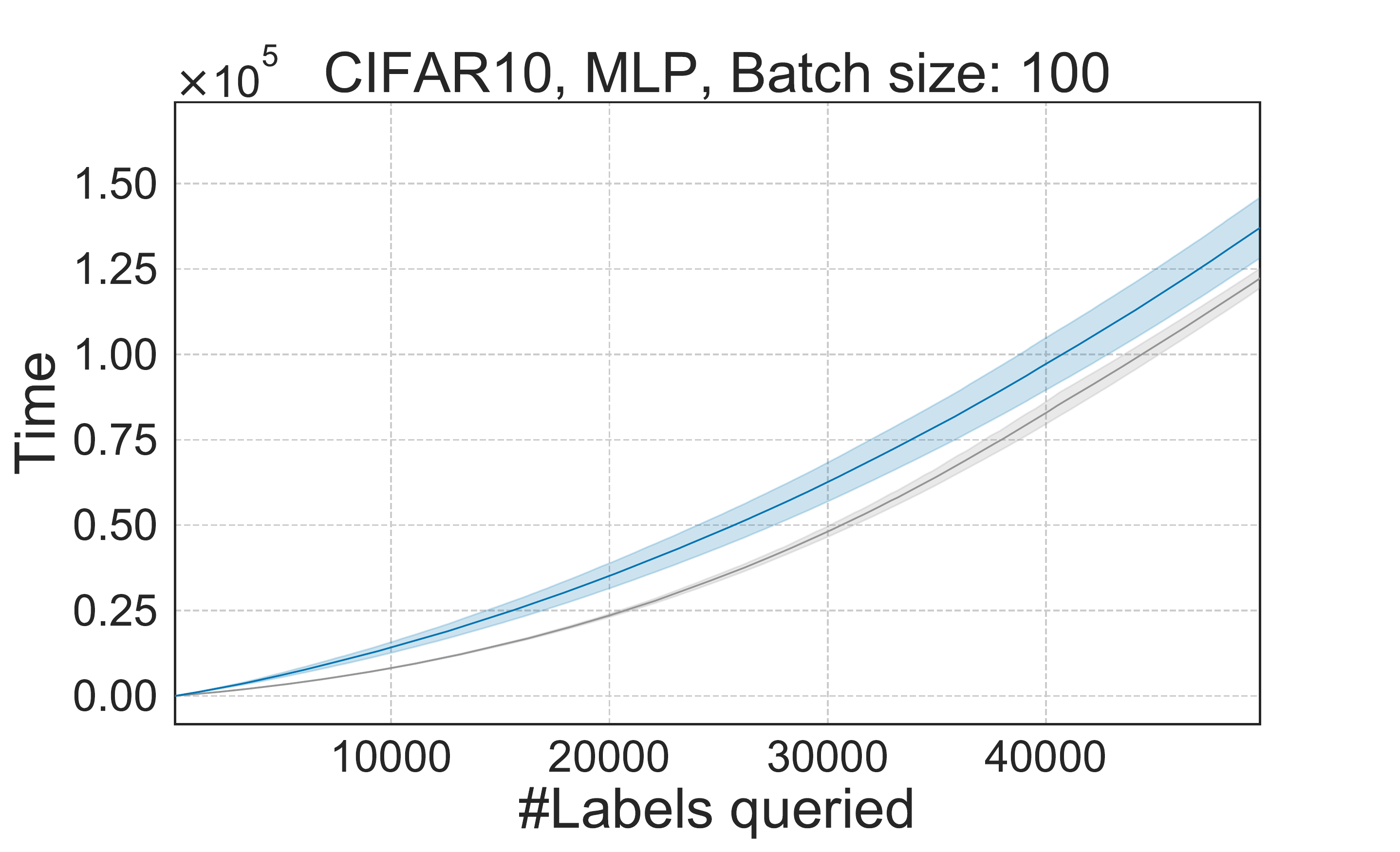}}
  \includegraphics[trim={0.3cm 0cm 2.5cm 0cm}, clip, width=0.24\textwidth]{{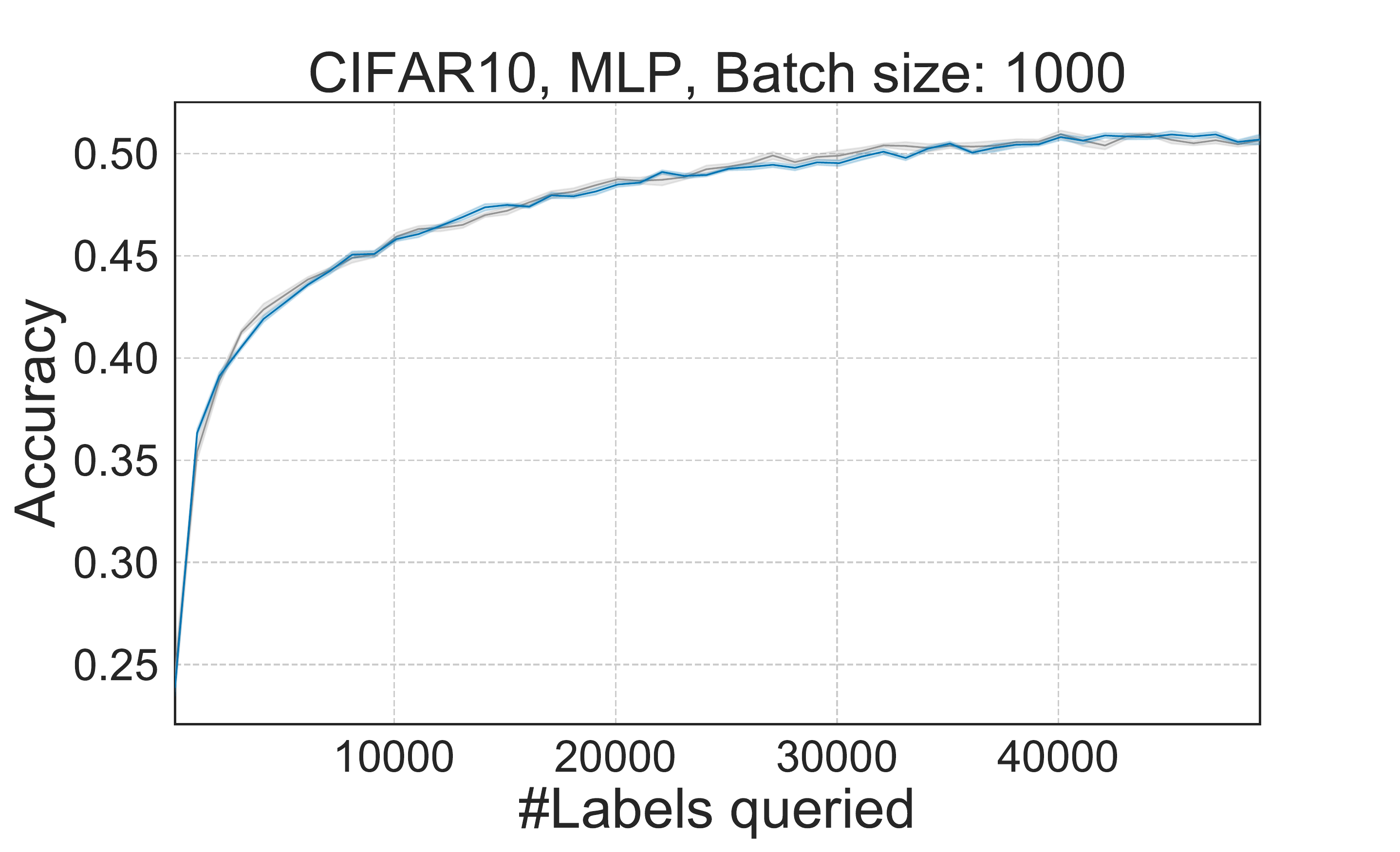}}
  \includegraphics[trim={0.3cm 0cm 2.5cm 0cm}, clip, width=0.24\textwidth]{{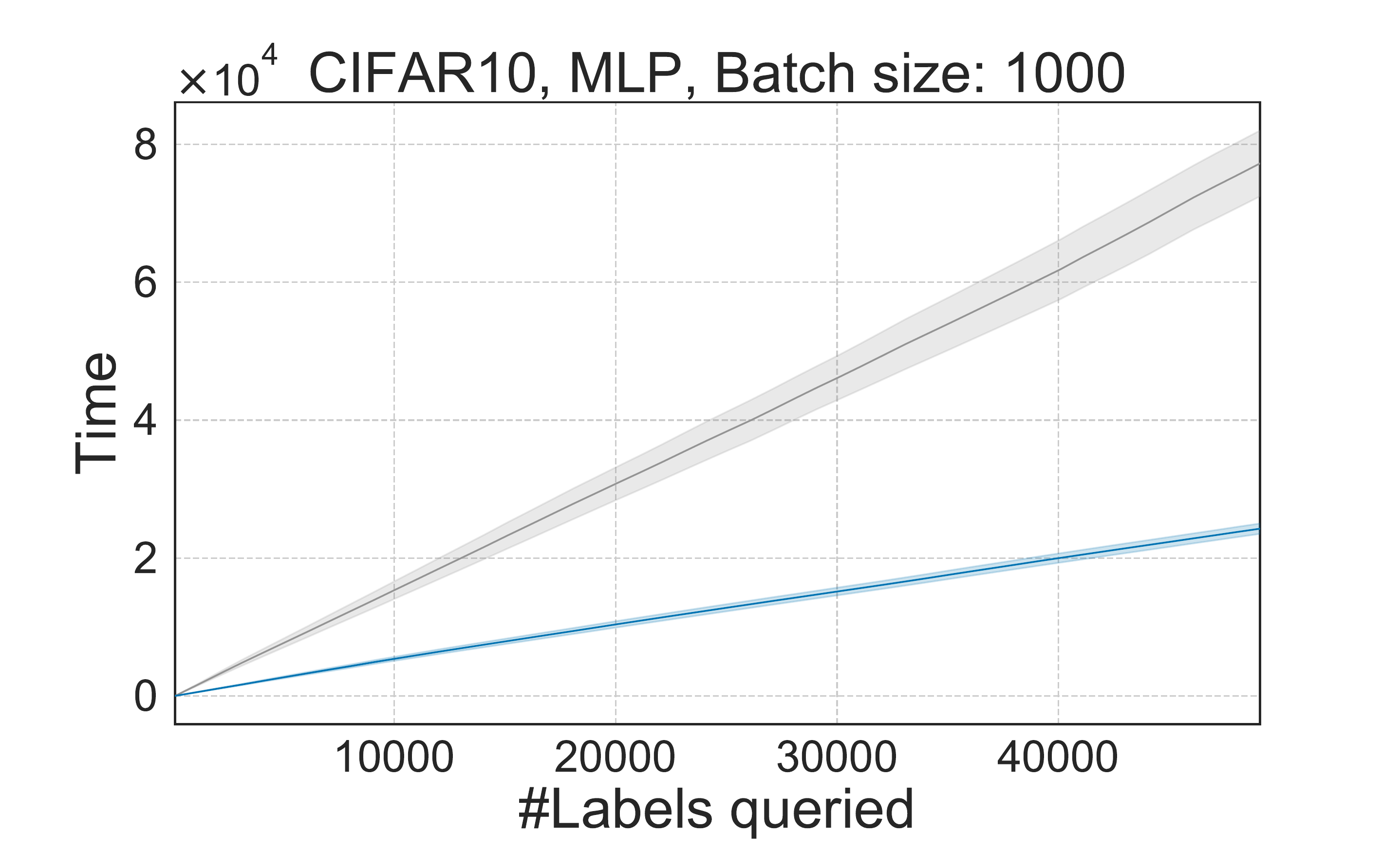}}
  \\
  \centering
 \includegraphics[trim={0.3cm 0cm 2.5cm 0cm}, clip, width=0.24\textwidth]{{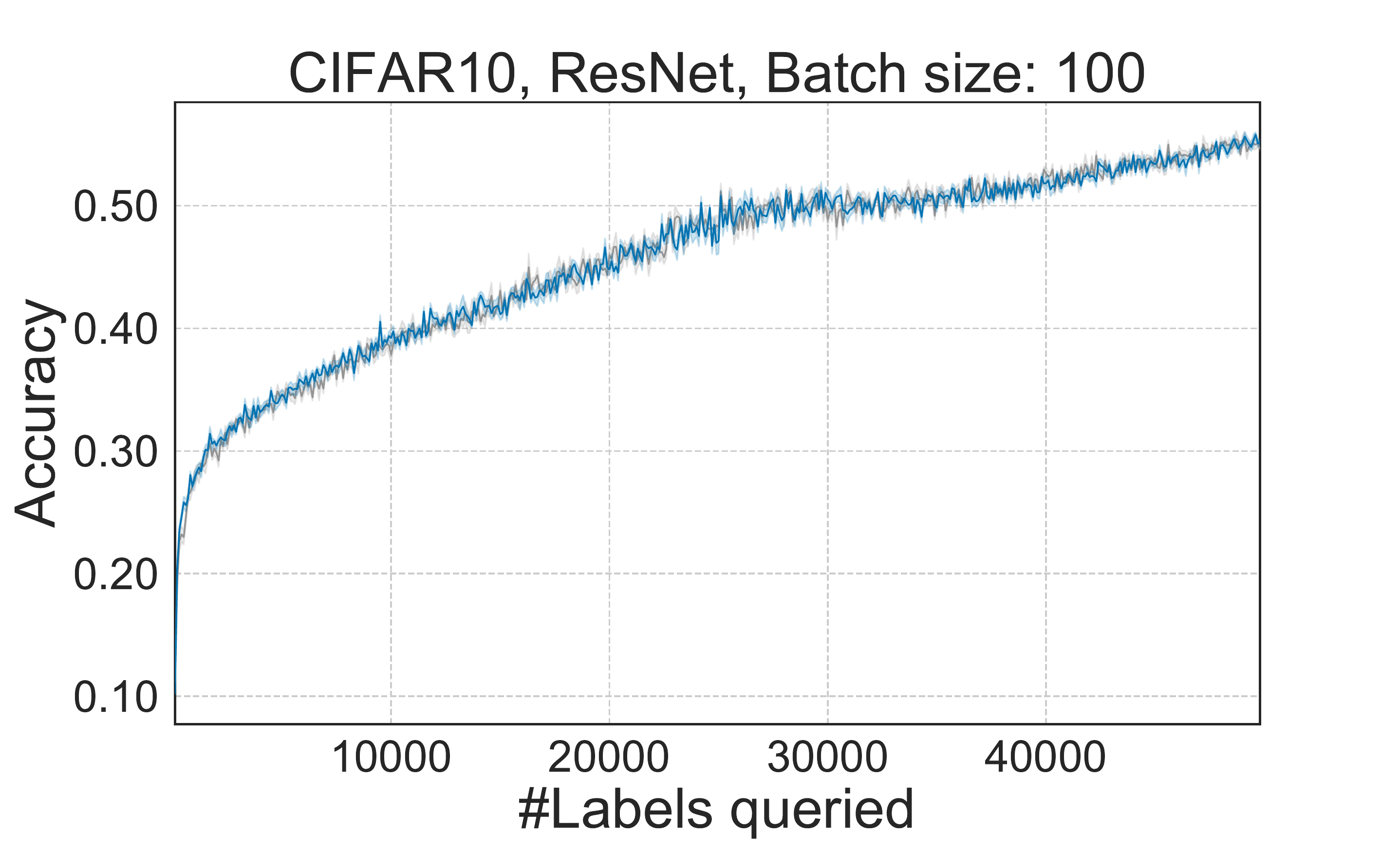}}
  \includegraphics[trim={0.3cm 0cm 2.5cm 0cm}, clip, width=0.24\textwidth]{{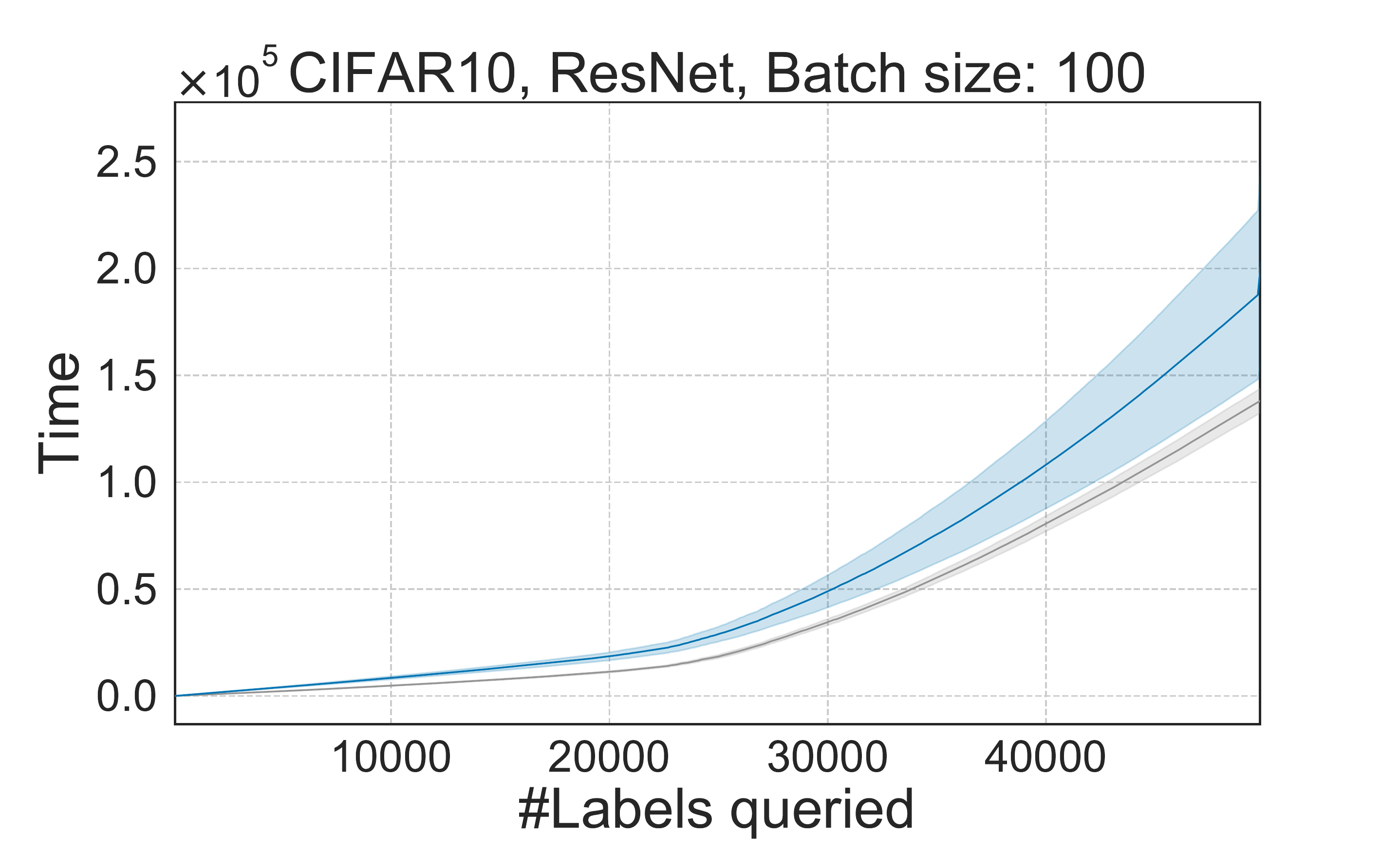}}
  \includegraphics[trim={0.3cm 0cm 2.5cm 0cm}, clip, width=0.24\textwidth]{{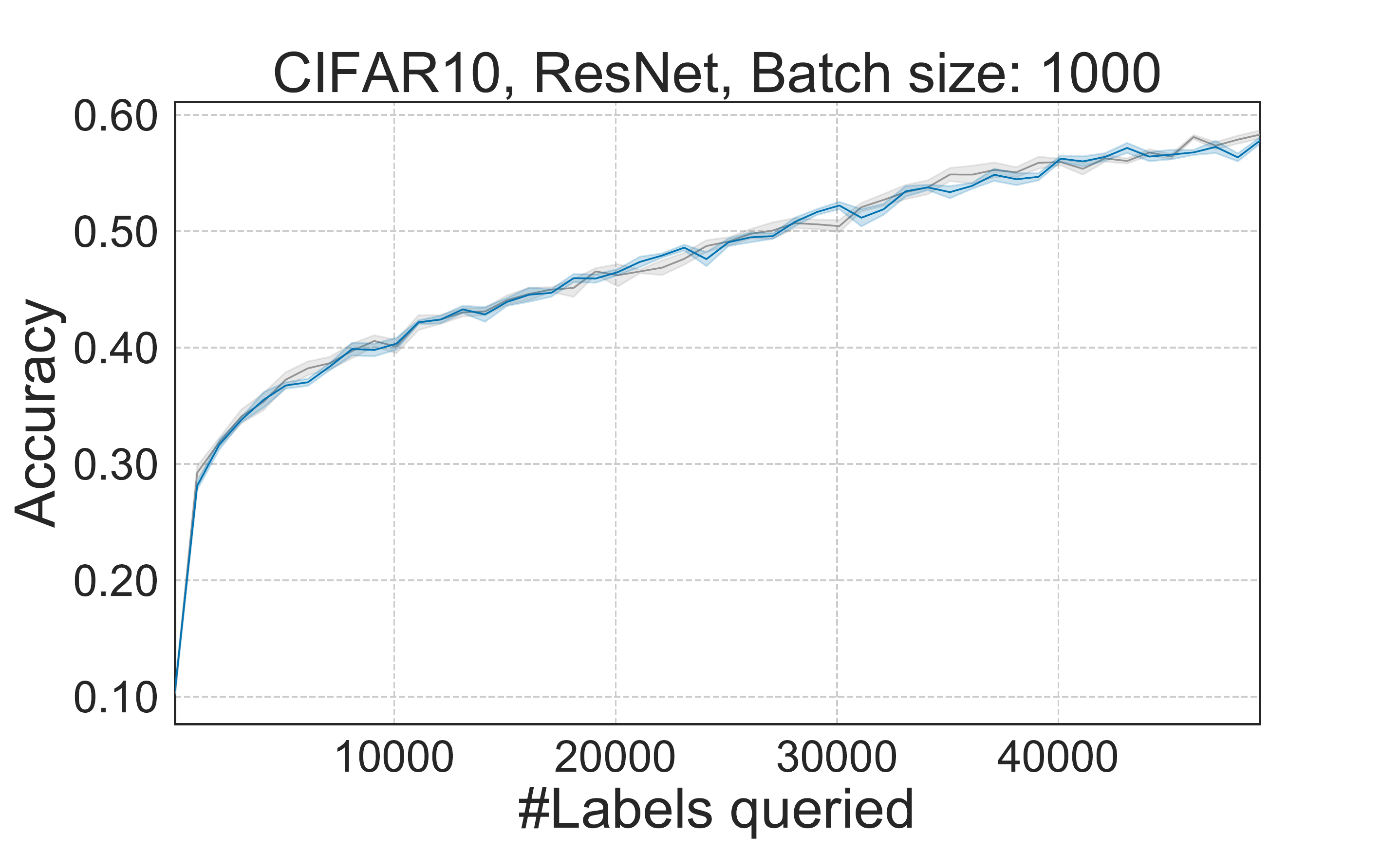}}
  \includegraphics[trim={0.3cm 0cm 2.5cm 0cm}, clip, width=0.24\textwidth]{{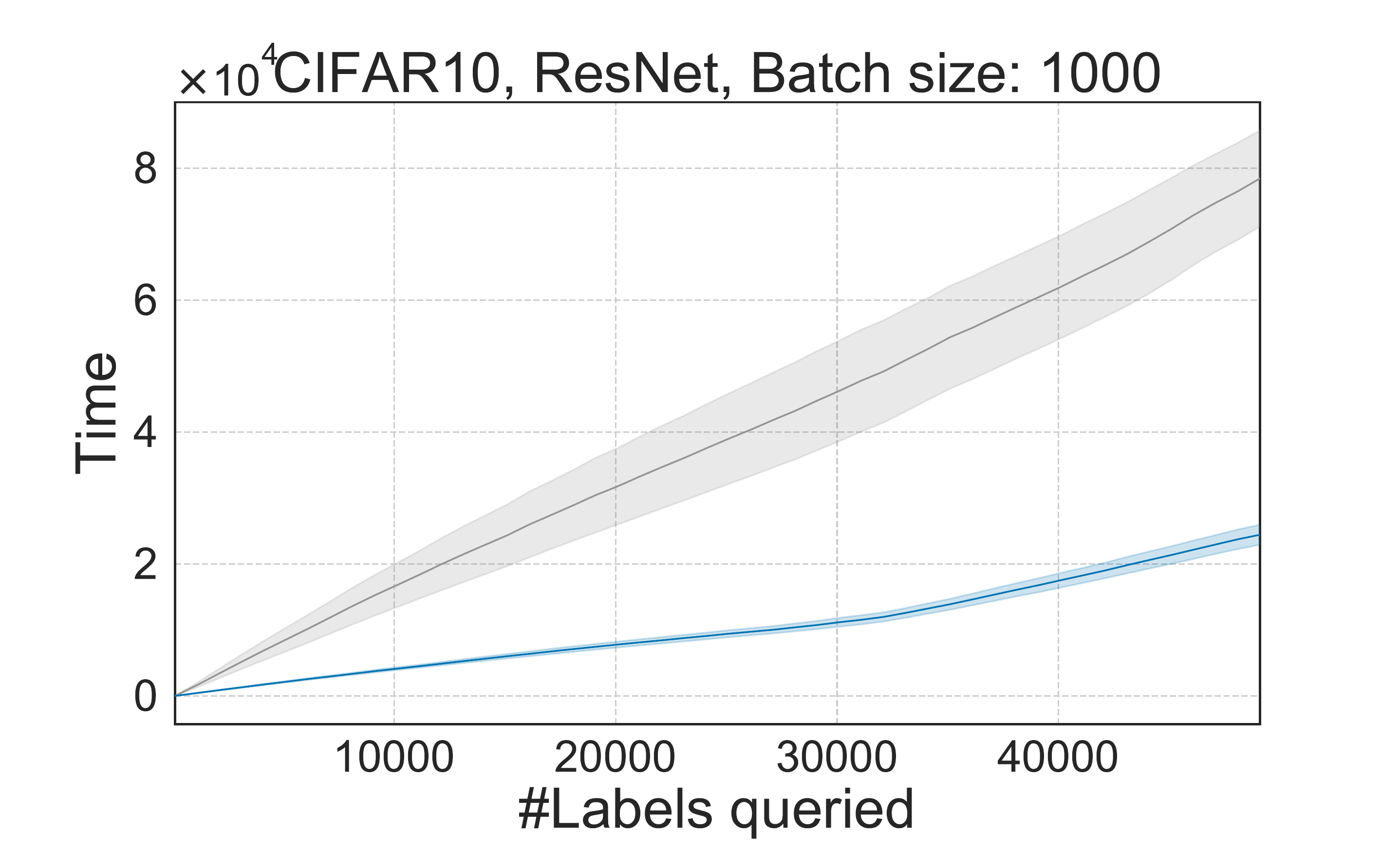}}
  \\
  \centering
  \begin{subfigure}[b]{0.32\linewidth}
   \includegraphics[trim={0cm 0cm 0cm 0cm}, clip, width=\textwidth]{figs/dpp_learning_curves/legend.pdf}
  \end{subfigure}
\caption{Learning curves and running times for CIFAR10 with MLP and ResNet.}
\label{fig:comp-cifar10}
\end{figure}

\end{document}